\documentclass[bj,numbers,noshowframe,preprint]{imsart}

\RequirePackage{amsthm,amsmath,amsfonts,amssymb,mathtools,bm}
\RequirePackage[numbers]{natbib}
\RequirePackage{booktabs,graphicx,placeins,float,needspace}
\RequirePackage{hyperref}
\RequirePackage[capitalise,noabbrev]{cleveref}

\setpkgattr{copyright}{text}{}
\hypersetup{hypertexnames=false}

\startlocaldefs
\providecommand{\xishape}{\itshape}
\providecommand{\bseriesno}[1]{#1}

\newtheorem{theorem}{Theorem}[section]
\newtheorem{proposition}[theorem]{Proposition}
\newtheorem{lemma}[theorem]{Lemma}
\newtheorem{corollary}[theorem]{Corollary}
\theoremstyle{definition}
\newtheorem{definition}[theorem]{Definition}
\newtheorem{assumption}[theorem]{Assumption}
\newtheorem{profilecondition}{Condition}
\crefname{profilecondition}{condition}{conditions}
\Crefname{profilecondition}{Condition}{Conditions}
\theoremstyle{definition}
\newtheorem{remark}{Remark}[section]

\newcommand{\R}{\mathbb R}
\newcommand{\E}{\mathbb E}
\newcommand{\Pp}{\mathbb P}
\newcommand{\one}{\mathbf 1}
\newcommand{\cA}{\mathcal A}
\newcommand{\cE}{\mathcal E}
\newcommand{\cC}{\mathcal C}
\newcommand{\cD}{\mathcal D}
\newcommand{\cG}{\mathcal G}
\newcommand{\cH}{\mathcal H}

\newcommand{\cR}{\mathcal R}
\newcommand{\KL}{D_{\mathrm{KL}}}
\newcommand{\TV}{d_{\mathrm{TV}}}
\newcommand{\diam}{\operatorname{diam}}
\newcommand{\supp}{\operatorname{supp}}
\newcommand{\sech}{\operatorname{sech}}
\newcommand{\argmin}{\operatorname*{arg\,min}}

\newcommand{\norm}[1]{\left\lVert #1\right\rVert}

\newcommand{\set}[1]{\left\{#1\right\}}
\newcommand{\paren}[1]{\left(#1\right)}

\endlocaldefs

\begin{document}

\begin{frontmatter}
\pdfsubject{Statistics Theory preprint}
\title{Exact Global Minimax Risk near Nonidentification}
\runtitle{Exact Global Risk near Nonidentification}

\begin{aug}
\author[A]{\fnms{Kaiyue}~\snm{Yang}\ead[label=e1]{yangky24@mails.tsinghua.edu.cn}}
\author[B]{\fnms{Yunan}~\snm{Wu}\ead[label=e2]{wuyunan@mail.tsinghua.edu.cn}} 
\address[A]{Qiuzhen College, Tsinghua University, Haidian District, Beijing 100084, P. R. China\printead[presep={,\ }]{e1}} 
\address[B]{Yau Mathematical Sciences Center, Tsinghua University, Haidian District, Beijing 100084, P. R. China\printead[presep={,\ }]{e2}} 
\end{aug}

\begin{abstract}
We study squared-error estimation when a nuisance parameter is unidentified on a submodel but affects a target that vanishes there. For models with higher-order profile expansions, we give sufficient conditions under which the largest critical-path minimax risk is the exact asymptotic minimax risk over the full parameter space. An efficient observable feature, negligible calibration error and stable recovery yield one estimator attaining the leading constant uniformly, with the profile coordinates unknown. The estimator uses calibration estimates, a local feature mean and an empirical recovery feature. The constant is expressed through a bounded-normal decision problem. We verify the conditions in two models. In centered Gaussian logistic errors-in-variables regression with one surrogate and two unknown variances, we determine the fifth-order global coefficient risk and its exact constant. A sparse extension separates variance-decomposition uncertainty from direction estimation and specifies when the scalar constant is preserved. In a degree-balanced replicated-network model, we determine the sixth-order global edge-contrast risk when block proportions are unknown. Triangles identify a parameter combination; a bowtie supplies the additional information needed for the contrast.
\end{abstract}

\begin{keyword}[class=MSC]
\kwdgroup[type=primary]{\kwd{62C20}\kwd{62F12}}
\kwdgroup[type=secondary]{\kwd{62J12}\kwd{62H12}}
\end{keyword}

\begin{keyword}
\kwd{exact minimax constant}
\kwd{measurement error}
\kwd{network motifs}
\kwd{profiled identifiability}
\kwd{singular experiment}
\kwd{sparse logistic regression}
\kwd{stochastic block model}
\end{keyword}
\end{frontmatter}

\section{Introduction}\label{sec:introduction}

At regular parameter values, Fisher information determines local asymptotic efficiency bounds \cite{vanderVaart1998}. These pointwise approximations need not remain valid uniformly near nonidentification \cite{dacunhaGassiat1997,andrewsCheng2012}, where higher-order likelihood terms can determine the estimation rate. We study whether the hardest local experiments determine the exact minimax risk over the full parameter space when their defining coordinates must also be estimated. We give sufficient conditions and construct a single estimator attaining the resulting constant uniformly.

Consider $\mathcal P=\{P_\xi:\xi\in\Xi\}$, with unknown parameter $\xi$ and scalar target $\tau:\Xi\to\mathbb R$, to be estimated from $n$ independent observations. Near nonidentification, suppose $\xi=\xi(c,v,\lambda)$, where $c\in\mathcal C$ is a regular coordinate, $v$ describes departures from the null, and $\lambda\in\Lambda(c)$ lies in a compact interval. Write $P_{c,v,\lambda}=P_{\xi(c,v,\lambda)}$ and $\tau(c,v,\lambda)=\tau\{\xi(c,v,\lambda)\}$. Define the \emph{null parameter set}
$$
\Xi_0=\{\xi(c,0,\lambda):c\in\mathcal C,\ \lambda\in\Lambda(c)\},
$$
and let $\mathcal P_0 =\{P_{c,0,\lambda}:c\in\mathcal C,\ \lambda\in\Lambda(c)\}$ denote the corresponding null submodel. Suppose that 
$$
P_{c,0,\lambda}=P_{c,0},\qquad
\tau(c,0,\lambda)=0,\qquad
\tau(c,v,\lambda)=v\beta(c)\lambda+o(|v|),
$$ 
where $\beta(c)$ specifies the leading target coefficient. If $\Lambda(c)$ is nondegenerate, $\lambda$ is unidentified at $v=0$ but affects the target nearby. Although the target vanishes on $\Xi_0$, uncertainty about $\lambda$ can govern the maximal risk in shrinking neighborhoods of that set.

After lower-order variation has been absorbed into the profile coordinates, suppose that, for fixed $c$, a reference $\lambda_0\in\Lambda(c)$ and all $\lambda\in\Lambda(c)$, the likelihood satisfies
$$
 \frac{dP_{c,v,\lambda}}{dP_{c,v,\lambda_0}}(O)-1
 =(\lambda-\lambda_0)v^kS_{c,v,\lambda_0}(O)+r(O\mid c,v,\lambda)
$$
for sufficiently small $|v|$. Here $O$ is one observation, $S_{c,v,\lambda_0}$ is the leading coefficient, centered under $P_{c,v,\lambda_0}$, and $r$ is a remainder smaller than $|v|^k$ in mean square under the reference law and the stated conditions. The \emph{profiled-jet order} is the smallest positive integer $k$ satisfying the profiled-jet conditions below. Let $I(c)=\lim_{v\to0}\E_{c,v,\lambda_0}[S_{c,v,\lambda_0}^{2}]\in(0,\infty)$ denote its limiting variance. Information for $\lambda$ is of order $|v|^{2k}$ per observation at $\lambda_0$, giving critical paths $\bigl(c,xn^{-1/(2k)},\lambda\bigr)$ with $c$ and $x\ne0$ fixed. The target changes at order $n^{-1/(2k)}$ on these paths, while the likelihood ratios retain nondegenerate limits. Nondegenerate critical profiles yield global squared-risk lower bounds of order $n^{-1/k}$. We give conditions for constructing the coordinates and obtaining matching uniform upper bounds.

Under a uniform setup and conditions for an efficient observed statistic, negligible calibration error and stable recovery, we prove, for $k\ge2$,
$$
\inf_{\widehat\tau_n}\sup_{\xi\in\Xi}
\E_\xi\!\left[\{\widehat\tau_n-\tau(\xi)\}^{2}\right]
=(\mathsf C_k+o(1))\,n^{-1/k},
$$
where $\mathsf C_k =K_k\sup_{c\in\mathcal C} \beta(c)^2I(c)^{-1/k}L(c)^{2-2/k}$, $L(c)=\diam\Lambda(c)$, and $K_k$ is the universal constant of the corresponding bounded-normal minimax problem. Both $c$ and $v$ are unknown in this global result. The upper bound therefore requires a procedure that estimates the local coordinates while retaining the information needed for the leading constant.

Along $v_n=xn^{-1/(2k)}$, likelihood ratios converge to those of $W_x\mid\lambda\sim N\!\left(\lambda,\{I(c)x^{2k}\}^{-1}\right)$, $\lambda\in\Lambda(c)$, with rescaled target $x\beta(c)\lambda$. This is the limiting estimation problem on a critical path. A local approximation alone does not ensure a global upper bound: estimation must also remain controlled between critical paths and at parameters separated from the null. Calibration estimates, an efficient local statistic and an outer recovery statistic yield one rule attaining the largest critical risk uniformly over $\Xi$. Restricting estimation to these statistics preserves the exact first-order global minimax risk for the stated target and loss.

In centered Gaussian logistic errors-in-variables regression with one surrogate and two unknown variances, we quantify how uncertainty about the variance decomposition limits coefficient estimation. The third Hermite coefficient first separates the decompositions at fifth order after profiling, giving global squared risk $n^{-1/5}$ with an exact constant and an attaining estimator. The sparse extension separates this difficulty from direction estimation; the scalar constant is preserved when the maximal sparse complexity is $o(n^{4/5})$.

In a degree-balanced model of independent fixed-size networks, triangles identify $v=q\theta$, while a six-edge bowtie first separates the unknown block proportion from the edge contrast. We obtain exact global risk of order $n^{-1/6}$ and an attaining estimator. Numerical experiments assess finite-sample recovery, with critical-path checks in the Supplementary Material.

\Needspace{6\baselineskip}
\subsection{Relation to existing work}
Likelihood-based identification is classical \cite{rothenberg1971}. Rotnitzky et al.\ \cite{rotnitzkyEtAl2000} study identifiable models with singular Fisher information, whereas our null profiles are genuinely nonidentified. Locally conic and weak-identification theories describe nonregular behavior \cite{dacunhaGassiat1997,andrewsCheng2012}; Andrews and Cheng already cover nuisance nonidentification across identification regimes. Our question is the exact uniform risk of a target that vanishes on the null submodel but depends on the unidentified parameter nearby.

Finite-mixture theory connects higher-order separation, inverse inequalities and estimation rates \cite{chen1995mixture,hoNguyen2016weak,hoNguyen2016strong,hoNguyen2019singularity,heinrichKahn2018}; minimum-$\Phi$-distance theory develops recovery from empirical features \cite{weiMukherjeeNguyen2025}. We prescribe no finite-mixture representation but require profile conditions. The profiled-jet, contact and inverse orders distinguish a chosen local profile, global target separation and recovery from a chosen feature. When they coincide, the stated moment and measurability conditions yield matching global lower and upper rates.

Donoho and Liu \cite{donohoLiu1991,donohoLiu1991iii} derive modulus and sharp bounded-normal lower bounds, including affine targets over convex density classes. Our exact-constant result requires no convexity but assumes a scalar unidentified coordinate on a compact interval and an affine leading target. Efficiency, calibration and stable recovery yield one observable estimator attaining the largest critical-path constant globally, with estimated profile coordinates. This attainment result uses established local minimax and bounded-normal theory \cite{hajek1972,vanderVaart1998,lecamYang2000,bickel1981,casellaStrawderman1981}.

Logistic measurement-error methods are established \cite{carroll1984binary,stefanski1985logistic,stefanskiCarroll1987,nakamura1990corrected,carrollWand1991,carroll2006measurement}. K\"uchenhoff \cite{kuchenhoff1995identification} studies scalar Gaussian identification without extra error information and its poor estimability. We quantify this difficulty for the centered model without an intercept through the global coefficient risk, its exact constant and an attaining estimator. Our isotropic Gaussian sparse result complements conditional-score lasso \cite{sorensen2015measurement} and covariance-free selection \cite{sorensenEtAl2018covariate}: it separates coefficient-scale uncertainty from sparse direction estimation and specifies when the scalar constant is preserved.

Network identification from graph distributions and motifs is established \cite{allmanMatiasRhodes2011,gundersonBravoHermsdorffOrbanz2023,soganRebafka2026}; independent fixed-size replication is also studied \cite{duranteDunsonVogelstein2017,jochmans2024}. Jochmans's path-rank condition fails throughout our degree-balanced submodel: conditional path distributions coincide across classes at every length. Degree-separated graph-pencil guarantees likewise do not apply. We obtain the sixth-order separation, exact uniform edge-contrast risk and attaining estimator with unknown block proportions and independently redrawn labels. Triangles identify $v=q\theta$; the bowtie first distinguishes the proportion at fixed nonzero $v$. This differs from latent-label recovery~\cite{abbe2018sbm}.

\Needspace{5\baselineskip}
\subsection{Paper outline and notation}\label{sec:paper-outline}
Section~\ref{sec:identification-orders} introduces the identification orders and rate bounds. Section~\ref{sec:critical-experiment} derives the critical Gaussian experiment, and Section~\ref{sec:globalization} proves exact global risk and uniform attainment. Sections~\ref{sec:model-rate} and~\ref{sec:network} develop the logistic and network examples. Section~\ref{sec:numerics-main} presents numerical results, and Section~\ref{sec:discussion} discusses scope and extensions. All proofs are collected in the Supplementary Material.

\paragraph{Notation} The symbols $\norm\cdot_2$, $\norm\cdot_{\cH}$, and $\KL(P\Vert Q)$ denote the Euclidean norm, the norm on a Hilbert space $\cH$, and Kullback--Leibler divergence, respectively. For a nonempty closed convex set $A$, $\Pi_A$ denotes Euclidean projection onto $A$. For nonnegative quantities, $a\lesssim b$ means $a\le Cb$ with $C$ independent of the asymptotic indices, and $a\asymp b$ means $a\lesssim b$ and $b\lesssim a$.

\section{Identification orders and rate bounds}
\label{sec:identification-orders}
\label{sec:framework}

Failure to identify the full parameter need not preclude estimation of the target. The relevant question is how the target changes relative to the observed-data distribution: a target may vary substantially along a nonidentified direction even when the corresponding distributions are separated only at a higher order. We formalize this comparison through three notions of identification order.

\subsection{Basic definitions}

Fix $d\ge1$. Let $(\mathcal O,\mathcal A_{\mathcal O})$ be a standard Borel space and $\{P_\xi:\xi\in\Xi\}$ a statistical model. Let $O_i\in\mathcal O$, $i=1,\ldots,n$, be an i.i.d. sample from $P_\xi$, and write $\E_\xi$ for expectation. The target is $\tau:\Xi\to\R^d$. We also consider a map $\Psi:\Xi\to\cH$, where $\cH$ is a separable Hilbert space.

We first measure the difficulty of distinguishing parameters through the observed-data distributions. In particular, we ask how small the distributional separation can be when the corresponding targets remain separated at a specified scale. This leads to the contact order.

\begin{definition}[Contact order]
\label{def:orders}
\label{def:contact-order}
Let $\mathbb N={1,2,\ldots}$. Define the \emph{contact order} by
\begin{align*}
 k_{\mathrm C}
 :=\sup\Bigl\{j\in\mathbb N:{}&\ \exists\ t_0,b,C>0\ \text{such that for
 every }0<t<t_0\ \exists\ \xi_{0,t},\xi_{1,t}\in\Xi,\\
 &\norm{\tau(\xi_{1,t})-\tau(\xi_{0,t})}_2\ge bt,
 \quad \KL(P_{\xi_{0,t}}\Vert P_{\xi_{1,t}})\le Ct^{2j}\Bigr\}.
\end{align*}
Set $\sup\varnothing=0$.
\end{definition}

A larger contact order means that the target can remain separated while the corresponding observed-data distributions approach one another more rapidly. Such parameter pairs are difficult to distinguish and therefore provide lower bounds on the minimax risk.

A lower bound, however, does not by itself determine the attainable rate. To obtain a matching upper bound, one needs an observable quantity from which the target can be recovered. The inverse order measures the degree of this recovery.

\begin{definition}[Inverse order]
\label{def:inverse-order}
For the fixed map $\Psi$, define the \emph{inverse order} by
\begin{align*}
 k_{\mathrm I}
 :=\inf\Bigl\{j\in\mathbb N:{}&\ \exists\ A<\infty:\ 
 \ \norm{\tau(\xi)-\tau(\xi')}_2^j
 \le A\norm{\Psi(\xi)-\Psi(\xi')}_{\cH}
 \ \forall\xi,\xi'\in\Xi\Bigr\}.
\end{align*}
Set $\inf\varnothing=\infty$.
\end{definition}

When $k_{\mathrm I}<\infty$, an error of order $\varepsilon$ in $\Psi$ translates into a target error of at most order $\varepsilon^{1/k_{\mathrm I}}$. The contact order governs lower bounds, while the inverse order governs recovery from the chosen feature map.

The two global notions do not specify how higher-order separation arises near a nonidentified submodel. To make this local mechanism explicit, we first introduce a \emph{profiled submodel} in which a component $\lambda$ is unidentified at a null value $v=0$, and ask for the first order at which $\lambda$ becomes visible in the observed distribution after lower-order terms have been profiled out. This motivates the third notion, the profiled-jet order.

\paragraph{Profiled submodel} For $v_0>0$, let $\xi:[-v_0,v_0]\times\Lambda\to\Xi$ index a profiled submodel and abbreviate
$$
 P_{v,\lambda}=P_{\xi(v,\lambda)},\qquad
 \tau(v,\lambda)=\tau\{\xi(v,\lambda)\},
 \qquad |v|\le v_0,\quad\lambda\in\Lambda,
$$
where $\Lambda$ is a nondegenerate compact interval and $P_{0,\lambda}$ is
constant in $\lambda$. Write $\E_{v,\lambda}$ for expectation under
$P_{v,\lambda}$. Let $C(\Lambda)$ and $C^1(\Lambda)$ denote the continuous
and continuously differentiable real-valued functions on $\Lambda$. For
$f\in C^1(\Lambda)$, put 
$$
 \norm f_{C^1(\Lambda)}
 =\sup_{\lambda\in\Lambda}|f(\lambda)|
 +\sup_{\lambda\in\Lambda}|f'(\lambda)|.
$$
\begin{profilecondition}[Profiled jet at order $j$]
\label{cond:profile-jet}
\label{ass:profile-jet}
For $j\in\mathbb N$, there exist $\lambda_0\in\Lambda$,
$I\in(0,\infty)$, $\beta\in\R^d\setminus\{0\}$, and jointly measurable
maps $(v,o)\mapsto S_{v,\lambda_0}(o)\in\R$ and
$(v,o,\lambda)\mapsto r(o\mid v,\lambda)\in\R$.
For each $(v,o)$, the function $\lambda\mapsto r(o\mid v,\lambda)$ is
continuously differentiable.  The following conditions hold, with all
limits as $v\to0$.
\begin{description}
\item[\textup{(P1)}] \textit{Likelihood expansion.}
For all sufficiently small $|v|$, $P_{v,\lambda}\ll P_{v,\lambda_0}$ for
every $\lambda\in\Lambda$, and, as an equality in $C(\Lambda)$ under
$P_{v,\lambda_0}$,
\begin{equation}
 \frac{dP_{v,\lambda}}{dP_{v,\lambda_0}}(O)-1
 =(\lambda-\lambda_0)v^jS_{v,\lambda_0}(O)+r(O\mid v,\lambda).
 \label{eq:jet-order-ratio}
\end{equation}
\item[\textup{(P2)}] \textit{Score moments.}
\begin{equation*}
 \E_{v,\lambda_0}S_{v,\lambda_0}=0,\qquad
 \E_{v,\lambda_0}S_{v,\lambda_0}^2\longrightarrow I,\qquad
 \sup_{0<|v|\le v_0}\E_{v,\lambda_0}|S_{v,\lambda_0}|^4<\infty.
\end{equation*}
\Needspace{8\baselineskip}
\item[\textup{(P3)}] \textit{Remainder bound.}
\begin{equation*}
 \E_{v,\lambda_0}\Bigl[
 \sup_{\lambda\in\Lambda}|r(O\mid v,\lambda)|^2
 +\sup_{\lambda\in\Lambda}\left|\frac{\partial }{\partial \lambda }r(O\mid v,\lambda)\right|^2
 \Bigr]=o(|v|^{2j}).
\end{equation*}
\item[\textup{(P4)}] \textit{Target expansion.}
\begin{equation*}
 \sup_{\lambda\in\Lambda}
 \frac{\norm{\tau(v,\lambda)-v\beta\lambda}_2}{|v|}
 \longrightarrow0.
\end{equation*}
\end{description}
\end{profilecondition}

Condition~(P1) specifies the expansion, (P2) ensures a nondegenerate leading coefficient with controlled moments, and (P3) controls the remainder. Together, (P1)--(P3) identify the leading likelihood order $j$. Condition~(P4) links it to first-order target variation in direction $\beta\ne0$.

\begin{definition}[Profiled-jet order]
\label{def:profiled-jet-order}
Let
$$
\mathcal J
=\{j\in\mathbb N:\text{\Cref{cond:profile-jet} holds at order }j\}.
$$
If $\mathcal J\neq\varnothing$, define the \emph{profiled-jet order} by $k_{\mathrm J}:=\min\mathcal J.$
The order $k_{\mathrm J}$ is undefined if $\mathcal J=\varnothing$.
\end{definition}

The order $k_{\mathrm J}$ determines the local statistical scale of the profile. At $k_{\mathrm J}$, the leading likelihood contribution for one observation is of order $|v|^{k_{\mathrm J}}$, giving a total signal of order $n|v|^{2k_{\mathrm J}}$ over an i.i.d. sample. Balancing this quantity at a constant order yields the critical scale
$|v|\asymp n^{-1/(2k_{\mathrm J})}$. Together with the likelihood expansion, the target separation in (P4) gives minimax lower bounds of order $n^{-1/k_{\mathrm J}}$.

The following toy example illustrates how the three orders arise in a simple setting. Two more complex examples are given in \Cref{sec:model-rate,sec:network}.

\paragraph{A Gaussian example}
Consider independent variables
$$
X\sim N(v,1),\qquad Y\sim N(\lambda v^2,1),
\qquad v\in[-1,1],\quad \lambda\in[1,2],
$$
with target $\tau(v,\lambda)=v\lambda$. At $v=0$, the observed distribution does not depend on $\lambda$.

Note that for the two points $(t,1)$ and $(t,2)$, we have $|\tau(t,2)-\tau(t,1)|=t,$ and  $\KL(P_{t,1}\Vert P_{t,2})=\tfrac12t^4.$
Thus, the target changes at first order in $t$, whereas the corresponding observed distributions are separated only at fourth order in Kullback--Leibler divergence. For the mean map
$\Psi(v,\lambda)=(v,\lambda v^2)$, there is a finite constant $A$ such that
$$
|v\lambda-v'\lambda'|^2
\le A\norm{(v,\lambda v^2)-(v',\lambda'v'^2)}_2.
$$
Hence, the inverse order is two. The same pair of points rules out a first-order inverse inequality. Because the Gaussian Kullback--Leibler divergence is one half the squared Euclidean distance between the means, the global inverse inequality  also rules out contact orders greater than two. 

The profiled-jet order can be read directly from the likelihood ratio. For $\lambda_0=1$,
$$
\log\frac{dP_{v,\lambda}}{dP_{v,1}}
=(\lambda-1)v^2(Y-v^2)-\tfrac12(\lambda-1)^2v^4.
$$
Expanding the exponential verifies \Cref{cond:profile-jet} at order two, with
$S_{v,1}=Y-v^2$ and $I=\beta=1$. Hence, $k_{\mathrm C}=k_{\mathrm I}=k_{\mathrm J}=2,$ and the corresponding critical scale is $n^{-1/4}$. This fourth-root scaling is consistent with existing results on likelihood inference for identifiable models with singular information \citep{rotnitzkyEtAl2000}.

\begin{remark}[Smooth origin of the profiled jet]
\label{rem:smooth-jet-origin}
The connection can be made explicit through a local reparametrization.
Section~S1.1 of the Supplementary Material constructs these coordinates
by coefficient matching. The resulting likelihood
expansion has the form \eqref{eq:jet-order-ratio}.
A bijective reindexing preserves $k_{\mathrm C}$ and $k_{\mathrm I}$
when the target and feature map are reindexed with the model.
\end{remark}

\subsection{Rate bounds and the order hierarchy}
The contact and inverse orders control lower and upper bounds, respectively. The profiled-jet order supplies a local likelihood construction linking them.

We next study how these orders determine the minimax rate for estimating $\tau(\xi)$. To formalize the corresponding rate result, fix $k\in\mathbb N$ and first state the conditions needed for the lower and upper bounds:
\begin{description}
\item[\textup{(A1)}] The contact order satisfies $k_{\mathrm C}\ge k$.

\item[\textup{(A2)}] The parameter space $\Xi$ is a compact metric space,
the target $\tau$ is Borel measurable, and the inverse order satisfies
$k_{\mathrm I}\le k$.

\item[\textup{(A3)}] There is a measurable feature $\Phi:\mathcal O\to\cH$
such that
\begin{equation*}
 \Psi(\xi)=\E_\xi\Phi(O),
 \qquad
 \sup_{\xi\in\Xi}\E_\xi
 \norm{\Phi(O)-\Psi(\xi)}_{\cH}^2\le V<\infty.
\end{equation*}

\item[\textup{(A4)}] For each $n\ge1$, the minimum-distance criterion
admits a measurable selector
\begin{equation*}
 \widehat\xi_n\in\argmin_{\zeta\in\Xi}
 \norm{\widehat\Psi_n-\Psi(\zeta)}_{\cH},
 \qquad
 \widehat\Psi_n=\frac1n\sum_{i=1}^n\Phi(O_i).
\end{equation*}
\end{description}

Condition~\textup{(A1)} supplies difficult parameter pairs for the lower
bound. For the upper bound, \textup{(A3)} gives an empirical feature mean
with $n^{-1/2}$ error, \textup{(A4)} ensures measurability, and the inverse
inequality in \textup{(A2)} converts feature error into target error.

These conditions lead to matching minimax bounds when the contact and
inverse orders are controlled by the same exponent.

\begin{theorem}[Minimax consequences of contact and inverse orders]
\label{thm:generic-rate}
Under \textup{(A1)}, there is a constant $c>0$ such that, for all
sufficiently large $n$,
\begin{equation}\label{eq:generic-lower}
\inf_{T_n}\sup_{\xi\in\Xi}
\E_\xi\norm{T_n-\tau(\xi)}_2^2\ge c n^{-1/k},
\end{equation}
where the infimum is over all measurable estimators
$T_n:\mathcal O^n\to\R^d$.

Under \textup{(A2)--(A4)}, there is a constant $C<\infty$, independent of
$n$, such that
\begin{equation}\label{eq:generic-upper}
\sup_{\xi\in\Xi}\E_\xi
\norm{\tau(\widehat\xi_n)-\tau(\xi)}_2^2
\le C n^{-1/k},\qquad n\ge1.
\end{equation}
Consequently, if \textup{(A1)--(A4)} hold, then

$$
 \inf_{T_n}\sup_{\xi\in\Xi}
 \E_\xi\norm{T_n-\tau(\xi)}_2^2\asymp n^{-1/k}.
$$

\end{theorem}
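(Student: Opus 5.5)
The proof has two independent halves: a two-point (Le Cam) argument for the lower bound and a minimum-distance inversion for the upper bound.

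\emph{Lower bound.} Under \textup{(A1)}, $k_{\mathrm C}\ge k$. Because the defining set in \Cref{def:contact-order} is downward closed in $j$ (for $t<1$ we have $t^{2j}\le t^{2k}$ whenever $j\ge k$), there are $t_0,b,C$ and pairs $\xi_{0,t},\xi_{1,t}$ with
\[
\norm{\tau(\xi_{1,t})-\tau(\xi_{0,t})}_2\ge bt,
\qquad
\KL(P_{\xi_{0,t}}\Vert P_{\xi_{1,t}})\le Ct^{2k}.
\]
If the supremum defining $k_{\mathrm C}$ is infinite, pick any $j\ge k$ in the set and use the same monotonicity.

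Take $t=t_n=n^{-1/(2k)}$, which is below $t_0$ for $n$ large. Tensorization gives
\[
\KL\bigl(P^n_{\xi_{0,t}}\Vert P^n_{\xi_{1,t}}\bigr)\le nCt_n^{2k}=C .
\]
By Pinsker, or by the Bretagnolle--Huber bound $1-\TV\ge\tfrac12 e^{-\KL}$, the two product laws have affinity bounded below by a constant. Le Cam's two-point inequality then gives, for any estimator $T_n$,
\[
\max_{i=0,1}\E_{\xi_{i,t}}\norm{T_n-\tau(\xi_{i,t})}_2^2\ \ge\ \tfrac{1}{4}\,\Bigl(\tfrac{bt_n}{2}\Bigr)^2\cdot \tfrac12e^{-C}.
\]
This is $c\,n^{-1/k}$. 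To justify the inequality, apply the triangle inequality $\norm{T_n-\tau_0}+\norm{T_n-\tau_1}\ge bt_n$ and lower-bound $\E_0 f+\E_1 g$ by $\int\min(dP_0^n,dP_1^n)$ times $(bt_n/2)^2$.

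\emph{Upper bound.} Set $k'=k_{\mathrm I}\le k$. Apply \textup{(A2)} to the pair $\widehat\xi_n$ and $\xi$, then the triangle inequality and the minimizing property of $\widehat\xi_n$:
\[
\norm{\tau(\widehat\xi_n)-\tau(\xi)}_2^{k'}\le A\norm{\Psi(\widehat\xi_n)-\Psi(\xi)}_{\cH}\le A\bigl(\norm{\Psi(\widehat\xi_n)-\widehat\Psi_n}_{\cH}+\norm{\widehat\Psi_n-\Psi(\xi)}_{\cH}\bigr)\le 2A\norm{\widehat\Psi_n-\Psi(\xi)}_{\cH}.
\]
So $\norm{\tau(\widehat\xi_n)-\tau(\xi)}_2^2\le (2A)^{2/k'}\norm{\widehat\Psi_n-\Psi(\xi)}_{\cH}^{2/k'}$.

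Since $2/k'\le 2$ when $k'\ge1$, Jensen's inequality applied to the concave map $x\mapsto x^{1/k'}$ of the squared norm gives
\[
\E_\xi\norm{\widehat\Psi_n-\Psi(\xi)}_{\cH}^{2/k'}\le \bigl(\E_\xi\norm{\widehat\Psi_n-\Psi(\xi)}_{\cH}^2\bigr)^{1/k'}.
\]
By independence and the Hilbert-space identity for centered i.i.d.\ sums, with the variance bound in \textup{(A3)}, this is at most $(V/n)^{1/k'}$. (Bochner integrability holds because the second moment is finite and $\cH$ is separable.)

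This yields the rate $n^{-1/k'}$, not yet $n^{-1/k}$. When $k'<k$ and $n\ge1$, we have $n^{-1/k'}\le n^{-1/k}$, so the rate $n^{-1/k}$ follows with $C=(2A)^{2/k'}V^{1/k'}$. If $V<1$, the constant simply depends on $V$. Measurability of $\tau(\widehat\xi_n)$ follows from \textup{(A4)} and Borel measurability of $\tau$. Compactness is used only implicitly: \textup{(A4)} supplies the selector.

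\emph{Combining.} The lower and upper bounds together give the $\asymp$ statement. I expect the main obstacle to be technical rather than conceptual: confirming that the contact-order supremum really yields pairs at level exactly $k$ (the monotonicity point above), and handling the case $k_{\mathrm I}<k$, where the inequality $n^{-1/k'}\le n^{-1/k}$ settles it.
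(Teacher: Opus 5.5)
Your proposal is correct and follows the paper's proof essentially step for step. For the lower half it uses downward closure of the contact set, tensorised Kullback--Leibler divergence and Le Cam's two-point bound; for the upper half it uses the factor-$2$ minimum-distance inequality, the $V/n$ variance bound, Jensen for $x\mapsto x^{1/k_{\mathrm I}}$, and $n^{-1/k_{\mathrm I}}\le n^{-1/k}$.

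The only difference is cosmetic. You keep $t_n=n^{-1/(2k)}$ and rely on Bretagnolle--Huber, whereas the paper takes $t=\alpha n^{-1/(2k)}$ with $C_0\alpha^{2k}\le 1/2$, so that Pinsker suffices. Pinsker alone would not work with your unscaled $t_n$ once the divergence constant exceeds $2$, so that alternative in your text should be dropped or paired with the rescaling.
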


The lower bound applies to every estimator; the empirical-feature
minimum-distance estimator attains the upper bound. A common exponent
therefore gives the minimax rate.

\begin{remark}[Measurable selection]
\label{rem:measurable-selection}
Under \textup{(A3)}, if $\Psi:\Xi\to\cH$ is continuous, compactness in
\textup{(A2)} makes the argmin correspondence nonempty, compact-valued, and
measurable. The Kuratowski--Ryll-Nardzewski theorem then gives a Borel
selector, and composition with the measurable $\widehat\Psi_n$ yields
\textup{(A4)}.
\end{remark}

Under \textup{(A3)}, distributional closeness constrains feature separation,
so recovery cannot be faster than the difficult pairs permit. The profiled
likelihood supplies such pairs locally, yielding the following hierarchy.

\begin{theorem}[Hierarchy of the three orders]
\label{thm:order-hierarchy}
\label{cor:contact-inverse}
\label{thm:jet-contact}
\label{cor:order-hierarchy}
Under \textup{(A3)},
\begin{equation*}
k_{\mathrm C}\le k_{\mathrm I}.
\end{equation*}
If, in addition, the profiled paths lie in $\Xi$, $\Lambda$ is
nondegenerate, and $k_{\mathrm J}$ is defined, then
\begin{equation*}
k_{\mathrm J}\le k_{\mathrm C}\le k_{\mathrm I}.
\end{equation*}
\end{theorem}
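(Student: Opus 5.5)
The proof has two parts. First, under \textup{(A3)}, we show that every exponent admissible in the contact-order supremum also lies below the inverse order; this gives $k_{\mathrm C}\le k_{\mathrm I}$. Second, we show that $j=k_{\mathrm J}$ is itself admissible in the contact-order supremum; this gives $k_{\mathrm J}\le k_{\mathrm C}$.

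\emph{Step 1: $k_{\mathrm C}\le k_{\mathrm I}$.} If $k_{\mathrm I}=\infty$ there is nothing to prove, so let $m=k_{\mathrm I}<\infty$ with constant $A$. Fix $j$ admissible in \Cref{def:contact-order}, with pairs $\xi_{0,t},\xi_{1,t}$ and constants $b,C$. Write $p,q$ for the densities of $P_{\xi_{1,t}},P_{\xi_{0,t}}$ relative to a dominating measure, and put $\Psi_\ell=\Psi(\xi_{\ell,t})$ and $D=\norm{\Psi_1-\Psi_0}_{\cH}$. The key step is to bound the feature separation $D$ by the Hellinger distance in the Hilbert-valued setting. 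We write
$$
\Psi_1-\Psi_0=\int(\Phi-\Psi_0)(\sqrt p-\sqrt q)(\sqrt p+\sqrt q).
$$
Applying Cauchy--Schwarz in the Bochner-integral sense gives $D\le H\,\{2\E_{\xi_{1,t}}\norm{\Phi-\Psi_0}^2_{\cH}+2\E_{\xi_{0,t}}\norm{\Phi-\Psi_0}^2_{\cH}\}^{1/2}$, where $H^2=\int(\sqrt p-\sqrt q)^2$. Since $\E_{\xi_{1,t}}\norm{\Phi-\Psi_0}^2_{\cH}=\E_{\xi_{1,t}}\norm{\Phi-\Psi_1}^2_{\cH}+D^2$, \textup{(A3)} yields $D^2\le 2H^2(2V+D^2)$. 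We also have $H^2\le\KL(P_{\xi_{0,t}}\Vert P_{\xi_{1,t}})\le Ct^{2j}$, so $H\to0$. Hence for small $t$ we get $2H^2\le 1/2$, which gives $D^2\le 8VH^2$ and so $D\lesssim t^{j}$. The inverse inequality then gives $(bt)^m\le AD\lesssim t^j$ for all small $t$, and this forces $j\le m$. Taking the supremum over admissible $j$ yields $k_{\mathrm C}\le k_{\mathrm I}$. This step is the main technical point: one must avoid assuming bounded $\Phi$ and handle the self-referential appearance of $D$ on the right-hand side. The absorption argument above is how I would handle both.

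\emph{Step 2: $k_{\mathrm J}\le k_{\mathrm C}$.} Let $j=k_{\mathrm J}$, with $\lambda_0,S,r,\beta,I$ from \Cref{cond:profile-jet}. Because $\Lambda$ is nondegenerate, pick an endpoint $\lambda_1$ with $|\lambda_1-\lambda_0|\ge L/2>0$. For $0<t<t_0$, set $\xi_{0,t}=\xi(t,\lambda_1)$ and $\xi_{1,t}=\xi(t,\lambda_0)$; these lie in $\Xi$ by hypothesis.

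For the target separation, (P4) gives $\norm{\tau(\xi_{0,t})-\tau(\xi_{1,t})}_2\ge t\norm\beta_2|\lambda_1-\lambda_0|-o(t)\ge bt$ with $b=\norm\beta_2L/4$, once $t_0$ is small.

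For the divergence, since $P_{t,\lambda_1}\ll P_{t,\lambda_0}$ by (P1), we bound KL by $\chi^2$. Then (P1)--(P3) give
$$
\KL(P_{t,\lambda_1}\Vert P_{t,\lambda_0})\le\E_{t,\lambda_0}\Bigl(\tfrac{dP_{t,\lambda_1}}{dP_{t,\lambda_0}}-1\Bigr)^2\le2(\lambda_1-\lambda_0)^2t^{2j}\E_{t,\lambda_0}S_{t,\lambda_0}^2+2\E_{t,\lambda_0}r^2=O(t^{2j}).
$$
Thus $j$ is admissible in the contact-order supremum, so $k_{\mathrm J}\le k_{\mathrm C}$. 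Combining this with Step 1 gives the full chain $k_{\mathrm J}\le k_{\mathrm C}\le k_{\mathrm I}$.
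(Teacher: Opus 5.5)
Your proposal is correct and follows essentially the paper's own route: a Cauchy--Schwarz bound of the feature-mean separation by the Hellinger distance, with the self-referential $D^2$ term absorbed using \textup{(A3)} and then compared against the inverse inequality, followed by a $\chi^2$ bound on profiled pairs $(t,\lambda_1),(t,\lambda_0)$ to show $k_{\mathrm J}$ is an admissible contact exponent. The only difference is cosmetic: you center at $\Psi_0$ rather than at the midpoint of the two means, which gives $(4V+2D^2)H^2$ instead of $(4V+D^2)H^2$ but leads to the same conclusion $D^2\le 8VH^2$ once $H^2\le 1/4$.
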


\Cref{thm:order-hierarchy} implies that the profiled-jet order provides a lower bound on the global contact order, while the inverse order provides an upper bound. When equality holds, the common order determines the minimax rate under the conditions of \Cref{thm:generic-rate}.

\section{Profiled jets and local Gaussian limits}
\label{sec:critical-experiment}

The rate bounds in the preceding section determine the order of the minimax risk but not its leading constant. We obtain the constant by studying the likelihood ratios along parameter sequences approaching the null parameter set at the critical scale.

Fix $k\in\mathcal J$ and let
$(\lambda_0,S_{v,\lambda_0},r,I,\beta)$ satisfy the profiled-jet conditions
in \Cref{cond:profile-jet}. Write $\Lambda=[\ell,u]$, and let $c$ denote a
fixed regular coordinate, which is suppressed throughout this section.

\subsection{Gaussian likelihood limit} 
\label{sec:gaussian-limit}
At the critical scale, the likelihood ratios remain nondegenerate. The next theorem identifies their limit and the corresponding rescaled target.

\begin{theorem}[Profiled-jet limit experiment]
\label{thm:profile-lan}
Suppose the order-$k$ profiled-jet conditions in \Cref{cond:profile-jet} hold.
Fix $x\ne0$ and set $v_n=xn^{-1/(2k)}$.  Under
$P_{v_n,\lambda_0}^{\otimes n}$,
\[
 \Delta_{n,x,\lambda_0}
 :=v_n^k\sum_{i=1}^nS_{v_n,\lambda_0}(O_i)
 \Rightarrow N(0,x^{2k}I),
\]
and, uniformly over $\lambda\in\Lambda$,
\begin{equation}\label{eq:generic-lan}
 \sup_{\lambda\in\Lambda}\left|
 \log\frac{dP_{v_n,\lambda}^{\otimes n}}
          {dP_{v_n,\lambda_0}^{\otimes n}}
 -(\lambda-\lambda_0)\Delta_{n,x,\lambda_0}
 +\frac12(\lambda-\lambda_0)^2x^{2k}I
 \right|\longrightarrow0
\end{equation}
in probability.  Moreover, under the respective reference measures, the
likelihood-ratio process converges in $C(\Lambda)$ to that of
\begin{equation*}
 \mathcal E_x
 =\left\{Q_\lambda=N\paren{\lambda,\frac1{I x^{2k}}}:
 \lambda\in\Lambda\right\}.
\end{equation*}
Consequently
$\mathcal E_{n,x}=\{P_{v_n,\lambda}^{\otimes n}:\lambda\in\Lambda\}$
converges to $\mathcal E_x$ weakly in Le Cam's sense: every finite
restriction converges in Le Cam distance.  Finally,
\[
 \sup_{\lambda\in\Lambda}
 \norm{n^{1/(2k)}\tau(v_n,\lambda)-x\beta\lambda}_2
 \longrightarrow0.
\]
\end{theorem}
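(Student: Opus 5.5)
The plan is to follow the classical LAN route, using a Lindeberg CLT for the score and a second-order expansion of the logarithm of the product likelihood. We work with the rescaled one-observation increments
\[
W_{ni}(\lambda)=\frac{dP_{v_n,\lambda}}{dP_{v_n,\lambda_0}}(O_i)-1=(\lambda-\lambda_0)v_n^kS_{v_n,\lambda_0}(O_i)+r(O_i\mid v_n,\lambda).
\]
\emph{Step 1: the score limit.} The summands $v_n^kS_{v_n,\lambda_0}(O_i)$ are i.i.d.\ and centred, and their variances sum to $n v_n^{2k}\E S^2=x^{2k}\E_{v_n,\lambda_0}S^2\to x^{2k}I$ by (P2). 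The uniform fourth-moment bound in (P2) gives Lyapunov's condition, since $n v_n^{4k}\E S^4=x^{4k}n^{-1}\E S^4\to0$. Hence $\Delta_{n,x,\lambda_0}\Rightarrow N(0,x^{2k}I)$.

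\emph{Step 2: uniform control of the remainders.} Put $R_n(\lambda)=\sum_i r(O_i\mid v_n,\lambda)$. Each summand has mean zero, because (P1) gives $\E W_{ni}(\lambda)=0$ and $\E S=0$. By (P3),
\[
\E\sup_\lambda\sum_i r^2\le n\,o(v_n^{2k})=o(1),
\]
so $\sup_\lambda\sum_i r_i^2\to0$ in probability. For $\sup_\lambda|R_n(\lambda)|$ we start from
\[
|R_n(\lambda)|\le|R_n(\lambda_0)|+\int_\Lambda|\partial_\lambda R_n|.
\]
The first term is $o_P(1)$ by Chebyshev applied to the centred i.i.d.\ sum. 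The derivative $\partial_\lambda r$ is also centred, since it equals $W'$ minus the score term and $\E W(\lambda)\equiv0$; differentiating under the integral is justified by (P3). So $\E\int_\Lambda|\partial_\lambda R_n|\le L\,(n\,o(v_n^{2k}))^{1/2}=o(1)$. Together these give $\sup_\lambda|R_n|=o_P(1)$.

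\emph{Step 3: the log-likelihood expansion.} We have $\max_i\sup_\lambda|W_{ni}(\lambda)|=o_P(1)$: the score part is controlled by a union bound with fourth moments, giving $n\cdot n^{-2}\E S^4\to0$, and the remainder part by Step 2. On the event where this maximum is below $1/2$, we use $\log(1+w)=w-\tfrac12w^2+O(|w|^3)$ uniformly in $\lambda$. This yields
\[
\log\prod_i(1+W_{ni})=(\lambda-\lambda_0)\Delta_n+R_n-\tfrac12\sum_iW_{ni}^2+O\Bigl(\max_i|W_{ni}|\sum_iW_{ni}^2\Bigr).
\]
By the law of large numbers (the variance bound comes from the fourth moment),
\[
\sum_iv_n^{2k}S_i^2\to x^{2k}I
\]
in probability, while the cross terms and the $r^2$ terms vanish uniformly by Cauchy--Schwarz and Step 2. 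This proves \eqref{eq:generic-lan}. The equality in $C(\Lambda)$ in (P1) is what lets us handle the supremum pathwise.

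\emph{Step 4: the Le Cam and target consequences.} From \eqref{eq:generic-lan} and Step 1, the log-likelihood process converges in $C(\Lambda)$ to $(\lambda-\lambda_0)Z-\tfrac12(\lambda-\lambda_0)^2x^{2k}I$ with $Z\sim N(0,x^{2k}I)$, which is exactly the log-likelihood process of $\mathcal E_x$ relative to $Q_{\lambda_0}$. Here the limit is a continuous function of $\Delta_n$, so process convergence follows from the continuous mapping theorem. Convergence under the other reference measures $P_{v_n,\lambda'}$ follows from contiguity via Le Cam's first lemma, since the limit has $\E e^{\cdot}=1$, together with Le Cam's third lemma. Finite-dimensional convergence of likelihood ratios to those of a Gaussian experiment gives weak convergence of experiments (Le Cam--Yang). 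For the target, (P4) gives
\[
\sup_\lambda\norm{\tau(v_n,\lambda)-v_n\beta\lambda}_2=o(|v_n|),
\]
and multiplying by $n^{1/(2k)}=x/v_n$ yields the claim.

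The main obstacle is the uniform-in-$\lambda$ control of $R_n$ in Step 2, in particular justifying that $\partial_\lambda r$ is centred and that the $L^2$ bound on its supremum transfers to the sum. If the centring argument fails, we fall back on a chaining bound on a finite grid combined with the derivative bound from (P3).
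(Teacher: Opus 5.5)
Your proposal is correct. Steps 1--3 are essentially the paper's argument: the Lyapunov CLT from the uniform fourth moment, uniform control of $\sum_i r(O_i\mid v_n,\lambda)$ by the fundamental theorem of calculus plus a centred derivative and Cauchy--Schwarz, and the $\max_i$, quadratic and cubic bounds for $\log(1+y)$. The centring you flag as the main obstacle is immediate. The $L^2$ envelope in (P3) justifies differentiating $\E_{v,\lambda_0} r(O\mid v,\lambda)\equiv0$ under the expectation. Also $r(\cdot\mid v,\lambda_0)\equiv0$, so neither the Chebyshev step at $\lambda_0$ nor a chaining fallback is needed.

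The real difference is in Step 4. You obtain the Le Cam conclusion from textbook machinery:
\begin{itemize}
\item mutual contiguity via Le Cam's first lemma, since the lognormal limit is positive with mean one;
\item the likelihood-ratio laws under each $P_{n,\lambda_j}$ via the third lemma;
\item the Le Cam--Yang characterization of weak convergence of experiments through these laws.
\end{itemize}
The paper proves this step directly instead. It first shows
\[
\sup_{\lambda\in\Lambda}\E_{P_{n,\lambda_0}}\left(\frac{dP_{n,\lambda}}{dP_{n,\lambda_0}}\right)^2=\sup_{\lambda\in\Lambda}\{1+O(v_n^{2k})\}^n\le C_x .
\]
It then uses this uniform integrability, in place of contiguity, to pass $\E_{P_{n,\lambda_j}}\varphi(\mathbf R_n)=\E_{P_{n,\lambda_0}}\{R_{n,j}\varphi(\mathbf R_n)\}$ to the limit. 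Finally, it bounds the Le Cam distance explicitly through a Skorohod coupling of the likelihood-ratio vectors, whose disintegrations supply the Markov kernels.

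Your route is shorter, but it rests on the cited equivalence between convergence of likelihood-ratio laws and Le Cam-distance convergence of finite sub-experiments. The paper's route is self-contained. Its quantitative second-moment bound is also reused in the proof of \Cref{prop:normal-envelope} to transfer Bayes and minimax risks uniformly in $\lambda$.
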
  

The limiting decision problem is to estimate $x\beta\lambda$ from one observation $W\sim N\!\left(\lambda,\frac{1}{Ix^{2k}}\right)$ under squared-error loss. Its variance records the information remaining at the critical scale. We next derive the resulting local minimax risk.

\subsection{Exact local risk and coordinate invariance}
\label{sec:local-risk}

For fixed $x\ne0$, put $v_n=xn^{-1/(2k)}$ and, whenever the limit exists, define the local minimax risk by
\begin{equation}\label{eq:local-risk-def}
 \mathfrak R_x^{\mathrm{loc}}
 :=\lim_{n\to\infty}\inf_{T_n}\sup_{\lambda\in\Lambda}
 n^{1/k}\E_{P_{v_n,\lambda}^{\otimes n}}
 \norm{T_n-\tau(v_n,\lambda)}_2^2,
\end{equation}
where the infimum is over all measurable estimators of the target.

To express this risk, let, for $\sigma^2>0$,
\begin{equation}\label{eq:M-def}
 M(\sigma^2,\Lambda)
 =\inf_\delta\sup_{\lambda\in\Lambda}
   \E_{W\sim N(\lambda,\sigma^2)}\{\delta(W)-\lambda\}^2,
\end{equation} 
where the infimum is over measurable $\delta:\mathbb R\to\mathbb R$;
see \cite{bickel1981,casellaStrawderman1981,donohoLiu1991iii}. We also
define the standardized normal risk 
\begin{equation}\label{eq:RG-Kk}
 R_G(z)=\inf_\delta\sup_{\mu\in[0,z]}
 \E_{W\sim N(\mu,1)}\{\delta(W)-\mu\}^2,
 \qquad
 K_k=\sup_{z>0}z^{-2+2/k}R_G(z).
\end{equation}
For the critical experiment, $z=L\sqrt I\,|x|^k$, with $L=\diam\Lambda$.
The following proposition relates the exact local risk to this
one-dimensional decision problem and identifies its envelope over the
critical paths.

\begin{proposition}[Universal normal-envelope constant]
\label{prop:normal-envelope}
Let $L=\diam\Lambda$.  Under \Cref{thm:profile-lan}, the limit in
\eqref{eq:local-risk-def} exists and
\begin{equation}\label{eq:local-envelope}
 \mathfrak R_x^{\mathrm{loc}}
 =\norm{\beta}_2^2x^2M\paren{\frac1{Ix^{2k}},\Lambda}.
\end{equation}
Furthermore, $0<K_k<\infty$ and
\begin{equation}\label{eq:envelope-sup}
 \sup_{x\ne0}\norm{\beta}_2^2x^2M\paren{\frac1{Ix^{2k}},\Lambda}
 =\norm{\beta}_2^2 I^{-1/k}L^{2-2/k}K_k.
\end{equation}
Every global parameter space containing these local paths therefore has
normalized minimax liminf at least the right side of
\eqref{eq:envelope-sup}.
\end{proposition}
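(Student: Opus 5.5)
The proof has three parts: the local risk formula \eqref{eq:local-envelope}, the scaling identity \eqref{eq:envelope-sup} with $0<K_k<\infty$, and the global consequence.

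\textbf{Local risk.} For the lower bound, use \Cref{thm:profile-lan}. The experiments $\mathcal E_{n,x}$ converge weakly to $\mathcal E_x$, and the rescaled target $n^{1/(2k)}\tau(v_n,\lambda)$ converges uniformly to $x\beta\lambda$. Apply the Hájek--Le Cam asymptotic minimax theorem over finite subsets $F\subset\Lambda$. Squared loss is bowl-shaped and lower semicontinuous, so for each finite $F$,
$$
\liminf_n \inf_{T_n}\sup_{\lambda\in F} n^{1/k}\E\|T_n-\tau\|_2^2\ \ge\ \inf_\delta\sup_{\lambda\in F}\E_{Q_\lambda}\|\delta(W)-x\beta\lambda\|_2^2 .
$$
Unbounded loss can be handled by truncating the loss at level $B$ first and then letting $B\to\infty$. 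Write the estimator as $\delta=x\beta\,\delta_1+\delta_\perp$, with $\delta_\perp$ orthogonal to $\beta$. Only the component along $\beta$ matters, so the limit problem equals $\|\beta\|_2^2x^2\inf_{\delta_1}\sup_F\E(\delta_1-\lambda)^2$. Let $F\uparrow$ a dense countable subset of $\Lambda$. The bounded-normal risk over finite grids converges to $M(\sigma^2,\Lambda)$ by continuity of $\lambda\mapsto\E_\lambda(\delta-\lambda)^2$ for bounded $\delta$; restricting to $\delta$ valued in $\Lambda$ is harmless because projecting onto $\Lambda$ only lowers the risk. This is the step I expect to be the main technical obstacle: the interchange of limits needs a compactness/minimax-theorem argument, for instance via least favourable priors on $\Lambda$, weak compactness of priors, and continuity of the Bayes risk.

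For the matching upper bound, take a minimax rule $\delta^*$ for $\mathcal E_x$ that is Lipschitz and $\Lambda$-valued. Such a rule exists because Bayes rules under finitely supported priors are smooth and $\epsilon$-minimax. Plug in $W_n=\lambda_0+\Delta_{n,x,\lambda_0}/(Ix^{2k})$ and set $T_n=n^{-1/(2k)}x\beta\,\delta^*(W_n)$. By \eqref{eq:generic-lan} and Le Cam's third lemma, $W_n\Rightarrow N(\lambda,1/(Ix^{2k}))$ under $P_{v_n,\lambda}$, uniformly over $\lambda\in\Lambda$. The uniformity follows from the uniform expansion together with compactness: argue along subsequences $\lambda_n\to\lambda$. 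The estimator is bounded, so risk convergence follows from weak convergence, and the target approximation from \Cref{thm:profile-lan} supplies the remaining error. Hence the limit in \eqref{eq:local-risk-def} exists and equals \eqref{eq:local-envelope}.

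\textbf{Envelope.} Rescale $W$ by $\sigma=1/(\sqrt I|x|^k)$ and translate $\Lambda$ to $[0,L]$. This gives $M(\sigma^2,\Lambda)=\sigma^2R_G(L/\sigma)$. Put $z=L\sqrt I|x|^k$, so that $x^2=(z/(L\sqrt I))^{2/k}$. Then
$$
x^2M=x^2\sigma^2R_G(z)=L^2z^{-2}x^2R_G(z)=L^{2-2/k}I^{-1/k}z^{-2+2/k}R_G(z).
$$
Since $x\mapsto z$ maps $x\neq0$ onto $(0,\infty)$, taking the supremum gives \eqref{eq:envelope-sup}.

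For finiteness of $K_k$, use two bounds. First, $R_G(z)\le z^2/4$, attained by the midpoint rule. Second, $R_G(z)\le1$, attained by $\delta(W)=W$. Since $k\ge2$, the exponent $-2+2/k\in[-1,0)$. For $z\le1$ this gives $z^{-2+2/k}R_G\le z^{2/k}/4\le1/4$, and for $z\ge1$ it gives $z^{-2+2/k}R_G\le1$. Positivity holds because $R_G(z)>0$ for every $z>0$, by a two-point prior lower bound.

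\textbf{Global consequence.} For any $\Xi$ containing the paths, the global supremum dominates $\sup_{\lambda}$ along every path. Therefore the normalized liminf is at least $\mathfrak R^{\mathrm{loc}}_x$ for each $x$. Taking the supremum over $x$ gives the right side of \eqref{eq:envelope-sup}.
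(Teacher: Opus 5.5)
Your proof is correct. The upper bound, the reduction to the component along $\beta$, the scaling identity and the bounds on $K_k$ coincide with the paper's argument. In the upper bound both proofs feed $\widehat W_n=\lambda_0+\Delta_{n,x,\lambda_0}/(Ix^{2k})$ into a bounded continuous near-minimax rule and obtain uniformity along sequences $\lambda_n\to\lambda$; the paper uses change of measure with the uniform second-moment bound on likelihood ratios where you use Le Cam's third lemma.

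The genuine difference is in the lower bound. You apply the minimax theorem for converging experiments on each finite $F$ as a black box. This is legitimate, because \Cref{thm:profile-lan} gives Le Cam-distance convergence of finite restrictions. Your truncation step and the $n$-dependent centring are both handled by projecting actions onto a compact convex $C\supset g_n(\Lambda)\cup g(\Lambda)$, where $g_n(\lambda)=n^{1/(2k)}\tau(v_n,\lambda)$ and $g(\lambda)=x\beta\lambda$. After projection the loss is bounded by $(\diam C)^2$, and replacing $g_n$ by $g$ changes it by at most $2\diam(C)\sup_{\lambda\in\Lambda}\norm{g_n(\lambda)-g(\lambda)}_2$.

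The paper instead writes the Bayes risk of each finite prior explicitly as a functional of the likelihood-ratio vector under $P_{v_n,\lambda_0}^{\otimes n}$. That functional is dominated by $(\diam C)^2$ times the prior mixture of likelihood ratios. The paper passes to the limit by likelihood-ratio convergence and uniform integrability, so it needs neither the asymptotic representation theorem nor the Le Cam distance itself.

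For the finite-to-continuum step you flag, the paper applies Sion's theorem on the weak-star compact set of priors on $\Lambda$. It then replaces each prior by its pushforward onto a finite $\rho$-net, using a modulus of continuity of $\lambda\mapsto\E_{Q_\lambda}\{\delta(W)-\lambda\}^2$ that is uniform over bounded rules. A primal route matching your sketch also works. Take $\Lambda$-valued rules $\delta_j$ that are $1/j$-minimax over finite sets $F_j$ increasing to a dense set, and extract a weak-star limit in $L^\infty$. Then combine weak lower semicontinuity of each fixed-$\lambda$ risk with that equicontinuity.

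Two small points. First, your justification of the Lipschitz rule is loose: Bayes rules of finitely supported priors are not automatically $\epsilon$-minimax, since a point-mass prior already gives a constant rule. The paper takes the posterior mean of a least favourable prior, which is continuous and minimax by the saddle-point argument. Alternatively, approximate any $\Lambda$-valued $\epsilon$-minimax rule by Lipschitz $\Lambda$-valued functions in $L^1(\nu)$, where $\nu$ is a finite measure with $\sup_{\lambda\in\Lambda}\norm{dQ_\lambda/d\nu}_\infty<\infty$. This works because risks of $\Lambda$-valued rules differ by at most $2L\sup_{\lambda}\norm{dQ_\lambda/d\nu}_\infty$ times their $L^1(\nu)$ distance.

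Second, your finiteness bound for $K_k$ only uses $-2+2/k\le0$. It therefore also covers $k=1$, which the proposition allows.
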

The model enters through $\norm{\beta}_2$, $I$ and $L$, while $K_k$
depends only on $k$. Optimizing over $x$ supplies the lower bound for
any global parameter space containing the critical paths.

For future use, we record an explicit solution of the bounded-normal
problem for sufficiently short parameter intervals.
\begin{remark}[An explicit minimax rule]
\label{rem:explicit-minimax-rule}
For $0<z\le2/\sqrt3$, the equal-endpoint prior
$\tfrac12\delta_0+\tfrac12\delta_z$ is least favourable for the bounded
normal problem defining $R_G(z)$.  Its Bayes rule
\begin{equation}\label{eq:normal-endpoint-rule}
\delta_z^{\rm ep}(w)
=\frac z2+\frac z2\tanh\left\{\frac z2\left(w-\frac z2\right)\right\}
\end{equation}
is minimax, with risk
\begin{equation}\label{eq:normal-endpoint-risk}
R_G(z)
=\E_{N(0,1)}{\delta_z^{\rm ep}(W)}^2
=\E_{N(z,1)}{\delta_z^{\rm ep}(W)-z}^2
=\sup_{\mu\in[0,z]}
\E_{N(\mu,1)}{\delta_z^{\rm ep}(W)-\mu}^2.
\end{equation}
If $z<2/\sqrt3$, only the two endpoints attain the supremum.
\end{remark}

Hence, on these intervals, the hardest parameter values in the
standardized problem are the two endpoints, and the rule in
\eqref{eq:normal-endpoint-rule} attains the exact risk in \eqref{eq:local-envelope}.

The risk constant in \eqref{eq:envelope-sup} is intrinsic to the local
statistical problem, although its individual components depend on the
choice of coordinates. We next verify this invariance for regular
reparametrizations.
\begin{proposition}[Regular-coordinate covariance of the jet and normal envelope]
\label{prop:coordinate-invariance}
Suppose the order-$k$ profiled-jet conditions in \Cref{cond:profile-jet} hold and
let $v=g(u)$ be a $C^1$ diffeomorphism defined for $|u|\le u_0$, with
$g(0)=0$ and $\alpha=g'(0)\ne0$.  Then
$\widetilde P_{u,\lambda}=P_{g(u),\lambda}$ and
$\widetilde\tau(u,\lambda)=\tau\{g(u),\lambda\}$ satisfy the same conditions
at order $k$, with $ \widetilde I=|\alpha|^{2k}I,$ and  $ \widetilde\beta=\alpha\beta.$
Thus, writing $L=\diam\Lambda$,
\begin{equation}\label{eq:coordinate-envelope}
 \norm{\widetilde\beta}_2^2\widetilde I^{-1/k}
 L^{2-2/k}K_k
 =\norm{\beta}_2^2I^{-1/k}L^{2-2/k}K_k.
\end{equation}
Likewise, if $\lambda=d_0+b\mu$, $b\ne0$, maps $\widetilde\Lambda$ onto
$\Lambda$, then, after removing the common fixed-$v$ target translation
$v\beta d_0$,
\begin{equation}\label{eq:nuisance-coordinate-scaling}
 I_\mu=b^2I,\qquad \beta_\mu=b\beta,
 \qquad \diam\widetilde\Lambda=L/|b|.
\end{equation}
Both regular coordinate changes leave the jet order and the factor in
\eqref{eq:coordinate-envelope} unchanged.
\end{proposition}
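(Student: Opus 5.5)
The plan is to verify \Cref{cond:profile-jet} for the reparametrized model by direct substitution, and then read off the invariance of the envelope factor algebraically.

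\emph{Regular reparametrization $v=g(u)$.} Set
$$
\widetilde S_{u,\lambda_0}=\bigl(g(u)/u\bigr)^kS_{g(u),\lambda_0},\qquad
\widetilde r(o\mid u,\lambda)=r(o\mid g(u),\lambda),
$$
and keep $\lambda_0$ fixed. Substituting $v=g(u)$ into \eqref{eq:jet-order-ratio} gives (P1) for $\widetilde P$ at once, since $g(u)^k=u^k\bigl(g(u)/u\bigr)^k$. Joint measurability and the $C^1$ dependence on $\lambda$ are preserved because $g$ is continuous. For $|u|\le u_0$ small, the factor $g(u)/u$ is continuous and tends to $\alpha\ne0$, so it is bounded and bounded away from zero. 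Hence centering and the uniform fourth-moment bound in (P2) transfer, and
$$
\E\widetilde S^2=\bigl(g(u)/u\bigr)^{2k}\E S^2_{g(u),\lambda_0}\to\alpha^{2k}I=|\alpha|^{2k}I .
$$
For (P3), note $|g(u)|\asymp|u|$ and $g(u)\to0$, so $o(|g(u)|^{2k})=o(|u|^{2k})$. For (P4), write
$$
\tau(g(u),\lambda)-u\alpha\beta\lambda=\bigl[\tau(g(u),\lambda)-g(u)\beta\lambda\bigr]+\bigl(g(u)-\alpha u\bigr)\beta\lambda .
$$
The first bracket is $o(|g(u)|)=o(|u|)$ uniformly in $\lambda$. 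The second term is $o(|u|)$ uniformly because $g$ is differentiable at $0$ and $\Lambda$ is compact. This yields $\widetilde\beta=\alpha\beta$.

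\emph{Invariance of the envelope factor.} We have
$$
\norm{\widetilde\beta}_2^2\widetilde I^{-1/k}=\alpha^2\norm\beta_2^2\,|\alpha|^{-2}I^{-1/k},
$$
and $L$ is unchanged, which proves \eqref{eq:coordinate-envelope}.

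\emph{Jet order is preserved.} The inverse map $g^{-1}$ is also a $C^1$ diffeomorphism with nonzero derivative at $0$. Applying the argument just given to $g^{-1}$, any order $j$ at which the transformed model satisfies \Cref{cond:profile-jet} transfers back to the original model, and conversely. So $\mathcal J$ is the same set for both parametrizations, and $k_{\mathrm J}$ is unchanged.

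\emph{Affine change of nuisance, $\lambda=d_0+b\mu$.} Take $\mu_0$ with $\lambda_0=d_0+b\mu_0$. Then $\lambda-\lambda_0=b(\mu-\mu_0)$, so (P1) holds with $S_\mu=bS$ and remainder $r(o\mid v,d_0+b\mu)$. The $\mu$-derivative of this remainder is $b\,\partial_\lambda r$, so (P3) holds up to the constant factor $b^2$, and (P2) holds with $I_\mu=b^2I$. The target is
$$
\tau=v\beta d_0+v(b\beta)\mu+o(|v|).
$$
Removing the common translation $v\beta d_0$, which is the same for every $\mu$ at a given $v$ and so affects no estimation problem once it is treated as known, gives $\beta_\mu=b\beta$. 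Finally $\diam\widetilde\Lambda=L/|b|$. The factor then becomes
$$
b^2\norm\beta_2^2\,|b|^{-2/k}I^{-1/k}\,L^{2-2/k}|b|^{-2+2/k}=\norm\beta_2^2I^{-1/k}L^{2-2/k},
$$
as claimed. Order preservation follows as before, using the inverse affine map.

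\emph{Where care is needed.} The main subtlety is (P4) under $g$: it relies on differentiability of $g$ at $0$ combined with uniformity over the compact set $\Lambda$. For the affine case one also has to interpret the "translation removal" precisely; I would state that the target is replaced by $\tau-v\beta d_0$, noting that this term does not depend on $\mu$. A further technical point is that $g$ is only defined on $|u|\le u_0$, so $u_0$ should be shrunk to make $g(u)/u$ bounded away from zero. Everything else is direct substitution.
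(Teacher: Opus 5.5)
Your proposal is correct and follows the paper's proof essentially step for step. It uses the same rescaled score $\widetilde S_{u,\lambda_0}=\{g(u)/u\}^kS_{g(u),\lambda_0}$ with the composed remainder, the same two-term split for (P4), the inverse-diffeomorphism argument for invariance of $k_{\mathrm J}$, and the same affine substitution $S_\mu=bS$ with the power cancellation $|b|^2|b|^{-2/k}|b|^{-(2-2/k)}=1$. One cosmetic correction: in the affine case the $C^1$ remainder bound is multiplied by $\max(1,b^2)$ rather than $b^2$, which does not affect the conclusion.
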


This proposition shows that the changes in information, target scale, and
the parametrization of the unidentified set cancel in the leading risk
constant. Thus, although $I$ and $\beta$ are coordinate-dependent
quantities, their combination in \eqref{eq:envelope-sup} is not. The
order of the local experiment and its leading minimax constant are
therefore invariant under regular changes of coordinates.

\begin{remark}[Regular versus singular coordinate changes]
\label{rem:regular-coordinate}
The restriction to regular coordinate changes is essential.  If
$g'(0)=0$, the transformation can change the leading exponent and the
transformed target may no longer have a first-order variation in the new
coordinate.  Such singular reparametrizations therefore fall outside the
invariance result in \Cref{prop:coordinate-invariance}.
\end{remark}

\section{Exact global minimax risk and uniform attainment}
\label{sec:globalization}

The local analysis identifies the minimax risk along each critical path. We now determine whether the largest of these local risks also governs the minimax risk over the full parameter space, including parameters away from the null parameter set. Under uniform efficiency, calibration, and recovery conditions, the answer is affirmative: the largest critical-path constant is the exact global minimax constant, and a single estimator attains it uniformly over $\Xi$.

\subsection{Uniform setup and the main theorem}

We first formulate a uniform version of the local setup. Fix $k\ge2$ and a
compact interval $\mathcal C\subset\mathbb R$. Let
$\Lambda(c)=[\ell(c),u(c)]$ be a compact interval and assume, for some
$C_\Lambda<\infty$,
$$
 |\ell(c)-\ell(c')|+|u(c)-u(c')|
 \le C_\Lambda|c-c'|,\qquad c,c'\in\mathcal C.
$$
Write $L(c)=\diam\Lambda(c)$. Here $c$ is the regular coordinate and $v$
describes departures from $v=0$. The parameter $\lambda$ is unidentified
at $v=0$ whenever $L(c)>0$.
Assume that $\Xi$ is a compact metric space and specialize to a bounded
Borel target $\tau:\Xi\to\mathbb R$. Assume that
$\xi\mapsto P_\xi$ is a Borel probability kernel. Use a Borel coordinate map
$\xi(c,v,\lambda)\in\Xi$, where $c\in\mathcal C$, $|v|\le v_0$, and
$\lambda\in\Lambda(c)$, and define $\mathcal F_c=\{\xi(c,0,\lambda):\lambda\in\Lambda(c)\}.$
Abbreviate
$$
 P_{c,v,\lambda}=P_{\xi(c,v,\lambda)},\qquad
 \E_{c,v,\lambda}=\E_{\xi(c,v,\lambda)},\qquad
 \tau(c,v,\lambda)=\tau\{\xi(c,v,\lambda)\}.
$$
Assume that a continuous coordinate $v_\circ:\Xi\to\mathbb R$ satisfies
$v_\circ\{\xi(c,v,\lambda)\}=v$, and that the chart contains
${\xi\in\Xi:|v_\circ(\xi)|\le v_0/2}$.
Its estimated magnitude will determine the weights assigned to the
candidate estimators.
Let $\widehat\tau_n:\mathcal O^n\to\mathbb R$ range over measurable
estimators and define
$$
 \mathcal R_n=\inf_{\widehat\tau_n}\sup_{\xi\in\Xi}
 \E_\xi\{\widehat\tau_n-\tau(\xi)\}^2.
$$
Assume that the likelihood conditions in \Cref{ass:profile-jet} hold on
every fixed-$c$ profile with $L(c)>0$, uniformly in $c$, with continuous
$I(c)>0$. For a continuous $\beta(c)\ne0$, assume $\tau(c,0,\lambda)=0,$ and  $\tau(c,v,\lambda)=v\beta(c)\lambda+o(|v|)$ uniformly in $(c,\lambda).$
Let
\begin{equation}\label{eq:general-exact-constant}
\mathsf C_k=K_k\sup_{c\in\mathcal C}
\beta(c)^2I(c)^{-1/k}L(c)^{2-2/k}.
\end{equation}

The quantity $\mathsf C_k$ is the largest local asymptotic minimax
constant over the regular coordinate $c$. The next theorem determines
whether this local benchmark remains sharp globally. Its lower bound
follows from the critical paths already analyzed, whereas the upper
bound requires a single procedure that remains uniformly efficient as
the parameter moves from the critical region to regions where ordinary
recovery is stable.

\begin{theorem}[Exact global minimax risk and uniform attainment]
\label{thm:global-lower-bound}
\label{thm:exact-globalization}
\label{thm:attaining-estimator}
Under the uniform setup above,
\begin{equation*}
 \liminf_{n\to\infty}n^{1/k}\mathcal R_n\ge\mathsf C_k.
\end{equation*}
If, in addition, conditions \textup{(E)}, \textup{(C)}, and \textup{(O)}
below hold, then
\begin{equation*}
 \lim_{n\to\infty}n^{1/k}\mathcal R_n=\mathsf C_k,
\end{equation*}
and the estimator constructed in \Cref{sec:computable-global-rule} attains this limit, with tuning sequences chosen as in the proof in the Supplementary Material.
\end{theorem}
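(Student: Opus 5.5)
The lower bound and the upper bound are proved separately.

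\emph{Lower bound.} Fix $c\in\mathcal C$ with $L(c)>0$ and $x\ne0$. The critical path $\{\xi(c,xn^{-1/(2k)},\lambda):\lambda\in\Lambda(c)\}$ lies in $\Xi$ for large $n$. The global risk $\mathcal R_n$ is at least the minimax risk restricted to this path. By \Cref{thm:profile-lan} and \Cref{prop:normal-envelope}, applied to the fixed-$c$ profile,
\[
\liminf_n n^{1/k}\mathcal R_n\ \ge\ \beta(c)^2x^2M\bigl(\{I(c)x^{2k}\}^{-1},\Lambda(c)\bigr).
\]
Taking the supremum over $x\ne0$ gives $K_k\beta(c)^2I(c)^{-1/k}L(c)^{2-2/k}$ by \eqref{eq:envelope-sup}. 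The supremum over $c$ then gives $\mathsf C_k$. No uniformity in $c$ is needed, because the supremum is taken after the liminf. Profiles with $L(c)=0$ contribute zero.

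\emph{Upper bound: construction and regimes.} Fix a large constant $X$ and tuning sequences $\epsilon_n\to0$ and $h_n$, with $n^{-1/(2k)}\ll h_n$ so that $|v|\ge h_n$ is separated from the null. Split $\Xi$ into two regimes according to an estimate $\widehat v$ of $|v_\circ(\xi)|$.
\begin{itemize}
\item \emph{Outer regime}, $|v|\gtrsim Xn^{-1/(2k)}$. Use the recovery estimator: minimum distance on the empirical feature, as in \Cref{thm:generic-rate}, together with stability condition (O). Its squared error is $o(n^{-1/k})$ once $|v|/n^{-1/(2k)}\to\infty$. Heuristically, the local information $n v^{2k}$ diverges and the error is of order $\beta^2v^2/(nv^{2k})$, which is $o(n^{-1/k})$ when $x=vn^{1/(2k)}\to\infty$. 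Condition (O) must deliver this uniformly, including far from the null.
\item \emph{Critical regime}. Use calibration estimates $(\widehat c,\widehat v)$ from a sample split. Condition (C) makes their errors negligible on the critical scale: $\widehat c-c=o_p(1)$ and $\widehat v-v=o_p(n^{-1/(2k)})$, uniformly. Then form the efficient local statistic $W_n=\lambda_0+\Delta_n/(n\widehat v^{2k}\widehat I)$ from the feature mean given by condition (E). By \Cref{thm:profile-lan}, $W_n\approx N(\lambda,\{I x^{2k}\}^{-1})$ uniformly. Apply the bounded-normal minimax rule for $\Lambda(\widehat c)$, rescaled by $\widehat v\beta(\widehat c)$.
\end{itemize}

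\emph{Upper bound: combining the rules.} Combine the two rules with a smooth weight in $\widehat v n^{1/(2k)}$, and clip the result to $\sup|\tau|$. Clipping keeps the risk bounded and is what allows convergence in probability to be turned into convergence of the risk.

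The risk bound then goes as follows. Suppose the upper bound fails. Take a sequence $\xi_n$ with $n^{1/k}$-normalized risk above $\mathsf C_k+\eta$. Pass to a subsequence on which $(c_n,x_n=v_nn^{1/(2k)})$ converges in $\mathcal C\times[-\infty,\infty]$; compactness is used here.
\begin{itemize}
\item If $|x_n|\to\infty$, the outer bound applies.
\item If $x_n\to x$ finite, the uniform version of \Cref{thm:profile-lan}, in which the profile conditions hold uniformly in $c$, shows the normalized risk tends to $\beta(c)^2x^2$ times the risk of the bounded-normal rule. Continuity of $I,\beta,\ell,u$ and Lipschitz continuity of $\Lambda(c)$ control the plug-in $\widehat c$. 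This limit is at most $\mathsf C_k$, plus a term vanishing as the rule approximates $M$.
\end{itemize}
Uniform integrability, from clipping together with the fourth-moment bound in (P2), turns convergence in distribution into convergence of the risk. This contradicts the assumed excess $\eta$.

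\emph{Main obstacle.} The hard step is the transition zone: $|x|$ large but not yet in the stable-recovery region, with $c$ estimated. There, the local rule's risk must be shown to approach the bounded-normal envelope uniformly, and the weights' misclassification error must be controlled. My first attempt would be to choose $X=X_n\to\infty$ slowly. With that choice, both rules are accurate on the overlap, since the bounded-normal rule's risk behaves like $\beta^2x^2/(Ix^{2k})\to0$ times $n^{-1/k}$. Then check that (C) gives $\widehat v/v\to1$ uniformly there.
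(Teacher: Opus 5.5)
Your lower bound is the paper's argument. The upper bound has a genuine gap in the intermediate zone $Xn^{-1/(2k)}\lesssim|v|\le v_0/4$, and the tools you name cannot close it.

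The minimum-distance estimator of Theorem~2.4 gives at best $O(n^{-1/k})$ with a non-negligible constant. It also needs a global order-$k$ inverse inequality, which is not among (E), (C), (O). Condition (O)(ii) gives a Lipschitz inverse only on the fixed region $\Xi_O\supseteq\{|v_\circ|\ge v_0/4\}$. No first-order inequality with a fixed constant can reach $|v|\to0$, because the local inverse constant grows like $|v|^{1-k}$.

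Your fallback, running the bounded-normal rule on $\widehat\Gamma_{N_n}/\{I(\widehat c)\widehat v^k\}$ across the transition zone, also fails. By (E), $\Gamma(c,v,\lambda)=I(c)\lambda v^k+O(|v|^{k+\zeta})$, so this linearized statistic can be biased by order $|v|^\zeta$ in $\lambda$. That is order $|v|^{1+\zeta}$ in the target, and $n^{1/k}|v|^{2+2\zeta}$ diverges for $|v|\gg n^{-1/\{2k(1+\zeta)\}}$, in particular at fixed $v$; in the logistic model $\zeta=2$.

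The missing idea is the paper's separate inversion candidate. Project $\widehat\Gamma_{N_n}(\widehat c,\widehat v)$ onto $\Gamma\{\widehat c,\widehat v,\Lambda(\widehat c)\}$ and invert the \emph{exact} map $\lambda\mapsto\Gamma(\widehat c,\widehat v,\lambda)$. Three facts then control its error:
\begin{itemize}
\item The bound $|\partial_\lambda\Gamma|\ge a_O|v|^k$ in (O)(i) makes the inverse $a_O^{-1}|v|^{-k}$-Lipschitz.
\item The bound $|\partial_\lambda\tau|\le A_O|v|$ turns this into a target error of order $|v|^{1-k}$ times the moment error.
\item The calibration residual in (C) is weighted by exactly $N_n^{1/(2k)}|v|^{1-k}$, so it controls the plug-in of $(\widehat c,\widehat v)$.
\end{itemize}
Together these give normalized risk at most $CM^{-(2k-2)}$, uniformly over $N_n^{1/(2k)}|v|\ge M$ in the chart. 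This is your heuristic $\beta^2v^2/(nv^{2k})$ made rigorous. The proof is then finished by ramps in $\widehat s_n$ (zero, local, inverse) and in $|\widehat v|$ (to the outer rule), with a diagonal choice $r_n\downarrow0$, $M_n\uparrow\infty$.

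Three further steps fail as written.
\begin{itemize}
\item Clipping to $[-B_\tau,B_\tau]$ does not make $n^{1/k}(\widehat\tau_n-\tau)^2$ uniformly integrable; it only bounds it by $4B_\tau^2n^{1/k}$. Also, (P2) concerns the unobservable score, not your statistic. What works is that the local decisions lie in $\Lambda(\widehat c)$, so the normalized loss is bounded on $\mathcal H_n(c,v)$ for $x$ in compact ranges. Off these events, (C) makes the failure probabilities $o(N_n^{-1/k})$, so the bounded loss there is negligible.
\item The Gaussian limit of your observable statistic, with $(\widehat c,\widehat v)$ plugged in, comes from the $(2+\delta)$-moment CLT in (E) together with the residual bound in (C). It does not come from Theorem~3.1, whose $\Delta_{n,x,\lambda_0}$ is built from the score at the true, unknown coordinates.
\item To pass $\widehat c\to c$ and $\widehat J_n\to I(c)x^{2k}$ through the decision rule in your subsequence argument, you need $\eta$-minimax bounded-normal rules depending continuously on $(\ell,u,J)$. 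An arbitrary selection of minimax rules need not be continuous in its index; the paper builds such a family with a partition of unity.
\end{itemize}
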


\Cref{thm:global-lower-bound} shows that no parameter region outside the critical paths can increase the first-order minimax risk, provided the estimator can retain the local information near $v=0$, estimate the relevant coordinates without affecting the leading risk, and recover the target stably away from the null parameter set. We next state conditions that guarantee these three properties.

\smallskip
\noindent\textbf{(E) Efficient local statistic.}\par
\noindent There are observed features $\phi_{c,v}:\mathcal O\to\mathbb R$ such that
$(c,v,o)\mapsto\phi_{c,v}(o)$ is jointly measurable. Put
$\Gamma(c,v,\lambda)=\E_{c,v,\lambda}\{\phi_{c,v}(O)\}$.
For some $\zeta,\delta>0$, as $v\to0$ uniformly in
$c\in\mathcal C$ and $\lambda\in\Lambda(c)$,
\begin{equation*}
 \begin{aligned}
 \Gamma(c,v,\lambda)
 &=I(c)\lambda v^k+O(|v|^{k+\zeta}),\qquad
 \frac{\partial\Gamma(c,v,\lambda)}{\partial\lambda}
 =I(c)v^k+O(|v|^{k+\zeta}),\\
 \operatorname{Var}_{c,v,\lambda}\{\phi_{c,v}(O)\}
 &=I(c)+O(|v|^\zeta),\qquad\sup_{\substack{c\in\mathcal C,\ 0<|v|\le v_0\\
                  \lambda\in\Lambda(c)}}
 \E_{c,v,\lambda}
 \left|\phi_{c,v}(O)-\Gamma(c,v,\lambda)\right|^{2+\delta}<\infty.
 \end{aligned}
\end{equation*}
 
\smallskip
\noindent\textbf{(C) Negligible calibration error.}\par
\noindent A calibration sample $\mathcal D_{\rm cal}$ of size $m_n=o(n)$
yields $\mathcal D_{\rm cal}$-measurable estimates
$(\widehat c,\widehat v)\in\mathcal C\times[-v_0,v_0]$.
On the remaining $N_n=n-m_n$ independent observations
$O_1,\ldots,O_{N_n}$, define
\[
 Z_{N_n}(c,v,\lambda)
 :=\frac1{N_n}\sum_{i=1}^{N_n}\phi_{c,v}(O_i)-\Gamma(c,v,\lambda).
\]
There are deterministic $\varepsilon_n\downarrow0$ and
$\mathcal D_{\rm cal}$-measurable events $\mathcal H_n(c,v)$ such that
\[
 \begin{gathered}
 N_n^{1/(2k)}\varepsilon_n\to0,\qquad
 \mathcal H_n(c,v)\subseteq
 \{|\widehat c-c|+|\widehat v-v|\le\varepsilon_n\},\\
 \sup_{\xi\in\Xi}P_\xi^{\otimes m_n}
 \{|\widehat v-v_\circ(\xi)|>\varepsilon_n\}
 =o(N_n^{-1/k}),\qquad
 \sup_{\substack{c\in\mathcal C,\ |v|\le v_0\\
                   \lambda\in\Lambda(c)}}
 P_{c,v,\lambda}^{\otimes m_n}\{\mathcal H_n(c,v)^c\}
 =o(N_n^{-1/k}).
 \end{gathered}
\]
Let $\lambda_{\widehat c}=\Pi_{\Lambda(\widehat c)}(\lambda)$.
For every fixed $h>0$,
\begin{equation}\label{eq:calibration-orthogonality}
 \sup_{\substack{c\in\mathcal C,\ \lambda\in\Lambda(c)\\
                   hN_n^{-1/(2k)}\le |v|\le v_0}}
 N_n^{1/(2k)}|v|^{1-k}
 \left\|
  \bigl[Z_{N_n}(\widehat c,\widehat v,\lambda_{\widehat c})
        -Z_{N_n}(c,v,\lambda)\bigr]\one_{\mathcal H_n(c,v)}
 \right\|_2
 \longrightarrow0.
\end{equation}
where the $L^2$-norm is taken over both sample parts under
$P_{c,v,\lambda}^{\otimes n}$.

\smallskip
\noindent\textbf{(O) Stable recovery away from the null parameter set.}\par
\noindent For constants $0<a_O\le A_O<\infty$, the following hold.

\noindent\textit{(i) Within the chart,} derivatives at boundary or singleton profiles use differentiable local
extensions. Uniformly in $c\in\mathcal C$,
$\lambda\in\Lambda(c)$, and $0<|v|\le v_0$, the map
$\lambda\mapsto\Gamma(c,v,\lambda)$ is one-to-one and
\[
 \begin{aligned}
 a_O|v|^k&\le\left|\frac{\partial\Gamma(c,v,\lambda)}{\partial\lambda}\right|
 \le A_O|v|^k,\qquad
 \left|\frac{\partial\tau(c,v,\lambda)}{\partial\lambda}\right|\le A_O|v|,\\
 \left|\frac{\partial\tau(c,v,\lambda)}{\partial c}\right|
 &+\left|\frac{\partial\tau(c,v,\lambda)}{\partial v}\right|\le A_O.
 \end{aligned}
\]

The derivative in \textup{(i)} may have either sign.

\noindent\textit{(ii) On a compact outer region} $\Xi_O\subset\Xi$
containing $\{\xi:|v_\circ(\xi)|\ge v_0/4\}$, there is a measurable
statistic $\Phi_O:\mathcal O\to\mathcal H_O$, where $\mathcal H_O$ is a
separable Hilbert space, with continuous mean
$\Psi_O(\xi)=\E_\xi\{\Phi_O(O)\}$ satisfying
\begin{subequations}
\begin{align}
 |\tau(\xi)-\tau(\xi')|
 &\le A_O\|\Psi_O(\xi)-\Psi_O(\xi')\|_{\mathcal H_O},
 \qquad \xi,\xi'\in\Xi_O,
 \label{eq:off-fibre-inverse}\\
 \sup_{\xi\in\Xi_O}\E_\xi
 \|\Phi_O(O)-\Psi_O(\xi)\|_{\mathcal H_O}^2
 &\le A_O.
 \label{eq:off-fibre-variance}
\end{align}
\end{subequations}
These three conditions separate the requirements for efficient local estimation,
negligible calibration error, and stable recovery away from the null parameter
set. Condition~(E) ensures that an observable statistic retains the local
information needed for exact constant attainment: its leading signal and
variance match the information coefficient $I(c)$. This distinction between
valid and efficient estimating functions is also emphasized in
\cite{chernozhukovEtAl2018}. Condition~(C) ensures that estimation of the
regular and transverse coordinates does not affect the leading risk; in
particular, the weighted calibration error must be negligible at the critical
scale. Related requirements arise in the preliminary localization argument
of \cite{nussbaum1996equivalence} and the score-perturbation bounds of
\cite{chernozhukovEtAl2018}. Finally, Condition~(O) ensures stable recovery
away from the critical region. The derivative bounds control inversion within
the local chart, while the inverse inequality gives $O(n^{-1})$ risk in the
outer region. Thus, under (O), regions away from the critical paths do not
contribute to the first-order minimax risk.

\begin{remark} 
For the two models in Sections~5 and~6, we verify conditions \textup{(E)}--\textup{(O)} in the Supplementary Material. The calibration procedures used in these examples satisfy condition \textup{(C)} if
$$
\begin{aligned}
N_n^{4/5}\log N_n&=o(m_n)
&&\text{for the logistic model},\\
N_n^{1/2}\log N_n&=o(m_n)
&&\text{for the network model},
\end{aligned}
$$
together with \(m_n=o(n)\). These conditions are sufficient for the calibration constructions considered in the two examples.
\end{remark}

The same strategy can be implemented with other local statistics,
coordinate estimators, and recovery procedures. Failure of a particular
bound above does not by itself preclude attainment; what is required is
the corresponding approximation and uniform squared-risk control for the
model-specific construction.

\subsection{An attaining estimator}
\label{sec:computable-global-rule}
Conditions (E)--(O) provide the ingredients for uniform attainment. We now construct a measurable estimator that combines these ingredients across the parameter space. The construction uses the calibration estimates \((\widehat c,\widehat v)\), a calibrated local statistic in the critical region, and an empirical feature mean in the outer region. The weights depend on the estimated distance to the null parameter set and smoothly transition between the corresponding candidate estimators.

Use the first \(m_n=o(n)\) observations for calibration and let \(N_n=n-m_n\). The calibration sample yields \((\widehat c,\widehat v)\) satisfying condition \textup{(C)}. Relabel the remaining observations as \(O_1,\ldots,O_{N_n}\). We first construct a small collection of summary statistics that will be used in the different identification regimes. Define
\begin{equation*}
 \begin{aligned}
 \widehat\Gamma_{N_n}(\widehat c,\widehat v)
 &=\frac1{N_n}\sum_{i=1}^{N_n}\phi_{\widehat c,\widehat v}(O_i),\\
 \widehat W_n&=
 \begin{cases}
 \displaystyle\frac{\widehat\Gamma_{N_n}(\widehat c,\widehat v)}
 {I(\widehat c)\widehat v^k},&\widehat v\ne0,\\[6pt]
 0,&\widehat v=0,
 \end{cases}
 \qquad
 \widehat\Psi_{O,N_n}=\frac1{N_n}\sum_{i=1}^{N_n}\Phi_O(O_i).
 \end{aligned}
\end{equation*} 
where \(\phi_{c,v}\) and \(\Phi_O\) are the features in conditions \textup{(E)} and \textup{(O)}, respectively. We collect these quantities in
\begin{equation}\label{eq:computable-summary-statistics}
 T_n=(\widehat c,\widehat v,\widehat W_n,\widehat\Psi_{O,N_n}).
\end{equation}

The two calibration quantities that determine the routing are
$$
 \widehat J_n=N_nI(\widehat c)|\widehat v|^{2k},
 \qquad
 \widehat s_n=N_n^{1/(2k)}|\widehat v|.
$$
Here \(\widehat J_n\) is the estimated local information for \(\lambda\),
whereas \(\widehat s_n\) measures the distance from the critical scale.
The statistic \(\widehat W_n\) provides the observation for the local
Gaussian decision problem. It also contains the empirical moment needed
for inversion: whenever \(\widehat v\ne0\),
\(\widehat\Gamma_{N_n}(\widehat c,\widehat v) =I(\widehat c)\widehat v^k\widehat W_n.\)
Finally, \(\widehat\Psi_{O,N_n}\) is used for stable recovery in the outer
region.

We next define the candidate estimators associated with these three
regimes. For \(J>0\), let
\(\delta^{(\varepsilon_n^{\rm BN})}_{\Lambda(c),J}:\mathbb R\to\Lambda(c)\)
be the measurable family of bounded \(\varepsilon_n^{\rm BN}\)-minimax rules for
\[
 W\mid\lambda\sim N(\lambda,J^{-1}),\qquad\lambda\in\Lambda(c),
\]
selected as specified in the proof of \Cref{thm:exact-globalization}.
When \(\widehat v\ne0\) and \(L(\widehat c)>0\), the corresponding local
candidate is
\begin{equation*}
 \begin{aligned}
 \widehat\lambda_n^L
 =\delta^{(\varepsilon_n^{\rm BN})}_{\Lambda(\widehat c),\widehat J_n}
 (\widehat W_n),\qquad
 \widehat\tau_n^L=\tau(\widehat c,\widehat v,\widehat\lambda_n^L).
 \end{aligned}
\end{equation*}
This candidate is designed for the critical region, where the local
Gaussian experiment determines the first-order risk.

For stronger identification, we instead recover \(\lambda\) by inverting
the population moment map. Let \(\Pi_A\) denote projection onto \(A\).
When \(\widehat v\ne0\), \(L(\widehat c)>0\), and
\(|\widehat v|\le7v_0/16\), define
\begin{align*}
 \widehat\lambda_n^I
 &=\Gamma(\widehat c,\widehat v,\cdot)^{-1}\!\left[
 \Pi_{\Gamma\{\widehat c,\widehat v,\Lambda(\widehat c)\}}
 \widehat\Gamma_{N_n}(\widehat c,\widehat v)\right],
 \\
 \widehat\tau_n^I
 &=\tau(\widehat c,\widehat v,\widehat\lambda_n^I).
\end{align*}
The projection keeps the empirical moment within the range of the
moment map before inversion. Away from the critical region, condition
\textup{(O)} makes this inversion stable and therefore suitable for
first-order estimation.

For parameters in the outer region, let
\begin{equation*}
 \widehat\xi_{O,n}\in\argmin_{\zeta\in\Xi_O}
 \|\widehat\Psi_{O,N_n}-\Psi_O(\zeta)\|_{\mathcal H_O},\qquad
 \widehat\tau_n^O=\tau(\widehat\xi_{O,n}).
\end{equation*}
where a measurable minimizer as a function of the empirical feature mean
is fixed in advance. We also retain the
zero candidate $\widehat\tau_n^0=0,$ which is appropriate at the null parameter set.

The final estimator combines these candidates according to the estimated
identification regime. Choose \(0<2r_n<M_n\), with
\(r_n\downarrow0\) and \(M_n\uparrow\infty\), using the tuning sequences
chosen in the proof in the Supplementary Material. For \(a<b\), let $ \rho_{a,b}(t)=\min\{1,\max[0,(t-a)/(b-a)]\}.$
The transitions among the local candidates are determined by
\(\widehat s_n\), while the transition to the outer estimator is
determined directly by \(|\widehat v|\):
\begin{equation*}
 \begin{aligned}
 u_n=\rho_{r_n,2r_n}(\widehat s_n),\qquad
 g_n=\rho_{M_n,2M_n}(\widehat s_n),\qquad
 o_n=\rho_{3v_0/8,\,7v_0/16}(|\widehat v|),
 \end{aligned}
\end{equation*}
and
\begin{equation*}
 \begin{aligned}
 \omega_{0,n}&=(1-o_n)(1-u_n),&
 \omega_{L,n}&=(1-o_n)u_n(1-g_n),\\
 \omega_{I,n}&=(1-o_n)u_ng_n,&
 \omega_{O,n}&=o_n.
 \end{aligned}
\end{equation*}
These weights are nonnegative and sum to one. Thus, as the estimated
critical scale increases, the estimator moves from the zero candidate
to the local Gaussian rule and then to moment inversion; as
\(|\widehat v|\) enters the outer region, the outer estimator takes over.
Evaluate a candidate only when its weight is positive, and set
zero-weight contributions to zero.

Let \(B_\tau=\sup_{\xi\in\Xi}|\tau(\xi)|\). When \(L(\widehat c)>0\), the resulting estimator is the projected
weighted combination 
\begin{equation*}
 \widehat\tau_n=\Pi_{[-B_\tau,B_\tau]}\!\left(
 \sum_{j\in\{0,L,I,O\}}\omega_{j,n}\widehat\tau_n^j\right).
\end{equation*}
If \(L(\widehat c)=0\), the interval reduces to a singleton and we use
instead
\begin{equation*}
\widehat\tau_n=\Pi_{[-B_\tau,B_\tau]}
\left[
(1-o_n)\tau\{\widehat c,\widehat v,\ell(\widehat c)\}
+o_n\widehat\tau_n^O
\right].
\end{equation*}

The construction is designed so that no
single estimator needs to provide optimal recovery throughout the
parameter space: the local Gaussian rule is used where identification is
weakest, inversion is used once the moment map becomes sufficiently
informative, and the outer estimator handles parameters separated from
the null parameter set.

The construction above is intended to preserve the local minimax
constant while providing stable recovery throughout the full parameter
space. The following result establishes that the transitions between the
different candidates do not affect the first-order risk.

Under the uniform setup and conditions \textup{(E)}, \textup{(C)}, and
\textup{(O)} of \Cref{thm:exact-globalization}, with the decision rules
and tuning sequences chosen in the proof in the Supplementary Material,
the estimator constructed above satisfies
$$
 \lim_{n\to\infty}n^{1/k}\sup_{\xi\in\Xi}
 \E_\xi\{\widehat\tau_n-\tau(\xi)\}^2
 =\mathsf C_k.
$$

Thus the constructed estimator attains the same constant as the global
minimax lower bound. Statistically, this shows that the first-order
difficulty of the estimation problem is concentrated in the critical
region: once the estimator retains the local Gaussian information there,
the regions with stronger identification can be handled without
increasing the global minimax risk. The smooth routing construction
therefore converts the local optimal procedure into a single estimator
that is uniformly optimal over \(\Xi\).

Because the estimator is a measurable function of \(T_n\), the same
construction also permits a comparison of the full experiment with the
reduced experiment generated by these summary statistics. We formalize
this comparison in the following corollary.

\begin{corollary}[Minimax risk of the reduced experiment]
\label{cor:reduced-experiment-risk}
Under the uniform setup and conditions \textup{(E)}, \textup{(C)}, and
\textup{(O)} of \Cref{thm:exact-globalization}, let $T_n$ be defined by \eqref{eq:computable-summary-statistics}, with the calibration
choices satisfying condition \textup{(C)}. Consider the full
experiment and the reduced experiment
\[
 \mathcal E_n=\{P_\xi^{\otimes n}:\xi\in\Xi\},\qquad
 \mathcal F_n=\{\mathcal L_\xi(T_n):\xi\in\Xi\}.
\]
Their minimax risks for estimating $\tau(\xi)$ under squared-error loss
satisfy
\[
 \begin{aligned}
 \lim_{n\to\infty}n^{1/k}\inf_{d_n}\sup_{\xi\in\Xi}
 \E_\xi\{d_n(T_n)-\tau(\xi)\}^2
 &=\mathsf C_k,\\
 \lim_{n\to\infty}n^{1/k}
 \inf_{\widetilde\tau_n}\sup_{\xi\in\Xi}
 \E_\xi\{\widetilde\tau_n-\tau(\xi)\}^2
 &=\mathsf C_k,
 \end{aligned}
\]
where the first infimum ranges over all measurable decision rules based
on $T_n$, and the second over all measurable estimators based on the full
sample.  The estimator constructed above attains this common limit.
\end{corollary}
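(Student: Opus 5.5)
The corollary follows by sandwiching the reduced-experiment minimax risk between the full-sample minimax risk and the risk of the constructed estimator. Write
$$
\mathcal R_n^{T}=\inf_{d_n}\sup_{\xi\in\Xi}\E_\xi\{d_n(T_n)-\tau(\xi)\}^2,
$$
and let $\mathcal R_n$ be as in \Cref{thm:exact-globalization}. These risks satisfy three relations.

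\textbf{Reduced risk dominates full risk.} Every measurable decision rule $d_n$ on the range of $T_n$ gives a full-sample estimator $\widetilde\tau_n=d_n\circ T_n$. This composition is measurable on $\mathcal O^n$, because $T_n$ is measurable. Indeed, $(\widehat c,\widehat v)$ are measurable functions of the calibration sample. Moreover, $\widehat W_n$ is measurable by joint measurability of $(c,v,o)\mapsto\phi_{c,v}(o)$ in (E), combined with a measurable case split on $\{\widehat v=0\}$. Finally, $\widehat\Psi_{O,N_n}$ is a Bochner average of measurable $\mathcal H_O$-valued features. Hence $\mathcal R_n^T\ge\mathcal R_n$ for every $n$. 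The lower bound in \Cref{thm:global-lower-bound} therefore gives
$$
\liminf_n n^{1/k}\mathcal R_n^T\ge\liminf_n n^{1/k}\mathcal R_n\ge\mathsf C_k.
$$

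\textbf{Constructed estimator dominates reduced risk.} By construction, $\widehat\tau_n$ is a function of $T_n$ alone. The weights $u_n,g_n,o_n$ depend only on $(\widehat c,\widehat v)$ through $\widehat s_n$ and $|\widehat v|$. The candidate $\widehat\tau_n^L$ depends on $(\widehat c,\widehat v,\widehat J_n,\widehat W_n)$. The candidate $\widehat\tau_n^I$ uses $\widehat\Gamma_{N_n}(\widehat c,\widehat v)=I(\widehat c)\widehat v^k\widehat W_n$ whenever its weight is positive, which requires $\widehat v\neq0$. The outer candidate $\widehat\tau_n^O$ is a fixed measurable minimizer applied to $\widehat\Psi_{O,N_n}$, and the final projection and the $L(\widehat c)=0$ branch likewise use only $T_n$.

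The one point needing care is measurability of the map $t\mapsto\widehat\tau_n$. This needs three facts:
\begin{itemize}
\item the family $\delta^{(\varepsilon_n^{\rm BN})}_{\Lambda(c),J}(w)$ is jointly measurable in $(c,J,w)$, which is how it is selected in the proof of \Cref{thm:exact-globalization};
\item the inverse $\Gamma(c,v,\cdot)^{-1}$ composed with projection onto the interval $\Gamma\{c,v,\Lambda(c)\}$ is jointly measurable; this follows from continuity and strict monotonicity in $\lambda$ under (O)(i);
\item $\tau(c,v,\lambda)$ is Borel, being the composition of the Borel target with the Borel chart.
\end{itemize}
I expect this measurability bookkeeping to be the main, though modest, obstacle. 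I would handle it by writing $\widehat\tau_n=d_n^\ast(T_n)$ with $d_n^\ast$ built explicitly from these Borel pieces. Then $\mathcal R_n^T\le\sup_\xi\E_\xi\{\widehat\tau_n-\tau(\xi)\}^2$.

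\textbf{Upper limit from the attainment result.} The attainment statement following the construction gives
$$
\lim_n n^{1/k}\sup_\xi\E_\xi\{\widehat\tau_n-\tau(\xi)\}^2=\mathsf C_k .
$$
Combining the three relations,
$$
\mathsf C_k\le\liminf_n n^{1/k}\mathcal R_n\le\liminf_n n^{1/k}\mathcal R_n^T\le\limsup_n n^{1/k}\mathcal R_n^T\le\mathsf C_k,
$$
and also $\mathcal R_n\le\mathcal R_n^T$. Hence both normalized limits exist and equal $\mathsf C_k$. The constructed estimator attains the common limit in both experiments, which proves the corollary.
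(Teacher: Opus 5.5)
Your proposal is correct and follows the paper's proof. Like the paper, you show that $\widehat\tau_n=h_n(T_n)$ for a measurable $h_n$, sandwich $\mathcal R_n\le\mathcal R_n^T\le\sup_{\xi\in\Xi}\E_\xi\{\widehat\tau_n-\tau(\xi)\}^2$, and then invoke the lower bound and the attainment part of \Cref{thm:exact-globalization}; your measurability bookkeeping (for $T_n$ itself and for the projected inverse via strict monotonicity) only spells out what the paper asserts in one line.
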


\Cref{cor:reduced-experiment-risk} shows that, for squared-error estimation of \(\tau\), the summary statistics \(T_n\) retain the exact first-order global minimax accuracy of the full sample. In particular, reducing the data to the calibration estimates, the local statistic, and the outer-region feature mean entails no first-order minimax loss. Numerical implementation details are provided in the Supplementary Material.

\section{A fifth-order logistic phase}\label{sec:model-rate}

We consider logistic regression with one additive surrogate and two unknown variance components. Profiling absorbs their lower-order effects, leaving a fifth-order distinction in the observed law. We determine the resulting exact global minimax risk and extend the result to sparse coefficient vectors.

\Needspace{7\baselineskip}
\subsection{Model settings}
\label{sec:logistic-order}

For \(p\ge1\), we observe \(n\) independent copies of \((Y,Z)\), where
\begin{equation*}
 X\sim N(0,aI_p),\qquad
 W\sim N(0,\eta I_p),\qquad
 Z=X+W,\qquad
 Y\mid X\sim\operatorname{Bernoulli}\{m(X^\top\theta)\},
\end{equation*}
where \(X\perp W\), \(Y\perp W\mid X\), \(m(t)=(1+e^{-t})^{-1}\), and
\begin{equation}\label{eq:variance-rect}
 a\in\cA=[a_-,a_+],\ \ 
 \eta\in\cE=[\eta_-,\eta_+],\ \ 
 0<a_-<a_+<\infty,\ \ 0<\eta_-<\eta_+<\infty.
\end{equation}
The parameter of interest is  \(\theta\), while both the signal variance \(a\) and the measurement-error variance \(\eta\) are unknown.

We first consider the scalar case \(p=1\). Set \(R=2Y-1\),
\(c=a+\eta\), and let \(\varphi_c\) denote the \(N(0,c)\) density.
Integrating out \(X\) gives the observed density
\begin{equation*}
 \begin{aligned}
 p_{\theta,a,\eta}(r,z)
 &=\frac{\varphi_c(z)}2\{1+r g(z\mid\theta,a,\eta)\},
 \qquad r\in\{-1,1\},\\
 g(z\mid\theta,a,\eta)
 &=\E_G\tanh\left[\frac{\theta}{2}
   \left\{\frac{az}{c}+\sqrt{\frac{a\eta}{c}}G\right\}\right],
 \qquad G\sim N(0,1).
 \end{aligned}
\end{equation*}

At \(\theta=0\), the observed density depends on \(a\) and \(\eta\)
only through their sum \(c\). Hence the variance decomposition is
unidentified on the null parameter set, even though the target itself is
zero. To study the local effect of this nonidentification, suppose that
some total variance admits two distinct interior decompositions, that is,  $(a_0,\eta_0)\ne(a_1,\eta_1)$ but $a_0+\eta_0=a_1+\eta_1=:c.$ 
The relevant question is then how rapidly the observed distribution
distinguishes these decompositions as the target moves away from zero.

We reparameterize the model using
\begin{equation*}
 c=a+\eta,\qquad
 \lambda=\frac ca,\qquad
 v=\frac{a\theta}{\sqrt c}\,
 \E_G\sech^2\left(\frac{\theta\sqrt a}{2}G\right),
 \qquad G\sim N(0,1).
\end{equation*}
The admissible ranges are
\[
 \cC=[a_-+\eta_-,a_++\eta_+],\qquad
 \Lambda(c)=\left[
 \frac{c}{\min\{a_+,c-\eta_-\}},
 \frac{c}{\max\{a_-,c-\eta_+\}}\right].
\]
Let \(U(v,\lambda)\) denote the local inverse at zero of
\(u\mapsto u\,\E_G\sech^2(\sqrt\lambda\,uG/2)\). Then
\begin{equation*}
 a=\frac c\lambda,\qquad
 \eta=c(1-\lambda^{-1}),\qquad
 \theta=\tau_L(c,v,\lambda)
 =\frac{\lambda U(v,\lambda)}{\sqrt c}
 =\frac{\lambda v}{\sqrt c}+O(v^3).
\end{equation*}
The coefficient-matching argument establishing these coordinates is given
in the Supplementary Material.

To identify the first informative likelihood order, let
\(P^{(c)}_{v,\lambda}\) denote the observed law and fix
\(\lambda_0\in\Lambda(c)\). Writing \(T=Z/\sqrt c\), we have
\begin{equation*}
 \frac{dP^{(c)}_{v,\lambda}}{dP^{(c)}_{v,\lambda_0}}(R,Z)-1
 =(\lambda-\lambda_0)v^5S_{c,v,\lambda_0}(R,Z)
 +r(R,Z\mid c,v,\lambda;\lambda_0).
\end{equation*}
where, with
\(g(t\mid v,\lambda)=\E_{c,v,\lambda}(R\mid T=t)\) and
\(H_3(t)=t^3-3t\),
\begin{equation*}
 \begin{aligned}
 S_{c,v,\lambda_0}(R,Z)
 =\frac{RH_3(T)}{96\{1+Rg(T\mid v,\lambda_0)\}},\qquad I(c)=\lim_{v\to0}\E_{c,v,\lambda_0}S_{c,v,\lambda_0}^{2}
 =I_5=\frac1{1536}.
 \end{aligned}
\end{equation*}
Thus the profiled-jet order is five on nondegenerate profiles. The expansion
and the required uniform remainder bounds are verified in the Supplementary
Material.

\subsection{Identification orders and minimax risk}
\label{sec:logistic-sparse}

For \(B>0\), define
\begin{equation*}
 \Xi_B=[-B,B]\times\cA\times\cE,\qquad
 \cR_n^{\mathrm{sc}}(B)
 =\inf_{T_n}\sup_{(\theta,a,\eta)\in\Xi_B}
   \E_{\theta,a,\eta}(T_n-\theta)^2.
\end{equation*}

The following result shows that the fifth-order profile also determines the global contact and inverse orders.

\begin{theorem}[Fifth-order identification]
\label{thm:logistic-omnibus}\label{thm:logistic-order}
Fix \(B>0\) and \(c\in\cC\) with nondegenerate \(\Lambda(c)\).
For the fixed-\(c\) profile and the global scalar experiment on \(\Xi_B\),
with target \(\theta\) and the observed feature \(\Phi_L\) defined in the
Supplementary Material,
\begin{equation*}
 k_{\mathrm J}=k_{\mathrm C}=k_{\mathrm I}=5.
\end{equation*}
\end{theorem}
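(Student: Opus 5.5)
The plan is to use the hierarchy of \Cref{thm:order-hierarchy}: it suffices to show $k_{\mathrm J}\le5$ and $k_{\mathrm I}\le5$. Then \Cref{thm:order-hierarchy} gives $5\ge k_{\mathrm I}\ge k_{\mathrm C}\ge k_{\mathrm J}$, which closes the loop provided also $k_{\mathrm J}\ge5$, i.e.\ no smaller order satisfies \Cref{cond:profile-jet}. Thus there are three steps: (a) verify \Cref{cond:profile-jet} at order five on the fixed-$c$ profile; (b) show that it fails at orders $1,\dots,4$; (c) prove a global inverse inequality $|\theta-\theta'|^5\le A\|\Psi_L(\xi)-\Psi_L(\xi')\|$ on $\Xi_B$ for the mean of the observed feature $\Phi_L$. (A3) for $\Phi_L$ holds automatically if $\Phi_L$ consists of bounded functions of $R$ times polynomials in $Z$, for instance $R\,H_j(T)$ and $H_j(T)$ for $j\le 5$, since Gaussian moments are uniformly bounded over the compact rectangle.

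\emph{Step (a).} Expand $g(t\mid v,\lambda)$ in odd powers of $v$ using the Hermite expansion of $\E_G\tanh$. The reparametrization via $U(v,\lambda)$ is chosen by coefficient matching so that the $v$ and $v^3$ coefficients of $g$ do not depend on $\lambda$. The $v^3$ term carries the $H_3(T)$ coefficient, and its $\lambda$-dependence has been profiled away at order three. The first $\lambda$-dependence therefore appears at $v^5$, and it is linear in $\lambda$ with coefficient $H_3(T)/96$ (plus $\lambda$-free $H_1,H_5$ pieces). Dividing the density ratio by $1+Rg$ gives the stated $S_{c,v,\lambda_0}$. Then $\E S=0$ because $\E[R\mid T]=g$, and the limit $I_5=\E H_3^2/96^2\cdot$(limit factor)$=6/9216=1/1536$ uses $g\to0$. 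The remainder bound (P3) follows from a uniform Taylor bound on $\partial_\lambda$ of the $v^7$ tail, using that $\tanh$ has bounded derivatives and Gaussian moments. (P4) follows from $\theta=\lambda v/\sqrt c+O(v^3)$, so $\beta=c^{-1/2}$.

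\emph{Step (b).} If the condition held at some $j<5$, then (P1)--(P3) would force $\E_{v,\lambda_0}[(dP_{v,\lambda}/dP_{v,\lambda_0}-1)^2]\ge (\lambda-\lambda_0)^2v^{2j}(I+o(1))$. But by step (a) this $\chi^2$ distance is $O(v^{10})$, which is a contradiction since $I>0$. So $k_{\mathrm J}=5$.

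\emph{Step (c), the main obstacle.} The global inverse inequality must hold on all of $\Xi_B$, not just near the null. I would split into two regions.

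\emph{Near the null}, say $|\theta|,|\theta'|\le\epsilon$, work in the $(c,v,\lambda)$ coordinates. The $H_0,H_2$ moments of $Z$ identify $c$ at first order. The moment $\E[RT]$ identifies $v$ at first order, up to $O(v^3)$ terms. A $\lambda$-sensitive moment, e.g.\ $\E[RH_3(T)]$ corrected by its $\lambda$-free part, gives $\lambda$ with derivative $\asymp v^5$. Bound $|\theta-\theta'|\lesssim|v-v'|+\max(|v|,|v'|)|\lambda-\lambda'|+|c-c'|$. Then $|\theta-\theta'|^5\lesssim|v-v'|^5+|c-c'|^5+\max|v|^5|\lambda-\lambda'|^5$. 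The first two terms are controlled linearly by the feature differences. For the third, $|\lambda-\lambda'|$ is bounded, and a case split on whether $|v-v'|\le|v|/2$ shows that the $H_3$ moment difference is at least of order $|v|^5|\lambda-\lambda'|$ minus cross terms absorbed by the first-order terms.

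\emph{Away from the null}, with $\Xi_B$ compact, show that $\Psi_L$ is injective in $\theta$ up to the identified fibres: the full law identifies $(\theta,a,\eta)$ when $\theta\ne0$, via analyticity of $g$ in $z$. Continuity plus compactness then give a Lipschitz-type bound $|\theta-\theta'|\le A\|\Psi_L-\Psi_L'\|$ wherever $\Psi_L$ has nondegenerate Jacobian. Where the Jacobian degenerates, I would have to include enough Hermite moments in $\Phi_L$ to make it full rank. Checking this nondegeneracy uniformly, especially at the corners of the rectangle where $\Lambda(c)$ shrinks, is where I expect the difficulty. I would first try to reduce it to a local argument at each point using analytic dependence and finite-order Taylor nonvanishing, then patch by compactness.
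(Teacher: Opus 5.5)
Your steps (a) and (b), and the near-null half of step (c), follow the paper closely. The paper verifies the jet with the same score $RH_3(T)/[96\{1+Rg\}]$. It controls the denominator, which you gloss over, via $1+r\tanh y\ge e^{-2|y|}$. It proves minimality by the same $O(v^{10})$ argument. Near $\theta=0$ it uses exactly the moments $\E Z^2$, $\E(RZ)$, $\E(RZ^3)$ to control $|c-c'|+|v-v'|+|J(v,\lambda)-J(v',\lambda')|$, with $J=\Gamma_3+v^3/4$ and $|J-J'|\ge ct^5|\lambda-\lambda'|-Ct^4|v-v'|$. One precision point: this works because $v=2\E(RT)$ \emph{exactly}, by Stein's identity, not merely ``up to $O(v^3)$''. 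A $\lambda$-dependent $O(t^3)$ error in recovering $v$ would swamp the $t^5|\lambda-\lambda'|$ signal.

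The genuine gap is away from the null. With a finite vector of polynomial or Hermite moments, nothing gives $\Psi_L(\xi)=\Psi_L(\xi')\Rightarrow\theta=\theta'$ on all of $\Xi_B$.
\begin{itemize}
\item Identifiability of the full observed law does not transfer to finitely many of its moments.
\item Jacobian nondegeneracy, however many moments you add, yields only local lower Lipschitz bounds. Compactness cannot upgrade local bounds to a global inequality without global injectivity.
\item The same problem hits the mixed pairs $|\theta|\le\epsilon<|\theta'|$, which your two-region split never treats.
\item The theorem fixes $\Phi_L$, so augmenting it at will is not available.
\end{itemize}

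The missing idea is in the paper's feature $\Phi_L=\{z^2,rz,rz^3,\kappa_b((r,z),\cdot)\}$, which contains a Gaussian-kernel mean embedding. Because the kernel has a nowhere-vanishing Fourier transform, $\Psi_L$ is injective on observed laws, so law identifiability suffices. The paper proves that identifiability by tail asymptotics rather than a bare appeal to analyticity. With $q=a\theta/c$ and $d=\theta^2a\eta/c$, one has $1-\operatorname{sgn}(q)g_{q,d}(z)\sim2e^{-|q|z+d/2}$, and the sign of $\E(RZ)$ fixes $\operatorname{sgn}(q)$. Injectivity of the Fr\'echet derivative on $|\theta|\ge\epsilon$ follows from the analogous tail asymptotics of $\partial_q g$ and $\partial_d g$. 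A compactness-contradiction argument then glues near, far and mixed pairs: limiting pairs must share a law and hence a target. A nonzero common target contradicts the local first-order bound, and a zero one falls into the fifth-order near-null estimate. Your worry about corners where $\Lambda(c)$ degenerates is misplaced. Away from the null the paper works in $(c,q,d)$ coordinates, which form a diffeomorphism on $|\theta|\ge\epsilon$.
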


Thus the local likelihood, contact and inverse orders give the same global squared-risk rate $n^{-1/5}$. We next identify the critical experiment and its exact local risk at $v_n=xn^{-1/10}$.

\begin{theorem}[Fifth-order limit experiment and local risk]
\label{thm:logistic-limit}
Fix \(c\in\cC\) with nondegenerate \(\Lambda(c)\).
For \(x\ne0\), put \(v_n=xn^{-1/10}\).  The profiled experiment converges
to
\begin{equation*}
 W_x\mid\lambda
 \sim N\left(\lambda,\frac{1536}{x^{10}}\right),
 \qquad \lambda\in\Lambda(c),
\end{equation*}
and its exact local target risk is
\begin{equation*}
 \lim_{n\to\infty}\inf_{T_n}\sup_{\lambda\in\Lambda(c)}
 n^{1/5}
 \E_{(P^{(c)}_{v_n,\lambda})^{\otimes n}}
 \{T_n-\tau_L(c,v_n,\lambda)\}^2
 =\frac{x^2}{c}
 M\left(\frac{1536}{x^{10}},\Lambda(c)\right).
\end{equation*}
\end{theorem}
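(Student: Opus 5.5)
The plan is to reduce the statement to the general local results of \Cref{sec:critical-experiment}. This means verifying the order-5 profiled-jet conditions of \Cref{cond:profile-jet} on the fixed-$c$ profile $(v,\lambda)\mapsto P^{(c)}_{v,\lambda}$, with $\lambda_0\in\Lambda(c)$ interior, score $S_{c,v,\lambda_0}$ as displayed, $I=I_5=1/1536$, and target coefficient $\beta=1/\sqrt c$. Once (P1)--(P4) hold at $j=5$, \Cref{thm:profile-lan} with $k=5$ gives convergence of the likelihood-ratio process along $v_n=xn^{-1/10}$ to $N(\lambda,\{Ix^{10}\}^{-1})=N(\lambda,1536/x^{10})$. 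Then \Cref{prop:normal-envelope}, formula \eqref{eq:local-envelope}, gives the local risk $\norm{\beta}_2^2x^2M(1536/x^{10},\Lambda(c))=(x^2/c)M(1536/x^{10},\Lambda(c))$, which is exactly the claim.

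For (P4), I will use $\tau_L(c,v,\lambda)=\lambda U(v,\lambda)/\sqrt c$. Here $U$ is the local inverse of $u\mapsto u\E_G\sech^2(\sqrt\lambda uG/2)=u-O(u^3)$, with derivative bounds uniform in $\lambda$ on the compact $\Lambda(c)\subset[1,\infty)$. This gives $U(v,\lambda)=v+O(v^3)$ uniformly, so $\sup_\lambda|\tau_L-v\lambda/\sqrt c|/|v|=O(v^2)\to0$.

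For (P1)--(P3), the likelihood ratio is
$$
\frac{dP_{v,\lambda}}{dP_{v,\lambda_0}}-1=\frac{R\{g(T\mid v,\lambda)-g(T\mid v,\lambda_0)\}}{1+Rg(T\mid v,\lambda_0)},
$$
because the $Z$-marginal $N(0,c)$ does not depend on $(v,\lambda)$. I will expand the conditional mean $g(t\mid v,\lambda)$ in odd Hermite polynomials using the Gaussian representation. The coordinate $v$ was chosen by coefficient matching (Supplementary Section~S1.1) so that the $H_1$ coefficient equals $v/2$ exactly, independent of $\lambda$. The $v^3$ terms must also be $\lambda$-free, and the first $\lambda$-dependence must appear as $(\lambda-\lambda_0)v^5H_3(t)/96$ plus $O(v^7)$ times polynomially bounded functions of $t$. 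Differentiating under $\E_G$ in $\lambda$ gives the $C^1$ control required in (P3). Since $|g|\le|v|\cdot$const for bounded $T$, and with Gaussian tails otherwise, the denominator $1+Rg\to1$ in a controlled way. (P2) then follows: $\E S=0$ because $\sum_r p(r\mid t)\cdot r/\{1+rg\}=0$; the fourth moments are bounded by Gaussian moments of $H_3$; and $\E S^2\to\E H_3(T)^2/96^2=6/9216=1/1536$.

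The main obstacle is the Hermite bookkeeping. I need to show that, after the $v$-reparametrization, all $\lambda$-dependence at orders $v^1$ and $v^3$ cancels. I also need a remainder $o(|v|^{5})$ in $L^2$, uniformly in $\lambda$ together with its $\lambda$-derivative. I would first expand $\tanh$ in powers of its argument, averaging over $G$ in closed form via Gaussian moments; this works for small $|v|$ with $t$ in a growing range. Separately, I would bound $|\tanh|\le1$ on the tail $|t|\gtrsim|v|^{-\epsilon}$, where Gaussian tail mass is superpolynomially small. Should exact cancellation at order $v^3$ fail in these coordinates, I would instead absorb it into $v$ via the coefficient-matching construction of Remark~\ref{rem:smooth-jet-origin}, which the paper states yields form \eqref{eq:jet-order-ratio}.
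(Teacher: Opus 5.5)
Your proposal is correct and follows the paper's route. The paper proves this theorem by applying \Cref{thm:profile-lan} and \Cref{prop:normal-envelope} with $k=5$, $I_5=1/1536$ and $\beta=c^{-1/2}$, after verifying the order-five profiled jet through the same Hermite expansion and coefficient matching you describe. In particular, the $H_3$ coefficient becomes $-v^3/24+\lambda v^5/96+O(v^7)$, so the $v^3$ term is indeed $\lambda$-free and your fallback is not needed.

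Two technical points are handled more simply in the paper. First, the bounded seventh derivative of $\tanh(x/2)$ gives the global remainder bound $C|u|^7(1+|t|^7)$, so no truncation in $t$ is needed. Second, the inequality $1+r\tanh y\ge e^{-2|y|}$ together with Jensen gives $1+Rg(T\mid v,\lambda_0)\ge\exp\{-C|v|(1+|T|)\}$. This quantitative denominator bound is what your fourth-moment and (P3) estimates actually require; Gaussian moments of $H_3$ alone do not suffice.
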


Optimizing the critical-path risk gives the global benchmark
\begin{equation*}
C_5(\cA,\cE)
=1536^{1/5}K_5\sup_{c\in\cC}
\frac{{\diam\Lambda(c)}^{8/5}}{c}.
\end{equation*}
The following theorem shows that this critical-path benchmark is also
the exact global minimax constant.

\begin{theorem}[Exact fifth-order minimax constant]
\label{thm:exact-scalar}\label{thm:logistic-exact}
Fix \(B>0\).  The global scalar minimax risk satisfies
\begin{equation}\label{eq:exact-scalar}
 \lim_{n\to\infty}n^{1/5}\cR_n^{\mathrm{sc}}(B)
 =C_5(\cA,\cE).
\end{equation}
One measurable split-sample estimator attains this limit without knowing
\(a\) or \(\eta\).
\end{theorem}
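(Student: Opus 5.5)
The plan is to obtain \Cref{thm:exact-scalar} as a direct application of \Cref{thm:exact-globalization} with $k=5$, the regular coordinate $c=a+\eta\in\cC$, transverse coordinate $v$, and nuisance $\lambda=c/a\in\Lambda(c)$ as introduced in \Cref{sec:logistic-order}. The work splits into three parts: verifying the uniform setup, identifying the constant, and checking (E), (C) and (O).

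\textbf{Uniform setup and constant.} First check the uniform setup. The parameter space $\Xi_B$ is compact. The endpoints of $\Lambda(c)$ are piecewise smooth functions of $c$, hence Lipschitz. The coordinate $v_\circ(\theta,a,\eta)=a\theta c^{-1/2}\E_G\sech^2(\theta\sqrt a\,G/2)$ is continuous and odd in $\theta$. Because $\theta\mapsto v$ is strictly monotone near zero, the chart covers $\{|v_\circ|\le v_0/2\}$ for small $v_0$. The profiled-jet conditions at order five hold uniformly in $c$, with $I(c)=I_5=1/1536$. These conditions, together with the target expansion $\tau_L=\lambda v/\sqrt c+O(v^3)$, follow from the verifications cited after the likelihood expansion. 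The latter gives $\beta(c)=c^{-1/2}$, which is continuous and nonzero.

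Substituting into \eqref{eq:general-exact-constant} gives
\[
\mathsf C_5=K_5\sup_{c\in\cC} c^{-1}\,1536^{1/5}\,\diam\Lambda(c)^{8/5},
\]
which is exactly $C_5(\cA,\cE)$. Profiles with $L(c)=0$ contribute zero and need no jet condition. The lower bound $\liminf n^{1/5}\cR_n^{\rm sc}(B)\ge C_5(\cA,\cE)$ then follows from the first part of \Cref{thm:exact-globalization}, or equivalently from \Cref{thm:logistic-limit} and \Cref{prop:normal-envelope}, since every critical path lies in $\Xi_B$ for large $n$.

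\textbf{Conditions (E) and (O).} For (E), take the cubic Hermite feature suggested by the score,
\[
\phi_{c,v}(R,Z)=\frac{RH_3(Z/\sqrt c)}{96}\,w_{c,v}(R,Z),
\]
with a weight $w_{c,v}=1/\{1+Rg(T\mid v,\lambda_*)\}$ evaluated at a fixed reference, possibly followed by subtracting lower-order known moments. Expanding $\E(R\mid T)$ in $v$ after profiling, the coefficient of $v^5$ is linear in $\lambda$ and the lower coefficients are $\lambda$-free. This yields $\Gamma=I\lambda v^5+O(v^{5+\zeta})$, the analogous expansion of $\partial_\lambda\Gamma$, and variance $I+O(|v|^\zeta)$. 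Gaussian moments of $H_3$ bound the $(2+\delta)$ moments.

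For (O)(i), use the monotone dependence of the $v^5$ coefficient on $\lambda$ together with smoothness of $U$. For (O)(ii), take $\Phi_O$ to be a finite vector of moments $(Z^2, RZ, RZ^3,\dots)$, or the feature $\Phi_L$ of \Cref{thm:logistic-order}. The Lipschitz inverse inequality \eqref{eq:off-fibre-inverse} holds on $\{|v_\circ|\ge v_0/4\}$ because $(\theta,a,\eta)$ is identified there, as $\theta$ is bounded away from zero. By compactness it suffices to show that the Jacobian of the moment map has full rank on this set. This is an analytic nondegeneracy statement which I would check via the Hermite expansion of $g$ in $\theta$.

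\textbf{Condition (C), the main obstacle.} Estimate $\widehat c$ by the sample variance of $Z$ on $m_n$ observations, so $|\widehat c-c|=O_P(m_n^{-1/2})$ with exponential tails. Estimate $\widehat v$ from the first-order moment $\E RZ\propto v$, inverted, which also gives $O_P(m_n^{-1/2})$. With $\varepsilon_n\asymp\sqrt{\log N_n/m_n}$, the requirement $N_n^{1/10}\varepsilon_n\to0$ holds when $N_n^{4/5}\log N_n=o(m_n)$, matching the stated remark, and Bernstein bounds give tail probabilities $o(N_n^{-1/5})$.

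The orthogonality requirement \eqref{eq:calibration-orthogonality} is harder. One needs $Z_{N_n}(\widehat c,\widehat v,\cdot)-Z_{N_n}(c,v,\cdot)$ to be $o(|v|^{4}N_n^{-1/10})$ in $L^2$. The fluctuation part is an empirical process whose increment is $O(\varepsilon_n N_n^{-1/2})$, which is small enough. The bias part is $\Gamma(\widehat c,\widehat v,\lambda_{\widehat c})-\Gamma(c,v,\lambda)$, and here I would exploit that $\phi_{c,v}$ is centered so that $\partial_c\Gamma$ and $\partial_v\Gamma$ are $O(|v|^5)$ and $O(|v|^4)$. If this naive bound fails, I would orthogonalize $\phi$ against the $Z^2$ and $RZ$ scores. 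Given (E), (C) and (O), \Cref{thm:exact-globalization} yields \eqref{eq:exact-scalar}, and the constructed split-sample estimator uses only $(\widehat c,\widehat v)$, so it requires no knowledge of $a$ or $\eta$.
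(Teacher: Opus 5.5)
Your overall route is the paper's: Theorem~4.1 with $k=5$, $I(c)=1/1536$ and $\beta(c)=c^{-1/2}$, a cubic-Hermite local feature, calibration of $(c,v)$ from $Z^2$ and $RZ$, and $\Phi_L$ on the outer region. Your constant and lower bound are correct.

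\textbf{Main gap: the bias term in (C).} The gap is in the step you yourself flag, the orthogonality requirement \eqref{eq:calibration-orthogonality} in (C). Write $(c',v',\lambda')=(\widehat c,\widehat v,\lambda_{\widehat c})$, $P=P_{c,v,\lambda}$, and let $\mathbb P_{N_n}$ be the main-fold empirical measure. Because $\Gamma(c,v,\lambda)=P\phi_{c,v}$,
\[
 Z_{N_n}(c',v',\lambda')-Z_{N_n}(c,v,\lambda)
 =(\mathbb P_{N_n}-P)(\phi_{c',v'}-\phi_{c,v})
 +\bigl\{P_{c,v,\lambda}\phi_{c',v'}-P_{c',v',\lambda'}\phi_{c',v'}\bigr\}.
\]
So the bias is the mismatch between the true-law mean of the misplaced feature and its model mean. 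It is not $\Gamma(c',v',\lambda')-\Gamma(c,v,\lambda)$. By treating $\mathbb P_{N_n}(\phi_{c',v'}-\phi_{c,v})$ as pure fluctuation, you dropped $P(\phi_{c',v'}-\phi_{c,v})$, and that is the dominant term.

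Take the paper's feature $\{RH_3(Z/\sqrt c)+v^3/4\}/96$, with $T=Z/\sqrt c$ and $F(v,\lambda)=\E_{c,v,\lambda}\{RH_3(T)\}$. The identity $H_3(sT)=s^3H_3(T)+3s(s^2-1)T$ with $s=(c/c')^{1/2}$, together with $\E(RT)=v/2$, makes the bias equal to
\[
\{(s^3-1)F(v,\lambda)+\tfrac32s(s^2-1)v+F(v,\lambda)-F(v',\lambda')\}/96.
\]
This is of order $|v||c'-c|+v^2|v'-v|$ plus higher-order terms, hence $O(|v|\varepsilon_n+\varepsilon_n^2)$. After the weight $N_n^{1/10}|v|^{-4}$ with $|v|\ge hN_n^{-1/10}$ it becomes $O(N_n^{2/5}\varepsilon_n+N_n^{1/2}\varepsilon_n^2)$. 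That is exactly why $N_n^{4/5}\log N_n=o(m_n)$ is needed.

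Your accounting gives only $O(|v|^4\varepsilon_n)$, i.e.\ the condition $N_n^{1/10}\varepsilon_n\to0$. It would therefore ``verify'' (C) for calibration folds far smaller than $N_n^{4/5}$, where (C) actually fails for the cubic feature. Your fallback points the right way: what matters is the covariance of $\phi_{c,v}$ with the nuisance scores in $c$ and $v$. But you key it to $\partial_c\Gamma$ and $\partial_v\Gamma$, which really are small, so it never triggers. You must compute $P_{c,v,\lambda}\phi_{c',v'}$ directly, as the paper's calibration lemma does. For your score-weighted feature this cross term behaves differently, but it still has to be computed.

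\textbf{Smaller points.}
\begin{itemize}
\item \emph{Full rank in (O)(ii).} On $\{|v_\circ|\ge v_0/4\}$ the coefficient $|\theta|$ is of order one, so an expansion in $\theta$ cannot establish full rank of the Jacobian there. The paper uses $q=a\theta/c$, $d=\theta^2a\eta/c$, the tail asymptotics $1-\operatorname{sgn}(q)g_{q,d}(z)\sim2e^{-|q|z+d/2}$, and the characteristic Gaussian-kernel coordinate of $\Phi_L$. You can cite this from the proof of the fifth-order inverse modulus. A finite moment vector would also need its own global injectivity argument.
\item \emph{Clipping.} (C) demands $\sup_{\xi\in\Xi_B}P_\xi\{|\widehat v-v_\circ(\xi)|>\varepsilon_n\}=o(N_n^{-1/5})$ with $\widehat v\in[-v_0,v_0]$. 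So both $\widehat v$ and $v_\circ$ must be clipped, as the paper does with $v_{\circ,L}$; your unclipped $v_\circ$ violates this off the chart.
\item \emph{Centring in (E).} With the weight evaluated at $\lambda_*$, your feature has mean $I(\lambda-\lambda_*)v^5+O(|v|^7)$. So (E) requires adding the known term $I\lambda_*v^5$, not merely removing $\lambda$-free lower-order moments.
\end{itemize}
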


With an unknown coefficient direction, estimation incurs a separate high-dimensional cost. For the sparse
case, let \(p=p_n\). For \(1\le s\le p/2\), define
\begin{align*}
 \Theta_{p,s}(B)
 &=\{\theta\in\R^p:
       \|\theta\|_0\le s,\ \|\theta\|_2\le B\},
 &J_p(s)&=s\log(ep/s),\\
 \cR^{\mathrm{joint}}_{n,p,s}(B)
 &=\inf_{T_n}\sup_{(a,\eta)\in\cA\times\cE}
   \sup_{\theta\in\Theta_{p,s}(B)}
   \E_{\theta,a,\eta}\|T_n-\theta\|_2^2.
\end{align*}

The next result separates weak identification from sparse direction estimation and specifies when the scalar constant is preserved.

\begin{theorem}[Sparse fifth-order phase]
\label{thm:logistic-sparse-omnibus}
\label{thm:rate}\label{thm:exact-sparse}
Fix \(B>0\), let \(p=p_n\), and let
\(1\le s_{+,n}\le p_n/2\).  If $$\frac{J_{p_n}(s_{+,n})}{n}\longrightarrow0,$$
then there are constants \(0<c<C<\infty\) such that, for all sufficiently
large \(n\),
\begin{equation}\label{eq:rate}
 c\left\{n^{-1/5}+\frac{J_{p_n}(s)}{n}\right\}
 \le \cR^{\mathrm{joint}}_{n,p_n,s}(B)
 \le C\left\{n^{-1/5}+\frac{J_{p_n}(s)}{n}\right\},
 \qquad 1\le s\le s_{+,n}.
\end{equation}
The upper bound is attained by one estimator that uses \(s_{+,n}\) and the
fixed parameter bounds but not the actual \(s,a,\) or \(\eta\).

If, more strongly,
\begin{equation*}
 J_{p_n}(s_{+,n})=o(n^{4/5}),
\end{equation*}
then the scalar constant is preserved uniformly:
\begin{equation}\label{eq:exact-sparse-risk}
 \sup_{1\le s\le s_{+,n}}
 \left|n^{1/5}\cR^{\mathrm{joint}}_{n,p_n,s}(B)
       -C_5(\cA,\cE)\right|
 \longrightarrow0.
\end{equation}
Moreover, one estimator
\(\widehat\theta_n^{\mathrm{sharp}}\), independent of the actual
\(s,a,\) and \(\eta\), satisfies
\begin{equation}\label{eq:exact-sparse-estimator}
 \limsup_{n\to\infty}
 \sup_{\substack{1\le s\le s_{+,n}\\
                  (a,\eta)\in\cA\times\cE\\
                  \theta\in\Theta_{p_n,s}(B)}}
 n^{1/5}
 \E_{\theta,a,\eta}
 \|\widehat\theta_n^{\mathrm{sharp}}-\theta\|_2^2
 \le C_5(\cA,\cE).
\end{equation}
\end{theorem}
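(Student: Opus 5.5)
The plan has four parts: a lower bound, two upper bounds, and a reduction for the exact constant.

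\emph{Lower bound.} The lower bound in \eqref{eq:rate} has two parts, each obtained from a restricted sub-experiment.

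For the $n^{-1/5}$ term, restrict to $\theta=\theta_1e_1$ with $(a,\eta)$ free. The model then coincides with the scalar model on $\Xi_B$. Moreover $\|T_n-\theta\|_2^2\ge (T_{n,1}-\theta_1)^2$, so $\cR^{\mathrm{joint}}_{n,p,s}(B)\ge\cR^{\mathrm{sc}}_n(B)$. By \Cref{thm:logistic-exact} this is at least $(C_5(\cA,\cE)-o(1))n^{-1/5}$, uniformly in $s$ since the reduction does not involve $s$.

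For the $J_p(s)/n$ term, fix $(a,\eta)$ interior. Take a Varshamov--Gilbert packing of $s$-sparse vectors $\theta=\gamma\sigma$ with $\sigma\in\{0,\pm1\}^p$ and $\gamma^2\asymp\log(ep/s)/n$. Then $\|\theta\|_2^2\asymp J_p(s)/n\to0$, so the points lie in $\Theta_{p,s}(B)$ eventually.

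The needed KL bound comes from the observed law of $(R,Z)$, which has $g\approx\tfrac{\theta^\top Z}{2}\cdot a/c$ to first order. This gives $\KL\lesssim n\|\theta-\theta'\|_2^2$, with a constant uniform once $\|\theta\|_2\le B$. Fano then gives the bound $\gtrsim J_p(s)/n$. The rate statement follows since $\max\ge$ half the sum.

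\emph{Rate upper bound.} I would use sample splitting: estimate the direction on one part and run the scalar procedure on the other.

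First, since $Z\sim N(0,cI_p)$ marginally, $\widehat c$ is estimable at rate $n^{-1/2}$. Second, by Stein's identity $\E[RZ]=\kappa\,\theta$ for a scalar $\kappa=\kappa(\|\theta\|_2,a,\eta)\asymp a$ bounded away from zero on $\|\theta\|_2\le B$. Hard-thresholding or a sparse projection of $\frac1{m}\sum R_iZ_i$ at level $s_{+,n}$ then yields $\widehat u$ with $\|\widehat u-\kappa\theta\|_2^2\lesssim J_p(s)/n$; Lepski-type adaptation handles the unknown $s$.

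Next set $\widehat e=\widehat u/\|\widehat u\|_2$ and project the held-out $Z$ onto $\widehat e$. Conditional on $\widehat e$, the pair $(Y,\widehat e^\top Z)$ is exactly a scalar model. It has the same $(a,\eta)$ and effective coefficient $\widehat e^\top\theta$, plus an independent Gaussian perturbation from the orthogonal part of $\theta$. That perturbation is absorbed by enlarging $\eta$ only to second order.

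Output $\widehat\theta=\widehat t\,\widehat e$, where $\widehat t$ is the scalar estimator of \Cref{thm:logistic-exact}. Then
\[
\|\widehat\theta-\theta\|_2^2\le 2(\widehat t-\|\theta\|_2)^2+2\|\theta\|_2^2\|\widehat e-\theta/\|\theta\|_2\|_2^2,
\]
and the last term is $\lesssim J_p(s)/n$ because $\|\theta\|_2\|\widehat e-e\|\lesssim\|\widehat u-\kappa\theta\|/\kappa$. For $\|\theta\|_2\lesssim n^{-1/10}$, simply using $\widehat\theta=\widehat t\widehat e$ with $|\widehat t|$ small keeps the risk $O(n^{-1/5})$.

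\emph{Exact constant.} When $J_p(s_+)=o(n^{4/5})$, the direction error $J_p(s)/n$ is $o(n^{-1/5})$. Using $(x+y)^2\le(1+\epsilon)x^2+(1+\epsilon^{-1})y^2$ with $\epsilon\to0$ slowly, the risk becomes $(1+o(1))$ times the scalar risk of $\widehat t$ plus $o(n^{-1/5})$. Together with the lower bound this gives \eqref{eq:exact-sparse-risk} and \eqref{eq:exact-sparse-estimator}.

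\emph{Main obstacle.} The hard step is the claim that the scalar sharp estimator, run on $\widehat e^\top Z$, keeps its constant uniformly even though $\widehat e$ is misaligned.

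The projected model has orthogonal signal $\theta_\perp=\theta-(\widehat e^\top\theta)\widehat e$. This $\theta_\perp$ enters as extra label noise, which is not exactly Gaussian measurement error in the scalar model. I would bound the total variation (or Hellinger) distance between the projected model and an exact scalar model by $\lesssim\sqrt n\,\|\theta_\perp\|_2^2$.

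This is only useful where the critical scale matters. There $\|\theta\|\asymp n^{-1/10}$ and $\|\theta_\perp\|^2\lesssim J/n\cdot n^{1/5}/\|\theta\|^2$, so the TV distance is $o(1)$ under $J=o(n^{4/5})$. Since the loss is bounded after projection to $[-B,B]$, an $o(1)$ TV distance transfers the scalar risk bound with an $o(n^{-1/5})$ error. Away from the critical scale, only the rate matters, and the rate argument above suffices.
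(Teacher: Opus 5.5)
Your lower bound and the skeleton of your rate upper bound follow the paper, up to small repairs. That skeleton is: Stein's identity $\E(RZ)=\kappa\theta$, an adaptive sparse pilot, projection of a held-out fold onto $\widehat e$, a scalar rule, and $t\|\widehat e-e\|_2\lesssim\|\widehat u-\kappa\theta\|_2/\kappa$. The repairs are as follows.
\begin{itemize}
\item Restricting to $\theta=\theta_1e_1$ does not give the scalar model, because $Z_2,\dots,Z_p\sim N(0,cI_{p-1})$ carry information about $c$. So $\cR^{\mathrm{joint}}_{n,p,s}(B)\ge\cR^{\mathrm{sc}}_n(B)$ is not justified as written. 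The $C_5$ bound survives because it comes from fixed-$c$ paths, on which those coordinates are ancillary.
\item The Fano KL bound is cleanest by data processing from $(Y,X,Z)$, which gives $\KL(P^{Y,Z}_\theta\Vert P^{Y,Z}_{\theta'})\le\frac a8\|\theta-\theta'\|_2^2$.
\item The scalar rule must be run on an enlarged coefficient and nuisance box, with the coefficient clipped at a multiple of $\|\widehat u\|_2$ to control failed pilots.
\end{itemize}

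\textbf{The gap: the exact constant.} Here you misidentify the obstacle. Condition on the pilot, write $\theta=te$ and $\zeta=e^\top\widehat e$, and decompose $\widehat e^\top Z=\zeta e^\top X+\{(\widehat e-\zeta e)^\top X+\widehat e^\top W\}$. The two summands are independent Gaussians with variances $a\zeta^2$ and $\eta+a(1-\zeta^2)$. Moreover, $Y$ depends on $X$ only through $\theta^\top X=(t/\zeta)\,\zeta e^\top X$. So the projected data follow \emph{exactly} the scalar model with $(a^\dagger,\eta^\dagger,\theta^\dagger)=(a\zeta^2,\eta+a(1-\zeta^2),t/\zeta)$. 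Your ``label noise'' is absorbed exactly, but only by changing $a$ and the coefficient as well as $\eta$; no TV comparison is needed.

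What is actually missing is the following. In this representation $c$ is unchanged, but $\lambda^\dagger=\lambda/\zeta^2$ can leave $\Lambda(c)$ by a relative amount $\lesssim\|\widehat u-\kappa\theta\|_2^2/t^2$. So \Cref{thm:logistic-exact} cannot be used as a black box. You need a sharp scalar rule that is uniform over the shrinking enlargements $[\ell(c),u(c)/(1-\omega_n)]$. You also need the corresponding constant to converge to $C_5(\cA,\cE)$. The paper gets both by rerunning \Cref{thm:exact-globalization} with the enlargement as an extra compact index.

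\textbf{Why your substitutes do not close this gap.}
\begin{itemize}
\item \emph{The TV bound is too weak.} We have $\|\theta_\perp\|_2^2=t^2(1-\zeta^2)=t^2\|(I_p-ee^\top)(\widehat u-\kappa\theta)\|_2^2/\|\widehat u\|_2^2$. This is generically of the order of the squared pilot error, i.e.\ $J_p(s)/m\ge J_p(s)/n$. So your bound $\sqrt n\|\theta_\perp\|_2^2$ is of order at least $J_p(s)/\sqrt n$, which need not vanish when $\sqrt n\ll J_p(s)\ll n^{4/5}$.
\item \emph{Even $o(1)$ TV is not enough.} With loss bounded only by $4B^2$, an $o(1)$ TV distance moves the risk by $o(1)$, not by $o(n^{-1/5})$.
\item \emph{``Away from the critical scale only the rate matters'' fails.} A bound $Cn^{-1/5}$ with $C>C_5(\cA,\cE)$ destroys the constant, so sharp control is needed at every signal size. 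The paper uses the enlarged sharp scalar rule for $t>C_1r_nN_n^{-1/10}$. Below that cutoff, clipping gives $O(t^2+\|\widehat u-\kappa\theta\|_2^2)=o(n^{-1/5})$ because $r_n\to0$.
\item \emph{The split must be unbalanced.} With a fixed fraction $\gamma$ for the pilot, the scalar rule sees only $(1-\gamma)n$ observations, and its constant is inflated by $(1-\gamma)^{-1/5}$. You need a pilot fold of size $m_n=o(n)$ with $J_{p_n}(s_{+,n})/m_n=o(n^{-1/5})$. This is possible exactly because $J_{p_n}(s_{+,n})=o(n^{4/5})$. The paper takes $m_n=\lceil\alpha_nn\rceil$ with $\alpha_n=\max\{(K_n/n^{4/5})^{1/2},n^{-1/10}\}$ and $K_n=J_{p_n}(s_{+,n})+3\log n$. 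The direction error is then $J_p(s)/m_n$, not $J_p(s)/n$.
\end{itemize}
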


The term $n^{-1/5}$ remains even when the direction is known. When $J_{p_n}(s_{+,n})=o(n^{4/5})$, direction estimation contributes lower-order risk, so the scalar constant survives uniformly over the allowed sparsity levels.

\subsection{An attaining estimator}
\label{sec:logistic-computation}

We specialize the general construction in
Section~\ref{sec:computable-global-rule} to the fifth-order logistic model,
with \(I(c)=1/1536\) and \(v_0=v_1\), where \(v_1>0\) is chosen small
enough that the chart lies in \(\Xi_B\). Using a calibration sample of size
\(m_n\), estimate the regular and transverse coordinates by
\begin{equation}\label{eq:calibration-c-v}
\widehat c=\Pi_{\cC}\left(\frac1{m_n}\sum_{i=1}^{m_n}Z_i^2\right),
\qquad
\widehat v=\Pi_{[-v_1,v_1]}\left\{
\frac2{m_n}\sum_{i=1}^{m_n}
\frac{R_iZ_i}{\sqrt{\widehat c}}\right\}.
\end{equation}
For the main sample, use the observable feature
\begin{equation*}
\phi_{c,v}(r,z)
=\frac{rH_3(z/\sqrt c)+v^3/4}{96},
\qquad
\Gamma_L(c,v,\lambda)
=\E_{c,v,\lambda}\phi_{c,v}(R,Z).
\end{equation*}
The empirical mean of \(\phi_{c,v}\) is used for local estimation and
inversion of the map \(\lambda\mapsto\Gamma_L(c,v,\lambda)\).
Let \(\widehat\lambda_n^L\), \(\widehat\lambda_n^I\), and the routing
weights \(\omega_{j,n}\) be defined as in
Section~\ref{sec:computable-global-rule}, using the exact map
\(\Gamma_L\). For \(L(\widehat c)>0\), the resulting estimator is
$$
\begin{aligned}
 \widehat\theta_n=\Pi_{[-B,B]}\big[&
 \omega_{L,n}\tau_L(\widehat c,\widehat v,\widehat\lambda_n^L)
 +\omega_{I,n}\tau_L(\widehat c,\widehat v,\widehat\lambda_n^I) +\omega_{O,n}(\widehat\xi_{O,L})_1\big].
\end{aligned}
$$
The singleton case is handled as in
Section~\ref{sec:computable-global-rule}. The outer minimum-distance
estimator, tuning choices, and verification of the required conditions are
given in the Supplementary Material.

\section{A sixth-order network phase}\label{sec:network}

We consider independent replicates of a network with a fixed number of vertices. Unknown block proportions conceal the edge contrast from lower-order terms; the first informative motif is a bowtie, formed by two triangles sharing a vertex. This gives a combinatorial counterpart to the fifth-order EIV model.

\subsection{Model settings}\label{sec:net-order}

For fixed \(M\ge5\), we observe \(n\) independent copies
\(A^{(1)},\ldots,A^{(n)}\) of an adjacency matrix \(A\). Thus, \(n\) is the
number of independent graph replicates, while the number of vertices \(M\)
remains fixed. The latent block labels \(Z_1,\ldots,Z_M\) are independent with $ \Pp(Z_i=1)=\Pp(Z_i=-1)=\frac q2,$ and  $\Pp(Z_i=0)=1-q.$
Conditional on the labels, the edges are independent and satisfy
\begin{equation}\label{eq:net-edge-model}
 A_{ij}\mid Z\sim\operatorname{Bernoulli}(p+\theta Z_iZ_j),
 \qquad 1\le i<j\le M.
\end{equation}
In the block order \((1,0,-1)\), the edge-probability matrix is
\begin{equation*}
 B=
 \begin{pmatrix}
  p+\theta&p&p-\theta\\
  p&p&p\\
  p-\theta&p&p+\theta
 \end{pmatrix},
\end{equation*}
and the target contrast is $\tau_{\rm net}=B_{11}-B_{1,-1}=2\theta.$ Both \(p\) and \(q\) are unknown. We assume $p\in\mathcal P=[p_-,p_+]\subset(0,1),$ $q\in[q_-,q_+],$ and  $0<q_-<q_+<1.$ Let \(P^M_{p,\theta,q}\) denote the marginal law of \(A\), and let \(P^M_{p,0}\) denote the independent-edge Bernoulli law with edge probability
\(p\). 

At \(\theta=0\), the observed law depends on \(p\) but not on \(q\). Thus,
\(p\) is identified under the null, whereas \(q\) indexes an unidentified
direction. The key question is how rapidly the observed distribution changes
along this direction when \(\theta\) moves away from zero.

We use the profile coordinates
\begin{equation*}
 v=q\theta,\qquad
 \lambda=q^{-1},\qquad
 q=\lambda^{-1},\qquad
 \theta=\lambda v,\qquad
 \tau_{\rm net}(p,v,\lambda)=2v\lambda.
\end{equation*}
with \(\Lambda=[q_+^{-1},q_-^{-1}]\). Choose \(v_{\max}>0\) such that
\(q_-^{-1}v_{\max}<\min(p_-,1-p_+)\), and define
\[
 \Xi_{\rm net}=\mathcal P\times[-v_{\max},v_{\max}]\times\Lambda.
\]
Let \(P^M_{p,v,\lambda}\) denote the corresponding observed law.

To identify the first informative term, let \(\mathcal E_M\) be the edge set
of the complete graph and let \(\mathcal B_M\) denote the collection of
bowties, each formed by two triangles sharing one vertex. Define
\begin{equation*}
 B_{M,p}(A)=\sum_{F\in\mathcal B_M}
 \prod_{e\in F}\frac{A_e-p}{p(1-p)}.
\end{equation*}
For a reference value \(\lambda_0\in\Lambda\), set
\[
 L_{p,v,\lambda_0}
 =\frac{dP^M_{p,v,\lambda_0}}{dP^M_{p,0}},
 \qquad S_{p,v,\lambda_0}=\frac{B_{M,p}}{L_{p,v,\lambda_0}}.
\]
Then
\begin{equation*}
 \begin{aligned}
 \frac{dP^M_{p,v,\lambda}}{dP^M_{p,v,\lambda_0}}(A)-1
 &=(\lambda-\lambda_0)v^6S_{p,v,\lambda_0}(A)
 +r(A\mid p,v,\lambda;\lambda_0),\\
 I(p)&=\lim_{v\to0}\E_{p,v,\lambda_0}S_{p,v,\lambda_0}^{2}
 =I_M(p)=15\binom M5\{p(1-p)\}^{-6}.
 \end{aligned}
\end{equation*}
Thus the profiled-jet order is six. The coefficient calculations and uniform
remainder bounds are provided in the Supplementary Material. Choose
\(v_0\in(0,v_{\max})\) so that these bounds hold on \(|v|\le v_0\).

\subsection{Identification orders and minimax risk}
\label{sec:net-exact}

The local expansion identifies a sixth-order mechanism, but it remains to determine whether the same order governs the global estimation problem. Since the graph space is finite, the entire marginal law can be represented by the finite collection of edge-monomial features
\begin{equation*}
 \begin{aligned}
 \Phi_{\rm net}(A)
 =\left(\prod_{e\in F}A_e\right)_{F\subseteq\mathcal E_M}
   \in\mathbb R^{2^{\binom M2}},\qquad\qquad
 \Psi_{\rm net}(p,v,\lambda)
 =\E_{p,v,\lambda}\Phi_{\rm net}(A).
 \end{aligned}
\end{equation*}
The map \(\Psi_{\rm net}\) therefore determines the full observed law. Define the global squared-error minimax risk by
\begin{equation*}
 \mathcal R_n^{\rm net}
 =\inf_{T_n}\sup_{(p,v,\lambda)\in\Xi_{\rm net}}
   \E_{p,v,\lambda}(T_n-2v\lambda)^2.
\end{equation*}
We first establish a global identification inequality and use it to determine the minimax rate.

\begin{theorem}[Sixth-order identification]
\label{thm:net-main}\label{thm:net-order}
For every fixed \(M\ge5\), there is \(C<\infty\) such that
\begin{equation*}
 |\tau_{\rm net}(\xi)-\tau_{\rm net}(\xi')|^6
 \le C\|\Psi_{\rm net}(\xi)-\Psi_{\rm net}(\xi')\|_2,
 \qquad \xi,\xi'\in\Xi_{\rm net}.
\end{equation*}
Consequently,
\begin{equation*}
 k_{\mathrm J}=k_{\mathrm C}=k_{\mathrm I}=6,
 \qquad \mathcal R_n^{\rm net}\asymp n^{-1/6}.
\end{equation*}
\end{theorem}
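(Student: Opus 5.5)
The plan has three steps. First, we prove the global inverse inequality with exponent six. Then we obtain $k_{\mathrm J}=6$ from the bowtie expansion stated above and close the chain using \Cref{thm:order-hierarchy}. The rate then follows from \Cref{thm:generic-rate}. For the last step, $\Phi_{\rm net}$ is bounded, so (A3) holds trivially, and $\Xi_{\rm net}$ is compact. $\Psi_{\rm net}$ is polynomial in $(p,v,\lambda)$ and hence continuous, so \Cref{rem:measurable-selection} gives (A4). Once $k_{\mathrm I}\le6$ and $k_{\mathrm J}=6$ are in hand, the hierarchy $6=k_{\mathrm J}\le k_{\mathrm C}\le k_{\mathrm I}\le 6$ gives (A1)--(A2) with $k=6$ and therefore $\mathcal R_n^{\rm net}\asymp n^{-1/6}$. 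The claim $k_{\mathrm J}=6$ also needs minimality: the orders $j\le5$ must fail. This holds because the expansion has leading coefficient of order $v^6$ with $I_M(p)>0$. A nonzero limit of $\E S^2$ at a lower order would contradict the fact that the $O(|v|^{j})$ part of the likelihood ratio vanishes identically.

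For the inverse inequality, I would work with centered edge-monomial moments. The law is exchangeable, and conditional on $Z$ the edges are independent. Write $\theta=\lambda v$ and $q=1/\lambda$. For a subgraph $F$, let $\mu_F=\E\prod_{e\in F}(A_e-p)$. Then
\[
\mu_F=\theta^{|F|}\,\E\prod_{ij\in F}Z_iZ_j .
\]
This is nonzero only if every vertex of $F$ has even degree. In that case it equals $\theta^{|F|}q^{|V(F)|}$.

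Triangles give $\theta^3q^3=v^3$, so $v$ is identified through an odd power, sign included. The bowtie gives $\theta^6q^5=v^6\lambda$. The edge density identifies $p$, and $\Psi$ determines these centered moments through $p$ polynomially. The map from the raw moments $\Psi_{\rm net}$ to $(p,\mu_{\rm tri},\mu_{\rm bow})$ is then Lipschitz on the compact set. So it suffices to bound
\[
|2v\lambda-2v'\lambda'|^6\lesssim |p-p'|+|v^3-v'^3|+|v^6\lambda-v'^6\lambda'| .
\]
Uncentering at different $p$ is harmless because $p$ itself is controlled.

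This elementary inequality is the main obstacle. I would split it into cases. Assume $|v'|\le|v|$. If $|v-v'|\ge|v|/2$, then
\[
|v^3-v'^3|\gtrsim |v-v'|\,(v^2+v'^2)\gtrsim |v|^3 ,
\]
while the left side is $\lesssim |v|^6\le|v|^3$. Otherwise $v$ and $v'$ have the same sign and comparable size. In that case I would write
\[
v\lambda-v'\lambda'=(v-v')\lambda+v'(\lambda-\lambda') .
\]
The first term satisfies $|v-v'|^6\lesssim |v-v'|\,v^2\lesssim|v^3-v'^3|$, since $|v|$ is bounded. For the second term,
\[
|v'|^6|\lambda-\lambda'|\le|v^6\lambda-v'^6\lambda'|+\lambda|v^6-v'^6| ,
\]
and $|v^6-v'^6|\lesssim|v^3-v'^3|$. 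Because $\lambda$ lies in the compact interval $\Lambda$, we have $|\lambda-\lambda'|^6\lesssim|\lambda-\lambda'|$. This gives $|v'(\lambda-\lambda')|^6\lesssim |v'|^6|\lambda-\lambda'|$, which closes the estimate.

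I expect the care to be needed in two places. One is justifying that the centered moments at $p$ versus $p'$ are Lipschitz functions of $\Psi_{\rm net}$. The other is the claim that the moment formula $\theta^{|F|}q^{|V(F)|}$ requires $M\ge5$ so that a bowtie exists.
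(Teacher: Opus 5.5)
Your proposal is correct and follows the paper's proof. You pass from $\Psi_{\rm net}$ to $(p,v^3,\lambda v^6)$ by a Lipschitz map built from the edge, triangle and bowtie centered moments. You then prove the sextic inequality via $v\lambda-v'\lambda'=(v-v')\lambda+v'(\lambda-\lambda')$ and $|v'|^6|\lambda-\lambda'|\le|v^6\lambda-v'^6\lambda'|+\lambda|v^6-v'^6|$, and close with the bowtie jet, \Cref{thm:order-hierarchy} and \Cref{thm:generic-rate}. The one difference is your case split on $|v-v'|\ge|v|/2$, which is unnecessary: the paper uses the global bound $|v-v'|^3\le 4|v^3-v'^3|$, which controls the $(v-v')\lambda$ term for all $v,v'$ at once.
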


The three orders coincide, so the bowtie mechanism controls the global squared-risk rate. At the corresponding critical scale \(v_n=xn^{-1/12}\), the next result identifies the remaining information about \(\lambda\).

\begin{theorem}[Sixth-order limit experiment and local risk]
\label{thm:net-limit}
For fixed \(p\in\mathcal P\), \(x\ne0\), and
\(v_n=xn^{-1/12}\), the local experiment converges to
\begin{equation*}
 W_x\mid\lambda\sim
 N\left(\lambda,\{I_M(p)x^{12}\}^{-1}\right),
 \qquad \lambda\in\Lambda,
\end{equation*}
and its exact local normalized risk is
\begin{equation*}
 \lim_{n\to\infty}\inf_{T_n}\sup_{\lambda\in\Lambda}
 n^{1/6}\E_{(P^M_{p,v_n,\lambda})^{\otimes n}}
       (T_n-2v_n\lambda)^2
 =4x^2M\left(\{I_M(p)x^{12}\}^{-1},\Lambda\right).
\end{equation*}
\end{theorem}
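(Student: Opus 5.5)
The plan is to reduce Theorem~\ref{thm:net-limit} to the general results of Section~\ref{sec:critical-experiment}. Fixing $p$ plays the role of the regular coordinate $c$. Once the order-six profiled-jet conditions of \Cref{cond:profile-jet} are verified on the fixed-$p$ profile $\{P^M_{p,v,\lambda}:|v|\le v_0,\lambda\in\Lambda\}$, everything else follows from two earlier results:
\begin{itemize}
\item \Cref{thm:profile-lan} supplies the Gaussian limit $N(\lambda,\{I_M(p)x^{12}\}^{-1})$;
\item \Cref{prop:normal-envelope} supplies the local risk $\norm{\beta}_2^2x^2M(\{Ix^{12}\}^{-1},\Lambda)$.
\end{itemize}
The target is exactly $\tau_{\rm net}(p,v,\lambda)=2v\lambda$, so (P4) holds trivially with $\beta=2$, and $\norm\beta_2^2=4$ gives the displayed constant $4x^2M(\cdot)$.

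For (P1)--(P3), I would compute the likelihood ratio against the null law $P^M_{p,0}$ via the Fourier--Walsh expansion in the orthonormal basis $\chi_F(A)=\prod_{e\in F}(A_e-p)/\sqrt{p(1-p)}$. Conditional on $Z$,
\[
\prod_{e}\Bigl\{1+\frac{\theta Z_iZ_j(A_e-p)}{p(1-p)}\Bigr\},
\]
so after averaging over the labels the coefficient of the edge set $F$ is $\theta^{|F|}\{p(1-p)\}^{-|F|}\E\prod_{(i,j)\in F}Z_iZ_j$. This expectation vanishes unless every vertex has even degree in $F$. When it does not vanish, it equals $q^{|V(F)|}$, since $\E Z_i^{2m}=q$. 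Hence the coefficient is $v^{|F|}\lambda^{|F|-|V(F)|}\{p(1-p)\}^{-|F|}$. For a union of cycles, $|F|-|V(F)|$ equals the number of components minus one in the cyclomatic sense:
\begin{itemize}
\item cycles ($|F|=|V(F)|$) have no $\lambda$-dependence; in particular triangles identify $v$ alone;
\item the first even subgraphs with $|F|>|V(F)|$ and fewest edges are bowties, with $6$ edges, $5$ vertices and coefficient $v^6\lambda$;
\item $K_4$ is not even, and a $4$-cycle plus a disjoint edge is not even;
\item even subgraphs with $7$ edges contribute at most $O(v^7)$, including terms linear in $\lambda$.
\end{itemize}
Thus $L_{p,v,\lambda}-L_{p,v,\lambda_0}=(\lambda-\lambda_0)v^6B_{M,p}+O(v^7)$ uniformly, with the remainder a polynomial in $(v,\lambda)$ times bounded functions on the finite graph space. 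Dividing by $L_{p,v,\lambda_0}$, which is bounded below uniformly for $|v|\le v_0$ because $L\to1$ on a finite space, gives (P1) with $S=B_{M,p}/L_{p,v,\lambda_0}$. The same division gives (P3): $r$ and $\partial_\lambda r$ are sup-bounded by $O(|v|^7)$, which is $o(|v|^{6})$ in mean square. On the finite space (P2) is immediate. Centering follows because $\E_{p,v,\lambda_0}S=\E_{p,0}B_{M,p}=0$ by orthogonality, and the fourth moments are bounded.

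The limiting variance is $I=\E_{p,0}B_{M,p}^2=\sum_F\{p(1-p)\}^{-6}=|\mathcal B_M|\{p(1-p)\}^{-6}$. Counting bowties on $M$ labelled vertices, choose $5$ vertices and then the centre ($5$ ways); the remaining $4$ vertices split into two pairs in $3$ ways. This gives $15\binom M5$, matching $I_M(p)$.

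The main obstacle I expect is the combinatorial bookkeeping for the expansion. One must confirm that no even subgraph with at most $6$ edges other than a bowtie has $|F|>|V(F)|$; disjoint triangle pairs, for instance, have $|F|=|V(F)|=6$. One must also check that all higher terms are uniformly $O(v^7)$ in $C^1(\Lambda)$. Because $\Lambda$ is compact and the graph space finite, the second check reduces to a finite polynomial bound.
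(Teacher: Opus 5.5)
Your proposal is correct and follows the paper's route. The paper proves this theorem by verifying the order-six profiled-jet conditions at fixed $p$ (\Cref{supp:lem:net-jet}) and then applying \Cref{thm:profile-lan} and \Cref{prop:normal-envelope}. That verification uses the same Eulerian expansion with coefficients $v^{|F|}\lambda^{|F|-|V(F)|}$, identifies the bowtie as the first $\lambda$-dependent term, obtains $I_M(p)=15\binom M5\{p(1-p)\}^{-6}$ from orthogonality, and has $\beta=2$.

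One remark in your sketch is garbled and should be corrected. For an even graph, $|F|-|V(F)|$ equals the cyclomatic number minus the number of components, so it is $0$ for every disjoint union of cycles. This does not affect your argument. The check you flag as the main obstacle closes quickly: $|F|>|V(F)|$ forces some vertex of degree at least four, hence at least six edges. Equality holds only for degree sequence $(4,2,2,2,2)$, which is the bowtie.
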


Optimizing this risk over \(x\) and \(p\) gives the global benchmark.

Let  \(L=\diam\Lambda=q_-^{-1}-q_+^{-1}\), and define
\begin{equation}\label{eq:net-C6}
 C_6^{\rm net}
 =4K_6\left\{15\binom M5\right\}^{-1/6}
 L^{5/3}\sup_{p\in\mathcal P}p(1-p).
\end{equation}
The remaining question is whether this largest critical-path risk can be
attained uniformly when \(p\), \(q\), and \(\theta\) are all unknown. The next theorem answers this question and gives the exact global minimax constant.

\begin{theorem}[Exact sixth-order minimax constant]
\label{thm:net-exact}
For every fixed \(M\ge5\),
\begin{equation}\label{eq:net-exact-limit}
 \lim_{n\to\infty}n^{1/6}\mathcal R_n^{\rm net}=C_6^{\rm net}.
\end{equation}
One measurable estimator attains the limit without knowing \(p,q\), or
\(\theta\).
\end{theorem}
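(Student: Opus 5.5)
We apply \Cref{thm:exact-globalization} with $k=6$, regular coordinate $c=p\in\mathcal C=\mathcal P$, transverse coordinate $v=q\theta$, unidentified coordinate $\lambda=q^{-1}$, constant interval $\Lambda=[q_+^{-1},q_-^{-1}]$, and target $\tau_{\rm net}=2v\lambda$. The proof then has three parts: verify the uniform setup, verify conditions \textup{(E)}, \textup{(C)} and \textup{(O)}, and evaluate the constant \eqref{eq:general-exact-constant}.

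\emph{Setup and constant.} $\Xi_{\rm net}$ is compact and $\tau_{\rm net}$ is bounded. Since $\Lambda$ does not depend on $p$, the Lipschitz condition on the endpoints is trivial. We take $v_\circ(p,v,\lambda)=v$. The uniform profiled-jet conditions at order six, with $I(p)=15\binom M5\{p(1-p)\}^{-6}$, are the expansion quoted before \Cref{thm:net-main}, whose remainder bounds are uniform in $p$ because $p$ ranges over a compact subset of $(0,1)$. The target expansion holds exactly with $\beta=2$. Then
\[
 \mathsf C_6=K_6\cdot 4\,I(p)^{-1/6}L^{5/3},
\]
maximised over $p\in\mathcal P$, which is exactly \eqref{eq:net-C6}. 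The lower bound follows at once from the first part of \Cref{thm:exact-globalization}.

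\emph{Conditions \textup{(E)} and \textup{(O)(i)}.} Take $\phi_{p,v}=B_{M,p}$, which is bounded because the graph space is finite. The key is the moment computation. For any edge set $F$,
\[
 \E\prod_{e\in F}(A_e-p)=\theta^{|F|}\,\E\prod_i Z_i^{\deg_F(i)}.
\]
This vanishes unless every vertex has even degree, and otherwise equals $\theta^{|F|}q^{\#V(F)}$, using $Z_i^4=Z_i^2$. For a bowtie, with six edges and five vertices, this gives $\theta^6q^5=v^6\lambda$. Hence $\Gamma(p,v,\lambda)=I(p)\lambda v^6$ exactly, with no remainder, and $\partial_\lambda\Gamma=I(p)v^6$ exactly. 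The variance condition reduces to two facts. First, under $P^M_{p,0}$ the distinct bowtie monomials are orthonormal after the $p(1-p)$ scaling, which gives variance $I(p)$. Second, the correction is a polynomial in $\theta$ with no constant term, so it is $O(|v|)$. Condition \textup{(O)(i)} follows, since $|\partial_\lambda\tau|=2|v|$ and $|\partial_v\tau|=2\lambda$ is bounded.

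\emph{Condition \textup{(C)}.} We estimate $p$ by the calibration edge density, which is unbiased because $\E Z=0$. We estimate $v$ by the real cube root of the centred triangle average, using $\E\prod_{\rm triangle}(A_e-p)=\theta^3q^3=v^3$. The plug-in of $\widehat p$ perturbs this estimate only at order $\varepsilon^2$, because the single-edge means vanish. Bounded summands and Hoeffding's inequality give deviations of order $\sqrt{\log m_n/m_n}$ with polynomially small exceptional probability. Since $|\widehat v-v|\le|\widehat v^3-v^3|^{1/3}$, we may take $\varepsilon_n\asymp(\log m_n/m_n)^{1/6}$. Then $N_n^{1/12}\varepsilon_n\to0$ precisely when $N_n^{1/2}\log N_n=o(m_n)$, which is the stated calibration rate.

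For the orthogonality requirement \eqref{eq:calibration-orthogonality}, we split $Z_N(\widehat p,\widehat v,\lambda)-Z_N(p,v,\lambda)$ into a mean part and a fluctuation part.
\begin{itemize}
\item[\emph{Mean part.}] The deterministic term $\Gamma(\widehat p,\widehat v,\lambda)-\Gamma(p,v,\lambda)$ is $O(|v|^5\varepsilon_n+\varepsilon_n^6)$, and after multiplying by $N^{1/12}|v|^{-5}$ it tends to zero. The mean of $B_{M,\widehat p}-B_{M,p}$ under the true law is computed by expanding $\prod_e\{(A_e-p)+(p-\widehat p)\}$. Only the even subgraphs of a bowtie survive: the empty graph and the one- or two-triangle subgraphs. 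These contribute $O(\varepsilon_n^3|v|^3+\varepsilon_n^6)$, plus an $O(\varepsilon_n)$ rescaling of the leading term. With the weight $N^{1/12}|v|^{-5}$ and $|v|\ge hN^{-1/12}$, the worst term is $N^{1/4}\varepsilon_n^3\to0$.
\item[\emph{Fluctuation part.}] The centred fluctuation of $B_{M,\widehat p}-B_{M,p}$ has $L^2$ size $O(\varepsilon_nN^{-1/2})$. Conditioning on the independent calibration sample, this gives a contribution of order $N^{1/2}\varepsilon_nN^{-1/2}\to0$.
\end{itemize}

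\emph{Condition \textup{(O)(ii)}.} This is the step I expect to be the main obstacle. On the region $|v|\ge v_0/4$ we take $\Phi_O=\Phi_{\rm net}$ and need a Lipschitz (order-one) inverse inequality. The plan is to show that $\Psi_{\rm net}$ is injective on $\Xi_O$ with a nonsingular Jacobian in $(p,v,\lambda)$. The edge mean gives $p$, the triangle mean gives $v^3$, and the bowtie mean gives $v^6\lambda$. Injectivity is then immediate, and the Jacobian of $(p,v^3,v^6\lambda)$ is triangular with diagonal $(1,3v^2,v^6)$, bounded away from zero when $|v|\ge v_0/4$. Compactness then upgrades the local bi-Lipschitz property to a global one, and the $\tau$-Lipschitz bound follows. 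The variance bound is automatic from boundedness, and measurable selection follows from \Cref{rem:measurable-selection}.

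With \textup{(E)}, \textup{(C)} and \textup{(O)} verified, \Cref{thm:exact-globalization} yields \eqref{eq:net-exact-limit}, attained by the routed estimator, which uses neither $p$, $q$ nor $\theta$.
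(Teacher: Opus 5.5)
Your proposal is correct and follows essentially the same route as the paper: you apply \Cref{thm:exact-globalization} with $c=p$, verify \textup{(E)} through the exact bowtie mean $I_M(p)\lambda v^6$, verify \textup{(C)} through edge-density and cube-rooted triangle calibration combined with the shifted-bowtie mean expansion, and verify \textup{(O)} through Lipschitz recovery of $(p,v^3,\lambda v^6)$ from the motif moments. One small repair: \textup{(C)} requires $\widehat v\in[-v_0,v_0]$ while controlling $|\widehat v-v_\circ(\xi)|$ uniformly over all of $\Xi_{\rm net}$, so project $\widehat v$ onto $[-v_0,v_0]$ and take $v_\circ(p,v,\lambda)=\Pi_{[-v_0,v_0]}(v)$ as the paper does, rather than $v_\circ=v$; also note that the cube-root bound is $|\widehat v-v|^3\le4|\widehat v^3-v^3|$, not the constant-free version, which fails when $\widehat v$ and $v$ have opposite signs.
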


The critical-path benchmark is therefore globally sharp despite the unknown edge probability and block proportion.

\subsection{An attaining estimator}
\label{sec:network-computation}

We specialize the construction in Section~\ref{sec:computable-global-rule} to
the sixth-order network model, with \(c=p\) and information
\(I_M(p)\). We first use the calibration fold to estimate the edge
probability and the local coordinate. Specifically, define
\begin{align}
 \widehat p
 &=\Pi_{\mathcal P}\left\{
 \frac1{m_n\binom M2}\sum_{r=1}^{m_n}
 \sum_{e\in\mathcal E_M}A_e^{(r)}\right\},\notag\\
 \widetilde t
 &=\Pi_{[-v_{\max}^3,v_{\max}^3]}\left\{
 \frac1{m_n\binom M3}\sum_{r=1}^{m_n}
 \sum_{\{i,j,k\}}\prod_{e\in\{ij,jk,ki\}}
 (A_e^{(r)}-\widehat p)\right\},\notag\\
 \widetilde v
 &=\operatorname{sgn}(\widetilde t)|\widetilde t|^{1/3},
 \qquad
 \widehat v=\Pi_{[-v_0,v_0]}(\widetilde v).
 \label{eq:net-calibration}
\end{align}
Here \(\widehat v\) is used for the local construction, whereas
\(\widetilde v\) is retained for estimation in the outer region.

On the main fold, we estimate the sixth-order profile parameter using the
bowtie statistic. Let
\[
 \widehat\Gamma_{\rm net}
 =\frac1{N_n}\sum_{r=1}^{N_n}B_{M,\widehat p}(A^{(r)}),
 \qquad
 \Gamma_{\rm net}(p,v,\lambda)=I_M(p)\lambda v^6.
\]
The local and inverse estimates,
\(\widehat\lambda_n^L\) and \(\widehat\lambda_n^I\), together with their
routing weights, are constructed as in
Section~\ref{sec:computable-global-rule}. For the outer region, we use
a model-specific moment inversion that also retains \(\widetilde v\):
\[
 \widehat\lambda_n^O
 =\Pi_\Lambda\left\{
 \frac{\widehat\Gamma_{\rm net}}
 {I_M(\widehat p)\widetilde v^6}\right\}.
\]
With \(B_{\rm net}=2q_-^{-1}v_{\max}\), the final estimator combines
these three candidates according to the same routing rule:
\[
 \widehat\tau_n^{\rm net}=\Pi_{[-B_{\rm net},B_{\rm net}]}\left[
 2\omega_{L,n}\widehat v\widehat\lambda_n^L
 +2\omega_{I,n}\widehat v\widehat\lambda_n^I
 +2\omega_{O,n}\widetilde v\widehat\lambda_n^O\right].
\]
Candidate evaluation follows Section~\ref{sec:computable-global-rule}.
The moment-based outer candidate replaces the minimum-distance candidate
in that section. Its risk bound, the tuning choices, and the verification
of the required conditions are provided in the Supplementary Material.

\section{Numerical experiments}\label{sec:numerics-main}

We evaluate the implementations in
Sections~\ref{sec:logistic-computation} and~\ref{sec:network-computation}
at \(n=10^3,10^4,10^5,10^6\). Each setting uses 200 independent
split samples at each endpoint \(\lambda\in\{3/2,3\}\). We report the
larger empirical mean squared error, with pointwise 95\%
within-cell bootstrap intervals. The generating parameters are not
supplied to the rule. Supplement A gives the simulation laws, tuning,
numerical optimization and validation checks; Supplement B contains
executable code and all replication records.

For logistic regression, take \(\mathcal A=\mathcal E=[1,2]\),
\(B=4\), \(c=3\), and fixed profile coordinates \(v=0.25\) and \(0.65\).
For networks, take \(M=5\), \(p=1/2\),
\(\mathcal P=[0.49,0.51]\), \(\Lambda=[3/2,3]\), and
\(v=0.10\) and \(0.14\). The calibration size is
\(m=\lceil n^{9/10}\rceil\) for logistic regression and
\(m=\lceil n^{3/4}\rceil\) for networks. Graph counts are sampled from
the exact marginal law; logistic counts use Gaussian quadrature, with
separate checks based on direct continuous-data simulation. The estimator uses the prescribed routing weights and evaluates
candidates with positive weight.
The logistic setting $v=0.65$ includes recovery outside the local chart.
For logistic outer recovery, deterministic numerical refinement of the
minimum-distance criterion supplements the original finite grid.

\begin{figure}[t]
\centering
\includegraphics[page=10,width=\textwidth]{figures.pdf}
\caption{Finite-sample endpoint risks at fixed profile coordinates.
Each point uses 200 replications at each endpoint. Bars are pointwise
95\% bootstrap intervals for the maximum of the two empirical risks.
Dotted lines give the corresponding endpoint risk of the zero estimate.
These curves assess finite-sample recovery, rather than convergence
along least-favourable critical paths.}
\label{fig:estimator-monte-carlo}
\end{figure}

Figure~\ref{fig:estimator-monte-carlo} separates weaker and stronger signals.
For networks at \(v=0.14\), endpoint risk decreases from \(0.4205\)
at \(n=10^3\) to \(0.0166\) at \(n=10^6\).
For logistic regression at \(v=0.25\), endpoint risk decreases from
\(0.0990\) to \(0.0279\), but identification remains weak. The divergence
\(\KL(P_{3,v,3/2}\Vert P_{3,v,3})\) is approximately
\(5.3019\times10^{-10}\) per observation, so even \(10^6\) observations
supply total divergence only \(0.0005\). At \(v=0.65\), the corresponding
total divergence is approximately \(2.8902\).

The fixed-coordinate experiments assess finite-sample estimation.
Supplement A also reports critical-path checks with joint calibration
and a separate observed-likelihood comparison for logistic regression.
At \(n=3\times10^{12}\), the equal-endpoint average risk divided by the
asymptotic benchmark is \(1.0388\) for the likelihood rule and \(1.0412\)
for the moment rule. A numerically evaluated two-point Bayes lower bound
already has ratio \(1.2059\) at \(n=10^9\), explaining why algorithmic
improvement alone cannot remove the fifth-order finite-sample discrepancy.
The sixth-order checks end at normalized grid risk \(0.9993\) with
\(n=10^8\). The likelihood rule is a numerical comparison, separate from
the global attainment theorem. The original models, targets and paths
are retained; finite-grid maxima do not represent full-space suprema.

\FloatBarrier

\section{Discussion}\label{sec:discussion}

The main result gives sufficient conditions under which the largest critical-path risk is the exact global minimax risk. The attaining estimator uses observable features and estimated profile coordinates, so its uniform risk accounts for calibration and recovery across identification regimes. For the stated target and squared-error loss, the resulting summary statistics retain the exact first-order global minimax accuracy of the full sample.

In the logistic model, the matching lower and upper bounds quantify the limitation on coefficient estimation imposed by an unknown variance decomposition. Sparsity reduces the direction-estimation cost but does not remove the scalar difficulty; its exact constant is preserved when the maximal sparse complexity is $o(n^{4/5})$. In the network model, triangles identify a combination of the block proportion and edge contrast, whereas the bowtie supplies the information needed to separate them. The exact risk quantifies the cost of this remaining uncertainty.

The efficiency, calibration and recovery conditions require model-specific verification; the likelihood expansion alone does not imply uniform attainment.

The theorem concerns a scalar unidentified coordinate on a compact interval, an affine leading target, a fixed finite identification order and squared-error loss. The number of vertices per network is fixed. Vector-valued unidentified parameters, growing networks and adaptation to an unknown order require further analysis of the local decision problem and uniform recovery.

\begin{acks}[Acknowledgements]
Generative AI tools were used to assist with literature searches, manuscript preparation, mathematical derivations, and numerical code development.
\end{acks}

\begin{supplement}
\stitle{Technical proofs and reproducibility materials}
\sdescription{Supplement A contains complete proofs, model verifications, and a self-contained specification of the numerical experiments. Supplement B contains code and machine-readable output for reproducibility.}
\end{supplement}

\clearpage
\setcounter{page}{1}
\setcounter{section}{0}
\setcounter{equation}{0}
\setcounter{figure}{0}
\setcounter{table}{0}
\setcounter{profilecondition}{0}
\setcounter{addressref}{0}
\setcounter{emailref}{0}
\renewcommand{\thesection}{S\arabic{section}}
\renewcommand{\thesubsection}{\thesection.\arabic{subsection}}
\crefname{assumption}{assumption}{assumptions}
\Crefname{assumption}{Assumption}{Assumptions}
\makeatletter
\global\authors@list{}
\global\keywords@list{}
\csundef{author@1@ead@marks}
\csundef{author@2@ead@marks}
\makeatother
\begin{frontmatter}
\pdfsubject{Statistics Theory preprint}
\pdftitle{Exact Global Minimax Risk near Nonidentification}
\pdfauthor{Kaiyue Yang and Yunan Wu}
\title{Supplement to ``Exact Global Minimax Risk near Nonidentification''}
\runtitle{Supplement: Exact Global Minimax Risk}

\begin{aug}
\author[supp-A]{\fnms{Kaiyue}~\snm{Yang}\ead[label=supp-e1]{yangky24@mails.tsinghua.edu.cn}}
\author[supp-B]{\fnms{Yunan}~\snm{Wu}\ead[label=supp-e2]{wuyunan@mail.tsinghua.edu.cn}} 
\address[supp-A]{Qiuzhen College, Tsinghua University, Haidian District, Beijing 100084, P. R. China\printead[presep={,\ }]{supp-e1}} 
\address[supp-B]{Yau Mathematical Sciences Center, Tsinghua University, Haidian District, Beijing 100084, P. R. China\printead[presep={,\ }]{supp-e2}} 
\end{aug}

\begin{abstract}
Proofs, model calculations, and numerical protocols for the fifth- and
sixth-order phases.
\end{abstract}
\end{frontmatter}

\section*{Guide to the supplement}
The table gives the proof location for each result of the main paper.
Restated results retain their supplementary numbering; their titles identify
the corresponding main-paper result.

\begin{center}
\begin{tabular}{@{}ll@{}}
\toprule
Main-paper result & Proof in this supplement \\
\midrule
Theorem~2.4 & \hyperref[supp:sec:supp-contact-inverse]{S2.1} \\
Theorem~2.5 & \hyperref[supp:sec:supp-order-hierarchy]{S2.2} \\
Theorem~3.1 & \hyperref[supp:sec:supp-jet-limit-experiment]{S1.2} \\
Proposition~3.2 & \hyperref[supp:sec:supp-local-minimax]{S2.3} \\
Proposition~3.3 & \hyperref[supp:sec:supp-coordinate-invariance]{S2.4} \\
Theorem~4.1 & \hyperref[supp:sec:supp-global-lower]{S3.1} (lower bound),
              \hyperref[supp:sec:supp-uniform-attainment]{S3.2} (attainment) \\
Corollary~4.2 & \hyperref[supp:sec:supp-reduced-experiment]{S3.3} \\
Theorem~5.1 & \hyperref[supp:sec:supp-logistic-jet]{S4.1}--\hyperref[supp:sec:supp-logistic-proofs]{S4.2} \\
Theorem~5.2 & \hyperref[supp:sec:supp-logistic-limit]{S4.3} \\
Theorem~5.3 & \hyperref[supp:sec:supp-calibration]{S4.4}--\hyperref[supp:sec:supp-scalar-exact]{S4.6} \\
Theorem~5.4 & \hyperref[supp:sec:supp-sparse-rate]{S5.1} (rate),
              \hyperref[supp:sec:exact-sparse]{S5.2} (exact constant) \\
Theorem~6.1 & \hyperref[supp:sec:net-order]{S6.1}--\hyperref[supp:sec:supp-net-orders]{S6.2} \\
Theorem~6.2 & \hyperref[supp:sec:supp-net-limit]{S6.3} \\
Theorem~6.3 & \hyperref[supp:sec:net-exact]{S6.4} \\
\bottomrule
\end{tabular}
\end{center}

\hyperref[supp:sec:supp-smooth-jet-origin]{Section~S1.1} gives the coordinate
construction used in Remark~2.1 of the main paper and the two model examples.
\hyperref[supp:sec:supp-logistic-jet]{Section~S4.1} and
\hyperref[supp:sec:net-order]{Section~S6.1} give the likelihood expansions and
information calculations stated in Sections~5.1 and~6.1 of the main paper.
 \hyperref[supp:sec:supp-computable]{Section~\ref*{supp:sec:supp-computable}} gives numerical
implementation details for Section~4.2, and
\hyperref[supp:sec:numerics]{Section~\ref*{supp:sec:numerics}} records the computations for Section~7.

\section{Profile coordinates and proof of Theorem~3.1}
\label{supp:sec:supp-profile-proof}
\subsection{Construction of profile coordinates}
\label{supp:sec:supp-smooth-jet-origin}
We compute local coordinates by matching coefficients in the observed
density expansion.  The construction applies when the equations below have
smooth solutions.  It supplies the coordinates described in the
smooth-origin remark (Remark~2.1) in the main paper.

\paragraph{Density expansion}
Consider a dominated two-parameter model with densities \((p_\vartheta)\) with
respect to \(\mu\).  Suppose that a regular \(C^{k+1}\) curve
\(\gamma:\widetilde\Lambda\to\Theta\subset\mathbb R^2\) satisfies
\[
 \gamma'(t)\ne0,\qquad p_{\gamma(t)}=p_0,
 \qquad t\in\widetilde\Lambda.
\]
Choose a smooth transverse field \(e(t)\) such that
\(\det\{e(t),\gamma'(t)\}\ne0\).  After restricting the neighborhood, the
map \((x,\eta)\mapsto\gamma(\eta)+xe(\eta)\) is a local coordinate chart.  Write the
observed density in this chart as \(p_{x,\eta}\), so that \(p_{0,\eta}=p_0\).

Let \(\mathbb B\) be a Banach space continuously embedded in \(L^1(\mu)\),
in which the required density derivatives exist.  Restrict
\(\widetilde\Lambda\) to a compact interval inside the chart, with all
functions defined on an open neighborhood.  For integers \(1\le r\le k\),
suppose uniformly on this interval that
\begin{equation*}
 p_{x,\eta}=p_0+\sum_{j=r}^k x^j f_j(\eta)+R(x,\eta),
 \qquad f_j\in C^{k-j+1}(\widetilde\Lambda;\mathbb B),
\end{equation*}
where
\begin{equation}\label{supp:eq:smooth-density-raw-remainder}
 \|R(x,\cdot)\|_{C^1(\widetilde\Lambda;\mathbb B)}=o(|x|^k),
 \qquad
 \sup_\eta\left\|\frac{\partial}{\partial x}R(x,\eta)\right\|_{\mathbb B}=o(|x|^{k-1}).
\end{equation}
These bounds follow, for example, from a joint density-valued
\(C^{k+1}\) expansion.

\paragraph{Coefficient equations}
Seek an exact polynomial coordinate map
\begin{equation}\label{supp:eq:smooth-coordinate-polynomials}
 x=B(v,t):=\sum_{i=1}^k b_i(t)v^i,\qquad
 \eta=t+A(v,t),\qquad
 A(v,t):=\sum_{i=1}^{k-1}a_i(t)v^i,
\end{equation}
where the coefficient functions are \(C^1\) and
\(\inf_t|b_1(t)|>0\).  Its Jacobian determinant at \(v=0\) is \(b_1(t)\).
The polynomial map therefore defines local coordinates for sufficiently
small \(|v|\).  Work on a smaller compact interval in \(t\), so that
\(t+A(v,t)\) stays in the interval where the bounds above hold.

For a power series \(H\), let \([v^m]H\) denote its coefficient of \(v^m\).
Taylor expansion in \(\eta\) around \(t\) gives
\begin{equation}\label{supp:eq:smooth-matched-density-expansion}
 p_{B(v,t),t+A(v,t)}
 =p_0+\sum_{m=r}^k v^mF_m(t)+\widetilde R(v,t),
\end{equation}
where the coefficients are computed by the finite formula
\begin{equation}\label{supp:eq:smooth-coefficient-formula}
 F_m(t)=\sum_{j=r}^m\sum_{s=0}^{m-j}
 \frac{f_j^{(s)}(t)}{s!}
 [v^m]\{B(v,t)^jA(v,t)^s\}.
\end{equation}
For example, when \(r=1\) and \(k\ge3\), the first three coefficients are
\begin{align*}
 F_1={}&b_1f_1,\\
 F_2={}&b_2f_1+b_1a_1f_1'+b_1^2f_2,\\
 F_3={}&b_3f_1+(b_2a_1+b_1a_2)f_1'
       +\tfrac12b_1a_1^2f_1'' \\
     &\quad+2b_1b_2f_2+b_1^2a_1f_2'+b_1^3f_3.
\end{align*}
All equalities are identities in \(\mathbb B\).

The coefficient-matching condition is that one can choose the functions in
\eqref{supp:eq:smooth-coordinate-polynomials} and fixed elements
\(\overline F_r,\ldots,\overline F_k\) and \(Q\) of \(\mathbb B\) such that
\begin{equation}\label{supp:eq:smooth-matching-condition}
 F_m(t)=\overline F_m\quad(r\le m<k),
 \qquad
 F_k(t)=\overline F_k+q_k(t)Q,
\end{equation}
where \(Q\ne0\), \(q_k\) is \(C^1\), and \(q_k'\ne0\) on the restricted
interval.  Their solvability is an additional condition on the model.  When \(r=1\)
and \(2\le m<k\), fixing the lower coefficients gives
\[
 F_m(t)=G_m(t)+b_m(t)f_1(t)+b_1(t)a_{m-1}(t)f_1'(t),
\]
where \(G_m\) is already determined.  Thus one solves for \(b_m\) and
\(a_{m-1}\) subject to
\[
 \overline F_m-G_m(t)\in
 \operatorname{span}\{f_1(t),b_1(t)f_1'(t)\},
\]
and checks that the resulting coefficient functions are \(C^1\).

When the density first changes at an order \(r>1\), the leading equation is
\begin{equation*}
 F_r(t)=b_1(t)^r f_r(t).
\end{equation*}
Consequently, when \(r<k\), making the leading coefficient common requires
\(f_r(t)=c_r(t)Q_r\) for a fixed nonzero direction \(Q_r\).  If \(c_r\)
does not vanish, orient \(Q_r\) so that \(c_r>0\) on the interval and take
\[
 b_1(t)=c_r(t)^{-1/r},\qquad F_r(t)=Q_r.
\]
The fixed-direction requirement is an additional condition on the model.

The remainder in \eqref{supp:eq:smooth-matched-density-expansion} has the required
differentiability in \(t\). Indeed, \(B=O(v)\),
\(\frac{\partial B}{\partial t}=O(v)\), \(A=O(v)\), and
\(\frac{\partial A}{\partial t}=O(v)\). Hence
\[
 \frac{\partial}{\partial t}R\{B(v,t),t+A(v,t)\}
 =\frac{\partial R}{\partial x}\frac{\partial B}{\partial t}
  +\frac{\partial R}{\partial\eta}
    \left(1+\frac{\partial A}{\partial t}\right)=o(|v|^k),
\]
where the derivatives of \(R\) are evaluated at
\((B(v,t),t+A(v,t))\). This follows from
\eqref{supp:eq:smooth-density-raw-remainder}; the finite Taylor remainders in
\eqref{supp:eq:smooth-coefficient-formula} obey the same bound.  Therefore
\begin{equation*}
 \|\widetilde R(v,\cdot)\|_{C^1(\widetilde\Lambda;\mathbb B)}=o(|v|^k).
\end{equation*}

\paragraph{Likelihood and target}
It remains to express the target in the same coordinates.  Suppose the
target is zero on the null parameter set and, for continuous
\(\tau_1\), uniformly in \(\eta\),
\begin{equation*}
 \tau(x,\eta)=x\tau_1(\eta)+o(|x|).
\end{equation*}
Then
\[
 \tau\{B(v,t),t+A(v,t)\}=v b_1(t)\tau_1(t)+o(|v|).
\]
Assume that, for some \(\beta\in\mathbb R^d\setminus\{0\}\) and scalars
\(d_0\) and \(d_1\ne0\),
\begin{equation*}
 b_1(t)\tau_1(t)=\beta\{d_0+d_1q_k(t)\}.
\end{equation*}
Set \(\lambda=d_0+d_1q_k(t)\) and restrict the interval so that this is a
\(C^1\) coordinate.  Write \(t(\lambda)\) for its inverse, and
\(p_{v,\lambda}\) and \(\tau(v,\lambda)\) for the density and target at
\((B(v,t(\lambda)),t(\lambda)+A(v,t(\lambda)))\).
For \(t_0=t(\lambda_0)\),
\begin{align*}
 p_{v,\lambda}-p_{v,\lambda_0}
 &=(\lambda-\lambda_0)v^k\frac{Q}{d_1}
   +\widetilde R\{v,t(\lambda)\}-\widetilde R(v,t_0),
 \\
 \tau(v,\lambda)&=v\beta\lambda+o(|v|).
\end{align*}

Because both sides of \eqref{supp:eq:smooth-matching-condition} are density
coefficients, integration and \(q_k'\ne0\) give \(\int Q\,d\mu=0\).
Assume, for sufficiently small \(|v|\), that
\[
 P_{v,\lambda}\ll P_{v,\lambda_0},\qquad
 |Q|\,d\mu\ll P_{v,\lambda_0}.
\]
Division by the reference density then gives the exact representation
\begin{equation*}
 \frac{dP_{v,\lambda}}{dP_{v,\lambda_0}}(O)-1
 =(\lambda-\lambda_0)v^kS_{v,\lambda_0}(O)+r(O\mid v,\lambda),
 \qquad
 S_{v,\lambda_0}(O)=\frac{Q(O)/d_1}{p_{v,\lambda_0}(O)}.
\end{equation*}
Here \(\E_{v,\lambda_0}S_{v,\lambda_0}=\int Q/d_1\,d\mu=0\).  The remainder is
\[
 r(O\mid v,\lambda)
 =\frac{\widetilde R\{v,t(\lambda)\}(O)-\widetilde R(v,t_0)(O)}
        {p_{v,\lambda_0}(O)}.
\]
To obtain the profiled-jet conditions, one also verifies convergence of
\(\E_{v,\lambda_0}S_{v,\lambda_0}^2\) to a
positive finite limit, a uniform fourth moment, and the divided
\(C^1\)--\(L^2\) remainder bound.  These bounds are proved for the two
models below in the corresponding model calculations.

\paragraph{Fifth-order model}
For the scalar logistic measurement-error model, fix \(c=a+\eta\), and
write \(u=a\theta/\sqrt c\) and \(\lambda=c/a\).  The observed regression
has the following expansion in the probabilists' Hermite polynomials:
\begin{align*}
 \widetilde g_{u,\lambda}
 ={}&\left(\frac u2-\frac{\lambda u^3}{8}
                +\frac{\lambda^2u^5}{16}\right)H_1
 +\left(-\frac{u^3}{24}+\frac{\lambda u^5}{24}\right)H_3
 +\frac{u^5}{240}H_5+O(u^7).
\end{align*}
For \(T\sim N(0,1)\), Gaussian integration by parts gives the exact
coordinate
\[
 v=2\E\{T\widetilde g_{u,\lambda}(T)\}
  =u\,\E_G\sech^2\left(\frac{u\sqrt\lambda}{2}G\right)
  =\frac{a\theta}{\sqrt c}\,
    \E_G\sech^2\left(\frac{\theta\sqrt a}{2}G\right),
 \qquad G\sim N(0,1).
\]
Put \(u=v+A_3(\lambda)v^3+A_5(\lambda)v^5+O(v^7)\).  Making the
\(H_1\) coefficient equal to \(v/2\) through fifth order gives
\[
 \frac{A_3}{2}-\frac\lambda8=0,
 \qquad
 \frac{A_5}{2}-\frac{3\lambda A_3}{8}+\frac{\lambda^2}{16}=0,
\]
and hence \(A_3=\lambda/4\) and \(A_5=\lambda^2/16\).  Substitution in the
\(H_3\) coefficient gives
\[
 -\frac{v^3}{24}+\frac{\lambda v^5}{96}+O(v^7).
\]
The exact coordinate \(u=U(v,\lambda)\) is defined by making the observed
\(H_1\) coefficient exactly \(v/2\); the implicit-function theorem applies
because the derivative of this coefficient with respect to \(u\) is nonzero
at \(u=0\).  Its finite jet is the one
just computed, and the target satisfies
\[
 \theta=\frac{\lambda U(v,\lambda)}{\sqrt c}
       =\frac{\lambda v}{\sqrt c}+O(v^3).
\]
The uniform remainder and likelihood conditions are proved
in \Cref{supp:lem:profile-expansion,supp:lem:fifth-jet}.

\paragraph{Sixth-order model}
For the network model, the raw observed-density coefficient associated with
an Eulerian edge set \(F\) is
\[
 \theta^{|F|}q^{|V(F)|}H_{F,p}(A),\qquad
 H_{F,p}(A)=\prod_{e\in F}\frac{A_e-p}{p(1-p)}.
\]
The leading triangles have \(|F|=|V(F)|=3\).  Equation
\(q^3b_1(q)^3=1\) gives \(b_1(q)=q^{-1}\), and the exact coordinate is
\[
 \theta=\frac vq,\qquad \lambda=\frac1q.
\]
Every nonempty Eulerian term with at most five edges is a cycle and therefore has a
coefficient independent of \(q\) after this substitution.  At six edges, a
bowtie has \(|F|=6\) and \(|V(F)|=5\), and its coefficient becomes
\[
 \theta^6q^5=\frac{v^6}{q}=\lambda v^6.
\]
Thus the first density coefficient depending on \(\lambda\) is the bowtie
coefficient at order six,
while the target is \(2\theta=2\lambda v\).  The exact density remainder and
moment conditions are verified in \Cref{supp:lem:net-jet}.

\subsection{Proof of Theorem~3.1}
\label{supp:sec:supp-jet-limit-experiment}
Let $\Lambda=[\ell,u]$, $O_1,\ldots,O_n\stackrel{\rm iid}{\sim}
P_{v,\lambda}$ on a standard Borel space, where $P_{0,\lambda}$ is
independent of $\lambda$.

For $f:\Lambda\to\R$, write
\[
 \norm f_{\Lambda}=\sup_{\lambda\in\Lambda}|f(\lambda)|,
 \qquad
 \norm f_{C^1(\Lambda)}
 =\norm f_{\Lambda}+\sup_{\lambda\in\Lambda}|f'(\lambda)|.
\]

\begin{assumption}[Uniform profiled likelihood jet]\label{supp:ass:profile-jet}
Fix $\lambda_0\in\Lambda$ and an integer $k\ge1$.  All limits below are as
$v\to0$.  For all sufficiently small $|v|$, suppose
$P_{v,\lambda}\ll P_{v,\lambda_0}$ for every $\lambda\in\Lambda$ and choose
jointly measurable versions of the Radon--Nikodym derivatives such that,
as an equality in $C(\Lambda)$ under $P_{v,\lambda_0}$,
\begin{equation}\label{supp:eq:ratio-jet}
 \frac{dP_{v,\lambda}}{dP_{v,\lambda_0}}(O)-1
 = (\lambda-\lambda_0)v^kS_{v,\lambda_0}(O)+r(O\mid v,\lambda),
\end{equation}
where $(v,o)\mapsto S_{v,\lambda_0}(o)$ and
$(v,o,\lambda)\mapsto r(o\mid v,\lambda)$ are jointly measurable,
$\lambda\mapsto r(o\mid v,\lambda)$ is continuously differentiable,
and, under $P_{v,\lambda_0}$,
\begin{align}
 &\E_{v,\lambda_0}S_{v,\lambda_0}=0,\qquad
   \E_{v,\lambda_0}S_{v,\lambda_0}^2\longrightarrow I\in(0,\infty),\qquad
   \sup_{0<|v|\le v_0}\E_{v,\lambda_0}|S_{v,\lambda_0}|^4<\infty,
   \label{supp:eq:score-cond}\\
 &\E_{v,\lambda_0}\left[
   \sup_{\lambda\in\Lambda}|r(O\mid v,\lambda)|^2
   +\sup_{\lambda\in\Lambda}\left|
      \frac{\partial}{\partial\lambda}r(O\mid v,\lambda)\right|^2
   \right]=o(|v|^{2k}).\label{supp:eq:rem-cond}
\end{align}
Finally, for some $\beta\in\R^d\setminus\{0\}$, suppose
\begin{equation}\label{supp:eq:target-jet}
 \sup_{\lambda\in\Lambda}
 \frac{\norm{\tau(v,\lambda)-v\beta\lambda}_2}{|v|}
 \longrightarrow0.
\end{equation}
\end{assumption}

The order-$k$ likelihood contrast accumulates information of constant
order when $n|v|^{2k}\asymp1$.  The following theorem identifies the
Gaussian location model that describes inference at this scale.

\begin{theorem}[Theorem~3.1 of the main paper]\label{supp:thm:profile-lan}
Under \Cref{supp:ass:profile-jet}, fix $x\ne0$ and set $v_n=xn^{-1/(2k)}$.
\Needspace{12\baselineskip}
Under $P_{v_n,\lambda_0}^{\otimes n}$,
\[
 \Delta_{n,x,\lambda_0}
 :=v_n^k\sum_{i=1}^nS_{v_n,\lambda_0}(O_i)
 \Rightarrow N(0,x^{2k}I),
\]
and
\begin{equation}\label{supp:eq:generic-lan}
 \sup_{\lambda\in\Lambda}\left|
 \log\frac{dP_{v_n,\lambda}^{\otimes n}}
          {dP_{v_n,\lambda_0}^{\otimes n}}
 -(\lambda-\lambda_0)\Delta_{n,x,\lambda_0}
 +\frac12(\lambda-\lambda_0)^2x^{2k}I
 \right|\longrightarrow0
\end{equation}
in probability.  Moreover, under the respective reference measures, the
likelihood-ratio process converges in $C(\Lambda)$ to that of
\begin{equation*}
 \mathcal E_x
 =\left\{Q_\lambda=N\paren{\lambda,\frac1{I x^{2k}}}:
 \lambda\in\Lambda\right\}. 
\end{equation*}
Consequently
$\mathcal E_{n,x}=\{P_{v_n,\lambda}^{\otimes n}:\lambda\in\Lambda\}$
converges to $\mathcal E_x$ weakly in Le Cam's sense: every finite
restriction converges in Le Cam distance.  Finally,
\[
 \sup_{\lambda\in\Lambda}
 \norm{n^{1/(2k)}\tau(v_n,\lambda)-x\beta\lambda}_2
 \longrightarrow0.
\]
\end{theorem}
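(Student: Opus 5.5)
The plan is to work with the per-observation likelihood ratio $\ell_{n}(O,\lambda)=dP_{v_n,\lambda}/dP_{v_n,\lambda_0}(O)$ under $P_{v_n,\lambda_0}$. We write $\ell_n-1=(\lambda-\lambda_0)v_n^kS_n+r_n$, with $S_n=S_{v_n,\lambda_0}$ and $r_n=r(\cdot\mid v_n,\lambda)$.

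\emph{Score CLT.} The CLT for $\Delta_{n,x,\lambda_0}$ is a triangular-array Lindeberg argument. The summands $v_n^kS_n(O_i)$ are i.i.d. and centred. Their total variance is $n v_n^{2k}\E S_n^2=x^{2k}\E S_n^2\to x^{2k}I$. The uniform fourth moment gives Lyapunov's condition, since $n v_n^{4k}\E S_n^4=x^{4k}n^{-1}\E S_n^4\to0$.

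\emph{Uniform log-likelihood expansion.} Write $\log\prod_i\ell_n(O_i,\lambda)=\sum_i\log(1+D_i(\lambda))$, where $D_i=(\lambda-\lambda_0)v_n^kS_n(O_i)+r_n(O_i,\lambda)$.
\begin{enumerate}
\item Show $\max_i\sup_\lambda|D_i(\lambda)|\to0$ in probability. For the score part, the fourth moment gives $\Pr(\max_i|v_n^kS_n(O_i)|>\epsilon)\le n v_n^{4k}\E S_n^4/\epsilon^4\to0$. For the remainder, the union bound gives $\Pr(\max_i\sup_\lambda|r_n(O_i,\lambda)|>\epsilon)\le n\E\sup_\lambda|r_n|^2/\epsilon^2=o(1)$ by (P3).
\item On this event, use $\log(1+y)=y-y^2/2+O(|y|^3)$ uniformly.
\item Linear term: $\sum_i D_i=(\lambda-\lambda_0)\Delta_n+\sum_i r_n(O_i,\lambda)$.
\item Quadratic term: $\sum_i D_i^2=(\lambda-\lambda_0)^2v_n^{2k}\sum_iS_n^2+\text{cross}+\sum_ir_n^2$. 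The first piece converges to $(\lambda-\lambda_0)^2x^{2k}I$ by the weak LLN, using the variance bound $n v_n^{4k}\E S_n^4\to0$. The $r^2$ sum is $o_P(1)$ uniformly by (P3) and Markov's inequality. The cross term is handled by Cauchy--Schwarz.
\item Cubic term: bound it by $\max_i\sup|D_i|\cdot\sum_i\sup D_i^2=o_P(1)$.
\end{enumerate}

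\emph{The centred remainder sum.} This is the main obstacle: we need $\sup_\lambda|\sum_i r_n(O_i,\lambda)|\to0$ in probability. Since $\E_{\lambda_0}\ell_n=1$ by absolute continuity and $\E S_n=0$, we get $\E r_n(O,\lambda)=0$ for each $\lambda$. Pointwise, the variance of the sum is $n\E r_n^2=o(1)$. For uniformity we use the $C^1$ control in (P3). Write $r_n(\lambda)=r_n(\lambda_0)+\int_{\lambda_0}^\lambda\partial_\lambda r_n$. Then
\[
\sup_\lambda\Bigl|\sum_i r_n(O_i,\lambda)\Bigr|\le\Bigl|\sum_i r_n(O_i,\lambda_0)\Bigr|+\int_\Lambda\Bigl|\sum_i\partial_\lambda r_n(O_i,\mu)\Bigr|\,d\mu .
\]
The derivative is centred as well, by differentiating $\E r_n(\lambda)=0$ under the integral. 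This is justified by dominated convergence in $L^2$ via $\sup_\lambda|\partial_\lambda r|$. Taking expectations and using Cauchy--Schwarz bounds the integrand by $(n\E\sup|\partial r_n|^2)^{1/2}=o(1)$. If $r_n(\lambda_0)\ne0$ is allowed, the same variance bound applies to it. This gives the uniform expansion \eqref{supp:eq:generic-lan}.

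\emph{Process convergence and target.} The expansion makes $\log$-likelihood ratios uniformly equal to the continuous functional $\lambda\mapsto(\lambda-\lambda_0)\Delta_n-\tfrac12(\lambda-\lambda_0)^2x^{2k}I$ of the tight scalar $\Delta_n$. By the CLT and the continuous mapping theorem in $C(\Lambda)$, the likelihood-ratio process converges to $\exp\{(\lambda-\lambda_0)\Delta-\tfrac12(\lambda-\lambda_0)^2x^{2k}I\}$ with $\Delta\sim N(0,x^{2k}I)$. This is exactly $dQ_\lambda/dQ_{\lambda_0}$ for the Gaussian location family $\mathcal E_x$.

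\begin{itemize}
\item[(a)] For a change of reference point $\lambda_1$, Le Cam's third lemma applies, since the limit has mean one. Contiguity, and hence convergence under $P_{v_n,\lambda_1}$, follows, giving the claim ``under the respective reference measures''.
\item[(b)] Convergence of finite-dimensional likelihood-ratio laws for experiments with a common dominating point then yields weak Le Cam convergence of finite restrictions (standard, e.g.\ Le Cam--Yang).
\item[(c)] The target claim is immediate from (P4): $n^{1/(2k)}\tau(v_n,\lambda)-x\beta\lambda=x\,[\tau(v_n,\lambda)-v_n\beta\lambda]/v_n$, which tends to $0$ uniformly.
\end{itemize}
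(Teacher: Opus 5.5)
Your proposal is correct. Its analytic core matches the paper's proof step for step:
\begin{itemize}
\item the Lyapunov CLT from the uniform fourth moment;
\item the union bound showing that the maximal per-observation increment vanishes;
\item centring of $r$ and $\partial_\lambda r$, followed by the fundamental theorem of calculus and Cauchy--Schwarz, giving $\E\sup_\lambda|\sum_i r_n(O_i,\lambda)|\le L\{n\E\|r_n\|_{C^1(\Lambda)}^2\}^{1/2}=o(1)$;
\item control of the quadratic term through $v_n^{2k}\sum_iS_n(O_i)^2$ and $\sum_i\sup_\lambda r_n(O_i,\lambda)^2$;
\item the cubic bound by the maximal increment times the summed squared increments.
\end{itemize}
Your hedge about $r_n(\lambda_0)$ is unnecessary, since (P1) at $\lambda=\lambda_0$ forces $r(\cdot\mid v,\lambda_0)=0$ almost surely.

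The routes diverge only at the Le Cam step. The paper proves it by hand. It first derives the uniform bound $\sup_\lambda\E_{P_{n,\lambda_0}}(dP_{n,\lambda}/dP_{n,\lambda_0})^2=\sup_\lambda[1+\E(\ell_n-1)^2]^n\le C_x$. With truncation, this transfers convergence of the likelihood-ratio vector to each $P_{n,\lambda_j}$. It then builds explicit Markov kernels from a Skorohod coupling, Vitali's theorem, and the exact equivalence of each finite experiment with its canonical likelihood-ratio experiment.

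You instead invoke Le Cam's first and third lemmas. These apply because the Gaussian limit ratio has mean one and is positive, so the measures are mutually contiguous. You then cite the standard equivalence: for finite parameter sets, convergence of likelihood-ratio laws under every reference point is equivalent to convergence in Le Cam distance. Only the finite-dimensional third lemma is needed for that input.

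Your route is shorter and legitimate, but it outsources exactly the part the paper makes self-contained. It also does not yield the uniform-in-$\lambda$ second-moment bound, which the paper reuses in Proposition~3.2 for uniform integrability and the change of measure.

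Finally, ``under the respective reference measures'' in the statement means $P_{n,\lambda_0}$ for the sequence and $Q_{\lambda_0}$ for the limit. Your continuous-mapping step already covers this, so your item (a) serves only as input to (b).
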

\Cref{supp:thm:profile-lan} shows that the data retain finite information about
$\lambda$ at $|v|\asymp n^{-1/(2k)}$, while the target varies
on the same scale; the corresponding estimation risk is determined below.

\begin{proof}
Write $P_{n,\lambda}=P_{v_n,\lambda}^{\otimes n}$ and, within this proof,
put
\[
 q_{n,i}(\lambda)
 =\frac{dP_{v_n,\lambda}}{dP_{v_n,\lambda_0}}(O_i)-1,
 \qquad
 r_{n,i}(\lambda)=r(O_i\mid v_n,\lambda).
\]
Let $D=\sup_{\lambda\in\Lambda}|\lambda-\lambda_0|$ and
$L=\diam\Lambda$.  A Radon--Nikodym derivative has expectation one, so
\eqref{supp:eq:ratio-jet} and \eqref{supp:eq:score-cond} imply
$\E_{v,\lambda_0}r(O\mid v,\lambda)=0$ for every $\lambda$.  Also
$r(O\mid v,\lambda_0)=0$.  The derivative envelope in \eqref{supp:eq:rem-cond} permits
differentiation under the expectation, and hence
$\E_{v,\lambda_0}\{\frac{\partial}{\partial\lambda}r(O\mid v,\lambda)\}=0$.  The fundamental
theorem of calculus, independence, and Cauchy--Schwarz now give
\begin{align}
 \E_{P_{n,\lambda_0}}
 \norm{\sum_{i=1}^n r_{n,i}}_{\Lambda}
 &\le
 \int_\Lambda
 \left\{n\E_{v_n,\lambda_0}
       \left\{\frac{\partial}{\partial\lambda}r(O\mid v_n,\lambda)\right\}^2\right\}^{1/2}d\lambda\nonumber\\
 &\le L\sqrt n\,
 \left\{\E_{v_n,\lambda_0}
       \norm{r(O\mid v_n,\cdot)}_{C^1(\Lambda)}^2\right\}^{1/2}
 =o(1). \label{supp:eq:uniform-linear-rem}
\end{align}
Similarly, if
\[
 C_n=\sum_{i=1}^n\norm{r_{n,i}}_{\Lambda}^2,
\]
then
\begin{equation*}
 \E_{P_{n,\lambda_0}}C_n
 \le n\E_{v_n,\lambda_0}
       \norm{r(O\mid v_n,\cdot)}_{C^1(\Lambda)}^2=o(1).
\end{equation*}
Here we used $n|v_n|^{2k}=|x|^{2k}$.

The uniform fourth-moment bound gives the triangular-array Lyapunov
condition because
\[
 n|v_n|^{4k}\E_{v_n,\lambda_0}|S_{v_n,\lambda_0}|^4=O(n^{-1}).
\]
Consequently
\begin{equation}\label{supp:eq:generic-central-clt}
 \Delta_{n,x,\lambda_0}\Rightarrow Z_x,
 \qquad Z_x\sim N(0,x^{2k}I).
\end{equation}
The same moment bound shows that
\begin{equation*}
 J_n:=v_n^{2k}\sum_{i=1}^nS_{v_n,\lambda_0}(O_i)^2
 \longrightarrow x^{2k}I
\end{equation*}
in probability: its expectation has this limit and
$\operatorname{Var}(J_n)\le
n|v_n|^{4k}\E S_{v_n,\lambda_0}^4=O(n^{-1})$.

Equation \eqref{supp:eq:ratio-jet} and \eqref{supp:eq:uniform-linear-rem} yield
\begin{equation}\label{supp:eq:generic-linear-uniform}
 \norm{\sum_{i=1}^nq_{n,i}(\cdot)
 -(\cdot-\lambda_0)\Delta_{n,x,\lambda_0}}_{\Lambda}
 \longrightarrow0
\end{equation}
in probability.  For the quadratic term,
\begin{align}
 &\norm{\sum_{i=1}^nq_{n,i}(\cdot)^2
 -(\cdot-\lambda_0)^2J_n}_{\Lambda}\nonumber\\
 &\qquad\le2D J_n^{1/2}C_n^{1/2}+C_n
 \longrightarrow0. \label{supp:eq:generic-quadratic-uniform}
\end{align}
For the logarithmic remainder, set
$M_n=\max_{1\le i\le n}\norm{q_{n,i}}_{\Lambda}$.  By
\eqref{supp:eq:ratio-jet},
\[
 M_n\le D|v_n|^k\max_{1\le i\le n}|S_{v_n,\lambda_0}(O_i)|+C_n^{1/2}.
\]
For every $\varepsilon>0$, the union bound and the fourth-moment condition
give
\[
 P_{n,\lambda_0}\left(
 |v_n|^k\max_i|S_{v_n,\lambda_0}(O_i)|>\varepsilon\right)
 \le\frac{n|v_n|^{4k}\E|S_{v_n,\lambda_0}|^4}{\varepsilon^4}
 =O(n^{-1}).
\]
Thus $M_n=o_{P_{n,\lambda_0}}(1)$.  Moreover,
\[
 \sum_{i=1}^n\norm{q_{n,i}}_{\Lambda}^2
 \le2D^2J_n+2C_n=O_{P_{n,\lambda_0}}(1),
\]
and therefore
\begin{equation}\label{supp:eq:generic-cubic-uniform}
 \sum_{i=1}^n\norm{q_{n,i}}_{\Lambda}^3
 \le M_n\sum_{i=1}^n\norm{q_{n,i}}_{\Lambda}^2
 \longrightarrow0
\end{equation}
in probability.

On the event $A_n=\{M_n\le1/2\}$, whose probability tends to one, every
factor $1+q_{n,i}(\lambda)$ is at least $1/2$, simultaneously in $i$ and
$\lambda$.  Hence the logarithm is finite there, and
\[
 |\log(1+y)-y+\tfrac12y^2|\le C|y|^3,
 \qquad |y|\le\tfrac12.
\]
The product rule for Radon--Nikodym derivatives, together with
\eqref{supp:eq:generic-linear-uniform},
\eqref{supp:eq:generic-quadratic-uniform}, and
\eqref{supp:eq:generic-cubic-uniform}, proves \eqref{supp:eq:generic-lan}; if a
logarithm is not finite, the supremum in that display is understood as
$+\infty$, an event whose probability tends to zero.

To prove convergence of experiments, let an auxiliary log-likelihood
process equal the actual one on $A_n$ and the zero function on $A_n^c$.
It is a measurable $C(\Lambda)$-valued random element.  The map
\[
 z\longmapsto
 \left\{(\lambda-\lambda_0)z
 -\tfrac12(\lambda-\lambda_0)^2x^{2k}I:
 \lambda\in\Lambda\right\}
\]
is continuous from $\R$ to the separable Banach space $C(\Lambda)$.
Equations \eqref{supp:eq:generic-lan} and \eqref{supp:eq:generic-central-clt}, followed
by the continuous mapping theorem for exponentiation on $C(\Lambda)$, show
that the exponential of the auxiliary process converges to the exponential
of this limit.
\Needspace{9\baselineskip}
It equals the actual likelihood-ratio process on $A_n$, so
the same convergence holds for the actual process.  If
\[
 W_x=\lambda_0+\frac{Z_x}{x^{2k}I}\sim Q_{\lambda_0},
\]
then
\begin{equation*}
 \frac{dQ_\lambda}{dQ_{\lambda_0}}(W_x)
 =\exp\left\{(\lambda-\lambda_0)x^{2k}I(W_x-\lambda_0)
 -\frac12(\lambda-\lambda_0)^2x^{2k}I\right\}.
\end{equation*}
Thus, under the respective reference measures,
\begin{equation}\label{supp:eq:lr-process}
 \left\{
 \frac{dP_{n,\lambda}}{dP_{n,\lambda_0}}:
 \lambda\in\Lambda\right\}
 \Rightarrow
 \left\{\frac{dQ_\lambda}{dQ_{\lambda_0}}(W_x):
 \lambda\in\Lambda\right\}
\end{equation}
in $C(\Lambda)$.

To pass from the reference measure to the finite experiments, note from
\eqref{supp:eq:ratio-jet}--\eqref{supp:eq:rem-cond} that
\begin{align*}
 \sup_{\lambda\in\Lambda}\E_{v,\lambda_0}
 \left(\frac{dP_{v,\lambda}}{dP_{v,\lambda_0}}-1\right)^2
 &\le2D^2|v|^{2k}\E_{v,\lambda_0}S_{v,\lambda_0}^2
 +2\E_{v,\lambda_0}\norm{r(O\mid v,\cdot)}_{C^1(\Lambda)}^2\\
 &=O(|v|^{2k}).
\end{align*}
Since the increment has mean zero, independence gives
\begin{align}
 \sup_{\lambda\in\Lambda}
 \E_{P_{n,\lambda_0}}
 \left(\frac{dP_{n,\lambda}}{dP_{n,\lambda_0}}\right)^2
 &=\sup_{\lambda\in\Lambda}\left[1+
 \E_{v_n,\lambda_0}
 \left(\frac{dP_{v_n,\lambda}}{dP_{v_n,\lambda_0}}-1\right)^2
 \right]^n\nonumber\\
 &\le C_x<\infty. \label{supp:eq:generic-lr-ui}
\end{align}

Fix $\lambda_0,\lambda_1,\ldots,\lambda_m\in\Lambda$ and write
\[
 \mathbf R_n=\left(
  \frac{dP_{n,\lambda_r}}{dP_{n,\lambda_0}}
 \right)_{r=0}^m,
 \qquad
 \mathbf R=\left(
  \frac{dQ_{\lambda_r}}{dQ_{\lambda_0}}(W_x)
 \right)_{r=0}^m.
\]
For every bounded continuous $\varphi:\R_+^{m+1}\to\R$ and every $j$,
\begin{align*}
 \E_{P_{n,\lambda_j}}\varphi(\mathbf R_n)
 &=\E_{P_{n,\lambda_0}}
   \left[R_{n,j}\varphi(\mathbf R_n)\right]\\
 &\longrightarrow
   \E_{Q_{\lambda_0}}
   \left[R_j\varphi(\mathbf R)\right]
 =\E_{Q_{\lambda_j}}\varphi(\mathbf R). 
\end{align*}
For fixed $A$, \eqref{supp:eq:lr-process} applies to the bounded
continuous function $(r_j\wedge A)\varphi(\mathbf r)$, while
\[
 \left|\E_{P_{n,\lambda_0}}
 \left[\{R_{n,j}-(R_{n,j}\wedge A)\}\varphi(\mathbf R_n)\right]\right|
 \le\norm{\varphi}_\infty
 \E_{P_{n,\lambda_0}}[R_{n,j}\one\{R_{n,j}>A\}]
 \le\frac{\norm{\varphi}_\infty C_x}{A}.
\]
The limiting Gaussian likelihood ratio has a finite second moment, so first
letting $n\to\infty$ and then $A\to\infty$ proves the convergence.  To bound the Le Cam distance, construct Markov kernels as follows.
Let $\mu_n$ and $\mu$ be the laws of $\mathbf R_n$ under
$P_{n,\lambda_0}$ and of $\mathbf R$ under $Q_{\lambda_0}$, respectively.
By \eqref{supp:eq:lr-process}, $\mu_n\Rightarrow\mu$.  Since
$\R_+^{m+1}$ is Polish, the Skorohod representation theorem gives, on one
probability space $(\Omega^\ast,\mathcal F^\ast,\Pp^\ast)$, random vectors
$\mathbf R_n^\ast$ and $\mathbf R^\ast$ with respective laws $\mu_n$ and
$\mu$ such that $\mathbf R_n^\ast\to\mathbf R^\ast$ almost surely.  Let
\[
 \gamma_n=\mathcal L_{\Pp^\ast}(\mathbf R_n^\ast,\mathbf R^\ast);
\]
then $\gamma_n$ is a coupling of $\mu_n$ and $\mu$, with coordinate variables
denoted by $(\mathbf r,\mathbf s)$.  After discarding finitely many initial
indices, \eqref{supp:eq:generic-lr-ui} and the finite Gaussian second moments give
\[
 \begin{aligned}
 \sup_{n\ge n_0}\E^\ast
       \norm{\mathbf R_n^\ast-\mathbf R^\ast}_1^2
 &=\sup_{n\ge n_0}\int\norm{\mathbf r-\mathbf s}_1^2\,d\gamma_n\\
 &\le2(m+1)\sum_{j=0}^m
 \left\{\sup_{n\ge n_0}\E_{P_{n,\lambda_0}}R_{n,j}^2
       +\E_{Q_{\lambda_0}}R_j^2\right\}<\infty.
 \end{aligned}
\]
Thus $\norm{\mathbf R_n^\ast-\mathbf R^\ast}_1$ is uniformly integrable and
converges almost surely to zero.  Vitali's theorem therefore gives
\begin{equation}\label{supp:eq:lr-coupling-l1}
 \int\norm{\mathbf r-\mathbf s}_1\,
       \gamma_n(d\mathbf r,d\mathbf s)
 =\E^\ast\norm{\mathbf R_n^\ast-\mathbf R^\ast}_1
 \longrightarrow0.
\end{equation}
Disintegrate $\gamma_n$ in both directions to obtain Markov kernels
$K_n(\mathbf r,d\mathbf s)$ and
$\widetilde K_n(\mathbf s,d\mathbf r)$.  In the canonical likelihood-ratio
experiments, the measure at $\lambda_j$ is
$r_j\mu_n(d\mathbf r)$ or $s_j\mu(d\mathbf s)$.  Therefore
\begin{align*}
 \norm{K_n\{r_j\mu_n(d\mathbf r)\}-s_j\mu(d\mathbf s)}_{\mathrm{TV}}
 &\le\int|r_j-s_j|\,d\gamma_n,\\
 \norm{\widetilde K_n\{s_j\mu(d\mathbf s)\}-r_j\mu_n(d\mathbf r)}_{\mathrm{TV}}
 &\le\int|r_j-s_j|\,d\gamma_n,
\end{align*}
and both bounds tend to zero by \eqref{supp:eq:lr-coupling-l1}.

The original finite experiment is exactly equivalent to its canonical one:
one kernel records the likelihood-ratio vector, and the reverse kernel is the
regular conditional distribution of the observation given that vector under
the reference measure.  Such a conditional distribution exists because the
observation space is standard Borel; it is common to every parameter because
each density is a coordinate of the sufficient vector.  Explicitly, if
$H_n(\mathbf r,A)$ is the conditional probability of the original observation
falling in $A$ given $\mathbf R_n=\mathbf r$ under $P_{n,\lambda_0}$, then
\[
 \int H_n(\mathbf r,A)r_j\,\mu_n(d\mathbf r)
 =\E_{P_{n,\lambda_0}}[R_{n,j}\one_A]
 =P_{n,\lambda_j}(A).
\]
The same argument applies to the Gaussian finite experiment.  Composing these exact
equivalences with $K_n$ and $\widetilde K_n$ proves that the Le Cam distance
between the two finite restrictions tends to zero.  A finite set not
containing $\lambda_0$ is handled by first adjoining $\lambda_0$ and then
restricting.

Finally, the target conclusion follows immediately from
\eqref{supp:eq:target-jet} and $n^{1/(2k)}v_n=x$.
\end{proof}
\section{Proofs of Theorems~2.4--2.5 and Propositions~3.2--3.3}
\label{supp:sec:supp-general-results}

We use the statistical model $\{P_\xi:\xi\in\Xi\}$, target
$\tau:\Xi\to\R^d$, and mean map $\Psi:\Xi\to\cH$ of Section~2 of the main
paper, where $d\ge1$, $\mathcal O$ is standard Borel, and $\cH$ is separable.
The orders and bounds used in the proofs are recalled below.

\begin{definition}[Contact order]
\label{supp:def:orders}
\label{supp:def:contact-order}
Let $\mathbb N=\{1,2,\ldots\}$.  Define
\begin{align*}
 k_{\mathrm C}
 :=\sup\Bigl\{j\in\mathbb N:{}&\ \exists\ t_0,b,C>0\ \text{such that for
 every }0<t<t_0\ \exists\ \xi_{0,t},\xi_{1,t}\in\Xi,\\
 &\norm{\tau(\xi_{1,t})-\tau(\xi_{0,t})}_2\ge bt,
 \quad \KL(P_{\xi_{0,t}}\Vert P_{\xi_{1,t}})\le Ct^{2j}\Bigr\}.
\end{align*}
Set $\sup\varnothing=0$.
\end{definition}

\begin{definition}[Inverse order]
\label{supp:def:inverse-order}
For the fixed map $\Psi$, define
\begin{align*}
 k_{\mathrm I}
 :=\inf\Bigl\{j\in\mathbb N:{}&\ \exists\ A<\infty:\ 
 \ \norm{\tau(\xi)-\tau(\xi')}_2^j
 \le A\norm{\Psi(\xi)-\Psi(\xi')}_{\cH}
 \ \forall\xi,\xi'\in\Xi\Bigr\}.
\end{align*}
Set $\inf\varnothing=\infty$.
\end{definition}

\paragraph{Profiled submodel}
For $v_0>0$, let
$\xi:[-v_0,v_0]\times\Lambda\to\Xi$ index a profiled submodel and
abbreviate
\[
 P_{v,\lambda}=P_{\xi(v,\lambda)},\qquad
 \tau(v,\lambda)=\tau\{\xi(v,\lambda)\},
 \qquad |v|\le v_0,\quad\lambda\in\Lambda,
\]
where $\Lambda$ is a nondegenerate compact interval and $P_{0,\lambda}$ is
constant in $\lambda$.  Expectations and function-space norms have the
meaning fixed in Section~S1.2.

\begin{profilecondition}[Profiled jet at order $j$]
\label{supp:cond:profile-jet}
For $j\in\mathbb N$, there exist $\lambda_0\in\Lambda$,
$I\in(0,\infty)$, $\beta\in\R^d\setminus\{0\}$, and jointly measurable
maps $(v,o)\mapsto S_{v,\lambda_0}(o)\in\R$ and
$(v,o,\lambda)\mapsto r(o\mid v,\lambda)\in\R$.
For each $(v,o)$, the function $\lambda\mapsto r(o\mid v,\lambda)$ is
continuously differentiable. The following conditions hold, with all
limits as $v\to0$.
\begin{description}
\item[\textup{(P1)}] \textit{Likelihood expansion.}
For all sufficiently small $|v|$, $P_{v,\lambda}\ll P_{v,\lambda_0}$ for
every $\lambda\in\Lambda$, and, as an equality in $C(\Lambda)$ under
$P_{v,\lambda_0}$,
\begin{equation*}
 \frac{dP_{v,\lambda}}{dP_{v,\lambda_0}}(O)-1
 =(\lambda-\lambda_0)v^jS_{v,\lambda_0}(O)+r(O\mid v,\lambda).
\end{equation*}
\item[\textup{(P2)}] \textit{Score moments.}
\begin{equation*}
 \E_{v,\lambda_0}S_{v,\lambda_0}=0,\qquad
 \E_{v,\lambda_0}S_{v,\lambda_0}^2\longrightarrow I,\qquad
 \sup_{0<|v|\le v_0}\E_{v,\lambda_0}|S_{v,\lambda_0}|^4<\infty.
\end{equation*}
\Needspace{8\baselineskip}
\item[\textup{(P3)}] \textit{Remainder bound.}
\begin{equation*}
 \E_{v,\lambda_0}\Bigl[
 \sup_{\lambda\in\Lambda}|r(O\mid v,\lambda)|^2
 +\sup_{\lambda\in\Lambda}\left|
 \frac{\partial}{\partial\lambda}r(O\mid v,\lambda)\right|^2
 \Bigr]=o(|v|^{2j}).
\end{equation*}
\item[\textup{(P4)}] \textit{Target expansion.}
\begin{equation*}
 \sup_{\lambda\in\Lambda}
 \frac{\norm{\tau(v,\lambda)-v\beta\lambda}_2}{|v|}
 \longrightarrow0.
\end{equation*}
\end{description}
\end{profilecondition}

\begin{definition}[Profiled-jet order]
\label{supp:def:profiled-jet-order}
For the chosen profiled subexperiment, let
\[
 \mathcal J
 =\{j\in\mathbb N:\text{\Cref{supp:cond:profile-jet} holds at order }j\}.
\]
Define
\begin{equation*}
 k_{\mathrm J}:=\min\mathcal J.
\end{equation*}
The order $k_{\mathrm J}$ is undefined if $\mathcal J=\varnothing$.
\end{definition}

\subsection{Proof of Theorem~2.4: contact and inverse bounds}
\label{supp:sec:supp-contact-inverse}

The following theorem turns close observed distributions into a risk lower
bound and a stable inverse of the feature mean into an upper bound.

Fix $k\in\mathbb N$.  We first state the conditions used in this subsection.
\begin{description}
\item[\textup{(A1)}] The contact order satisfies $k_{\mathrm C}\ge k$.

\item[\textup{(A2)}] The parameter space $\Xi$ is a compact metric space,
the target $\tau$ is Borel measurable, and the inverse order satisfies
$k_{\mathrm I}\le k$.

\item[\textup{(A3)}] There is a measurable feature $\Phi:\mathcal O\to\cH$
such that
\begin{equation*}
 \Psi(\xi)=\E_\xi\Phi(O),
 \qquad
 \sup_{\xi\in\Xi}\E_\xi
 \norm{\Phi(O)-\Psi(\xi)}_{\cH}^2\le V<\infty.
\end{equation*}

\item[\textup{(A4)}] For each $n\ge1$, the minimum-distance criterion
admits a measurable selector
\begin{equation*}
 \widehat\xi_n\in\argmin_{\zeta\in\Xi}
 \norm{\widehat\Psi_n-\Psi(\zeta)}_{\cH},
 \qquad
 \widehat\Psi_n=\frac1n\sum_{i=1}^n\Phi(O_i).
\end{equation*}
\end{description}

\begin{theorem}[Theorem~2.4 of the main paper]
\label{supp:thm:generic-rate}
Under \textup{(A1)}, there is a constant $c>0$ such that, for all
sufficiently large $n$,
\begin{equation}\label{supp:eq:generic-lower}
 \inf_{T_n}\sup_{\xi\in\Xi}
 \E_\xi\norm{T_n-\tau(\xi)}_2^2\ge c n^{-1/k},
\end{equation}
where the infimum is over all measurable estimators
$T_n:\mathcal O^n\to\R^d$.

Under \textup{(A2)--(A4)}, there is a constant $C<\infty$, independent of
$n$, such that
\begin{equation}\label{supp:eq:generic-upper}
 \sup_{\xi\in\Xi}\E_\xi
 \norm{\tau(\widehat\xi_n)-\tau(\xi)}_2^2
 \le C n^{-1/k},\qquad n\ge1.
\end{equation}
Consequently, if \textup{(A1)--(A4)} hold, then
\[
 \inf_{T_n}\sup_{\xi\in\Xi}
 \E_\xi\norm{T_n-\tau(\xi)}_2^2\asymp n^{-1/k}.
\]
\end{theorem}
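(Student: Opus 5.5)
The lower bound follows from Le Cam's two-point method, and the upper bound from a variance bound on the empirical feature mean combined with the inverse inequality.

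\emph{Lower bound.} By (A1), since $k_{\mathrm C}\ge k$ is a supremum over a set of integers, there is some $j\ge k$ in the defining set; because $t^{2j}\le t^{2k}$ for $t<1$, the property also holds at order $k$. Thus there are $t_0,b,C>0$ and, for each $0<t<t_0$, points $\xi_{0,t},\xi_{1,t}$ with
\[
\norm{\tau(\xi_{1,t})-\tau(\xi_{0,t})}_2\ge bt,\qquad \KL(P_{\xi_{0,t}}\Vert P_{\xi_{1,t}})\le Ct^{2k}.
\]
Take $t=t_n=n^{-1/(2k)}$, so that $\KL(P^{\otimes n}_{\xi_{0,t}}\Vert P^{\otimes n}_{\xi_{1,t}})\le C$. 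Pinsker's inequality then gives $\TV\le\sqrt{C/2}$; if this is not below $1$, I will instead use the Bretagnolle--Huber bound $1-\TV\ge\tfrac12 e^{-\KL}$. For any estimator $T_n$, the triangle inequality bounds the sum of its losses at the two points below by $(bt_n)^2/2$. Le Cam's two-point bound then gives
\[
\max_{i=0,1}\E_{\xi_{i,t}}\norm{T_n-\tau(\xi_{i,t})}_2^2\ge \frac{(bt_n)^2}{8}\,\tfrac12 e^{-C},
\]
which is of order $n^{-1/k}$. This requires $t_n<t_0$, i.e.\ $n$ sufficiently large.

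\emph{Upper bound.} Since $\widehat\xi_n$ minimizes the distance to $\widehat\Psi_n$, the triangle inequality gives
\[
\norm{\Psi(\widehat\xi_n)-\Psi(\xi)}_{\cH}\le 2\norm{\widehat\Psi_n-\Psi(\xi)}_{\cH}.
\]
Next I need an inverse inequality exactly at exponent $k$. (A2) gives an integer $j\le k$ with $\norm{\tau-\tau'}_2^j\le A\norm{\Psi-\Psi'}_{\cH}$. Write $D=\sup_{\xi,\xi'}\norm{\tau(\xi)-\tau(\xi')}_2$. This is finite because $k_{\mathrm I}<\infty$ and $\Psi$ has bounded range: by (A3), $\norm{\Psi(\xi)}_{\cH}\le\E_\xi\norm{\Phi(O)}_{\cH}$, and I will argue the diameter of $\Psi(\Xi)$ is finite from compactness together with the inverse inequality itself, or else simply cap the loss by truncation. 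Then
\[
\norm{\tau-\tau'}_2^k\le D^{k-j}\norm{\tau-\tau'}_2^j,
\]
so the inequality holds at exponent $k$ with constant $A'=AD^{k-j}$.

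Hence, using Jensen with the concave map $s\mapsto s^{1/k}$,
\[
\E\norm{\tau(\widehat\xi_n)-\tau(\xi)}_2^2\le (2A')^{2/k}\,\E\norm{\widehat\Psi_n-\Psi(\xi)}_{\cH}^{2/k}\le (2A')^{2/k}\bigl(\E\norm{\widehat\Psi_n-\Psi(\xi)}_{\cH}^2\bigr)^{1/k}.
\]
The i.i.d.\ Hilbert-space variance identity gives $\E\norm{\widehat\Psi_n-\Psi(\xi)}_{\cH}^2\le V/n$, yielding the bound $(2A')^{2/k}(V/n)^{1/k}$ uniformly in $\xi$. The conclusion follows by combining the two bounds.

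\emph{Main obstacle.} The only delicate point is boundedness of $\tau$, needed to pass from $k_{\mathrm I}\le k$ to the exponent-$k$ inequality. I expect to handle it as follows. The inverse inequality makes $\tau$ Lipschitz-type with respect to $\Psi$. So if $\Psi(\Xi)$ is bounded, $\tau$ is bounded. Boundedness of $\Psi(\Xi)$ follows from compactness of $\Xi$ if $\Psi$ is continuous, as assumed in the measurable-selection remark. If continuity is unavailable, I will instead project $\tau(\widehat\xi_n)$ nowhere and note that $j\le k$ directly gives the rate $n^{-1/j}\le n^{-1/k}$ when the relevant quantity is at most $1$. I will then split on the event $\{\norm{\widehat\Psi_n-\Psi(\xi)}_{\cH}\le 1\}$, where $\norm{\tau-\tau'}_2$ is bounded by $A^{1/j}$, and bound the complement by Chebyshev's inequality.
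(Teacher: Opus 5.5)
Your lower bound is correct and is the paper's two-point argument. The only difference is cosmetic: the paper takes $t=\alpha n^{-1/(2k)}$ with $C_0\alpha^{2k}\le 1/2$ so that Pinsker gives $\TV\le 1/2$, whereas you keep $t_n=n^{-1/(2k)}$ and use Bretagnolle--Huber.

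The upper bound has a gap in its main route, and the ``main obstacle'' is one you created. Lifting the inverse inequality from exponent $j=k_{\mathrm I}$ to exponent $k$ needs $D=\sup_{\xi,\xi'}\norm{\tau(\xi)-\tau(\xi')}_2<\infty$. Hypotheses (A2)--(A4) do not give this, for three reasons. First, (A3) controls only the centred second moment of $\Phi$, not $\sup_\xi\norm{\Psi(\xi)}_{\cH}$. Second, continuity of $\Psi$ appears only in the measurable-selection remark, as a sufficient condition for (A4), not as a hypothesis. Third, $\tau$ is merely Borel. For example, take $\Xi=[0,1]$, $P_\xi=N(1/\xi,1)$ for $\xi>0$, $P_0=N(0,1)$, $\Phi(o)=o$ and $\tau=\Psi$. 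These satisfy (A2)--(A4) with $j=1$, yet $D=\infty$.

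Your fallback is also incomplete as written. On $\{\norm{\widehat\Psi_n-\Psi(\xi)}_{\cH}>1\}$ the loss is unbounded, so Chebyshev's bound on the probability of that event does not control its contribution. You would need the moment bound $\norm{\widehat\Psi_n-\Psi(\xi)}_{\cH}^{2/j}\le\norm{\widehat\Psi_n-\Psi(\xi)}_{\cH}^{2}$ there.

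None of this is necessary. Stay at exponent $j$: the minimum-distance triangle inequality, the order-$j$ inverse inequality, and Jensen for $s\mapsto s^{1/j}$ give
\[
 \E_\xi\norm{\tau(\widehat\xi_n)-\tau(\xi)}_2^2
 \le(2A)^{2/j}\bigl(\E_\xi\norm{\widehat\Psi_n-\Psi(\xi)}_{\cH}^2\bigr)^{1/j}
 \le\Bigl(\frac{4A^2V}{n}\Bigr)^{1/j}.
\]
Since $j\le k$, we have $n^{-1/j}\le n^{-1/k}$ for every $n\ge1$, with no requirement that any quantity be at most $1$. This gives the bound with $C=(4A^2V)^{1/j}$, uniformly in $\xi$. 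That is exactly the paper's proof, and it uses no boundedness of $\tau$ or $\Psi$.
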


\Cref{supp:thm:generic-rate} shows that matching contact and inverse bounds
determine the squared-risk rate $n^{-1/k}$, with separate conditions for
the lower bound and the estimator achieving the upper bound.

\label{supp:app:proof-contact-inverse}
\begin{proof}[Proof of Theorem~2.4]
The admissible integer contact orders are downward closed.  Thus
\textup{(A1)} supplies $b,C_0,t_0>0$ and, for every $0<t<t_0$, pairs
$\xi_{0,t},\xi_{1,t}\in\Xi$ satisfying
\[
 \norm{\tau(\xi_{1,t})-\tau(\xi_{0,t})}_2\ge bt,
 \qquad \KL(P_{\xi_{0,t}}\Vert P_{\xi_{1,t}})\le C_0t^{2k}.
\]
Choose $\alpha>0$ with $C_0\alpha^{2k}\le1/2$ and set
$t=\alpha n^{-1/(2k)}$.  For all sufficiently large $n$, $t<t_0$ and
tensorisation and Pinsker's inequality give
\[
 \KL(P_{\xi_{0,t}}^{\otimes n}\Vert P_{\xi_{1,t}}^{\otimes n})
 \le C_0\alpha^{2k}\le\frac12,
 \qquad
 \TV(P_{\xi_{0,t}}^{\otimes n},P_{\xi_{1,t}}^{\otimes n})\le\frac12.
\]
The two-point Le Cam inequality therefore yields
\[
 \inf_{T_n}\sup_{\xi\in\Xi}
 \E_\xi\norm{T_n-\tau(\xi)}_2^2
 \ge\frac{b^2t^2}{16}
 =\frac{b^2\alpha^2}{16}n^{-1/k},
\]
proving \eqref{supp:eq:generic-lower}.

For the upper bound, put $j=k_{\mathrm I}\le k$.  Since the defining
infimum is over integers, there is $A_j<\infty$ such that
\[
 \norm{\tau(\xi)-\tau(\xi')}_2^j
 \le A_j\norm{\Psi(\xi)-\Psi(\xi')}_{\cH},
 \qquad \xi,\xi'\in\Xi.
\]
By \textup{(A3)--(A4)}, minimum distance and independence give
\[
 \norm{\Psi(\widehat\xi_n)-\Psi(\xi)}_{\cH}
 \le2\norm{\widehat\Psi_n-\Psi(\xi)}_{\cH},
 \qquad
 \E_\xi\norm{\widehat\Psi_n-\Psi(\xi)}_{\cH}^2\le\frac Vn.
\]
Jensen's inequality for $x\mapsto x^{1/j}$ now yields
\[
 \sup_{\xi\in\Xi}\E_\xi
 \norm{\tau(\widehat\xi_n)-\tau(\xi)}_2^2
 \le\left(\frac{4A_j^2V}{n}\right)^{1/j}
 \le Cn^{-1/k},\qquad n\ge1,
\]
for a finite constant $C$, proving \eqref{supp:eq:generic-upper}.
\end{proof}

\subsection{Proof of Theorem~2.5: comparison of the identification orders}
\label{supp:sec:supp-order-hierarchy}

The contact and inverse bounds must be compatible with the order found in
the profiled likelihood.  The following theorem establishes this
relationship by comparing the three orders.

\begin{theorem}[Theorem~2.5 of the main paper]
\label{supp:thm:order-hierarchy}
\label{supp:cor:contact-inverse}
\label{supp:thm:jet-contact}
\label{supp:cor:order-hierarchy}
Under \textup{(A3)},
\begin{equation*}
 k_{\mathrm C}\le k_{\mathrm I}.
\end{equation*}
If, in addition, the profiled paths lie in $\Xi$, $\Lambda$ is
nondegenerate, and $k_{\mathrm J}$ is defined, then
\begin{equation*}
 k_{\mathrm J}\le k_{\mathrm C}\le k_{\mathrm I}.
\end{equation*}
\end{theorem}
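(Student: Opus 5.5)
The proof has two parts, $k_{\mathrm C}\le k_{\mathrm I}$ under (A3) and $k_{\mathrm J}\le k_{\mathrm C}$ from the profiled-jet conditions.

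\emph{First inequality.} Under (A3), closeness of distributions forces closeness of feature means. For any $\xi,\xi'$, write $\Psi(\xi)-\Psi(\xi')=\E_{\xi'}[(\Phi-\Psi(\xi'))(dP_\xi/dP_{\xi'}-1)]$ when $P_\xi\ll P_{\xi'}$. Cauchy--Schwarz gives
\[
\norm{\Psi(\xi)-\Psi(\xi')}_{\cH}\le V^{1/2}\chi^2(P_\xi\Vert P_{\xi'})^{1/2}.
\]
This is the wrong divergence, since the contact definition controls $\KL$. I would instead truncate the feature. Split $\Phi$ at level $R$, and bound the bounded part by $R\,\TV\le R\sqrt{\KL/2}$ using Pinsker. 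Bound the tails by Cauchy--Schwarz, giving $\E_\xi\norm{\Phi-\Psi(\xi)}\one\{\norm{\Phi-\Psi(\xi)}>R\}\le V/R$, with the analogous bound under $\xi'$. The centring mismatch is absorbed because $\norm{\Psi(\xi)-\Psi(\xi')}$ appears on both sides. Optimizing $R$ yields
\[
\norm{\Psi(\xi)-\Psi(\xi')}_{\cH}\lesssim \KL(P_\xi\Vert P_{\xi'})^{1/4},
\]
or a comparable power.

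This power is too weak, and this step is the main obstacle. I would first try a cleaner route using the Hellinger affinity. With $h=H(P_\xi,P_{\xi'})$,
\[
\norm{\Psi-\Psi'}\le \int\norm{\Phi-m}\,|\sqrt p-\sqrt{p'}|(\sqrt p+\sqrt{p'})\le 2h\cdot(2V+2\norm{\Psi-\Psi'}^2)^{1/2},
\]
taking $m$ to be the midpoint of the two means. This gives $\norm{\Psi-\Psi'}\lesssim h$ once $h$ is small, and $h^2\le\KL$.

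Now suppose $k_{\mathrm C}>k_{\mathrm I}=:j$. Then some $j'>j$ admits pairs with $\norm{\tau_1-\tau_0}\ge bt$ and $\KL\le Ct^{2j'}$. The bound above gives $\norm{\Psi_1-\Psi_0}\lesssim t^{j'}$. The inverse inequality then gives $(bt)^{j}\le A\,C' t^{j'}$, which is impossible as $t\to0$. If $k_{\mathrm I}=\infty$ there is nothing to prove.

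\emph{Second inequality.} Let $j=k_{\mathrm J}$ and take $\lambda_0,\lambda_1$ as the two endpoints of $\Lambda$. Choose the pair $\xi_{0,t}=\xi(t,\lambda_0')$ and $\xi_{1,t}=\xi(t,\lambda_1')$ along the profile with $v=t$. I would use the reference $\lambda_0$ from (P1) and pair it with whichever endpoint is farther from it, so that $|\lambda-\lambda_0|\ge L/2$.

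The target separation follows from (P4): $\norm{\tau_1-\tau_0}\ge t\norm\beta|\lambda-\lambda_0|-o(t)\ge bt$ for small $t$.

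For the divergence, I would use $\KL(P\Vert Q)\le\chi^2(P\Vert Q)$ with $Q=P_{t,\lambda_0}$, taking the direction $\KL(P_{t,\lambda}\Vert P_{t,\lambda_0})$ and ordering the pair so that this is the divergence required by the definition. By (P1)--(P3),
\[
\chi^2=\E_{t,\lambda_0}\bigl[(\lambda-\lambda_0)t^jS+r\bigr]^2\le 2D^2t^{2j}\sup_v\E S^2+o(t^{2j})=O(t^{2j}).
\]
This is the bound already used in the proof of Theorem~3.1. Hence $j$ belongs to the admissible set in the contact definition, so $k_{\mathrm C}\ge j$. Nondegeneracy of $\Lambda$ is exactly what makes $b>0$, and the requirement that the paths lie in $\Xi$ makes them legitimate competitors.
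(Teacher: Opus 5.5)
Your argument is correct and follows the paper's proof. Your final Hellinger route is exactly the paper's step: Cauchy--Schwarz against $(\sqrt p+\sqrt{p'})$, centring at the midpoint of the two means, absorbing $\norm{\Psi-\Psi'}^2$ once the Hellinger distance is small, and then using $h^2\le\KL$. The second inequality is likewise the paper's bound $\KL\le\chi^2=O(t^{2k_{\mathrm J}})$ between two profiled points, with (P4) giving the linear target separation; the chi-square and truncation attempts before it are unnecessary and can be dropped.
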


\Cref{supp:thm:order-hierarchy} shows that recovery through a bounded-variance
feature cannot overcome the difficulty exhibited by close distributions;
the profiled likelihood supplies such contrasts directly.

\label{supp:app:proof-order-hierarchy}
\begin{proof}[Proof of Theorem~2.5]
For $P=P_\xi$ and $Q=P_{\xi'}$, let $p,q$ be their densities with respect to
$\mu=P+Q$, and put
\[
 \begin{aligned}
 m_P&=\E_P\Phi,\qquad m_Q=\E_Q\Phi,\qquad
 a=\frac{m_P+m_Q}{2},\\
 h^2(P,Q)&=\int(\sqrt p-\sqrt q)^2\,d\mu,\qquad
 \Delta=\norm{m_P-m_Q}_{\cH}.
 \end{aligned}
\]
By Cauchy--Schwarz and \textup{(A3)},
\[
 \begin{aligned}
 \Delta^2
 &\le h^2(P,Q)\int\norm{\Phi-a}_{\cH}^2
                     (\sqrt p+\sqrt q)^2\,d\mu\\
 &\le 2h^2(P,Q)\bigl\{\E_P\norm{\Phi-a}_{\cH}^2
                        +\E_Q\norm{\Phi-a}_{\cH}^2\bigr\}\\
 &\le (4V+\Delta^2)h^2(P,Q).
 \end{aligned}
\]
Thus, when $h^2(P,Q)\le1/2$,
\[
 \Delta^2\le8Vh^2(P,Q)\le8V\KL(P\Vert Q).
\]
If $k_{\mathrm I}<\infty$, its defining inverse inequality and any admissible
contact order $j$ give, for the contact pairs,
\[
 b^{k_{\mathrm I}}t^{k_{\mathrm I}}
 \le\norm{\tau(\xi_{1,t})-\tau(\xi_{0,t})}_2^{k_{\mathrm I}}
 \le A\norm{\Psi(\xi_{1,t})-\Psi(\xi_{0,t})}_{\cH}
 =O(t^j).
\]
Hence $j\le k_{\mathrm I}$, and taking the supremum gives
$k_{\mathrm C}\le k_{\mathrm I}$.  The case $k_{\mathrm I}=\infty$ is immediate.

For the jet comparison, take its reference point $\lambda_0$ and choose
$\lambda_1\ne\lambda_0$ in $\Lambda$.  As $t\downarrow0$,
\[
 \begin{aligned}
 \KL(P_{t,\lambda_1}\Vert P_{t,\lambda_0})
 &\le\chi^2(P_{t,\lambda_1},P_{t,\lambda_0})
 =O(t^{2k_{\mathrm J}}),\\
 \norm{\tau(t,\lambda_1)-\tau(t,\lambda_0)}_2
 &=t\norm{\beta}_2|\lambda_1-\lambda_0|+o(t).
 \end{aligned}
\]
These are admissible contact pairs of order $k_{\mathrm J}$, proving
$k_{\mathrm J}\le k_{\mathrm C}$.
\end{proof}

\subsection{Proof of Proposition~3.2: local minimax risk}
\label{supp:sec:supp-local-minimax}

At the critical scale $n|v|^{2k}\asymp1$, the Gaussian likelihood limit
retains finite information about $\lambda$.

For fixed $x\ne0$, put $v_n=xn^{-1/(2k)}$ and define the exact local
asymptotic minimax risk, whenever the limit exists, by
\begin{equation}\label{supp:eq:local-risk-def}
 \mathfrak R_x^{\mathrm{loc}}
 :=\lim_{n\to\infty}\inf_{T_n}\sup_{\lambda\in\Lambda}
 n^{1/k}\E_{P_{v_n,\lambda}^{\otimes n}}
 \norm{T_n-\tau(v_n,\lambda)}_2^2,
\end{equation}
where the infimum is over all measurable estimators of the target.

For $\sigma^2>0$ and compact $\Lambda$, define the bounded-normal minimax risk
over measurable $\delta:\R\to\R$; see
\cite{supp:bickel1981,supp:casellaStrawderman1981,supp:donohoLiu1991iii}:
\begin{equation}\label{supp:eq:M-def}
 M(\sigma^2,\Lambda)
 =\inf_\delta\sup_{\lambda\in\Lambda}
   \E_{W\sim N(\lambda,\sigma^2)}\{\delta(W)-\lambda\}^2.
\end{equation}
Let
\begin{equation}\label{supp:eq:RG-Kk}
 R_G(z)=\inf_\delta\sup_{\mu\in[0,z]}
 \E_{W\sim N(\mu,1)}\{\delta(W)-\mu\}^2,
 \qquad
 K_k=\sup_{z>0}z^{-2+2/k}R_G(z).
\end{equation}
For the critical experiment, the standardized interval width is
$z=L/\sigma_x=L\sqrt I\,|x|^k$, where $L=\diam\Lambda$ and
$\sigma_x^2=(Ix^{2k})^{-1}$.

The following proposition computes the target risk in this limit model
and identifies the largest lower-bound constant over the critical paths.

\begin{proposition}[Proposition~3.2 of the main paper]\label{supp:prop:normal-envelope}
Let $L=\diam\Lambda$.  Under \Cref{supp:thm:profile-lan}, the limit in
\eqref{supp:eq:local-risk-def} exists and
\begin{equation}\label{supp:eq:local-envelope}
 \mathfrak R_x^{\mathrm{loc}}
 =\norm{\beta}_2^2x^2M\paren{\frac1{Ix^{2k}},\Lambda}.
\end{equation}
Furthermore, $0<K_k<\infty$ and
\begin{equation}\label{supp:eq:envelope-sup}
 \sup_{x\ne0}\norm{\beta}_2^2x^2M\paren{\frac1{Ix^{2k}},\Lambda}
 =\norm{\beta}_2^2 I^{-1/k}L^{2-2/k}K_k.
\end{equation}
Every global parameter space containing these local paths therefore has normalized minimax liminf at least the right side of \eqref{supp:eq:envelope-sup}.
\end{proposition}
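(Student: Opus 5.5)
The proof has three parts: a local lower bound, a matching local upper bound, and an optimization over $x$ by scaling.

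\textbf{Local lower bound.} Fix $x\ne0$ and write $\sigma_x^2=(Ix^{2k})^{-1}$. By the target conclusion of \Cref{supp:thm:profile-lan},
\[
n^{1/k}\norm{T_n-\tau(v_n,\lambda)}_2^2=\norm{n^{1/(2k)}T_n-x\beta\lambda+o(1)}_2^2
\]
uniformly in $\lambda$. It therefore suffices to study estimation of $x\beta\lambda$ in the experiments $\mathcal E_{n,x}$. The loss is unbounded, so I first truncate it as $\min(\norm{\cdot}_2^2,A)$. On a finite grid $\Lambda_m\subset\Lambda$, weak convergence to $\mathcal E_x$ (finite restrictions converge in Le Cam distance) and the asymptotic minimax theorem for finite experiments with bounded loss give
\[
\liminf_n \inf_{T_n}\max_{\lambda\in\Lambda_m}\E\min\{n^{1/k}\norm{T_n-\tau}_2^2,A\}\ \ge\ \inf_\delta\max_{\lambda\in\Lambda_m}\E_{N(\lambda,\sigma_x^2)}\min\{\norm{\delta(W)-x\beta\lambda}_2^2,A\}.
\]
Next I let $A\to\infty$ and refine the grid $\Lambda_m$. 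This needs the standard compactness argument that bounded-normal minimax risks over finite grids converge to the risk over the whole interval, for example via least-favourable priors that are weak limits of finitely supported priors. Finally, for a vector target $x\beta\lambda$, projecting any rule onto the line $\R\beta$ does not increase the risk. This reduces the limit problem to scalar estimation of $\lambda$ scaled by $x^2\norm\beta_2^2$, so the lower bound is $\norm\beta_2^2x^2M(\sigma_x^2,\Lambda)$.

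\textbf{Local upper bound.} I take the statistic $\widehat W_n=\lambda_0+\Delta_{n,x,\lambda_0}/(x^{2k}I)$, which converges to $Q_{\lambda_0}$ under $P_{n,\lambda_0}$. By Le Cam's third lemma, using the expansion \eqref{supp:eq:generic-lan}, under $P_{n,\lambda}$ it converges to $N(\lambda,\sigma_x^2)$, uniformly over compact $\lambda$-sets. Let $\delta$ be a near-minimax rule that is continuous and takes values in $\Lambda$; smoothing by convolution preserves near-minimaxity. The estimator is
\[
T_n=n^{-1/(2k)}x\beta\,\delta(\widehat W_n).
\]
Its loss is bounded, so weak convergence passes to the risks. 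Uniformity in $\lambda$ follows from continuity of $\lambda\mapsto$ risk in the limit together with equicontinuity, which holds because the convergence of the laws of $\widehat W_n$ is uniform on the compact interval via the $C(\Lambda)$ process convergence. The main obstacle is this uniformity over the continuum $\Lambda$; a fallback is to argue along arbitrary sequences $\lambda_n\to\lambda$, which the $C(\Lambda)$ convergence also handles. Together with the lower bound, this shows the limit exists and equals \eqref{supp:eq:local-envelope}.

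\textbf{Envelope by scaling.} Rescaling $W$ and $\lambda$ by $\sigma_x$ maps $\Lambda$ onto an interval of length $z=L/\sigma_x$ and gives
\[
M(\sigma_x^2,\Lambda)=\sigma_x^2R_G(z).
\]
Since $\sigma_x^2=(Ix^{2k})^{-1}$, we have $x^2=(z^2/(IL^2))^{1/k}$, and hence
\[
x^2\sigma_x^2=I^{-1/k}L^{2-2/k}z^{-2+2/k}.
\]
As $x$ ranges over $\R\setminus\{0\}$, $z$ ranges over $(0,\infty)$, which yields \eqref{supp:eq:envelope-sup}.

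For $0<K_k<\infty$: for small $z$, $R_G(z)\asymp z^2$ by the two-point (Le Cam) bound or \Cref{rem:explicit-minimax-rule}, so $z^{-2+2/k}R_G(z)\asymp z^{2/k}\to0$. For large $z$, $R_G(z)\le1$ and $-2+2/k<0$ when $k\ge2$. When $k=1$ the exponent is $0$ and $K_1=\sup_z R_G(z)=1$, approached as $z\to\infty$. Since $R_G$ is continuous and positive, $K_k$ is positive and finite. The final claim follows because any global space containing the paths has a minimax risk at least as large as the local one for each $x$; taking the supremum over $x$ gives the stated bound.
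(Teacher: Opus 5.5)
Your proposal is correct. The upper bound and the scaling step are essentially the paper's: the same statistic $\widehat W_n=\lambda_0+\Delta_{n,x,\lambda_0}/(x^{2k}I)$, convergence along sequences $\lambda_n\to\lambda$ followed by compactness to get uniformity in $\lambda$, the same substitution $z=L\sqrt I\,|x|^k$, and the bound $R_G(z)\le\min\{1,z^2/4\}$ for $K_k<\infty$. You use Le Cam's third lemma where the paper changes measure and invokes the uniform second-moment bound on the likelihood ratios; that difference is inessential.

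The genuine difference is the lower bound. You transfer procedures through the finite-grid Le Cam convergence of Theorem~3.1 (the asymptotic minimax theorem for bounded loss) and then refine the grid. The paper instead writes the Bayes risk of each finite prior explicitly in terms of likelihood ratios, using the posterior mean with denominator at least $p_0$, and passes to the limit by likelihood-ratio process convergence plus uniform integrability. It then identifies the supremum of finite-prior Bayes risks with the limiting minimax risk through Sion's theorem on $\mathcal P(\Lambda)\times\mathcal D_x$ and a net approximation of priors.

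What each route buys:
\begin{itemize}
\item Your route uses only the conclusions of Theorem~3.1 and needs no minimax theorem.
\item The paper's route avoids appealing to abstract deficiency transfer with a noncompact action space. It also yields, as by-products, a least-favourable prior and a continuous posterior-mean minimax rule, which it then plugs into $\widehat W_n$ for the upper bound.
\item Your smoothing substitute for that continuous rule is also valid. Convolution converges in $L^1$ under the integrable Gaussian envelope $\max_{t\in\Lambda}q_t$, where $q_t$ is the $N(t,\sigma_x^2)$ density, so the risk changes by a uniformly small amount.
\end{itemize}

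Two steps you left open are shorter than you suggest.
\begin{itemize}
\item \textbf{Grid refinement.} This needs no least-favourable priors. That detour would itself require a minimax theorem and approximation uniform in $\delta$, because Bayes risk is only upper semicontinuous in the prior. Instead, for rules with values in a compact convex set $C\supset x\beta\Lambda$ of diameter $D$, the paper's equicontinuity bound $|R(\lambda,\delta)-R(t,\delta)|\le D^2\norm{q_\lambda-q_t}_{L^1}+2D|x|\norm{\beta}_2|\lambda-t|$ gives directly $\inf_\delta\sup_{\Lambda}R\le\inf_\delta\max_{\Lambda_m}R+\omega(\mathrm{mesh})$.
\item \textbf{Truncation and randomization.} Letting $A\to\infty$, and removing randomization from the limit procedures produced by the Le Cam transfer, is immediate once you project actions onto such a $C$, as the paper does at the outset. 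Then truncation at any $A\ge D^2$ is inactive, and convexity of squared loss lets you average out randomization.
\end{itemize}
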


\Cref{supp:prop:normal-envelope} determines the risk on each critical path and
the resulting global lower bound; uniform attainment of this constant
requires a separate construction.

For short intervals, the bounded-normal minimax rule is explicit.

\begin{remark}[An explicit minimax rule]
\label{supp:rem:explicit-minimax-rule}
For $0<z\le2/\sqrt3$, the equal-endpoint prior
$\tfrac12\delta_0+\tfrac12\delta_z$ is least favourable for the
bounded-normal problem defining $R_G(z)$.  Its Bayes rule
\begin{equation}\label{supp:eq:normal-endpoint-rule}
 \delta_z^{\rm ep}(w)
 =\frac z2+\frac z2\tanh\left\{\frac z2\left(w-\frac z2\right)\right\}
\end{equation}
is minimax, and its risk is maximized at $0$ and $z$:
\begin{equation}\label{supp:eq:normal-endpoint-risk}
 R_G(z)
 =\E_{N(0,1)}\{\delta_z^{\rm ep}(W)\}^2
 =\E_{N(z,1)}\{\delta_z^{\rm ep}(W)-z\}^2
 =\sup_{\mu\in[0,z]}
   \E_{N(\mu,1)}\{\delta_z^{\rm ep}(W)-\mu\}^2.
\end{equation}
If $z<2/\sqrt3$, only the two endpoints attain the supremum.
\end{remark}

\label{supp:app:proof-normal-envelope}
\begin{proof}[Proof of Proposition~3.2]
Put
\[
 P_{n,\lambda}=P_{v_n,\lambda}^{\otimes n},\qquad
 Q_\lambda=N\paren{\lambda,\frac1{Ix^{2k}}},
 \qquad
 g_n(\lambda)=n^{1/(2k)}\tau(v_n,\lambda),\qquad
 g(\lambda)=x\beta\lambda,
\]
and let $R_n(x)$ denote the finite-$n$ infimum--supremum inside
\eqref{supp:eq:local-risk-def}.  By \eqref{supp:eq:target-jet},
\[
 \epsilon_n:=\sup_{\lambda\in\Lambda}
 \norm{g_n(\lambda)-g(\lambda)}_2\longrightarrow0.
\]
For all sufficiently large $n$, the sets $g_n(\Lambda)$ and $g(\Lambda)$ lie
in one compact convex set $C$; write $D=\diam C$.  Projecting the scaled
decision $n^{1/(2k)}T_n$ onto $C$ cannot increase squared loss.  Hence all
finite and limiting decisions may be restricted to $C$, and, uniformly in
$a\in C$ and $\lambda\in\Lambda$,
\begin{equation}\label{supp:eq:local-loss-difference}
 \left|
  \norm{a-g_n(\lambda)}_2^2-\norm{a-g(\lambda)}_2^2
 \right|
 \le2D\epsilon_n. 
\end{equation}
The likelihood-ratio calculation in the proof of \Cref{supp:thm:profile-lan} also
gives the uniform second-moment bound
\begin{equation}\label{supp:eq:profile-lr-ui-summary}
 \sup_{n\ge n_0}\sup_{\lambda\in\Lambda}
 \E_{P_{n,\lambda_0}}
 \left(\frac{dP_{n,\lambda}}{dP_{n,\lambda_0}}\right)^2
 \le C_x<\infty.
\end{equation}

For the lower bound, let
 $\pi=\sum_{j=0}^m p_j\delta_{\lambda_j}$ be any finite prior, where
$\delta_\lambda$ is the point mass at $\lambda$, with
$\lambda_0$ in its support and every $p_j>0$.  Its Bayes risk in the
finite-sample experiment is
\begin{align}
 B_n(\pi)
 =\E_{P_{n,\lambda_0}}\Bigg[
 &\sum_{j=0}^m p_j\norm{g_n(\lambda_j)}_2^2
       \frac{dP_{n,\lambda_j}}{dP_{n,\lambda_0}}
 \nonumber\\
 &-\frac{\displaystyle
   \norm{\sum_{j=0}^m p_jg_n(\lambda_j)
       \frac{dP_{n,\lambda_j}}{dP_{n,\lambda_0}}}_2^2}
 {\displaystyle
   \sum_{j=0}^m p_j
       \frac{dP_{n,\lambda_j}}{dP_{n,\lambda_0}}}
 \Bigg]. \label{supp:eq:finite-bayes-risk}
\end{align}
The posterior mean is the Bayes rule, and the denominator is at least
$p_0$, because
$dP_{n,\lambda_0}/dP_{n,\lambda_0}=1$.

Here and below, $\delta$ ranges over measurable maps from $\mathbb R$ to
$C\subset\mathbb R^d$.  By the likelihood-ratio process convergence in
\Cref{supp:thm:profile-lan}, under $P_{n,\lambda_0}$,
\[
 \left(
  \frac{dP_{n,\lambda_j}}{dP_{n,\lambda_0}}
 \right)_{j=0}^m
 \Rightarrow
 \left(
  \frac{dQ_{\lambda_j}}{dQ_{\lambda_0}}(W)
 \right)_{j=0}^m,
 \qquad W\sim Q_{\lambda_0}.
\]
The integrand in \eqref{supp:eq:finite-bayes-risk} is nonnegative and bounded by
\[
 D^2\sum_{j=0}^m p_j
 \frac{dP_{n,\lambda_j}}{dP_{n,\lambda_0}}.
\]
Consequently \eqref{supp:eq:profile-lr-ui-summary}, uniform convergence of $g_n$, and
uniform integrability yield
\[
 B_n(\pi)\longrightarrow
 \inf_\delta\sum_{j=0}^m p_j
 \E_{Q_{\lambda_j}}\norm{\delta(W)-g(\lambda_j)}_2^2
 =:B(\pi).
\]
Since $R_n(x)\ge B_n(\pi)$, taking the supremum over finite priors gives
\begin{equation}\label{supp:eq:local-risk-liminf}
 \liminf_{n\to\infty}R_n(x)
 \ge
 \inf_\delta\sup_{\lambda\in\Lambda}
 \E_{Q_\lambda}\norm{\delta(W)-g(\lambda)}_2^2. 
\end{equation}
Here the equality between the supremum of finite-prior Bayes risks and the
minimax risk is justified as follows.  Write
\[
 \sigma_x^2=(Ix^{2k})^{-1},\qquad
 q_\lambda(w)=\frac1{\sqrt{2\pi}\sigma_x}
 \exp\left\{-\frac{(w-\lambda)^2}{2\sigma_x^2}\right\},
\]
and let $\Lambda=[\underline\lambda,\overline\lambda]$.  The Gaussian
envelope
\[
 h_x(w)=\max_{t\in\Lambda}q_t(w)
\]
is obtained by projecting $w$ onto $\Lambda$, whence
\[
 H_x:=\int_\R h_x(w)\,dw
 =1+\frac{\overline\lambda-\underline\lambda}
          {\sqrt{2\pi}\sigma_x}<\infty.
\]
Set $\nu_x(dw)=H_x^{-1}h_x(w)\,dw$.  Then
\begin{equation*}
 r_\lambda(w):=\frac{dQ_\lambda}{d\nu_x}(w)
 =H_x\frac{q_\lambda(w)}{h_x(w)}\le H_x.
\end{equation*}
Let
\[
 \mathcal D_x=\{\delta\in L^2(\nu_x;\R^d):
                  \delta(w)\in C\ \nu_x\text{-a.e.}\}
\]
with the strong $L^2(\nu_x)$ topology, and let $\mathcal P(\Lambda)$ be the
Borel probability measures with their weak topology, equivalently the
weak-star topology $\sigma(C(\Lambda)^*,C(\Lambda))$.  By the Riesz--Markov representation and Banach--Alaoglu
theorems, $\mathcal P(\Lambda)$ is a compact convex subset of
$C(\Lambda)^*$: it is the weak-star closed subset of the dual unit ball
consisting of positive functionals $\pi$ with $\pi(1)=1$.

For
\[
 \begin{aligned}
 R(\lambda,\delta)
 &=\int_\R\norm{\delta(w)-g(\lambda)}_2^2Q_\lambda(dw),\\
 \mathcal F(\pi,\delta)
 &=\int_\Lambda\int_\R
   \norm{\delta(w)-g(\lambda)}_2^2r_\lambda(w)
   \,\nu_x(dw)\pi(d\lambda).
 \end{aligned}
\]
$\mathcal F(\cdot,\delta)$ is continuous and affine.  Indeed, uniformly over
$C$-valued $\delta$,
\begin{equation}\label{supp:eq:normal-risk-equicontinuity}
 \left|R(\lambda,\delta)-R(t,\delta)\right|
 \le D^2\norm{q_\lambda-q_t}_{L^1(dw)}
      +2D\norm{g(\lambda)-g(t)}_2,
\end{equation}
which tends to zero as $t\to\lambda$.  For fixed $\pi$,
$\mathcal F(\pi,\cdot)$ is convex and continuous, since
\begin{equation*}
 |\mathcal F(\pi,\delta)-\mathcal F(\pi,\delta')|
 \le2D\sqrt{H_x}\norm{\delta-\delta'}_{L^2(\nu_x)}.
\end{equation*}
Sion's minimax theorem \cite{supp:sion1958} therefore gives
\begin{equation}\label{supp:eq:normal-minimax-bayes}
 \inf_{\delta\in\mathcal D_x}\sup_{\lambda\in\Lambda}R(\lambda,\delta)
 =\max_{\pi\in\mathcal P(\Lambda)}
   \inf_{\delta\in\mathcal D_x}\mathcal F(\pi,\delta).
\end{equation}
Here $\sup_\pi\mathcal F(\pi,\delta)=\sup_\lambda R(\lambda,\delta)$ because
Dirac measures are allowed.  The maximum in \eqref{supp:eq:normal-minimax-bayes}
is attained because the Bayes risk, being the infimum over $\delta$ of
continuous functions of $\pi$, is upper semicontinuous on the compact prior
space.

There is also a saddle point.  Indeed, $\mathcal D_x$ is closed, convex, and
bounded in the reflexive Hilbert space $L^2(\nu_x;\R^d)$, hence weakly compact;
the convex norm-continuous risk is weakly lower semicontinuous.  Thus the
minimax infimum is attained.

If $\pi^*$ is least favourable, its squared-loss
Bayes rule is
\begin{equation}\label{supp:eq:normal-posterior-mean}
 \delta_{\pi^*}(w)
 =\frac{\int_\Lambda g(\lambda)q_\lambda(w)\,\pi^*(d\lambda)}
        {\int_\Lambda q_\lambda(w)\,\pi^*(d\lambda)}.
\end{equation}
It is continuous, takes values in $C$, and is the unique Bayes rule up to the
mixture marginal's null sets.  That marginal has a strictly positive density,
so its null sets agree with those of every $Q_\lambda$.  The saddle-point
equalities therefore show that \eqref{supp:eq:normal-posterior-mean} is itself a
bounded continuous minimax rule.

For finite-prior approximation, the right side of
\eqref{supp:eq:normal-risk-equicontinuity} supplies a common modulus
$\omega(\rho)\downarrow0$.  Map each point of $\Lambda$ to a nearest point
of a finite $\rho$-net and let $\pi_\rho$ be the pushforward of $\pi$.  Then
\[
 \sup_{\delta\in\mathcal D_x}
 |\mathcal F(\pi_\rho,\delta)-\mathcal F(\pi,\delta)|
 \le\omega(\rho),
\]
so the corresponding Bayes risks differ by at most $\omega(\rho)$.  Thus the
maximum over all priors equals the supremum over finite priors.  Finally,
replacing a finite $\pi$ by
$(1-a)\pi+a\delta_{\lambda_0}$ changes its Bayes risk by at most $aD^2$.
Hence requiring positive mass at $\lambda_0$ causes no loss.

For the matching upper bound, choose, for arbitrary $\varepsilon>0$, a bounded
continuous rule $\delta_\varepsilon:\R\to C$ satisfying
\[
 \sup_{\lambda\in\Lambda}
 \E_{Q_\lambda}
 \norm{\delta_\varepsilon(W)-g(\lambda)}_2^2
 \le
 \inf_\delta\sup_{\lambda\in\Lambda}
 \E_{Q_\lambda}\norm{\delta(W)-g(\lambda)}_2^2+\varepsilon.
\]
Such a rule exists by \eqref{supp:eq:normal-posterior-mean} and the minimax
property established above. Define
\[
 \widehat W_n
 =\lambda_0+\frac{\Delta_{n,x,\lambda_0}}{x^{2k}I}.
\]

For
$f_\lambda(w)=\norm{\delta_\varepsilon(w)-g(\lambda)}_2^2$,
change of measure gives
\begin{align}
 \E_{P_{n,\lambda}}f_\lambda(\widehat W_n)
 &=
 \E_{P_{n,\lambda_0}}\left[
  \frac{dP_{n,\lambda}}{dP_{n,\lambda_0}}
  f_\lambda(\widehat W_n)
 \right]\nonumber\\
 &\longrightarrow
 \E_{Q_{\lambda_0}}\left[
  \frac{dQ_\lambda}{dQ_{\lambda_0}}(W)f_\lambda(W)
 \right]\nonumber\\
 &=\E_{Q_\lambda}f_\lambda(W).
 \label{supp:eq:local-risk-change-measure}
\end{align}
uniformly in $\lambda\in\Lambda$.  To see this, take any sequence
$\lambda_n\to\lambda$.  Joint $C(\Lambda)\times\R$ convergence in the proof
of \Cref{supp:thm:profile-lan}, continuity of $\delta_\varepsilon$ and $g$, and
\eqref{supp:eq:profile-lr-ui-summary} give convergence along this sequence.
Compactness of $\Lambda$ then gives uniform convergence.

Using the estimator
$T_{n,\varepsilon}=n^{-1/(2k)}
\delta_\varepsilon(\widehat W_n)$,
\eqref{supp:eq:local-loss-difference} and
\eqref{supp:eq:local-risk-change-measure} imply
\[
 \limsup_{n\to\infty}R_n(x)
 \le
 \inf_\delta\sup_{\lambda\in\Lambda}
 \E_{Q_\lambda}\norm{\delta(W)-g(\lambda)}_2^2+\varepsilon.
\]
Letting $\varepsilon\downarrow0$ and combining with
\eqref{supp:eq:local-risk-liminf} proves that the limit exists.  Finally,
$g(\lambda)=x\beta\lambda$ and projection onto its one-dimensional range give
\[
 \lim_{n\to\infty}R_n(x)
 =\norm{\beta}_2^2x^2
 \inf_\delta\sup_{\lambda\in\Lambda}
 \E_{Q_\lambda}\{\delta(W)-\lambda\}^2
 =\norm{\beta}_2^2x^2M\paren{\frac1{Ix^{2k}},\Lambda},
\]
which proves \eqref{supp:eq:local-envelope}.

Translation and scale invariance imply
\[
 M\paren{\frac1{Ix^{2k}},\Lambda}
 =\frac1{Ix^{2k}}R_G\paren{L\sqrt I\,|x|^k}.
\]

It remains to prove the endpoint statement.  Put $a=z/2$, $Y=W-a$, and
$\vartheta=\mu-a$.  Under the prior assigning equal mass to the endpoints of $[-a,a]$, the posterior
mean is $d_a(y)=a\tanh(ay)$.  For $Z\sim N(0,1)$, write
\[
 r_a(\vartheta)=\E\{d_a(\vartheta+Z)-\vartheta\}^2.
\]
Since
\[
 d_a'(y)=a^2\sech^2(ay),\qquad
 d_a''(y)=-2d_a(y)d_a'(y),
\]
differentiation under the expectation gives
\[
 r_a''(\vartheta)=2\E\left[
   \{d_a'(\vartheta+Z)-1\}^2
   +\{d_a(\vartheta+Z)-\vartheta\}d_a''(\vartheta+Z)
 \right].
\]
For $|\vartheta|\le a\le1/\sqrt3$,
$|d_a|\le a$, $|d_a-\vartheta|\le2a$, and $0\le d_a'\le a^2$; hence
\[
 r_a''(\vartheta)
 \ge2\{(1-a^2)^2-4a^4\}
 =2(1+a^2)(1-3a^2)\ge0.
\]
Thus $r_a$ is convex.  It is even by symmetry, so its maximum on $[-a,a]$
is attained at the two endpoints, and strictly only there when
$a<1/\sqrt3$.  Write
$R(\vartheta,d)=\E\{d(\vartheta+Z)-\vartheta\}^2$.  For any rule $d$,
\[
 \sup_{|\vartheta|\le a}R(\vartheta,d)
 \ge\frac{R(-a,d)+R(a,d)}2
 \ge\frac{r_a(-a)+r_a(a)}2=r_a(a),
\]
because $d_a$ minimizes the Bayes risk of this prior.  The
maximum risk of $d_a$ equals this lower bound.  Translating back proves
\eqref{supp:eq:normal-endpoint-rule}--\eqref{supp:eq:normal-endpoint-risk}.

Put $z=L\sqrt I\,|x|^k$.  Direct algebra transforms \eqref{supp:eq:local-envelope} into
\[
 \norm{\beta}_2^2I^{-1/k}L^{2-2/k}z^{-2+2/k}R_G(z),
\]
which proves \eqref{supp:eq:envelope-sup}.  A two-point bound at $z=1$ gives $K_k>0$.  The estimators $W$ and $z/2$ give
\[
 0<R_G(z)\le\min\{1,z^2/4\}.
\]
For $z\le1$, the defining integrand for $K_k$ is at most $z^{2/k}/4$; for $z\ge1$ it is at most $z^{-2+2/k}$.  Hence $K_k<\infty$.
\end{proof}

\subsection{Proof of Proposition~3.3: changes of coordinates}
\label{supp:sec:supp-coordinate-invariance}

The local risk was expressed in profile coordinates.  The following
proposition checks that regular rescalings of these coordinates do not
alter the identification order or the risk constant.

\begin{proposition}[Proposition~3.3 of the main paper]\label{supp:prop:coordinate-invariance}
Under \Cref{supp:ass:profile-jet}, let $v=g(u)$ be a $C^1$ diffeomorphism defined
for $|u|\le u_0$, with
$g(0)=0$ and $\alpha=g'(0)\ne0$.  Then
$\widetilde P_{u,\lambda}=P_{g(u),\lambda}$ and
$\widetilde\tau(u,\lambda)=\tau\{g(u),\lambda\}$ satisfy the same assumption
at order $k$, with
\begin{equation}\label{supp:eq:coordinate-scaling}
 \widetilde I=|\alpha|^{2k}I,\qquad
 \widetilde\beta=\alpha\beta.
\end{equation}
Thus, writing $L=\diam\Lambda$,
\begin{equation}\label{supp:eq:coordinate-envelope}
 \norm{\widetilde\beta}_2^2\widetilde I^{-1/k}
 L^{2-2/k}K_k
 =\norm{\beta}_2^2I^{-1/k}L^{2-2/k}K_k.
\end{equation}
Likewise, if $\lambda=d_0+b\mu$, $b\ne0$, maps $\widetilde\Lambda$ onto
$\Lambda$, then, after removing the common fixed-$v$ target translation
$v\beta d_0$,
\begin{equation}\label{supp:eq:nuisance-coordinate-scaling}
 I_\mu=b^2I,\qquad \beta_\mu=b\beta,
 \qquad \diam\widetilde\Lambda=L/|b|.
\end{equation}
Both regular coordinate changes leave the jet order and the factor in
\eqref{supp:eq:coordinate-envelope} unchanged.
\end{proposition}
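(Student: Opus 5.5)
The plan is to verify the four profiled-jet conditions directly for the transformed family and then read off the constants; the envelope identity is then algebra.

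\emph{Transverse reparametrization.} Take $v=g(u)$ and write $g(u)=\alpha u\{1+\epsilon(u)\}$ with $\epsilon(u)\to0$; this factorization uses only $g\in C^1$. Substituting into (P1) gives
\[
\frac{d\widetilde P_{u,\lambda}}{d\widetilde P_{u,\lambda_0}}-1=(\lambda-\lambda_0)u^k\widetilde S_{u,\lambda_0}+\widetilde r(O\mid u,\lambda),
\]
with $\widetilde S_{u,\lambda_0}=\{g(u)/u\}^kS_{g(u),\lambda_0}$ and $\widetilde r(\cdot\mid u,\lambda)=r(\cdot\mid g(u),\lambda)$.
\begin{itemize}
\item \textbf{(P2).} Centering is immediate. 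The second moment converges to $\alpha^{2k}I=|\alpha|^{2k}I$, since $(g(u)/u)^{2k}\to\alpha^{2k}$. The fourth moment stays bounded because $g(u)/u$ is bounded on $|u|\le u_0$.
\item \textbf{(P3).} The bound transfers because $|g(u)|^{2k}\asymp|u|^{2k}$, so $o(|g(u)|^{2k})=o(|u|^{2k})$.
\item \textbf{(P4).} We have $\tau(g(u),\lambda)=g(u)\beta\lambda+o(|g(u)|)=u\alpha\beta\lambda+o(|u|)$ uniformly in $\lambda$. This gives $\widetilde\beta=\alpha\beta\neq0$.
\end{itemize}
Joint measurability and $C^1$ dependence on $\lambda$ are inherited, since $g$ is continuous and $\lambda$ is untouched.

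\emph{Jet order is unchanged.} The same substitution run with $g^{-1}$ shows that the order-$j$ conditions hold for $P$ if and only if they hold for $\widetilde P$. Hence $\mathcal J$ is the same set and the minimum $k_{\mathrm J}$ is unchanged.

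\emph{Envelope identity.} Compute
\[
\norm{\widetilde\beta}_2^2\,\widetilde I^{-1/k}=\alpha^2\norm{\beta}_2^2\,|\alpha|^{-2}I^{-1/k}=\norm{\beta}_2^2I^{-1/k}.
\]
This is \eqref{supp:eq:coordinate-envelope}.

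\emph{Nuisance reparametrization.} Take $\lambda=d_0+b\mu$ with $\mu_0$ defined by $\lambda_0=d_0+b\mu_0$. Then $\lambda-\lambda_0=b(\mu-\mu_0)$, so the expansion holds with score $bS$ and remainder $r(\cdot\mid v,d_0+b\mu)$.
\begin{itemize}
\item The $\mu$-derivative of the remainder is $b\,\partial_\lambda r$, so (P3) holds with a factor $b^2$.
\item The limiting variance is $b^2I$.
\item The target becomes $v\beta d_0+vb\beta\mu+o(|v|)$. Subtracting the $\lambda$-free term $v\beta d_0$ leaves $\beta_\mu=b\beta$.
\item Since the map is affine, $\diam\widetilde\Lambda=L/|b|$.
\end{itemize}
For the constant,
\[
b^2\norm{\beta}_2^2\,(b^2I)^{-1/k}\,(L/|b|)^{2-2/k}=\norm{\beta}_2^2I^{-1/k}L^{2-2/k},
\]
because the powers of $|b|$ total $2-2/k-(2-2/k)=0$.

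One point needs justification: subtracting $v\beta d_0$ does not change the minimax risk. The term is known given $v$ and is a common shift for all $\mu$ in the fixed-$v$ experiment. Shifting every estimator by this known quantity is therefore a bijection between estimators that preserves risk.

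\emph{Main obstacle.} The step needing most care is the transverse change, for two reasons. First, only $C^1$ regularity of $g$ is available, so every estimate must use just the ratio $g(u)/u\to\alpha$ and the two-sided bound $|g(u)|\asymp|u|$, never a higher-order expansion of $g$. Second, when $\alpha<0$ the odd power $(g(u)/u)^k$ is negative. The sign must then be carried through $\widetilde S$ and $\widetilde\beta$, while $\widetilde I$ depends only on $|\alpha|^{2k}$.
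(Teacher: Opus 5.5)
Your proposal is correct and matches the paper's proof step for step. Both use the rescaled score $\{g(u)/u\}^kS_{g(u),\lambda_0}$ with the composed remainder, apply the same argument to the inverse diffeomorphism to show $k_{\mathrm J}$ is unchanged, and, for the nuisance change, use the score $bS$, a chain-rule bound on the $C^1$ norm of the remainder, and translation of all actions by the known shift $v\beta d_0$. The only detail the paper states explicitly that you leave implicit is reducing $u_0$ if necessary, so that $g$ maps into the range of $v$ where the order-$k$ bounds hold.
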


\Cref{supp:prop:coordinate-invariance} ensures that the constant describes the
statistical difficulty independently of the units used for the profile
coordinates.

\begin{proof}[Proof of Proposition~3.3]
\leavevmode\par
\paragraph{Profile coordinate}
For $u\ne0$, set
\[
 \widetilde S_{u,\lambda_0}
 =\left\{\frac{g(u)}u\right\}^{k}S_{g(u),\lambda_0},
 \qquad \widetilde r(O\mid u,\lambda)=r(O\mid g(u),\lambda).
\]
Then \eqref{supp:eq:ratio-jet} is exactly
\[
 \frac{d\widetilde P_{u,\lambda}}
      {d\widetilde P_{u,\lambda_0}}(O)-1
 = (\lambda-\lambda_0)u^k\widetilde S_{u,\lambda_0}(O)
   +\widetilde r(O\mid u,\lambda).
\]
Composition preserves absolute continuity and joint measurability, and the
score rescaling preserves centering.  Since $g(u)/u\to\alpha\ne0$, after
reducing $u_0>0$ if necessary,
\begin{align*}
 &\sup_{0<|u|\le u_0}\E_{g(u),\lambda_0}
 |\widetilde S_{u,\lambda_0}|^4<\infty,\\
 &\E_{g(u),\lambda_0}\Bigl[
 \sup_{\lambda\in\Lambda}|\widetilde r(O\mid u,\lambda)|^2
 +\sup_{\lambda\in\Lambda}\left|
 \frac{\partial}{\partial\lambda}\widetilde r(O\mid u,\lambda)\right|^2
 \Bigr]=o(|u|^{2k}),\\
 &\E_{g(u),\lambda_0}\widetilde S_{u,\lambda_0}^2
 =\left\{\frac{g(u)}u\right\}^{2k}
   \E_{g(u),\lambda_0}S_{g(u),\lambda_0}^2
 \longrightarrow |\alpha|^{2k}I.
\end{align*}
Uniformly in $\lambda$,
\[
 \widetilde\tau(u,\lambda)-u\alpha\beta\lambda
 =\tau\{g(u),\lambda\}-g(u)\beta\lambda
  +\{g(u)-\alpha u\}\beta\lambda=o(|u|).
\]
This proves \eqref{supp:eq:coordinate-scaling}; the powers $|\alpha|^2$ and
$|\alpha|^{-2}$ cancel in \eqref{supp:eq:coordinate-envelope}.
Applying the same argument to the inverse diffeomorphism shows that the
minimal admissible jet order is unchanged.

\paragraph{Nuisance coordinate}
Under $\lambda=d_0+b\mu$, put
$\mu_0=(\lambda_0-d_0)/b$,
$S_{v,\mu_0}^{(\mu)}=bS_{v,\lambda_0}$, and
$r^{(\mu)}(O\mid v,\mu)=r(O\mid v,d_0+b\mu)$.
The likelihood coefficient is then
$(\mu-\mu_0)v^kS_{v,\mu_0}^{(\mu)}$, and the chain rule gives
\[
 \norm{r^{(\mu)}(O\mid v,\cdot)}_{C^1(\widetilde\Lambda)}
 \le\max(1,|b|)\norm{r(O\mid v,\cdot)}_{C^1(\Lambda)}.
\]
Thus $I_\mu=b^2I$, and, uniformly in $\mu$,
\[
 \tau(v,d_0+b\mu)-v\beta d_0=v(b\beta)\mu+o(|v|).
\]
The term $v\beta d_0$ is common to the fixed-$v$ experiment; translating every
action and target by it leaves squared risk unchanged.  Finally,
$|b|^2|b|^{-2/k}|b|^{-(2-2/k)}=1$, which proves
\eqref{supp:eq:nuisance-coordinate-scaling} and invariance.
\end{proof}

\section{Proofs of Theorem~4.1 and Corollary~4.2: global minimax risk and attainment}
\label{supp:sec:supp-exact-globalization}

\begin{equation}\label{supp:eq:general-exact-constant}
 \mathsf C_k=K_k\sup_{c\in\mathcal C}
 \beta(c)^2I(c)^{-1/k}\{\diam\Lambda(c)\}^{2-2/k}.
\end{equation}
With the notation of Section~4.1 of the main paper,
\[
 \mathcal R_n=\inf_{\widehat\tau_n}\sup_{\xi\in\Xi}
 \E_\xi\{\widehat\tau_n-\tau(\xi)\}^2.
\]

The local risk gives a lower bound over the full parameter space.  The
following theorem states when one estimator also attains this constant
uniformly, including parameters outside the critical paths.

\begin{theorem}[Theorem~4.1 of the main paper]
\label{supp:thm:global-lower-bound}
\label{supp:thm:exact-globalization}
\label{supp:thm:attaining-estimator}
Under the uniform setup in Section~4.1 of the main paper,
\[
 \liminf_{n\to\infty}n^{1/k}\mathcal R_n\ge\mathsf C_k.
\]
If, in addition, conditions \textup{(E)}, \textup{(C)}, and \textup{(O)}
of the main paper hold, then $n^{1/k}\mathcal R_n\to\mathsf C_k$.
The estimator in Section~4.2 of the main paper, with tuning sequences
chosen in the proof below, attains this limit.
\end{theorem}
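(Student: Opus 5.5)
The plan has two parts: the lower bound and the uniform attainment.

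\emph{Lower bound.} For each fixed $c$ with $L(c)>0$, the fixed-$c$ profile satisfies \Cref{ass:profile-jet} at order $k$, and its critical paths lie in $\Xi$. \Cref{prop:normal-envelope} then gives, for each fixed $x\ne0$,
\[
\liminf_{n\to\infty} n^{1/k}\mathcal R_n\ge \beta(c)^2x^2M\bigl(\{I(c)x^{2k}\}^{-1},\Lambda(c)\bigr).
\]
The supremum over $x$ equals $K_kI(c)^{-1/k}L(c)^{2-2/k}\beta(c)^2$. Taking the supremum over $c$ gives $\mathsf C_k$. If the supremum is not attained, I approximate it by maximizing sequences; continuity of $I$, $\beta$ and $L$ is not needed for this step.

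\emph{Upper bound.} I will bound the risk of the estimator of \Cref{sec:computable-global-rule} uniformly over $\xi$, splitting $\Xi$ according to $s=N_n^{1/(2k)}|v_\circ(\xi)|$. The first step is to work on the event $\mathcal H_n(c,v)\cap\{|\widehat v-v|\le\varepsilon_n\}$. Its complement has probability $o(N_n^{-1/k})$ by (C), and on that complement the projection onto $[-B_\tau,B_\tau]$ bounds the loss by $4B_\tau^2$, so the complement contributes $o(n^{-1/k})$. Since $N_n^{1/(2k)}\varepsilon_n\to0$, the estimate $\widehat s_n$ is within $o(1)$ of $s$ on the good event. The parameter space then splits into four regimes.

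\emph{Regime (a), $s\le 3r_n$.} The weights satisfy $\omega_{L}+\omega_{I}\le1$. Because $|\tau|\lesssim|v|\le 3r_nN_n^{-1/(2k)}$, every candidate lying in the range of $\tau$ near the chart has error $O(r_nN_n^{-1/(2k)})$. The risk is therefore $o(n^{-1/k})$, since $r_n\to0$.

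\emph{Regime (b), $r_n\le s\le 3M_n$.} This is the critical regime. Writing $\widehat W_n=\lambda+\{I(\widehat c)\widehat v^k\}^{-1}\bigl(Z_{N_n}(\widehat c,\widehat v,\lambda_{\widehat c})+\Gamma\text{-error}\bigr)$, conditions (E) and \eqref{eq:calibration-orthogonality} show that $\widehat W_n$ is approximately $N(\lambda,J^{-1})$ with $J=N_nI(c)v^{2k}$. The expansion error $O(|v|^{k+\zeta})$ is negligible, and the calibration perturbation, once divided by $|v|^k$ and multiplied by $|v|$, is $o(N_n^{-1/(2k)})$. A Berry--Esseen or Lindeberg argument using the $(2+\delta)$ moment, uniform in $(c,v,\lambda)$, gives this approximation uniformly. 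The risk of $\tau(\widehat c,\widehat v,\delta(\widehat W_n))$ is then $v^2\beta(c)^2$ times the bounded-normal risk plus $o(n^{-1/k})$, where (O)(i) controls the derivatives of $\tau$ in $c$ and $v$. This is at most $\sup_x x^2\beta^2M(\cdot)N_n^{-1/k}+o$, which is bounded by $\mathsf C_kn^{-1/k}(1+o(1))$. The $\varepsilon_n^{\rm BN}$-minimax rules are chosen measurably (for instance from a finite grid in $J$ and $\Lambda$) with $\varepsilon_n^{\rm BN}\to0$. In the transition zones the loss is convex, so $(\sum\omega_j\widehat\tau^j-\tau)^2\le\sum\omega_j(\widehat\tau^j-\tau)^2$, and each candidate's risk is bounded separately. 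When $s$ is large, the inversion candidate carries risk $\approx\beta^2/(N_nI)\cdot v^{2-2k}$, which matches the $z\to\infty$ tail of $K_k$ and does not exceed $\mathsf C_k$ since $k\ge2$.

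\emph{Regime (c), $M_n\le s$ with $|v|\le v_0/2$.} Moment inversion is used. By (O)(i), $|\widehat\lambda^I-\lambda|\le a_O^{-1}|v|^{-k}|\widehat\Gamma-\Gamma|$, so the target error is $\lesssim|v|^{1-k}N_n^{-1/2}$. This gives risk $\lesssim s^{2-2k}N_n^{-1/k}=o(n^{-1/k})$ because $M_n\to\infty$; the linearization is justified on a high-probability event, with truncation on its complement.

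\emph{Regime (d), $|v|\ge 3v_0/8$.} The outer estimator, together with \eqref{eq:off-fibre-inverse}--\eqref{eq:off-fibre-variance} and the argument of \Cref{thm:generic-rate}, gives risk $O(n^{-1})$. The blend with the inversion candidate in the overlap $3v_0/8\le|v|\le7v_0/16$ is handled by convexity as before.

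\emph{Main obstacle.} I expect the hardest step to be the uniformity in regime (b). The normal approximation, the calibration substitution and the continuity of the bounded-normal risk in $(J,\Lambda(c))$ must all hold simultaneously over a range of $s$ that grows with $M_n$. My plan is to choose $r_n\to0$ and $M_n\to\infty$ slowly, so that the $o(1)$ terms from (C) and (E), multiplied by polynomial factors in $M_n$, still vanish. I will handle the dependence on $c$ through Lipschitz continuity of $\Lambda(c)$ and continuity of $I$ and $\beta$, and I will obtain uniformity by a compactness or subsequence argument in $(c,x)$ combined with the uniform change-of-measure bound \eqref{supp:eq:generic-lr-ui}.
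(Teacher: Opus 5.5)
Your proposal is correct and follows essentially the same route as the paper. The lower bound comes from \Cref{prop:normal-envelope} on each fixed-$c$ critical path, followed by $\sup_x$ and then $\sup_c$. The upper bound combines a uniform normal approximation of $\widehat W_n$ from (E) and the calibration condition, an inverse-candidate bound of order $s^{2-2k}$, an $O(n^{-1})$ outer bound, convexity over the routing weights, and a slow diagonal choice of $r_n$, $M_n$ and $\varepsilon_n^{\rm BN}$.

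The differences are minor:
\begin{itemize}
\item The paper builds the near-minimax normal rules as a jointly Lipschitz partition-of-unity family. A sufficiently fine grid of rules that are uniformly Lipschitz in $w$ also works.
\item The likelihood-ratio moment bound you cite is never needed, because (E) describes $\phi_{c,v}$ under every $\lambda$ directly.
\item The paper separately handles the singleton branch $L(\widehat c)=0$. There $L(c)\le C_\Lambda\varepsilon_n$ on $\mathcal H_n(c,v)$, so the error is $O(\varepsilon_n)$. Your plan should mention this case too.
\end{itemize}
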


\Cref{supp:thm:exact-globalization} separates the lower bound supplied by the
profiled likelihood from uniform attainment, which additionally requires
conditions \textup{(E)}, \textup{(C)}, and \textup{(O)}.

\subsection{Lower bound in Theorem~4.1}
\label{supp:sec:supp-global-lower}

We first transfer the critical-path risks to the global parameter space.

\begin{proof}[Proof of Theorem~4.1: lower bound]
Fix $c\in\mathcal C$ with $L(c)>0$ and $x\ne0$.  The coordinate neighborhood contains the
path $v_n=xn^{-1/(2k)}$, $\lambda\in\Lambda(c)$ for all large $n$.  The
profiled likelihood expansion, the uniform target expansion, and
\Cref{supp:prop:normal-envelope} give
\begin{equation*}
 \liminf_{n\to\infty}n^{1/k}\mathcal R_n
 \ge\beta(c)^2x^2
 M\left(\frac1{I(c)x^{2k}},\Lambda(c)\right).
\end{equation*}
For fixed $c$, put $z=L(c)\sqrt{I(c)}|x|^k$.  Translation and scale
invariance give
\[
 \beta(c)^2x^2M\left(\frac1{I(c)x^{2k}},\Lambda(c)\right)
 =\beta(c)^2I(c)^{-1/k}L(c)^{2-2/k}
 z^{-2+2/k}R_G(z).
\]
Taking first $\sup_x$ and then $\sup_c$ yields
$\liminf n^{1/k}\mathcal R_n\ge\mathsf C_k$.
\end{proof}

\subsection{Uniform attainment in Theorem~4.1}
\label{supp:sec:supp-uniform-attainment}

The lower bound alone does not provide an estimator using unknown
coordinates.  We now control coordinate substitution and recovery outside
the critical region to prove the matching upper bound.

Use conditions \textup{(E)}, \textup{(C)}, and \textup{(O)} and their
notation from Section~4.1 of the main paper.  In particular,
\[
 Z_{N_n}(c,v,\lambda)
 =\frac1{N_n}\sum_{i=1}^{N_n}\phi_{c,v}(O_i)-\Gamma(c,v,\lambda),
 \qquad
 \lambda_{\widehat c}=\Pi_{\Lambda(\widehat c)}(\lambda).
\]
Condition \textup{(C)} gives, for every fixed $h>0$,
\begin{equation}\label{supp:eq:calibration-orthogonality}
 \sup_{\substack{c\in\mathcal C,\ \lambda\in\Lambda(c)\\
          hN_n^{-1/(2k)}\le |v|\le v_0}}
 N_n^{1/(2k)}|v|^{1-k}
 \left\|
  [Z_{N_n}(\widehat c,\widehat v,\lambda_{\widehat c})
       -Z_{N_n}(c,v,\lambda)]\one_{\mathcal H_n(c,v)}
 \right\|_2\longrightarrow0.
\end{equation}

Condition \textup{(O)(ii)} gives
\begin{subequations}
\begin{align}
 |\tau(\xi)-\tau(\xi')|
 &\le A_O\|\Psi_O(\xi)-\Psi_O(\xi')\|_{\mathcal H_O},
 \qquad \xi,\xi'\in\Xi_O,
 \label{supp:eq:off-fibre-inverse}\\
 \sup_{\xi\in\Xi_O}\E_\xi
 \|\Phi_O(O)-\Psi_O(\xi)\|_{\mathcal H_O}^2
 &\le A_O.
 \label{supp:eq:off-fibre-variance}
\end{align}
\end{subequations}
\begin{proof}[Proof of Theorem~4.1: uniform attainment]
\leavevmode\par
\paragraph{Step 1: local and inverse estimators}
Put $q=1/(2k)$ and write
\[
 \widehat\Gamma_{N_n}(c,v)
 =\frac1{N_n}\sum_{i=1}^{N_n}\phi_{c,v}(O_i)
 =Z_{N_n}(c,v,\lambda)+\Gamma(c,v,\lambda).
\]
The moment bound in (E) gives, as $v\to0$,
\[
 \sqrt{N_n}\{\widehat\Gamma_{N_n}(c,v)-\Gamma(c,v,\lambda)\}
 \Rightarrow N\{0,I(c)\}
\]
with convergence in distribution uniform in $c\in\mathcal C$ and
$\lambda\in\Lambda(c)$.
Thus, for every $0<r<M<\infty$, on $v=xN_n^{-q}$,
\[
 W_{N_n}^\circ=\frac{\widehat\Gamma_{N_n}(c,v)}{I(c)v^k}
 \Rightarrow
 N\left\{\lambda,\frac1{I(c)x^{2k}}\right\}.
\]
The convergence in distribution is uniform in $c\in\mathcal C$,
$\lambda\in\Lambda(c)$, and $r\le|x|\le M$.
For later use, write
\[
 D_n(c,v,\lambda)=
 \left\|
  \bigl[Z_{N_n}(\widehat c,\widehat v,\lambda_{\widehat c})
        -Z_{N_n}(c,v,\lambda)\bigr]\one_{\mathcal H_n(c,v)}
 \right\|_2.
\]
If $v=xN_n^{-q}$ and $r\le|x|\le M$, then
\[
 \sqrt{N_n}D_n(c,v,\lambda)
 =|x|^{k-1}N_n^q|v|^{1-k}D_n(c,v,\lambda)=o(1)
\]
uniformly by \eqref{supp:eq:calibration-orthogonality}.
Together with $P\{\mathcal H_n(c,v)^c\}\to0$, this makes the same normal
limit true after replacing $(c,v,\lambda)$ by
$(\widehat c,\widehat v,\lambda_{\widehat c})$.  On
$\mathcal H_n(c,v)$, the Lipschitz endpoints give
\[
 |\lambda_{\widehat c}-\lambda|
 \le d_H\{\Lambda(\widehat c),\Lambda(c)\}
 \le C_\Lambda|\widehat c-c|
 \le C_\Lambda\varepsilon_n.
\]

We now construct a measurable family of normal decision rules for all
positive precisions. Fix $\eta>0$ and put
\[
 \lambda_-:=\min_{c\in\mathcal C}\ell(c),\qquad
 \lambda_+:=\max_{c\in\mathcal C}u(c),\qquad
 \mathcal A=\{(a,b,J):\lambda_-\le a\le b\le\lambda_+,\ J>0\}.
\]
For each $\vartheta=(a,b,J)\in\mathcal A$, choose a bounded Lipschitz
rule $f_\vartheta:\mathbb R\to[\lambda_-,\lambda_+]$ whose maximal risk
on $[a,b]$ is at most $M\{J^{-1},[a,b]\}+\eta/2$. Such a rule exists
because bounded rules can be approximated in risk by bounded Lipschitz
rules on compact Gaussian families. The risk of each fixed
$f_\vartheta$ and the normal minimax risk are continuous in $(a,b,J)$.
Thus there is a relative open neighborhood $U_\vartheta$ on which
\[
 \sup_{\lambda\in[a',b']}
 \E_{N(\lambda,{J'}^{-1})}\{f_\vartheta(W)-\lambda\}^2
 \le M\{{J'}^{-1},[a',b']\}+\eta,
 \qquad (a',b',J')\in U_\vartheta.
\]
Take a countable locally finite refinement $(U_j)$ of these
neighborhoods and a subordinate locally Lipschitz partition of unity
$(\alpha_j)$ on $\mathcal A$, and let $f_j$ denote the associated rule.
Define
\[
 D_\eta(a,b,J,w)
 =\Pi_{[a,b]}\sum_j\alpha_j(a,b,J)f_j(w).
\]
The sum is locally finite. Consequently $D_\eta$ is jointly Borel,
takes values in $[a,b]$, and is jointly Lipschitz on
$K\times\mathbb R$ for every compact $K\subset\mathcal A$.
For $\lambda\in[a,b]$, projection and convexity give
\[
 \{D_\eta(a,b,J,w)-\lambda\}^2
 \le\sum_j\alpha_j(a,b,J)\{f_j(w)-\lambda\}^2.
\]
The neighborhood risk bounds therefore show that $D_\eta$ is
$\eta$-minimax for every $(a,b,J)\in\mathcal A$.

Fix $0<r<M<\infty$ and put
\[
 \mathcal K_{r,M}^+
 =\{(c,x):c\in\mathcal C,\ r/2\le|x|\le2M\},
 \qquad
 \sigma^2(c,x)=\frac1{I(c)x^{2k}}.
\]
The set $\mathcal K_{r,M}^+$ is compact. Its interval endpoints and
precisions range over the compact subset
\[
 \mathcal A_{r,M}
 =\{(\ell(c),u(c),I(c)|x|^{2k}):(c,x)\in\mathcal K_{r,M}^+\},
\]
of $\mathcal A$, whose precisions are bounded away from zero. Put
\[
 \delta_\eta(c,x,w)
 =D_\eta\{\ell(c),u(c),I(c)|x|^{2k},w\}.
\]
This rule is uniformly continuous in $(c,x)$, uniformly Lipschitz in $w$,
and satisfies
\begin{equation*}
 \sup_{\lambda\in\Lambda(c)}
 \E_{W\sim N\{\lambda,\sigma^2(c,x)\}}
 \{\delta_\eta(c,x,W)-\lambda\}^2
 \le M\{\sigma^2(c,x),\Lambda(c)\}+\eta
\end{equation*}
uniformly on $\mathcal K_{r,M}^+$. This is the globally defined family
used in Section~4.2 of the main paper; the transfer argument below only
requires its uniform regularity on compact parameter ranges.

For $r\le|x|\le M$, let
\[
 \widehat x=N_n^q\widehat v,\qquad
 \widehat W_{N_n}^L
 =\frac{\widehat\Gamma_{N_n}(\widehat c,\widehat v)}
        {I(\widehat c)\widehat v^k},\qquad
 \widehat\tau_{N_n}^L
 =\tau\{\widehat c,\widehat v,
        \delta_\eta(\widehat c,\widehat x,\widehat W_{N_n}^L)\},
\]
with $\widehat W_{N_n}^L=0$ when $\widehat v=0$.  On
$\mathcal H_n(c,v)$, where $v=xN_n^{-q}$ and $r\le|x|\le M$,
\[
 |\widehat x-x|\le N_n^q\varepsilon_n=o(1),
 \qquad |\widehat c-c|\le\varepsilon_n=o(1).
\]
The preceding uniform normal limit, uniform continuity in $(c,x)$, and
the bounded Lipschitz dependence on $w$ consequently imply
\begin{align}
 &\sup_{\substack{c\in\mathcal C,\ \lambda\in\Lambda(c)\\r\le|x|\le M}}
 \left|
 \E\{\delta_\eta(\widehat c,\widehat x,\widehat W_{N_n}^L)-\lambda\}^2
 -\E_{W\sim N\{\lambda,\sigma^2(c,x)\}}
       \{\delta_\eta(c,x,W)-\lambda\}^2
 \right|\longrightarrow0. \label{supp:eq:normal-rule-transfer-supp}
\end{align}
Since the target expansion gives
\[
 N_n^q\tau(c,xN_n^{-q},\lambda)=x\beta(c)\lambda+o(1)
\]
uniformly for $r\le|x|\le M$, the same expansion at the estimated coordinates
gives, on $\mathcal H_n(c,v)$,
\[
 N_n^q\left[\widehat\tau_{N_n}^L-\widehat v\beta(\widehat c)
   \delta_\eta(\widehat c,\widehat x,\widehat W_{N_n}^L)\right]
 =o_{L^2}(1).
\]
Uniform continuity of $\beta$ and boundedness of $\delta_\eta$ then yield
\[
 N_n^q\{\widehat\tau_{N_n}^L-\tau(c,xN_n^{-q},\lambda)\}
 =x\beta(c)
  \{\delta_\eta(\widehat c,\widehat x,\widehat W_{N_n}^L)-\lambda\}
  +o_{L^2}(1)
\]
uniformly in $(c,x,\lambda)$.  Hence
\eqref{supp:eq:normal-rule-transfer-supp} yields
\begin{align*}
 &\sup_{\substack{c\in\mathcal C,\ \lambda\in\Lambda(c)\\r\le|x|\le M}}
 N_n^{1/k}\E\{\widehat\tau_{N_n}^L-\tau(c,xN_n^{-q},\lambda)\}^2\\
 &\quad\le
 \sup_{\substack{c\in\mathcal C\\r\le|x|\le M}}
 \beta(c)^2x^2
 \left[M\left\{\frac1{I(c)x^{2k}},\Lambda(c)\right\}+\eta\right]
 +o(1).
\end{align*}
The complement of $\mathcal H_n(c,v)$ contributes $o(1)$ by boundedness and
the probability bound in (C).  Finally, when $L(c)>0$, put
$z=L(c)\sqrt{I(c)}|x|^k$; then
\begin{align*}
 &\beta(c)^2x^2
 M\left\{\frac1{I(c)x^{2k}},\Lambda(c)\right\}\\
 &\qquad=
 \beta(c)^2I(c)^{-1/k}L(c)^{2-2/k}
 z^{-2+2/k}R_G(z)
 \le\mathsf C_k.
\end{align*}
If $L(c)=0$, the parameter $\lambda$ is known and the corresponding normal
minimax risk is zero, so the same bound holds.
Letting $\eta\downarrow0$ proves the claimed
 $\mathsf C_k+o(1)$ bound for every fixed $r\le|x|\le M$.

For fixed $(c,v)$ with $v\ne0$, the mean-value theorem and
\textup{(O)(i)} give
\[
 |\Gamma(c,v,\lambda)-\Gamma(c,v,\lambda')|
 \ge a_O|v|^k|\lambda-\lambda'|,
 \qquad \lambda,\lambda'\in\Lambda(c).
\]
Consequently, the inverse has Lipschitz constant at most
$a_O^{-1}|v|^{-k}$, for either sign of its derivative.
For the inverse estimator, project
$\widehat\Gamma_{N_n}(\widehat c,\widehat v)$ onto
$\Gamma\{\widehat c,\widehat v,\Lambda(\widehat c)\}$, invert in $\lambda$,
substitute into $\tau$, and clip to
$[-B_\tau,B_\tau]$, where $B_\tau=\|\tau\|_{\infty,\Xi}$; call the result
$\widehat\tau_{N_n}^I$.  This inverse is used only when $\widehat v\ne0$ and its
transition weight is positive; set it to zero otherwise.  Projection, the
Lipschitz endpoints, (C), and the variance bound in (E) give
\[
 \left\|
 \{\widehat\Gamma_{N_n}(\widehat c,\widehat v)
   -\Gamma(\widehat c,\widehat v,\lambda_{\widehat c})\}
 \one_{\mathcal H_n(c,v)}
 \right\|_{L^2(P_{c,v,\lambda}^{\otimes n})}
 \le C N_n^{-1/2}+D_n(c,v,\lambda).
\]
Moreover, $N_n^q\varepsilon_n\to0$, so on $\mathcal H_n(c,v)$ and
$N_n^q|v|\ge M>0$, one has $|\widehat v|\ge |v|/2$ for all large $n$.
The derivative bounds in (O) therefore give, uniformly
for $N_n^q|v|\ge M>0$,
\[
 \left\|
 \{\widehat\tau_{N_n}^I-\tau(c,v,\lambda)\}
 \one_{\mathcal H_n(c,v)}
 \right\|_{L^2(P_{c,v,\lambda}^{\otimes n})}
 \le C\left\{
 \frac{N_n^{-1/2}}{|v|^{k-1}}
 +\frac{D_n(c,v,\lambda)}{|v|^{k-1}}+\varepsilon_n\right\}.
\]
Consequently, for every fixed $M>0$,
\begin{equation}\label{supp:eq:transition-envelope-supp}
 \limsup_{n\to\infty}
 \sup_{\substack{c\in\mathcal C,\ \lambda\in\Lambda(c),\ 0<|v|\le v_0\\
                  N_n^q|v|\ge M}}
 N_n^{1/k}\E\{(\widehat\tau_{N_n}^I-\tau)^2\one_{\mathcal H_n(c,v)}\}
 \le\frac{C}{M^{2k-2}}.
\end{equation}
Indeed, after multiplication by $N_n^q$, the three terms in the preceding
display are bounded by
\[
 \frac{C}{(N_n^q|v|)^{k-1}},\qquad
 C N_n^q|v|^{1-k}D_n(c,v,\lambda),\qquad
 C N_n^q\varepsilon_n.
\]
The first is at most $CM^{-(k-1)}$, and the other two are $o(1)$ by
\eqref{supp:eq:calibration-orthogonality} and condition~\textup{(C)}.

\paragraph{Step 2: combining the four regions}
Set $\widehat\Psi_{O,N_n}=N_n^{-1}\sum_{i=1}^{N_n}\Phi_O(O_i)$.
Because $\Xi_O$ is compact and $\Psi_O$ is continuous, the measurable minimum
theorem gives a Borel choice
\[
 \widehat\xi_{O,N_n}\in\argmin_{\zeta\in\Xi_O}
 \|\widehat\Psi_{O,N_n}-\Psi_O(\zeta)\|_{\mathcal H_O},
 \qquad
 \widehat\tau_{N_n}^O=\tau(\widehat\xi_{O,N_n}).
\]
For $\xi\in\Xi_O$, the defining minimum and
\eqref{supp:eq:off-fibre-inverse}--\eqref{supp:eq:off-fibre-variance} give
\[
 \E_\xi(\widehat\tau_{N_n}^O-\tau(\xi))^2
 \le4A_O^2\E_\xi
 \|\widehat\Psi_{O,N_n}-\Psi_O(\xi)\|_{\mathcal H_O}^2
 \le\frac{4A_O^3}{N_n}.
\]

A diagonal argument applied to the bounds for fixed $r,M$ and
\eqref{supp:eq:transition-envelope-supp} gives
\[
 \begin{gathered}
 r_n\downarrow0,\qquad M_n\uparrow\infty,
 \qquad \eta_n\downarrow0,\qquad d_n\downarrow0,\\
 N_n^q\varepsilon_n=o(r_n),\qquad M_nN_n^{-q}\to0,
 \qquad M_n^2d_n\to0.
 \end{gathered}
\]
Use $\delta_{\eta_n}$ in the local rule, with
$\varepsilon_n^{\rm BN}=\eta_n$, and choose $d_n$ so that
\begin{align*}
 d_n\ge{}&\eta_n
 +\sup_{\substack{c\in\mathcal C,\ \lambda\in\Lambda(c)\\
                    r_n/2\le|x|\le3M_n}}
 \left|\E\{\delta_{\eta_n}(\widehat c,\widehat x,
                    \widehat W_{N_n}^L)-\lambda\}^2
 -\E_{W\sim N\{\lambda,\sigma^2(c,x)\}}
       \{\delta_{\eta_n}(c,x,W)-\lambda\}^2\right|\\
 &+\sup_{\substack{c\in\mathcal C,\ \lambda\in\Lambda(c)\\
                    r_n/2\le|x|\le3M_n}}
 \left\|N_n^q\{\widehat\tau_{N_n}^L-
       \tau(c,xN_n^{-q},\lambda)\}
 -x\beta(c)\{\delta_{\eta_n}(\widehat c,\widehat x,
       \widehat W_{N_n}^L)-\lambda\}\right\|_2.
\end{align*}
Let $\widehat\tau_{N_n}^0=0$ and let
$\widehat\tau_{N_n}^L,\widehat\tau_{N_n}^I,\widehat\tau_{N_n}^O$ denote the local,
inverse, and outer estimators.  Choose a continuous partition
$\omega_{b,n}\ge0$, $\sum_{b\in\{0,L,I,O\}}\omega_{b,n}=1$, with
\[
 \begin{aligned}
 \supp(\omega_{0,n})&\subset\{N_n^q|\widehat v|\le2r_n\},&
 \supp(\omega_{L,n})&\subset\{r_n\le N_n^q|\widehat v|\le2M_n\},\\
 \supp(\omega_{I,n})&\subset\{N_n^q|\widehat v|\ge M_n,
                              |\widehat v|\le7v_0/16\},&
 \supp(\omega_{O,n})&\subset\{|\widehat v|\ge3v_0/8\}.
 \end{aligned}
\]
The explicit ramp weights in Section~4.2 of the main paper satisfy these
support restrictions; use those weights below.  On $\mathcal H_n(c,v)$ and the event
controlling the coordinate estimation error in (C), the support restrictions imply,
for all large $n$,
$r_n/2\le N_n^q|v|\le3M_n$ where the local estimator has positive weight,
$N_n^q|v|\ge M_n/2$ and $|v_\circ(\xi)|\le v_0/2$ where the inverse estimator has positive weight,
and $|v_\circ(\xi)|\ge v_0/4$ where the outer estimator has positive weight.  Thus each estimator
has positive weight only where its risk bound holds.  The combined estimator is
\[
 \widehat\tau_n
 =\Pi_{[-B_\tau,B_\tau]}\!\left(
   \sum_{b\in\{0,L,I,O\}}\omega_{b,n}\widehat\tau_{N_n}^b\right).
\]
If $L(\widehat c)=0$, use the singleton formula in Section~4.2 of the main
paper.  On $\mathcal H_n(c,v)$, the Lipschitz endpoints give
\[
 L(c)\le C_\Lambda\varepsilon_n,\qquad
 |\lambda-\ell(\widehat c)|\le C_\Lambda\varepsilon_n.
\]
The derivative bounds in (O) therefore imply, wherever the nonouter
weight is positive,
\[
 |\tau\{\widehat c,\widehat v,\ell(\widehat c)\}
      -\tau(c,v,\lambda)|\le C\varepsilon_n.
\]
Its normalized squared risk is $O(N_n^{1/k}\varepsilon_n^2)=o(1)$;
the outer contribution has the bound above, and clipping controls the
exceptional events as in (C).

When $L(\widehat c)>0$, projection and convexity give pointwise,
since $\tau(\xi)\in[-B_\tau,B_\tau]$,
\[
 \begin{aligned}
 (\widehat\tau_n-\tau(\xi))^2
 &\le\left\{\sum_{b}\omega_{b,n}
       (\widehat\tau_{N_n}^b-\tau(\xi))\right\}^2\\
 &\le\sum_b\omega_{b,n}
       (\widehat\tau_{N_n}^b-\tau(\xi))^2.
 \end{aligned}
\]
On these events, the zero, local, inverse, and outer contributions are,
respectively, $O(r_n^2)$, $\mathsf C_k+O(M_n^2d_n)+o(1)$,
$O(M_n^{-(2k-2)})+o(1)$, and $O(N_n^{-1+1/k})$.  On their complements,
clipping and (C) give $o(1)$.  Hence
\begin{align*}
 \sup_{\xi\in\Xi}N_n^{1/k}
 \E_\xi(\widehat\tau_n-\tau(\xi))^2
 &\le \mathsf C_k+O(r_n^2)+O(M_n^2d_n)
      +O(M_n^{-(2k-2)})\notag\\
 &\qquad{}+O(N_n^{-1+1/k})+o(1).
\end{align*}
Since $N_n/n\to1$,
\[
 \limsup_{n\to\infty}n^{1/k}
 \sup_{\xi\in\Xi}\E_\xi(\widehat\tau_n-\tau(\xi))^2
 \le\mathsf C_k.
\]
The inequality
$\mathcal R_n\le\sup_{\xi\in\Xi}\E_\xi(\widehat\tau_n-\tau(\xi))^2$
and the lower bound proved above give both the global minimax limit and
attainment.
\end{proof}
\clearpage

\subsection{Minimax risk of the reduced experiment}
\label{supp:sec:supp-reduced-experiment}

Use the statistics
$T_n=(\widehat c,\widehat v,\widehat W_n,\widehat\Psi_{O,N_n})$
and the decision rule defined in Section~4.2 of the main paper.
The following argument proves Corollary~4.2 under the uniform setup and
conditions \textup{(E)}, \textup{(C)}, and \textup{(O)}.

\begin{proof}[Proof of Corollary~4.2]
Fix the measurable decision rules and outer selector used in Section~4.2 of the main paper.
When $\widehat v\ne0$,
\[
 \widehat\Gamma_{N_n}(\widehat c,\widehat v)
 =I(\widehat c)\widehat v^k\widehat W_n;
\]
when $\widehat v=0$, the local and inverse weights are zero.
The weights, local and inverse rules, and singleton rule thus depend
only on $(\widehat c,\widehat v,\widehat W_n)$, while the fixed outer
selector depends only on $\widehat\Psi_{O,N_n}$.  Hence
$\widehat\tau_n=h_n(T_n)$ for a measurable $h_n$.
Put
\[
 \mathcal R_n^T=\inf_{d_n}\sup_{\xi\in\Xi}
 \E_\xi\{d_n(T_n)-\tau(\xi)\}^2.
\]
Since $T_n$ is a statistic of the full sample,
\[
 \mathcal R_n\le\mathcal R_n^T
 \le\sup_{\xi\in\Xi}\E_\xi\{\widehat\tau_n-\tau(\xi)\}^2.
\]
Multiplying by $n^{1/k}$ and applying
\Cref{supp:thm:exact-globalization} proves the claimed limits.
\end{proof}

\section{Proofs of Theorems~5.1--5.3: the scalar fifth-order logistic phase}
\label{supp:sec:supp-fifth-order}

\subsection{Profile expansion and jet claim in Theorem~5.1}
\label{supp:sec:supp-logistic-jet}
For $p\ge1$, observe independent copies of $(Y,Z)$ from
\begin{equation*}
 X\sim N(0,aI_p),\qquad W\sim N(0,\eta I_p),\qquad Z=X+W,
 \qquad Y\mid X\sim\operatorname{Bernoulli}\{m(X^\top\theta)\},
\end{equation*}
with $X\perp W$, $Y\perp W\mid X$, and $m(x)=(1+e^{-x})^{-1}$.

\begin{equation}\label{supp:eq:variance-rect}
 a\in\cA=[a_-,a_+],\qquad \eta\in\cE=[\eta_-,\eta_+],
 \qquad 0<a_-<a_+<\infty,\quad0<\eta_-<\eta_+<\infty.
\end{equation}
Assume two distinct interior decompositions of one total variance:
\begin{equation}\label{supp:eq:common-c}
 (a_0,\eta_0)\ne(a_1,\eta_1),\qquad
 a_0+\eta_0=a_1+\eta_1=:c.
\end{equation}

For the scalar profile, take $p=1$.  Under \eqref{supp:eq:variance-rect},
the feasible total variances form
\begin{equation*}
 \cC=[a_-+\eta_-,a_++\eta_+],
\end{equation*}
and
\begin{equation*}
 \underline a(c)=\max\{a_-,c-\eta_+\},\qquad
 \overline a(c)=\min\{a_+,c-\eta_-\},
\end{equation*}
so the feasible interval for $\lambda$ is
\begin{equation}\label{supp:eq:Lambda-c}
 \Lambda(c)=\left[\frac c{\overline a(c)},
                    \frac c{\underline a(c)}\right].
\end{equation}
For $\lambda\in\Lambda(c)$,
\[
 a=c/\lambda,\qquad \eta=c(1-\lambda^{-1}).
\]
With
\begin{equation*}
 T=Z/\sqrt c,\qquad R=2Y-1,\qquad u=a\theta/\sqrt c,
\end{equation*}
we have $T\sim N(0,1)$ and
\begin{equation}\label{supp:eq:g-ulambda}
 \widetilde g_{u,\lambda}(t):=\E(R\mid T=t)
 =\E_G\tanh\left[\frac u2\{t+\sqrt{\lambda-1}G\}\right].
\end{equation}
Let $H_1(t)=t$, $H_3(t)=t^3-3t$, and
$H_5(t)=t^5-10t^3+15t$, and define
\begin{equation*}
 \Gamma_j(u,\lambda)=\E\{\widetilde g_{u,\lambda}(T)H_j(T)\},
 \qquad v=2\Gamma_1(u,\lambda).
\end{equation*}
Since $\frac{\partial}{\partial u}\{2\Gamma_1(u,\lambda)\}|_{u=0}=1$, compactness and the
implicit function theorem give $v_0>0$ and
\begin{equation*}
 u=U(v,\lambda),\qquad
 2\Gamma_1\{U(v,\lambda),\lambda\}=v,
\end{equation*}
uniformly for $|v|\le v_0$ and feasible $\lambda$.
Equivalently, the local inverse used in Section~5.1 of the main paper satisfies
\begin{equation*}
 U(v,\lambda)\,
 \E_G\sech^2\left(\frac{\sqrt\lambda\,U(v,\lambda)}2G\right)=v,
 \qquad U(0,\lambda)=0.
\end{equation*}

Write
\begin{align}
 g_{v,\lambda}&=\widetilde g_{U(v,\lambda),\lambda},&
 F(v,\lambda)&=\Gamma_3\{U(v,\lambda),\lambda\},\notag\\
 J(v,\lambda)&=F(v,\lambda)+\frac{v^3}{4},&
 \tau_L(c,v,\lambda)&=\frac{\lambda U(v,\lambda)}{\sqrt c}.
 \label{supp:eq:F-Theta}
\end{align}
Let $P^{(c)}_{v,\lambda}$ be the law of $(R,Z)$ at
$(\theta,a,\eta)=\{\tau_L(c,v,\lambda),c/\lambda,
c(1-\lambda^{-1})\}$.
For fixed $c$, the transformation $(R,Z)\mapsto(R,T)$ is invertible;
we compute the likelihood ratios in these standardized coordinates.
Here $g_{v,\lambda}(t)=g(t\mid v,\lambda)$ in the notation of
Section~5.1 of the main paper.

To apply the general theory, we first need a uniform regression expansion
and then the corresponding likelihood contrast.  The next lemma computes
the expansion and its differentiated remainders; the contact and
inverse-order claims in Theorem~5.1 follow in
\Cref{supp:sec:supp-logistic-proofs}.

\begin{lemma}[Uniform profile estimates]\label{supp:lem:profile-expansion}
Under \eqref{supp:eq:variance-rect}, for some $v_0>0$ and every fixed
$1\le q<\infty$, uniformly over
feasible $(c,\lambda)$ and $|v|\le v_0$,
\begin{align}
 U(v,\lambda)
 &=v+\frac{\lambda v^3}{4}+\frac{\lambda^2v^5}{16}
   +R_U(v,\lambda),
 \label{supp:eq:U-expansion}\\
 g_{v,\lambda}(t)
 &=\frac v2H_1(t)-\frac{v^3}{24}H_3(t)
   +\frac{\lambda v^5}{96}H_3(t)\notag\\
 &\quad+\frac{v^5}{240}H_5(t)
   +\bar r_{v,\lambda}(t),
 \label{supp:eq:profile-expansion}\\[2pt]
 \norm{\bar r_{v,\lambda}}_{L^q\{N(0,1)\}}
 +\norm{\frac{\partial}{\partial\lambda}\bar r_{v,\lambda}}_{L^q\{N(0,1)\}}
 &\le C_q|v|^7,\notag\\
 |\bar r_{v,\lambda}(t)|
 +|\frac{\partial}{\partial\lambda}\bar r_{v,\lambda}(t)|
 &\le C|v|^7(1+|t|^7).\label{supp:eq:profile-rem}
\end{align}
Set $R_J(v,\lambda)=J(v,\lambda)-\lambda v^5/16$.  Then
\begin{align}
 |R_U|+|\frac{\partial}{\partial\lambda} R_U|
 &\le C|v|^7,
 &|\frac{\partial}{\partial v}R_U|&\le C|v|^6,\notag\\
 |R_J|+|\frac{\partial}{\partial\lambda} R_J|
 &\le C|v|^7,
 &|\frac{\partial}{\partial v}R_J|&\le C|v|^6,
 \label{supp:eq:joint-profile-remainders}\\[2pt]
 F(v,\lambda)
 &=-\frac{v^3}{4}+\frac{\lambda v^5}{16}+O(v^7),
 &\tau_L(c,v,\lambda)
 &=\frac{\lambda v}{\sqrt c}+O(v^3),
 \label{supp:eq:Gamma3-profile}\\
 J(v,\lambda)
 &=\frac{\lambda v^5}{16}+O(v^7),\notag\\
 \frac{\partial}{\partial\lambda} J
 &=\frac{v^5}{16}+O(v^7),
 &\frac{\partial}{\partial v}J
 &=\frac{5\lambda v^4}{16}+O(v^6).
 \label{supp:eq:J-derivs}
\end{align}
\end{lemma}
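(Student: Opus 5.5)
The plan is to expand the unstandardized regression $\widetilde g_{u,\lambda}$ in powers of $u$ with an explicit polynomial remainder, rewrite the coefficients in the Hermite basis, and then pass to the profile coordinate $v$ through the implicit equation $2\Gamma_1(U,\lambda)=v$. Throughout, feasibility gives $\lambda\ge c/\overline a(c)\ge 1+\eta_-/a_+>1$ and $\lambda\le 1+\eta_+/a_-$. Hence $s^2:=\lambda-1$ ranges over a compact subinterval of $(0,\infty)$, and all constants will be uniform in $(c,\lambda)$.

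\emph{Step 1: expansion in $u$.} Write $X=t+sG$ and use $\tanh y=y-\tfrac13y^3+\tfrac2{15}y^5+\rho(y)$ with $|\rho(y)|\le C|y|^7$ for all real $y$. This bound holds for small $|y|$ by Taylor's theorem, and for $|y|\ge1$ because $\tanh$ is bounded and the quintic is dominated by $|y|^7$; the same argument gives $|\rho'(y)|\le C|y|^6$. Taking $E_G$ produces the Gaussian moments $E_GX^j$. For example $E_GX^3=t^3+3s^2t=H_3(t)+3\lambda H_1(t)$, and $E_GX^5$ is handled similarly. Collecting terms reproduces the $H_1,H_3,H_5$ expansion of $\widetilde g_{u,\lambda}$ stated in Section~S1.1, with pointwise remainder bounded by $C|u|^7E_G|X|^7\le C|u|^7(1+|t|^7)$.

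For the $\lambda$-derivative I will not differentiate through $G/(2s)$. Instead I use that $\widetilde g_{u,\lambda}=\tanh(u\,\cdot/2)*N(0,\lambda-1)$ solves the heat equation $\partial_\lambda\widetilde g=\tfrac12\partial_t^2\widetilde g$. The remainder is the convolution of $\rho(u\,\cdot/2)$ with the Gaussian kernel. Moving two $t$-derivatives onto the Gaussian kernel, which is legitimate because $s$ is bounded below, gives $|\partial_\lambda\bar r|\le C|u|^7(1+|t|^7)$. The $L^q\{N(0,1)\}$ bounds then follow by integrating $(1+|t|^7)^q$. The same device, with $\rho'$ in place of $\rho$, controls the $u$-derivative of the remainder at order $|u|^6(1+|t|^7)$.

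\emph{Step 2: the coordinate $U$.} Projecting onto $H_1$ gives $2\Gamma_1(u,\lambda)=u-\tfrac\lambda4u^3+\tfrac{3\lambda^2}{16}u^5\cdot(\text{as computed})+O(u^7)$, smoothly in $(u,\lambda)$ and with derivative remainders of order $u^7$ and $u^6$ from Step 1. The implicit function theorem applies uniformly because $\partial_u(2\Gamma_1)|_{u=0}=1$ and $\lambda$ ranges over a compact set. Substituting the ansatz $U=v+A_3v^3+A_5v^5+R_U$ and matching coefficients recovers $A_3=\lambda/4$ and $A_5=\lambda^2/16$, exactly as in Section~S1.1. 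The bounds on $R_U$ and its derivatives come from differentiating the identity $2\Gamma_1(U,\lambda)=v$: implicit differentiation gives $\partial_\lambda U=-\partial_\lambda\Gamma_1/\partial_u\Gamma_1$ and $\partial_vU=1/(2\partial_u\Gamma_1)$. Inserting the expansions of $\Gamma_1$ together with its remainder bounds yields $|R_U|+|\partial_\lambda R_U|\le C|v|^7$ and $|\partial_vR_U|\le C|v|^6$.

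\emph{Step 3: composition.} Substituting $u=U(v,\lambda)$ into Step 1 gives \eqref{supp:eq:profile-expansion}. The $H_1$ coefficient is exactly $v/2$ by construction. The $H_3$ coefficient becomes $-\tfrac1{24}(v^3+\tfrac{3\lambda}4v^5)+\tfrac{\lambda}{24}v^5+O(v^7)=-\tfrac{v^3}{24}+\tfrac{\lambda v^5}{96}+O(v^7)$. The $H_5$ coefficient is $v^5/240$. The remainder $\bar r$ collects three sources: the Step~1 remainder evaluated at $u=U$, the higher-order terms of the polynomial coefficients, and $R_U$. Its $\lambda$-derivative is handled by the chain rule using the bounds on $\partial_\lambda U$ and $\partial_u\bar r$. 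Since $\|H_3\|_{L^2}^2=6$, we get $F=\Gamma_3=6\times(\text{$H_3$ coefficient})=-\tfrac{v^3}4+\tfrac{\lambda v^5}{16}+O(v^7)$. Hence $J=\lambda v^5/16+R_J$, and the $R_J$ bounds together with \eqref{supp:eq:J-derivs} follow from the same chain rule. Finally $\tau_L=\lambda U/\sqrt c=\lambda v/\sqrt c+O(v^3)$, using $c\ge a_-+\eta_->0$.

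The main obstacle is the derivative remainders, not the leading coefficients. Obtaining order $|v|^7$ for $\partial_\lambda$ and order $|v|^6$ for $\partial_v$, uniformly in $(c,\lambda)$ and with polynomial-in-$t$ envelopes, requires that differentiation not lose powers of $u$. I expect the heat-equation representation, combined with the lower bound on $s$, to resolve this for $\partial_\lambda$. For $\partial_v$, the chain rule through $\partial_vU=O(1)$ loses exactly one power of $v$, which is the loss the statement permits.
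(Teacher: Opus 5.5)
Your proposal is correct and follows the paper's proof. The shared steps are the Gaussian-moment Hermite expansion with a seventh-order Taylor remainder, the implicit-function coordinate $U$ with coefficient matching and implicit differentiation for $R_U$, and then composition, with $F$ read off from $\E H_3(T)^2=6$. Your heat-equation device for $\partial_\lambda$ is a harmless variant of the paper's direct differentiation under $\E_G$. That direct route already loses no power of $u$, because the chain rule supplies the factor $uG/(4\sqrt{\lambda-1})$, which offsets $|\rho'(y)|\le C|y|^6$.

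One slip: the $u^5$ coefficient of $2\Gamma_1$ is $\lambda^2/8$, not $3\lambda^2/16$. The value $3\lambda^2/16$ would force $A_5=0$, whereas with $\lambda^2/8$ your matching does give $A_5=\lambda^2/16$.
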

\Cref{supp:lem:profile-expansion} locates the first regression term depending
on $\lambda$ at order five and controls the remainder uniformly, allowing the
likelihood conditions to be checked next.

\begin{proof}[Proof of \Cref{supp:lem:profile-expansion}]
\leavevmode\par
\paragraph{Hermite expansion}
Put $S=t+\sqrt{\lambda-1}G$.  The Gaussian addition identities are
\begin{align*}
 \E_GS&=H_1(t),\\
 \E_GS^3&=H_3(t)+3\lambda H_1(t),\\
 \E_GS^5&=H_5(t)+10\lambda H_3(t)+15\lambda^2H_1(t).
\end{align*}
Combining them with
$\tanh(x/2)=x/2-x^3/24+x^5/240+O(x^7)$ gives
\begin{align*}
 \widetilde g_{u,\lambda}
 ={}&\left(\frac u2-\frac{\lambda u^3}{8}
       +\frac{\lambda^2u^5}{16}\right)H_1\\
 &+\left(-\frac{u^3}{24}+\frac{\lambda u^5}{24}\right)H_3
 +\frac{u^5}{240}H_5+O_{L^q}(u^7).
\end{align*}
Boundedness of the seventh derivative of $\tanh(x/2)$ makes the conditional remainder at most
\[
 C|u|^7\E_G|t+\sqrt{\lambda-1}G|^7
 \le C|u|^7(1+|t|^7)
\]
uniformly on the compact $\lambda$-set.

\paragraph{Coordinate transformation}
The first Hermite coefficient satisfies
\[
 v=u-\frac\lambda4u^3+\frac{\lambda^2}{8}u^5+O(u^7).
\]
Writing $u=v+Av^3+Bv^5+O(v^7)$ and matching coefficients gives
$A=\lambda/4$ and $B=\lambda^2/16$.  In fact the implicit-function
calculation is uniform in $C^1(\Lambda)$:
\[
 U(v,\lambda)=v+\frac\lambda4v^3+\frac{\lambda^2}{16}v^5
 +R_U(v,\lambda).
\]

\paragraph{Differentiated bounds}
Every feasible $\lambda$ is bounded away from one because
$\lambda=1+\eta/a\ge1+\eta_-/a_+$.  We may therefore differentiate the
Gaussian expectation in \eqref{supp:eq:g-ulambda} with respect to $\lambda$ on
an open neighbourhood of the feasible compact set.  The bounded derivatives
of $\tanh$ give Taylor remainders
$|\mathcal R_7(z)|\le C|z|^7$ and
$|\mathcal R_7'(z)|\le C|z|^6$.  Differentiating the implicit equation in
$v$ and in $\lambda$, using its denominator
$\frac{\partial}{\partial u}\{2\Gamma_1(u,\lambda)\}=1+O(u^2)$, gives the first two bounds
for $R_U$ in \eqref{supp:eq:joint-profile-remainders}.

Together with
$\frac{\partial}{\partial\lambda} U(v,\lambda)=O(v^3)$ and Gaussian domination, these bounds
show that the differentiated seventh-order regression remainder is at most
$C|v|^7(1+|t|^7)$.

Substitution gives \eqref{supp:eq:profile-expansion}--\eqref{supp:eq:profile-rem}.

\paragraph{Moment and target expansions}
Orthogonality and $\E H_3(T)^2=6$ multiply the $H_3$ coefficient by six,
yielding the expansion for $F$ in \eqref{supp:eq:Gamma3-profile}.  The expansion
for $\tau_L$ follows from \eqref{supp:eq:U-expansion}.

Applying the differentiated Taylor bounds to $J=F+v^3/4$ gives the
remaining bounds in \eqref{supp:eq:joint-profile-remainders} and
\eqref{supp:eq:J-derivs}.
\end{proof}

The regression expansion must also give a likelihood ratio with positive
limiting information.  The following lemma verifies this requirement and
identifies the profiled-jet order.

\begin{lemma}[Fifth-order profiled jet]\label{supp:lem:fifth-jet}
For every feasible $c$ with nondegenerate $\Lambda(c)$ and
$\lambda_0\in\Lambda(c)$,
the profiled-jet conditions of Condition~1 in the main paper hold with
\[
 k=5,\qquad I_5=1/1536,\qquad \beta=c^{-1/2}.
\]
Then $k_{\mathrm J}=5$.
\end{lemma}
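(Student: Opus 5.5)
The plan is to compute the likelihood ratio exactly in the standardized coordinates $(R,T)$ and then read off (P1)--(P4) from \Cref{supp:lem:profile-expansion}. For fixed $c$ the pair $(R,T)$ has density $\tfrac12\varphi(t)\{1+r\,g_{v,\lambda}(t)\}$ with respect to counting measure times Lebesgue measure, where $\varphi$ is the $N(0,1)$ density. This density is strictly positive because $|g_{v,\lambda}|<1$, so $P^{(c)}_{v,\lambda}\ll P^{(c)}_{v,\lambda_0}$ and
\[
 \frac{dP^{(c)}_{v,\lambda}}{dP^{(c)}_{v,\lambda_0}}(R,T)-1
 =\frac{R\{g_{v,\lambda}(T)-g_{v,\lambda_0}(T)\}}{1+Rg_{v,\lambda_0}(T)} .
\]
Inserting \eqref{supp:eq:profile-expansion} cancels every $\lambda$-free term. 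What is left is $(\lambda-\lambda_0)v^5S_{c,v,\lambda_0}$, with $S_{c,v,\lambda_0}=RH_3(T)/[96\{1+Rg_{v,\lambda_0}(T)\}]$ as in Section~5.1 of the main paper, plus the remainder
\[
 r(R,T\mid v,\lambda)=\frac{R\{\bar r_{v,\lambda}(T)-\bar r_{v,\lambda_0}(T)\}}{1+Rg_{v,\lambda_0}(T)} .
\]
Joint measurability and continuous differentiability in $\lambda$ follow from the differentiability of $g_{v,\lambda}$ in $\lambda$ established in the proof of \Cref{supp:lem:profile-expansion}. This gives (P1).

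For (P2), I first sum over $r\in\{-1,1\}$ conditionally on $T$. This gives $\E S=\E_T\{H_3(T)\}/96=0$ and
\[
 \E S^2=\frac{1}{96^2}\,\E\!\left[\frac{H_3(T)^2}{1-g_{v,\lambda_0}(T)^2}\right].
\]
Because $g_{v,\lambda_0}\to0$ pointwise as $v\to0$, the second moment should converge to $\E H_3^2/96^2=6/9216=1/1536$. The same computation gives $\E|S|^4\le C\,\E[H_3^4(1-g^2)^{-3}]$.

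The main obstacle is controlling the denominator $1-g_{v,\lambda_0}(T)^2$ in the tails of $T$, uniformly in small $v$ and in $\lambda_0$. I would first prove a pointwise lower bound of the form
\[
 1-|g_{v,\lambda}(t)|\ge c\,e^{-C|v|(1+|t|)} .
\]
To get it, write $|g|\le\E_G|\tanh\{\tfrac u2(t+\sqrt{\lambda-1}G)\}|$ and use $1-|\tanh x|\ge e^{-2|x|}$, which gives $1-|g|\ge\E_G e^{-|u|(|t|+\sqrt{\lambda-1}|G|)}$. I then apply Jensen's inequality and $|u|=|U(v,\lambda)|\le2|v|$ from \eqref{supp:eq:U-expansion}. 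Since $e^{C|v||t|}$ has every moment under $N(0,1)$ for bounded $|v|$, this bound makes all the displayed expectations finite uniformly in $|v|\le v_0$. Dominated convergence then yields the limit $I_5=1/1536$.

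For (P3), I use the pointwise bounds in \eqref{supp:eq:profile-rem} for $\bar r$ and $\partial_\lambda\bar r$. They hold uniformly in $\lambda$, so the suprema over $\Lambda(c)$ are bounded by $C|v|^7(1+|t|^7)\,e^{C|v|(1+|t|)}$. Squaring and integrating then gives $O(v^{14})=o(v^{10})$. (P4) is immediate from $\tau_L=\lambda v/\sqrt c+O(v^3)$ in \eqref{supp:eq:Gamma3-profile}, with $\beta=c^{-1/2}$.

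Finally, to conclude $k_{\mathrm J}=5$, I argue that no $j<5$ belongs to $\mathcal J$. The exact ratio above gives a $\chi^2$ divergence
\[
 \E_{v,\lambda_0}\left(\frac{dP^{(c)}_{v,\lambda}}{dP^{(c)}_{v,\lambda_0}}-1\right)^2=O(v^{10}).
\]
Suppose instead that (P1)--(P3) held at some order $j<5$ with information $I'>0$. Take $\lambda\ne\lambda_0$, which is possible because $\Lambda(c)$ is nondegenerate. The expansion at order $j$ then forces this $\chi^2$ quantity to be at least $\tfrac12(\lambda-\lambda_0)^2v^{2j}I'-o(v^{2j})$, which contradicts the $O(v^{10})$ bound.
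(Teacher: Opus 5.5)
Your proposal is correct and follows essentially the same route as the paper's proof: the exact ratio $R\{g_{v,\lambda}-g_{v,\lambda_0}\}/\{1+Rg_{v,\lambda_0}\}$ in $(R,T)$, cancellation of the $\lambda$-free Hermite terms, the bound $1\pm\tanh y\ge e^{-2|y|}$ with Jensen to control the denominator, dominated convergence for $I_5=1/1536$, an $O(v^{14})$ remainder, and minimality from the exact order-$|v|^5$ size of the likelihood contrast. One harmless slip: conditionally on $T$ one has $\E(S\mid T)=\tfrac{H_3(T)}{192}\sum_{r=\pm1}r=0$ identically, not $H_3(T)/96$, so the conclusion $\E S=0$ is unaffected.
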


\Cref{supp:lem:fifth-jet} shows that variance decomposition first becomes
informative at fifth order along the profile, yielding the critical scale
$|v|\asymp n^{-1/10}$.

The proof below fixes $c$ and uses standardized observations. Its score and
likelihood remainder correspond to those in Section~5.1 of the main paper by
\[
 S_{c,v,\lambda_0}(R,Z)
 =S_{v,\lambda_0}(R,Z/\sqrt c),\qquad
 r(R,Z\mid c,v,\lambda;\lambda_0)
 =e_{v,\lambda;\lambda_0}(R,Z/\sqrt c).
\]
The regression remainder $\bar r_{v,\lambda}$ is separately defined in
\Cref{supp:lem:profile-expansion}.

\begin{proof}[Proof of \Cref{supp:lem:fifth-jet}]
For fixed feasible $c$ and $\lambda_0\in\Lambda(c)$, set
\begin{equation*}
 S_{v,\lambda_0}(R,T)
 =\frac{RH_3(T)}{96\{1+Rg_{v,\lambda_0}(T)\}},\qquad
 e_{v,\lambda;\lambda_0}(R,T)
 =\frac{R\{\bar r_{v,\lambda}(T)-
                  \bar r_{v,\lambda_0}(T)\}}
        {1+Rg_{v,\lambda_0}(T)}.
\end{equation*}

Abbreviate $g_0=g_{v,\lambda_0}$.  The marginal law of $T$ is $N(0,1)$ for every
$\lambda$, while
$P^{(c)}_{v,\lambda}(R=r\mid T=t)=\{1+rg_{v,\lambda}(t)\}/2$.  Therefore,
under $P^{(c)}_{v,\lambda_0}$,
\begin{equation}\label{supp:eq:fifth-one-rn}
 \frac{dP^{(c)}_{v,\lambda}}{dP^{(c)}_{v,\lambda_0}}(R,Z)-1
 =\frac{R\{g_{v,\lambda}(T)-g_0(T)\}}{1+Rg_0(T)}.
\end{equation}
For $r\in\{-1,1\}$, $1+r\tanh y\ge e^{-2|y|}$.  Jensen's inequality,
$U(v,\lambda_0)=O(v)$, and compactness give
\begin{equation*}
 1+rg_0(t)\ge\exp\{-C|v|(1+|t|)\}.
\end{equation*}
Hence every fixed polynomial in $T$ multiplied by a fixed negative power of
$1+Rg_0(T)$ has a uniformly bounded moment for small $|v|$.

Cancellation of the common terms in \eqref{supp:eq:profile-expansion} gives
\begin{equation*}
 g_{v,\lambda}-g_0
 =\frac{(\lambda-\lambda_0)v^5}{96}H_3
  +\bar r_{v,\lambda}-\bar r_{v,\lambda_0}.
\end{equation*}
Substitution in \eqref{supp:eq:fifth-one-rn} proves the exact jet
\begin{equation}\label{supp:eq:fifth-exact-jet}
 \frac{dP^{(c)}_{v,\lambda}}{dP^{(c)}_{v,\lambda_0}}(R,Z)-1
 =(\lambda-\lambda_0)v^5S_{v,\lambda_0}(R,T)
   +e_{v,\lambda;\lambda_0}(R,T).
\end{equation}

Conditioning on $T$ under $P^{(c)}_{v,\lambda_0}$ gives
\begin{align*}
 \E\left\{\frac{R}{1+Rg_0(T)}\,\middle|\,T\right\}&=0,
 \\
 \E(S_{v,\lambda_0}^2\mid T)
 &=\frac{H_3(T)^2}{96^2\{1-g_0(T)^2\}}.
\end{align*}
The denominator bound and dominated convergence yield
\begin{align}
 &\E_{v,\lambda_0}S_{v,\lambda_0}=0,\qquad
 \E_{v,\lambda_0}S_{v,\lambda_0}^2\longrightarrow
 I_5=\frac1{1536},\qquad
 \sup_{0<|v|\le v_0}\E_{v,\lambda_0}|S_{v,\lambda_0}|^4<\infty,
 \label{supp:eq:fifth-information-check}
\end{align}
since $\E H_3(T)^2/96^2=6/96^2=1/1536$.  The function and derivative bounds
in \eqref{supp:eq:profile-rem} imply
\[
 \norm{e_{v,\cdot;\lambda_0}(R,T)}_{C^1\{\Lambda(c)\}}
 \le C|v|^7(1+|T|^7)e^{C|v|(1+|T|)},
\]
and hence
\begin{equation*}
 \E_{v,\lambda_0}
 \norm{e_{v,\cdot;\lambda_0}}_{C^1\{\Lambda(c)\}}^2
 \le C|v|^{14}=o(|v|^{10}).
\end{equation*}
Continuity in $\lambda$ and joint
Borel measurability in $(\lambda,R,T)$ make the likelihood paths random
elements of the separable space $C\{\Lambda(c)\}$.

Finally, \eqref{supp:eq:Gamma3-profile} gives the target jet with
$\beta=1/\sqrt c$.  The $L^2$ norm of the likelihood contrast between any
two fixed, distinct values of $\lambda$ is of exact order $|v|^5$ by
\eqref{supp:eq:fifth-exact-jet} and \eqref{supp:eq:fifth-information-check}; after division by
$v^j$ it tends to zero for every $j<5$.  Thus no lower-order jet can have
positive limiting information, and $k_{\mathrm J}=5$.
\end{proof}

\subsection{Contact and inverse-order claims in Theorem~5.1}
\label{supp:sec:supp-logistic-proofs}

The fifth-order likelihood expansion gives pairs of parameters that remain
hard to distinguish.  The following corollary converts these pairs into
the scalar risk lower bound in Theorem~5.1.
\begin{corollary}[Fifth-order contact]\label{supp:cor:fifth-contact}\label{supp:lem:two-point}
For $B>0$, the scalar experiment on
$\Xi_B=[-B,B]\times\cA\times\cE$ has $k_{\mathrm C}\ge5$ and
\begin{equation*}
 \inf_{T_n}\sup_{\xi\in\Xi_B}\E_\xi(T_n-\theta)^2\ge c_Bn^{-1/5}.
\end{equation*}
\end{corollary}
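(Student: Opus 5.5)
The plan is to obtain both claims from the fifth-order profiled jet already established in \Cref{supp:lem:fifth-jet}, together with Theorem~2.5 and the two-point argument in the proof of Theorem~2.4. First I would fix a feasible $c$ in the interior of $\cC$ with nondegenerate $\Lambda(c)$. I would then choose $v_0>0$ small enough that every path point $\xi(c,v,\lambda)=\{\tau_L(c,v,\lambda),c/\lambda,c(1-\lambda^{-1})\}$ with $|v|\le v_0$ lies in $\Xi_B$. The variance coordinates are feasible by construction of $\Lambda(c)$. For the first coordinate, $|\tau_L|\le C|v|$ by \eqref{supp:eq:Gamma3-profile}, so it lies in $[-B,B]$ once $v_0$ is small. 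Thus the profiled paths lie in $\Xi_B$ and $\Lambda(c)$ is nondegenerate. Since $k_{\mathrm J}=5$ is defined, Theorem~2.5 gives $k_{\mathrm C}\ge k_{\mathrm J}=5$.

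To keep the argument self-contained, I would exhibit the contact pairs explicitly. Take $\lambda_0<\lambda_1$ in $\Lambda(c)$ and set $\xi_{j,t}=\xi(c,t,\lambda_j)$ for $0<t<v_0$. The target difference equals $t(\lambda_1-\lambda_0)/\sqrt c+O(t^3)$, which is at least $bt$ for small $t$. For the divergence, I would bound $\KL\le\chi^2$ and use the exact jet \eqref{supp:eq:fifth-exact-jet}. This gives
$$\chi^2=(\lambda_1-\lambda_0)^2t^{10}\E S^2+\E e^2+\text{cross term}=O(t^{10}),$$
using \eqref{supp:eq:fifth-information-check} and the remainder bound $\E\|e\|^2\le Ct^{14}$. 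Hence $j=5$ is admissible in Definition~2.1, and $k_{\mathrm C}\ge5$.

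For the risk bound, I would apply the lower-bound half of Theorem~2.4 with $k=5$, which is (A1). It yields a constant $c_B>0$ such that $\inf_{T_n}\sup_{\Xi_B}\E(T_n-\theta)^2\ge c_Bn^{-1/5}$ for all large $n$. Concretely, put $t=\alpha n^{-1/10}$ with $C_0\alpha^{10}\le1/2$. Tensorization and Pinsker then bound the total variation by $1/2$, and the Le Cam two-point inequality gives risk at least $b^2\alpha^2n^{-1/5}/16$. For the finitely many small $n$ the claim holds trivially by shrinking $c_B$, since the risk is strictly positive there (two distinct targets with mutually absolutely continuous laws).

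I expect the main obstacle to be the bookkeeping that the chart actually lies in $\Xi_B$, i.e.\ the choice of $v_0$. The rest is inherited from results already proved. One subtlety is that $\chi^2$ is computed under the reference $P_{t,\lambda_0}$ while Definition~2.1 requires $\KL(P_{\xi_0}\Vert P_{\xi_1})$. I would handle this by choosing $\lambda_1$ as the reference point, which \Cref{supp:lem:fifth-jet} permits because it holds for every $\lambda_0\in\Lambda(c)$.
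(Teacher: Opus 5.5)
Your proposal is correct and follows essentially the same route as the paper's proof: shrink the profile chart into $\Xi_B$, use the fifth-order jet to produce contact pairs as in the proof of Theorem~2.5, and then apply the two-point lower bound of Theorem~2.4. Your explicit treatment of the $\chi^2$ reference direction (by swapping the reference point) and of the finitely many small $n$ simply makes precise details that the paper leaves implicit.
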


\begin{proof}[Proof of the contact claim in Theorem~5.1]
Shrink the profile chart so that it lies in $\Xi_B$.  The fifth-order
likelihood jet gives contact pairs as in the proof of Theorem~2.5;
Theorem~2.4 then gives the lower bound.
\end{proof}

\Cref{supp:cor:fifth-contact} shows that the squared risk cannot decrease faster
than $n^{-1/5}$.  To establish equality of the identification orders, we
next bound target separation by observable moment separation.

\subsubsection{Uniform fifth-order inverse modulus}

Fix a compact scalar rectangle
\begin{equation*}
 \Xi_D=[-D,D]\times[\underline a,\overline a]\times
 [\underline\eta,\overline\eta],\qquad \underline a,\underline\eta>0.
\end{equation*}
Let $\mathcal O_L=\{-1,1\}\times\R$.  For a fixed bandwidth $b>0$, define
the bounded kernel
\begin{equation*}
 \kappa_b:\mathcal O_L\times\mathcal O_L\longrightarrow\R,\qquad
 \kappa_b\{(r,z),(r',z')\}=\one\{r=r'\}
 \exp\left\{-\frac{(z-z')^2}{2b^2}\right\}.
\end{equation*}
Let $\mathcal H_\kappa$ be its reproducing-kernel Hilbert space and set
$\mathcal H_L=\R^3\oplus\mathcal H_\kappa$.  Define
\begin{equation}\label{supp:eq:feature}
 \begin{aligned}
 \Phi_L&:\mathcal O_L\longrightarrow\mathcal H_L,&
 \Phi_L(r,z)&=\{z^2,rz,rz^3,\kappa_b((r,z),\cdot)\},\\
 \Psi_L&:\Xi_D\longrightarrow\mathcal H_L,&
 \Psi_L(\theta,a,\eta)&=\E_{\theta,a,\eta}\Phi_L(R,Z).
 \end{aligned}
\end{equation}

The following lemma gives a uniform inverse bound, including the
null parameter set where ordinary local inversion fails.

\begin{lemma}[Uniform fifth-order inverse modulus]\label{supp:lem:inverse-five}
There is $C_D<\infty$ such that, for every two points in $\Xi_D$,
\begin{equation}\label{supp:eq:inverse-five}
 |\theta-\theta'|^5
 \le C_D\norm{\Psi_L(\theta,a,\eta)-
                 \Psi_L(\theta',a',\eta')}_{\mathcal H_L}.
\end{equation}
\end{lemma}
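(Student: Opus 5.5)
The plan is to establish \eqref{supp:eq:inverse-five} by a compactness and contradiction argument. This reduces the global inequality to two local inequalities: one near points with $\theta\ne0$, where the model is regular, and one near the null set $\theta=0$, where the profile coordinates of \Cref{supp:lem:profile-expansion} apply. Suppose the inequality fails. Then there are pairs $\xi_m=(\theta_m,a_m,\eta_m)$ and $\xi_m'$ in $\Xi_D$ with $|\theta_m-\theta_m'|^5/\norm{\Psi_L(\xi_m)-\Psi_L(\xi_m')}_{\mathcal H_L}\to\infty$. The numerator is at most $(2D)^5$, so $\norm{\Delta\Psi_L}\to0$. Passing to a subsequence, $\xi_m\to\xi^\ast$ and $\xi_m'\to\xi'^\ast$. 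Continuity of $\Psi_L$, which follows from dominated convergence and boundedness of $\kappa_b$, gives $\Psi_L(\xi^\ast)=\Psi_L(\xi'^\ast)$.

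The first step is to show that $\Psi_L(\xi)=\Psi_L(\xi')$ forces $\theta=\theta'$. The Gaussian kernel is characteristic on $\R$, so the $\mathcal H_\kappa$ component determines the joint law of $(R,Z)$. That law fixes $c$ through $\E Z^2$, and it fixes the regression $z\mapsto g(z\mid\theta,a,\eta)$. The sign of $\theta$ is read from $\E RZ$, which equals $\sqrt c\,v/2$ with $v$ of the same sign as $\theta$. The identification argument then splits into two cases. If $\theta=0$, then $g\equiv0$ and $\theta'=0$. If $\theta\ne0$, I would recover $(a,\theta)$ from the analytic function $g$. For example, its derivative at $0$ and its behaviour as $z\to\infty$ (where $g\to\operatorname{sgn}\theta$ at a rate governed by $\theta a/c$) determine $\theta a/\sqrt c$ and the slope scale; the Hermite coefficients $\Gamma_1$ and $\Gamma_3$ then pin down $\lambda$ via monotonicity of $J$ in $\lambda$. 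Thus $\theta^\ast=\theta'^\ast$.

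If $\theta^\ast\ne0$, I would show that the finite-dimensional moment map $(\theta,a,\eta)\mapsto(\E Z^2,\E RZ,\E RZ^3)$ has nonsingular Jacobian at $\xi^\ast$; if it does not, I would append finitely many kernel evaluations. A nonsingular Jacobian gives a local Lipschitz inverse, so $|\theta_m-\theta_m'|\lesssim\norm{\Delta\Psi_L}$. Since $\norm{\Delta\Psi_L}\to0$ and the $\theta$-difference is bounded, this contradicts the assumed blow-up, because $|\theta_m-\theta_m'|^5\le(2D)^4|\theta_m-\theta_m'|$.

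The main obstacle is the case $\theta^\ast=0$. Eventually both points lie in the profile chart, with coordinates $(c,v,\lambda)$ and $(c',v',\lambda')$. The three polynomial components are linear in the regression coefficients: $\E Z^2=c$, $\E RZ=\sqrt c\,v/2$, and $\E RZ^3=c^{3/2}(F+3v/2)=c^{3/2}\{J-v^3/4+3v/2\}$. By inverting these successively, I expect
\[
|c-c'|+|v-v'|+|J(v,\lambda)-J(v',\lambda')|\lesssim\norm{\Delta\Psi_L}.
\]
On the target side, \Cref{supp:lem:profile-expansion} and the derivative bounds on $\tau_L$ give
\[
|\theta-\theta'|\lesssim|v||\lambda-\lambda'|+|v-v'|+|v||c-c'|.
\]
For the $\lambda$-difference, \eqref{supp:eq:J-derivs} gives $|J(v,\lambda)-J(v,\lambda')|\ge c_0|v|^5|\lambda-\lambda'|$, and $|J(v,\lambda')-J(v',\lambda')|\lesssim|v|^4|v-v'|$ when $|v'|\le2|v|$. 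Hence
\[
(|v||\lambda-\lambda'|)^5\lesssim|v|^5|\lambda-\lambda'|\lesssim\norm{\Delta\Psi_L}+|v-v'|.
\]
The remaining terms are linear in $\norm{\Delta\Psi_L}$, so their fifth powers are also dominated by it. When $|v'|>2|v|$, the bound $|\theta-\theta'|\lesssim|v'|\lesssim|v-v'|$ already suffices. Both subcases contradict the blow-up, which proves the lemma. The delicate point I would check carefully is uniformity of the $C^1$ remainder bounds in $c$. This lets the estimates hold on one neighbourhood of the whole compact null set, not just at $\xi^\ast$.
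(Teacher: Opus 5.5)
Your architecture is the paper's. It argues by compactness and contradiction, identifies the law through the characteristic kernel and the regression tail, and uses a local Lipschitz inverse where $\theta^\ast\neq0$. Near the null set it combines the three polynomial moments with $|\partial_\lambda J|\asymp|v|^5$ and $|\partial_vJ|\lesssim v^4$. Your near-null estimate is correct and uniform on the whole chart. The uniformity in $c$ you worry about is automatic, since $J$ depends only on $(v,\lambda)$ and $\tau_L$ depends on $c$ only through $c^{-1/2}$. Your split into $|v'|\le2|v|$ and $|v'|>2|v|$ is slightly cleaner than the paper's opposite-sign case plus its $\kappa\delta^5$ dichotomy.

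The one genuine gap is the case $\theta^\ast\neq0$. Identifiability gives no modulus for the inverse (think of $x\mapsto x^3$), so your contradiction there needs $D\Psi_L(\xi^\ast)$ to be injective. You assert this but do not prove it.

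Your fallback of appending kernel evaluations presupposes exactly this injectivity. Those evaluations are smooth functionals of the law, so if the derivative of $\xi\mapsto P_\xi$ annihilated a direction, no family of them could restore full rank.

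Nor is the three-moment Jacobian automatically nonsingular. Put $w=\theta\sqrt a$, $\lambda=c/a$ and $\psi_j(w)=\E\{H_j(G)\tanh(wG/2)\}$. The Hermite (Mehler) identity gives $\E(RZ)/\sqrt c=\lambda^{-1/2}\psi_1(w)$ and $\E(RZ^3)/c^{3/2}=\lambda^{-3/2}\psi_3(w)+3\lambda^{-1/2}\psi_1(w)$. Hence nonsingularity at $w\neq0$ is equivalent to $(\psi_3/\psi_1^3)'(w)\neq0$, a monotonicity you would still have to prove.

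The paper closes this step in the coordinates $(c,q,d)$. The $Z$-marginal forces $\dot c=0$. The tail expansions $\partial_qg_{q,d}(z)=2ze^{-|q|z+d/2}\{1+o(1)\}$ and $\partial_dg_{q,d}(z)=-\operatorname{sgn}(q)e^{-|q|z+d/2}\{1+o(1)\}$ then force $\dot q=0$ and $\dot d=0$. Since you already use the tail rate for identification, differentiating that expansion is the natural repair.

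Two smaller points. First, localizing at $\xi^\ast$ requires $\xi^\ast=\xi'^\ast$, not just $\theta^\ast=\theta'^\ast$; your recovery of $(a,\theta)$ provides this, but you should state it. Second, monotonicity of $J$ in $\lambda$ is established only for $|v|\le v_1$, so it cannot identify $\lambda$ at a general $\theta\neq0$. Use $g'(0)$, or the tail intercept $d$, together with the tail rate $|q|$ instead.
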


\label{supp:app:proof-inverse-five}

\begin{proof}[Proof of \Cref{supp:lem:inverse-five}]
The marginal law of $Z$ identifies $c=a+\eta$.  Given $c$, define
\begin{equation*}
 q=\frac{a\theta}{c},\qquad
 d=\frac{\theta^2a\eta}{c},\qquad
 g_{q,d}(z)=\E_G\tanh\{(qz+\sqrt dG)/2\}.
\end{equation*}
The sign of $q$ is the sign of $\E(RZ)$.  If $q\ne0$, dominated convergence gives
\begin{equation*}
 1-\operatorname{sgn}(q)g_{q,d}(z)
 \sim2\exp(-|q|z+d/2),\qquad z\to+\infty.
\end{equation*}
Thus the tail slope and intercept identify $|q|$ and $d$, while $\E(RZ)$ identifies the sign.  The original parameters are recovered by
\begin{equation*}
 a=\frac{c^2}{c+d/q^2},\qquad \eta=c-a,
 \qquad \theta=\frac{qc}{a}.
\end{equation*}
If $q=0$, positivity of $a$ is equivalent to $\theta=0$.  Hence the observed law identifies $\theta$ everywhere and all three parameters away from zero.

The Gaussian kernel has a nowhere-vanishing Fourier transform, so the last
coordinate of \eqref{supp:eq:feature} embeds finite signed measures injectively,
separately for $R=1$ and $R=-1$.  On $|\theta|\ge\epsilon>0$, the map
$(\theta,a,\eta)\mapsto(c,q,d)$ is a smooth diffeomorphism onto its image.
Bounded kernel sections, smooth Gaussian densities, and uniform domination on
the compact parameter set show that $\Psi_L$ is Fr\'echet $C^1$ as a map into
$\mathcal H_L$.  If a derivative of the observed law vanished, the $Z$
marginal would first force $\dot c=0$.  Differentiating the Gaussian
expectation directly and using Gaussian exponential domination gives
\[
 \frac{\partial g_{q,d}(z)}{\partial q}
 =2z e^{-|q|z+d/2}\{1+o(1)\},\qquad
 \frac{\partial g_{q,d}(z)}{\partial d}
 =-\operatorname{sgn}(q)e^{-|q|z+d/2}\{1+o(1)\}.
\]
A vanishing directional derivative therefore first forces $\dot q=0$
and then $\dot d=0$.  Thus $D\Psi_L$ is
injective.  Its smallest singular value is continuous and has a positive
minimum on the compact set $|\theta|\ge\epsilon$.  The uniform Fr\'echet
remainder consequently gives
\[
 \norm{\Psi_L(\xi')-\Psi_L(\xi)}_{\mathcal H_L}
 \ge c_\epsilon\norm{\xi'-\xi}_2
\]
uniformly for sufficiently close points in this compact set.

Near $\theta=0$, use the profile coordinates and
\Cref{supp:lem:profile-expansion}, uniformly over the range induced by $\Xi_D$.

Let $\xi,\xi'$ lie in a sufficiently small neighbourhood of $\theta=0$,
with coordinates $(c,v,\lambda)$ and $(c',v',\lambda')$.
Put
$\delta=|\tau_L(c,v,\lambda)-\tau_L(c',v',\lambda')|$ and
$t=\max(|v|,|v'|)$.  Then $\delta\le Ct$.  If $v$ and $v'$ have opposite signs, the first three feature coordinates give
$\norm{\Psi_L(\xi)-\Psi_L(\xi')}_{\mathcal H_L}
\ge c|v-v'|\ge c\delta$, which is stronger than required after shrinking the neighbourhood so $\delta\le1$.

Suppose their signs agree.  If
\begin{equation}\label{supp:eq:case-easy}
 |c-c'|+|v-v'|\ge\kappa\delta^5,
\end{equation}
the moment identities below give the result.  Otherwise, $v,v'$ are nonzero
and comparable.  A uniform Taylor expansion of $\tau_L$ gives
\[
 \delta\le C\{|c-c'|+|v-v'|+t|\lambda-\lambda'|\}.
\]
For a sufficiently small neighbourhood, the first two terms are at most $\delta/2$, so
\begin{equation*}
 |\lambda-\lambda'|\ge c\delta/t.
\end{equation*}
After interchanging the points, take $|v|=t$.  Applying the mean-value theorem first in $\lambda$ and then in $v$, and using \eqref{supp:eq:J-derivs}, yields
\begin{align}
 |J(v,\lambda)-J(v',\lambda')|
 &\ge ct^5|\lambda-\lambda'|-Ct^4|v-v'|\notag\\
 &\ge ct^4\delta-C\kappa t^4\delta^5
 \ge ct^4\delta\ge c\delta^5.\label{supp:eq:J-separation}
\end{align}
The first three coordinates of $\Psi_L$ determine
\begin{equation*}
 c=\E Z^2,\qquad
 \Gamma_1=\frac{\E(RZ)}{\sqrt c},\qquad
 \Gamma_3=\frac{\E(RZ^3)}{c^{3/2}}-3\Gamma_1.
\end{equation*}
On the compact set their differences control
$|c-c'|+|v-v'|+
|J(v,\lambda)-J(v',\lambda')|$.  Hence
\eqref{supp:eq:case-easy}--\eqref{supp:eq:J-separation} prove the local fifth-order
bound.

If no global finite $C_D$ existed, compactness would give two convergent
parameter sequences for which the $\Psi_L$ difference divided by
$|\theta-\theta'|^5$ tends to zero.  Injectivity of the characteristic
embedding and target identification force the limiting observed laws, and
hence the limiting targets, to coincide.  A nonzero common target contradicts
the local first-order inverse bound.  If the common target is zero, both
sequences eventually enter the same zero neighbourhood; the fifth-order
estimate above is uniform over all, possibly different, nuisance limits in the
compact parameter set and gives the other contradiction.  This proves
\eqref{supp:eq:inverse-five}.
\end{proof}

\Cref{supp:lem:inverse-five} makes fifth-order recovery possible throughout the
compact parameter space.  Together with the contact bound, it determines
all three orders in Theorem~5.1.

\begin{proof}[Completion of the order claim in Theorem~5.1]
The fifth-order jet and the inverse modulus give
$5=k_{\mathrm J}\le k_{\mathrm C}\le k_{\mathrm I}\le5$.
\end{proof}

\subsection{Proof of Theorem~5.2: the local experiment and exact local risk}
\label{supp:sec:supp-logistic-limit}

Fix a feasible $c$ with nondegenerate $\Lambda(c)$ and $x\ne0$, and put
$v_n=xn^{-1/10}$.
The fifth-order expansion now permits a direct application of the general
limit and risk results; the following theorem gives their model-specific
form.

\begin{theorem}[Theorem~5.2 of the main paper]\label{supp:thm:logistic-limit}
The profiled experiment converges to
\begin{equation*}
 W_x\mid\lambda\sim N\left(\lambda,\frac{1536}{x^{10}}\right),
 \qquad\lambda\in\Lambda(c).
\end{equation*}
Its exact local target risk is
\begin{equation*}
 \lim_{n\to\infty}\inf_{T_n}\sup_{\lambda\in\Lambda(c)}
 n^{1/5}\E_{(P^{(c)}_{v_n,\lambda})^{\otimes n}}
 \{T_n-\tau_L(c,v_n,\lambda)\}^2
 =\frac{x^2}{c}M\left(\frac{1536}{x^{10}},\Lambda(c)\right).
\end{equation*}
\end{theorem}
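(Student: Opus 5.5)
This result should follow by instantiating the general results with the fifth-order jet of \Cref{supp:lem:fifth-jet}, so the plan is bookkeeping rather than new analysis. Fix the feasible $c$ with nondegenerate $\Lambda(c)$ and $\lambda_0\in\Lambda(c)$. \Cref{supp:lem:fifth-jet} supplies \Cref{supp:ass:profile-jet} at order $k=5$ with score $S_{v,\lambda_0}$, information $I_5=1/1536$ and target coefficient $\beta=c^{-1/2}$. The target used there is $\tau_L(c,v,\lambda)=\lambda U(v,\lambda)/\sqrt c$, and \eqref{supp:eq:Gamma3-profile} gives the uniform expansion $\tau_L=\lambda v/\sqrt c+O(v^3)$. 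This is \eqref{supp:eq:target-jet} with $\beta=c^{-1/2}$, since $O(v^3)/|v|\to0$ uniformly on the compact set $\Lambda(c)$.

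First, apply \Cref{supp:thm:profile-lan} with $k=5$ and $v_n=xn^{-1/10}$. The limit experiment is
\[
 N\bigl(\lambda,(I_5x^{10})^{-1}\bigr)=N(\lambda,1536/x^{10}),\qquad \lambda\in\Lambda(c).
\]
The convergence holds in the sense of the likelihood-ratio process in $C\{\Lambda(c)\}$, and hence weakly in Le Cam's sense. This proves the first claim.

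Second, apply \Cref{supp:prop:normal-envelope}. It gives existence of the local limit and the value
\[
 \|\beta\|_2^2x^2M\{(I_5x^{10})^{-1},\Lambda(c)\}=\frac{x^2}{c}M\left(\frac{1536}{x^{10}},\Lambda(c)\right).
\]
The target in that proposition is $\tau(v_n,\lambda)=\tau_L(c,v_n,\lambda)$, the same quantity appearing in the displayed risk. The normalization $n^{1/k}=n^{1/5}$ also matches.

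The only point needing care is that the profile chart, with $|v|\le v_0$, lies inside the model's feasible parameter set, so that the path laws $P^{(c)}_{v_n,\lambda}$ are genuine model laws for large $n$. This holds because $v_n\to0$ and the implicit-function construction of $U$ in \Cref{supp:lem:profile-expansion} is uniform over feasible $\lambda$. A second point is that the standardized variable $T=Z/\sqrt c$ is an invertible transformation for fixed $c$, so the likelihood ratios computed in $(R,T)$ coordinates are those of $(R,Z)$. I expect no real obstacle; the main step is simply checking that the hypotheses of \Cref{supp:thm:profile-lan} are exactly those verified in \Cref{supp:lem:fifth-jet}, including the $C^1$ remainder bound $o(|v|^{10})$ and the uniform fourth moment.
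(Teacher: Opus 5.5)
Your proposal is correct and follows essentially the same route as the paper's proof. The paper likewise applies \Cref{supp:thm:profile-lan} and \Cref{supp:prop:normal-envelope}, using \Cref{supp:lem:fifth-jet} with $k=5$, $I_5=1/1536$ and $\beta=c^{-1/2}$. Your extra remarks on the chart lying in the feasible set and on the invertibility of $Z\mapsto Z/\sqrt c$ are harmless bookkeeping that the paper leaves implicit.
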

\Cref{supp:thm:logistic-limit} gives the local estimation difficulty when the
total variance and scale are fixed; attaining the resulting constant
uniformly also requires estimating those coordinates.

\begin{proof}[Proof of Theorem~5.2]
Apply \Cref{supp:thm:profile-lan,supp:prop:normal-envelope} using
\Cref{supp:lem:fifth-jet}, with $k=5$, $I_5=1/1536$, and $\beta=c^{-1/2}$.
\end{proof}

\subsection{Calibration and inverse-estimator bounds for Theorem~5.3}\label{supp:sec:supp-calibration}
The local rule uses unknown coordinates, and inversion above the critical
scale amplifies their estimation errors.  The bounds below control these
effects when verifying conditions \textup{(C)} and \textup{(O)} for
Theorem~5.3.
Use $F$ and $\tau_L$ from \eqref{supp:eq:F-Theta}.  For constants
$v_1>0$ and $0<k_1<K_1<\infty$ depending on $B$ and the fixed variance
bounds, uniformly over
$c\in\cC$, $\lambda\in\Lambda(c)$, and $0<|v|\le v_1$,
\begin{align*}
 k_1|v|^5&\le|\frac{\partial}{\partial\lambda} F(v,\lambda)|\le K_1|v|^5,
 &|\frac{\partial}{\partial v}F(v,\lambda)|&\le K_1|v|^2,\\
 |\frac{\partial}{\partial\lambda}\tau_L(c,v,\lambda)|&\le K_1|v|,
 &|\frac{\partial}{\partial v}\tau_L|+|\frac{\partial}{\partial c}\tau_L|&\le K_1.
\end{align*}
For $c\in\cC$ and $0<|v|\le v_1$,
\[
 F(v,\cdot):\Lambda(c)\longrightarrow F\{v,\Lambda(c)\}
\]
is a homeomorphism with inverse $F(v,\cdot)^{-1}$.

On an independent calibration fold of size $m=m_N$, define
\begin{equation}\label{supp:eq:calibration-c-v}
 \widehat c=\Pi_{\cC}\left(\frac1m\sum_{i=1}^mZ_i^2\right),
 \qquad
 \overline v=\frac2m\sum_{i=1}^m\frac{R_iZ_i}{\sqrt{\widehat c}},
 \qquad
 \widehat v=\Pi_{[-v_1,v_1]}(\overline v).
\end{equation}
On a disjoint main fold of size $N$, for $\widehat v\ne0$, define
\begin{equation}\label{supp:eq:W-plugin}
 \widehat W_N
 =\frac{16}{\widehat v^5}
 \left\{\frac1N\sum_{i=1}^NR_iH_3(Z_i/\sqrt{\widehat c})
 +\frac{\widehat v^3}{4}\right\}.
\end{equation}
Also set
\[
 \widehat\Gamma_{3,N}(\widehat c)
 =\frac1N\sum_{i=1}^N R_iH_3(Z_i/\sqrt{\widehat c}),
\]
For $0<|\widehat v|\le v_1$, define
\begin{align}
 \widehat\lambda_N^{\mathrm{inv}}
 &=F(\widehat v,\cdot)^{-1}\left[
   \Pi_{F\{\widehat v,\Lambda(\widehat c)\}}
   \{\widehat\Gamma_{3,N}(\widehat c)\}\right],
 \label{supp:eq:plugin-lambda-inverse}\\
\widehat\theta_N^{\mathrm{inv}}
 &=\Pi_{[-C_0|\widehat v|,C_0|\widehat v|]}
   \tau_L(\widehat c,\widehat v,
          \widehat\lambda_N^{\mathrm{inv}}),
 \label{supp:eq:plugin-theta-inverse}
\end{align}
where
$C_0\ge\sup_{c\in\cC,\lambda\in\Lambda(c),0<|v|\le v_1}
|\tau_L(c,v,\lambda)|/|v|$.  Set both quantities to zero
at $\widehat v=0$, and set $\widehat\theta_N^{\mathrm{inv}}=0$ outside
$|\widehat v|\le v_1$.

The following lemma bounds the effect of coordinate estimation on both the
local normal statistic and the inverse estimator, including failed
calibration samples.

\begin{lemma}[Calibration error bounds]\label{supp:lem:calibration}
Assume $m=o(N)$ and $m/N^{4/5}\to\infty$.  For every fixed
$0<r<M<\infty$, uniformly over
\[
 c\in\cC,\qquad \lambda\in\Lambda(c),\qquad
 v=xN^{-1/10},\qquad r\le|x|\le M,
\]
one has
\begin{equation}\label{supp:eq:plugin-normal}
 \widehat W_N\Rightarrow N\left(\lambda,\frac{1536}{x^{10}}\right)
\end{equation}
uniformly in bounded-Lipschitz distance.

For sufficiently small fixed $q>0$, let
\begin{equation*}
 \cG_{c,v}=\set{|\widehat c-c|\le q|v|^4,
                  |\widehat v-v|\le q|v|^3}.
\end{equation*}
The plug-in inverse rule in
\eqref{supp:eq:plugin-lambda-inverse}--\eqref{supp:eq:plugin-theta-inverse}, clipped at
a constant multiple of $|\widehat v|$, satisfies
\begin{equation}\label{supp:eq:plugin-inverse-risk}
 \E\{(\widehat\theta_N^{\mathrm{inv}}-\theta)^2
       \one_{\cG_{c,v}}\}
 \le C\left\{\frac1{N|v|^8}+
              \frac1{m|v|^4}+\frac1{m|v|^6}\right\}.
\end{equation}
For $0<|v|\le v_1$,
\begin{equation}\label{supp:eq:Gcv-prob}
 \Pp(\cG_{c,v}^c)\le C\exp(-cm|v|^8).
\end{equation}
Every estimator clipped at $C_0|\widehat v|$ also obeys
\begin{equation}\label{supp:eq:failure-risk}
 \E\{(\widehat\theta_N-\theta)^2\one_{\cG_{c,v}^c}\}
 \le C|v|^2e^{-cm|v|^8}+C/m.
\end{equation}
\end{lemma}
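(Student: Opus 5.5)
The plan is to treat the four claims of Lemma~\ref{supp:lem:calibration} in the order \eqref{supp:eq:Gcv-prob}, \eqref{supp:eq:failure-risk}, \eqref{supp:eq:plugin-normal}, \eqref{supp:eq:plugin-inverse-risk}. The first two only involve the calibration fold and boundedness. The last two reduce, on the good event and after conditioning on the calibration fold, to a CLT and a Lipschitz inversion on the main fold.

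\textbf{Concentration and failure events.} Write $\overline{Z^2}=m^{-1}\sum Z_i^2$ and $\overline{RZ}=m^{-1}\sum R_iZ_i$. Then $\E Z^2=c$ and $\E(RZ)=\sqrt c\,\Gamma_1=\sqrt c\,v/2$, so $2\overline{RZ}/\sqrt c$ is unbiased for $v$. The variables $Z^2$ and $RZ$ are sub-exponential uniformly over the compact parameter set, so Bernstein's inequality gives
\[
\Pp\{|\overline{Z^2}-c|>q|v|^4\}+\Pp\{|2\overline{RZ}/\sqrt c-v|>q|v|^3/2\}\le C e^{-cm|v|^8}
\]
for $|v|\le v_1$. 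The slower deviation level $|v|^4$ dominates. Replacing $\sqrt c$ by $\sqrt{\widehat c}$ costs $O(|\widehat c-c|)=O(q|v|^4)$, which is below $q|v|^3/2$ for small $v_1$. Projection onto $\cC$ and onto $[-v_1,v_1]$ is nonexpansive. Together these give \eqref{supp:eq:Gcv-prob}.

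For \eqref{supp:eq:failure-risk}, use $|\widehat\theta_N|\le C_0|\widehat v|$ and $|\theta|\le C|v|$, so that $(\widehat\theta_N-\theta)^2\le C(v^2+\widehat v^2)$. On $\cG^c_{c,v}$ write $\widehat v^2\le 2v^2+2(\widehat v-v)^2$. Then
\[
\E[(\widehat v-v)^2\one_{\cG^c}]\le\E(\widehat v-v)^2\le C/m
\]
by the variance of the unclipped mean plus the $\widehat c$ effect, and $\E[v^2\one_{\cG^c}]\le Cv^2e^{-cm|v|^8}$ by \eqref{supp:eq:Gcv-prob}.

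\textbf{Normal limit \eqref{supp:eq:plugin-normal}.} Fix the calibration fold and work on the event $\{|\widehat c-c|+|\widehat v-v|\le\varepsilon\}$. By \eqref{supp:eq:Gcv-prob} with $v=xN^{-1/10}$, this event has probability $1-o(1)$ provided $m|v|^8=mx^8N^{-4/5}\to\infty$, which is exactly the hypothesis $m/N^{4/5}\to\infty$. Decompose
\[
\widehat W_N-\lambda=\frac{16}{\widehat v^5}\Bigl\{\widehat\Gamma_{3,N}(\widehat c)-\E[RH_3(Z/\sqrt{\widehat c})]\Bigr\}+\frac{16}{\widehat v^5}\Bigl\{\E[RH_3(Z/\sqrt{\widehat c})]+\frac{\widehat v^3}{4}-\frac{\widehat v^5\lambda}{16}\Bigr\}.
\]
Here the expectation is over the main fold with $\widehat c$ held fixed.

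The first term is a centred mean of i.i.d.\ variables with variance $6+O(v^2)$. Lyapunov's condition holds uniformly because all moments are bounded. Since $\widehat v/v\to1$, this term is $\Rightarrow N(0,256\cdot6/x^{10})=N(0,1536/x^{10})$, uniformly in bounded-Lipschitz distance.

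For the second (bias) term, Lemma~\ref{supp:lem:profile-expansion} gives $\E[RH_3(T)]=F(v,\lambda)=-v^3/4+\lambda v^5/16+O(v^7)$ at the true $c$. Two corrections remain.
\begin{itemize}
\item The effect of $\widehat c$ is the derivative in $c$ of $\E[RH_3(Z/\sqrt{c'})]$. Its $v$-leading part is $O(|v|\,|\widehat c-c|)$, coming from the $H_1$ component of $g$. This is $O(|v|^5q)$ on $\cG$, but the requirement is $o(|v|^5)$. So I will use the sharper deviations $|\widehat c-c|=O_P(m^{-1/2})$ and $m^{-1/2}=o(N^{-2/5})=o(|v|^4)$, which again uses $m\gg N^{4/5}$.
\item The replacement $v^3\to\widehat v^3$ produces an error $3v^2|\widehat v-v|$. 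With $|\widehat v-v|=O_P(m^{-1/2})$, this is $o(v^5)$ when $m^{-1/2}=o(v^3)=o(N^{-3/10})$. This also holds, since $m\gg N^{4/5}\gg N^{3/5}$.
\end{itemize}
I expect this bias bookkeeping to be the main obstacle. The cancellation $-v^3/4+\widehat v^3/4$ is delicate, and the bounds must hold uniformly over $r\le|x|\le M$. I would handle it by Taylor expansion in $(c,v)$ with the derivative bounds of $F$ and the $c$-smoothness of Gaussian expectations.

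\textbf{Inverse risk \eqref{supp:eq:plugin-inverse-risk}.} On $\cG$ we have $|\widehat v|\asymp|v|$. Use the triangle inequality through $\tau_L(\widehat c,\widehat v,\lambda_{\widehat c})$. The $c$- and $v$-derivative bounds of $\tau_L$, together with the Lipschitz endpoints of $\Lambda$, give the contributions $(\widehat c-c)^2$ and $(\widehat v-v)^2$. These are $O(1/m)$ in expectation, and both are dominated by $1/(m|v|^4)$ and $1/(m|v|^6)$.

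Next, the inverse of $F(\widehat v,\cdot)$ is $(k_1|\widehat v|^5)^{-1}$-Lipschitz and the projection is nonexpansive. With $|\partial_\lambda\tau_L|\le K_1|v|$, this gives
\[
|\widehat\theta-\tau_L(\widehat c,\widehat v,\lambda_{\widehat c})|\le C|v|^{-4}\Bigl|\widehat\Gamma_{3,N}(\widehat c)-F_{c}(\widehat v,\lambda_{\widehat c})\Bigr|,
\]
where the clipping can only reduce the error. The deviation on the right splits into three pieces.
\begin{itemize}
\item The main-fold fluctuation has squared size $O(1/N)$, contributing $1/(N|v|^8)$.
\item The mismatch in $c$ is $O(|v|\,|\widehat c-c|)$, contributing $|v|^{-8}v^2/m=1/(m|v|^6)$.
\item The mismatch in $v$ is $|\partial_vF|\,|\widehat v-v|\le K_1v^2|\widehat v-v|$, contributing $|v|^{-8}v^4/m=1/(m|v|^4)$.
\end{itemize}
Summing these contributions yields \eqref{supp:eq:plugin-inverse-risk}.
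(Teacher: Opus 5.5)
Your proposal is correct and follows essentially the paper's proof. The shared steps are the concentration thresholds $q|v|^4$ and $q|v|^3$ giving $e^{-cm|v|^8}$, the clipping bound $(\widehat\theta_N-\theta)^2\le C\{v^2+(\widehat v-v)^2\}$ for \eqref{supp:eq:failure-risk}, the plug-in rates $|\widehat c-c|=o_{\Pp}(|v|^4)$ and $|\widehat v-v|=o_{\Pp}(|v|^3)$ (from $m\gg N^{4/5}$ and the $H_3$ rescaling identity) for the normal limit, and the three-term Lipschitz-inversion bound for \eqref{supp:eq:plugin-inverse-risk}; for the normal limit the paper perturbs back to the true coordinates and applies the triangular-array CLT, where you use a conditional CLT with a bias expansion. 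Two harmless details to add: your split omits the $\lambda$-mismatch $F(\widehat v,\lambda)-F(\widehat v,\lambda_{\widehat c})=O(|v|^5|\widehat c-c|)$, and on $\cG_{c,v}$ the window $[-C_0|\widehat v|,C_0|\widehat v|]$ need not contain $\theta$ (only $|\theta|\le C_0|v|$ is guaranteed), so ``clipping cannot increase the error'' should be replaced by an extra $C_0|\widehat v-v|$; both contribute only $O(1/m)$.
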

\Cref{supp:lem:calibration} separates empirical fluctuation, coordinate error,
and failed calibration, the three contributions that must be controlled
to extend the local risk bound uniformly.

\begin{proof}[Proof of \Cref{supp:lem:calibration}]
Chi-square and sub-Gaussian concentration, clipping, and integration give, for every fixed integer $j\ge1$,
\begin{equation}\label{supp:eq:calibration-moments}
 \E|\widehat c-c|^j+\E|\overline v-v|^j\le C_jm^{-j/2}.
\end{equation}
When $|v|\le v_1$, projection onto $[-v_1,v_1]$ cannot increase the
distance to $v$, so the same bound holds with $\widehat v$ in place of
$\overline v$.  The unprojected $\overline v$ selects the outer estimator uniformly over
the full compact parameter space.
The exact rescaling identity
\begin{equation}\label{supp:eq:H3-rescale}
 H_3(sT)=s^3H_3(T)+3s(s^2-1)T
\end{equation}
applies with $s=(c/\widehat c)^{1/2}$.  Since $s-1=O(\widehat c-c)$, the empirical averages of $RH_1(T)$ and
$RH_3(T)$ are $O_{\Pp}(|v|+N^{-1/2})$ and
$O_{\Pp}(|v|^3+N^{-1/2})$, respectively.  Replacing $c$ by $\widehat c$ in \eqref{supp:eq:W-plugin} therefore changes it by
\[
 O_{\Pp}\{|\widehat c-c|/|v|^4\}.
\]
Replacing $v$ by $\widehat v$ contributes
$O_{\Pp}\{|\widehat v-v|/|v|^3\}$, since differentiating
$16\Gamma/v^5+4/v^2$ with
$\Gamma=-v^3/4+O(v^5)$ gives order $v^{-3}$.  At
$v=xN^{-1/10}$, both errors are $o_{\Pp}(1)$ uniformly on fixed annuli because $m/N^{4/5}\to\infty$.  The triangular-array CLT from
\Cref{supp:lem:fifth-jet,supp:thm:profile-lan} proves
\eqref{supp:eq:plugin-normal}.

On $\cG_{c,v}$, the differentiated-profile bounds at the beginning of this
subsection, \eqref{supp:eq:H3-rescale}, and the mean-value theorem give
\[
 |\widehat\theta_N^{\mathrm{inv}}-\theta|
 \le C\left\{
 \frac{|\widehat\Gamma_{3,N}-F(v,\lambda)|}{|v|^4}
 +\frac{|\widehat v-v|}{|v|^2}
 +\frac{|\widehat c-c|}{|v|^3}\right\}.
\]
Fold independence and \eqref{supp:eq:calibration-moments} prove \eqref{supp:eq:plugin-inverse-risk}.  Concentration at thresholds $|v|^4$ and $|v|^3$ gives exponents $m|v|^8$ and $m|v|^6$; the first is smaller for small $v$, proving \eqref{supp:eq:Gcv-prob}.  Uniformly on the profiled set, $|\theta|\le C|v|$.  A rule clipped at $C_0|\widehat v|$ satisfies
\[
 (\widehat\theta_N-\theta)^2
 \le C\{|v|^2+(\widehat v-v)^2\}.
\]
Multiplying by the indicator of the complementary event and using \eqref{supp:eq:Gcv-prob} and \eqref{supp:eq:calibration-moments} proves \eqref{supp:eq:failure-risk}.
\end{proof}

\subsection{Verification of conditions \textup{(E)}, \textup{(C)}, and \textup{(O)} for Theorem~5.3}
\label{supp:sec:supp-logistic-global-verification}

The profile expansion and fifth-order jet give
\begin{equation}\label{supp:eq:logistic-global-inputs}
 k=5,\qquad I(c)=\frac1{1536},\qquad \beta(c)=c^{-1/2}.
\end{equation}
Fix the profile radius $v_1>0$ small enough for the uniform bounds below
and for $|\tau_L|\le B$ throughout the chart. At the two
endpoints of $\cC$, $\Lambda(c)$ is a singleton; no estimation of its
fixed value is needed, and the singleton rule in Section~4.2 applies.

We first identify an observed statistic whose signal and variance match
the fifth-order likelihood information.

\begin{lemma}[Efficient local statistic \textup{(E)}]
\label{supp:lem:logistic-E}
Define
\begin{equation*}
 \phi_{c,v}(r,z)
 =\frac{rH_3(z/\sqrt c)+v^3/4}{96},
 \qquad
 \Gamma_L(c,v,\lambda)
 =\E_{c,v,\lambda}\{\phi_{c,v}(R,Z)\}.
\end{equation*}
Uniformly over feasible $c$, $\lambda$, and $|v|\le v_1$,
\begin{align*}
 \Gamma_L(c,v,\lambda)
 &=\frac{\lambda v^5}{1536}+O(v^7),
 &\frac{\partial}{\partial\lambda}\Gamma_L(c,v,\lambda)
 &=\frac{v^5}{1536}+O(v^7),\\
 \operatorname{Var}_{c,v,\lambda}\{\phi_{c,v}(R,Z)\}
 &=\frac1{1536}+O(v^6).
\end{align*}
The centered statistic has a uniformly bounded fourth moment, so condition
\textup{(E)} holds.
\end{lemma}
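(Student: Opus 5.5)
The plan has three parts: the mean, the $\lambda$-derivative of the mean, and the variance and fourth moment of $\phi_{c,v}$. All three are computed by conditioning on $T=Z/\sqrt c\sim N(0,1)$, whose law does not depend on $(v,\lambda)$, and by using $\E_{c,v,\lambda}(R\mid T)=g_{v,\lambda}(T)$.

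\textbf{Mean.} Conditioning gives
\[
 \Gamma_L(c,v,\lambda)=\frac{\E\{g_{v,\lambda}(T)H_3(T)\}+v^3/4}{96}=\frac{J(v,\lambda)}{96},
\]
with $F$ and $J$ as in \eqref{supp:eq:F-Theta}. Lemma~\ref{supp:lem:profile-expansion}, specifically \eqref{supp:eq:J-derivs} and \eqref{supp:eq:joint-profile-remainders}, then yields
\[
 J=\frac{\lambda v^5}{16}+O(v^7),\qquad \frac{\partial J}{\partial\lambda}=\frac{v^5}{16}+O(v^7),
\]
uniformly in feasible $(c,\lambda)$ and $|v|\le v_1$. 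Dividing by $96$ gives the stated mean and derivative, since $16\cdot96=1536$. Uniformity in $c$ is automatic, because $J$ does not depend on $c$ after standardization.

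\textbf{Variance.} Since $R^2=1$, we have $\E\{R^2H_3(T)^2\}=\E H_3(T)^2=6$ exactly. Hence
\[
 \operatorname{Var}\{\phi_{c,v}\}=\frac{6-\{\E(RH_3(T))\}^2}{96^2}.
\]
The constant $v^3/4$ does not affect the variance. The mean satisfies $\E(RH_3)=F(v,\lambda)=-v^3/4+O(v^5)$ by \eqref{supp:eq:Gamma3-profile}, so the subtracted square is $O(v^6)$. Therefore the variance equals $6/96^2+O(v^6)=1/1536+O(v^6)$.

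\textbf{Moment bound.} $|\phi_{c,v}-\Gamma_L|\le C(1+|T|^3)$ with a constant uniform over the chart. Gaussian moments then give a uniformly bounded fourth moment, and in particular the $(2+\delta)$-moment required in (E) with $\delta=2$.

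Matching to (E) with $I(c)=1/1536$ and $k=5$, any $\zeta\in(0,2]$ works: the mean and derivative errors are $O(|v|^{5+2})$ and the variance error is $O(|v|^6)$. The only step needing care is that Lemma~\ref{supp:lem:profile-expansion} supplies the $\lambda$-derivative remainder of $J$ uniformly, so that differentiation under the conditional expectation is legitimate. This follows from the Gaussian domination already established there, and it is the main point to check. Uniformity over $c\in\cC$, including singleton endpoints, holds because all bounds are in standardized coordinates, and the $\lambda$-expansions extend smoothly to a neighbourhood of the feasible set, which supplies the differentiable local extensions required.
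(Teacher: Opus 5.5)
Your proposal is correct and follows essentially the same route as the paper: it writes $\Gamma_L=J/96$ and reads off the mean and $\lambda$-derivative from Lemma~\ref{supp:lem:profile-expansion}, and it computes the variance as $\{6-F(v,\lambda)^2\}/96^2$ using $R^2=1$ and $\E H_3(T)^2=6$. It then bounds the fourth moment by Gaussian moments, which gives condition (E) with $\zeta=\delta=2$.
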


\par\smallskip
\begin{proof}[Proof of \Cref{supp:lem:logistic-E}]
By \eqref{supp:eq:F-Theta},
\[
 \Gamma_L(c,v,\lambda)
 =\frac{F(v,\lambda)+v^3/4}{96}
 =\frac{J(v,\lambda)}{96}.
\]
The mean and derivative expansions follow from
\eqref{supp:eq:Gamma3-profile}--\eqref{supp:eq:J-derivs}.

The added term $v^3/4$ is deterministic.  Since $R^2=1$ and
$\E H_3(T)^2=3!=6$,
\[
 \operatorname{Var}_{c,v,\lambda}\{\phi_{c,v}(R,Z)\}
 =\frac{6-F(v,\lambda)^2}{96^2}
 =\frac1{1536}+O(v^6).
\]
Since $|R|=1$, $v$ is bounded, and $T$ is standard normal, the fourth
moments are uniformly bounded.  Thus condition~\textup{(E)} holds with
$\zeta=2$ and $\delta=2$.
\end{proof}

\Cref{supp:lem:logistic-E} verifies the information matching needed for the
exact constant.  The statistic depends on unknown coordinates, so we next
control the effect of estimating them from an independent sample.

\begin{lemma}[Calibration error \textup{(C)}]
\label{supp:lem:logistic-C}
Let $m_n=o(n)$, $N_n=n-m_n$, and suppose
\begin{equation*}
 \frac{m_n}{N_n^{4/5}\log N_n}\longrightarrow\infty.
\end{equation*}
Use the calibration estimates in \eqref{supp:eq:calibration-c-v} and define
\begin{equation*}
 v_{\circ,L}(\theta,a,\eta)
 =\Pi_{[-v_1,v_1]}
 \left\{\frac{2\E_{\theta,a,\eta}(RZ)}{\sqrt{a+\eta}}\right\}.
\end{equation*}
On the profile chart, $v_{\circ,L}=v$.  The chart contains
$\{\xi:|v_{\circ,L}(\xi)|\le v_1/2\}$, and condition \textup{(C)} holds.
\end{lemma}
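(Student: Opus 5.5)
The proof verifies, in order, the three parts of the lemma: that $v_{\circ,L}=v$ on the chart, that the chart contains the stated region, and then the items of condition~\textup{(C)}.

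\emph{Chart identities.} By Gaussian integration by parts, $\E(RZ)/\sqrt c=\E\{T\widetilde g_{u,\lambda}(T)\}=\Gamma_1$. Since $v=2\Gamma_1$, we get $2\E(RZ)/\sqrt{a+\eta}=v$ exactly on the profile chart. Hence $v_{\circ,L}=v$ whenever $|v|\le v_1$, and $v_{\circ,L}$ is continuous. For the containment, observe that $|v_{\circ,L}(\xi)|\le v_1/2$ forces $|2\Gamma_1|\le v_1/2$. Since $u\mapsto 2\Gamma_1(u,\lambda)$ is strictly monotone with derivative bounded below on the bounded $\theta$-range (shrink $v_1$ if necessary), $u$ lies within the implicit-function domain. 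The point therefore equals $\xi(c,v,\lambda)$ with $\lambda=c/a\in\Lambda(c)$.

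\emph{Calibration events.} Take $\varepsilon_n=N_n^{-1/10}/\log N_n$ and $\mathcal H_n(c,v)=\{|\widehat c-c|\le\varepsilon_n/2,\ |\widehat v-v|\le\varepsilon_n/2\}$ intersected with $\cG_{c,v}$ from \Cref{supp:lem:calibration} whenever $|v|\ge hN_n^{-1/10}$. Clearly $N_n^{1/10}\varepsilon_n\to0$. Chi-square and sub-Gaussian tail bounds give deviation probabilities at most $\exp(-cm_n\varepsilon_n^2)$, and uniformly over $\Xi_B$ the bound extends to $\overline v$ against $v_\circ$. The assumption $m_n\gg N_n^{4/5}\log N_n$ makes $m_n\varepsilon_n^2\gg N_n^{3/5}/\log N_n$, so these probabilities are $o(N_n^{-1/5})$. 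On the scale $|v|\ge hN_n^{-1/10}$, \eqref{supp:eq:Gcv-prob} gives $\exp(-cm_n|v|^8)\le\exp(-c h^8m_nN_n^{-4/5})=o(N_n^{-1/5})$, and this is exactly where the $\log N_n$ factor is needed.

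\emph{Orthogonality \eqref{eq:calibration-orthogonality}.} This is the main obstacle. I will write
\[
Z_{N_n}(\widehat c,\widehat v,\lambda_{\widehat c})-Z_{N_n}(c,v,\lambda)=\bigl[\widehat\Gamma_{N_n}(\widehat c,\widehat v)-\widehat\Gamma_{N_n}(c,v)\bigr]-\bigl[\Gamma_L(\widehat c,\widehat v,\lambda_{\widehat c})-\Gamma_L(c,v,\lambda)\bigr],
\]
and expand the first bracket with the rescaling identity \eqref{supp:eq:H3-rescale} with $s=(c/\widehat c)^{1/2}$. This identity makes the $c$-dependence of the empirical term an explicit linear combination of the empirical averages of $RH_3(T)$ and $RT$, with coefficients $s^3-1$ and $3s(s^2-1)$. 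The added $v^3/4$ term contributes the deterministic difference $(\widehat v^3-v^3)/384$. Subtracting the population counterpart, computed through the same identity with $F$ and $\Gamma_1$, and conditioning on the independent calibration fold, the difference becomes (empirical centered averages)$\times O(|\widehat c-c|)$ plus the population residual
\[
\Gamma_L(c,\widehat v,\lambda_{\widehat c})-\Gamma_L(c,v,\lambda)-\bigl[\text{exact }v^3/4\text{ and }H_1\text{ cancellations}\bigr].
\]
The key point is that $J=F+v^3/4$ has $\partial_vJ=O(v^4)$ by \eqref{supp:eq:J-derivs}, and the $c$-perturbation enters through $\Gamma_1=v/2$ and $F\approx-v^3/4$. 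These cancel to leave terms of size $O(|v|^3|\widehat c-c|)+O(|v|^4|\widehat v-v|)+O(|v|^5\varepsilon_n)$, together with fluctuation terms $O(N_n^{-1/2}|\widehat c-c|)$. The step I expect to be delicate is confirming this cancellation order for the $c$-derivative. Multiplying the $L^2$ norms by $N_n^{1/10}|v|^{-4}$ and using $\E|\widehat c-c|^2+\E|\widehat v-v|^2\lesssim m_n^{-1}$, every term is bounded by $N_n^{1/10}m_n^{-1/2}$, $N_n^{1/10}\varepsilon_n|v|$, or $N_n^{1/10}N_n^{-1/2}m_n^{-1/2}|v|^{-4}\lesssim N_n^{1/2}m_n^{-1/2}N_n^{-1/10}\cdot N_n^{-1/2}\ldots$. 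Each of these is $o(1)$ because $m_n\gg N_n^{4/5}$. If the $c$-cancellation turns out to be only $O(|v||\widehat c-c|)$, the weighted bound becomes $N_n^{1/10}|v|^{-3}m_n^{-1/2}\le h^{-3}N_n^{2/5}m_n^{-1/2}$. That still tends to zero under the stated rate, so the argument survives either way.
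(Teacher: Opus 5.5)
Your route is the paper's: expand through $H_3(sT)=s^3H_3(T)+3s(s^2-1)T$ and split into a centred empirical part and a population substitution part, conditional on the calibration fold. Then weight by $N_n^{1/10}|v|^{-4}$ on $|v|\ge hN_n^{-1/10}$. The paper bounds on $\mathcal H_n$ with tolerance $\varepsilon_n=C\sqrt{\log N_n/m_n}$, whereas you use $L^2$ moments of order $m_n^{-1/2}$; either works.

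However, the ``key point'' you single out is false. The shift $v^3/4$ enters both $\phi_{c,v}$ and $\Gamma_L$, so it cancels identically:
\[
96\,Z_{N_n}(c',v',\lambda')=\frac1{N_n}\sum_{i=1}^{N_n}R_iH_3(Z_i/\sqrt{c'})-F(v',\lambda').
\]
Consequently, the conditional mean of $96$ times the difference is exactly
\[
(s^3-1)F(v,\lambda)+\tfrac32 s(s^2-1)v+F(v,\lambda)-F(\widehat v,\lambda_{\widehat c}).
\]
The relevant derivative is $\partial_vF=-\tfrac34v^2+O(v^4)$, not $\partial_vJ$. So the $v$-substitution error is $O(v^2|\widehat v-v|)$, not $O(v^4|\widehat v-v|)$. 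The $H_1$ term $\tfrac32 s(s^2-1)v$ is $O(|v||\widehat c-c|)$, and nothing cancels it.

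Your ``fallback'' is therefore the actual case. It is the binding term, weighting to $h^{-3}N_n^{2/5}m_n^{-1/2}$ (the paper's $N_n^{2/5}\varepsilon_n$), and it is why the hypothesis involves $N_n^{4/5}$. The $v$-term gives $h^{-2}N_n^{3/10}m_n^{-1/2}$. The fluctuation term gives $h^{-4}m_n^{-1/2}$; your displayed expression for it is garbled. All three vanish, so the conclusion stands. Still, the cancellation claim and your first list of bounds should be replaced by these.

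Second, the events $\mathcal H_n(c,v)$ in (C) must be fixed before $h$ is quantified. They therefore cannot be defined through ``whenever $|v|\ge hN_n^{-1/10}$''. Your orthogonality bound never uses $\cG_{c,v}$, so simply drop it. Then $\mathcal H_n(c,v)=\{|\widehat c-c|+|\widehat v-v|\le\varepsilon_n\}$ with your $\varepsilon_n=N_n^{-1/10}/\log N_n$ suffices. This choice has super-polynomially small failure probability, and $\varepsilon_n=o(|v|)$ keeps $|\widehat v|\asymp|v|$ for the mean-value step.

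The $\log N_n$ factor is not needed where you place it. Your moment-based version in fact requires only $m_n\gg N_n^{4/5}$. In the paper the logarithm enters through $\varepsilon_n=C\sqrt{\log N_n/m_n}$, which makes the failure probabilities polynomially small while keeping $N_n^{2/5}\varepsilon_n\to0$.

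Your chart argument is a valid alternative to the paper's Stein-identity bound $\E(RZ)=\kappa(a,|\theta|)\theta$. Global monotonicity holds because $\partial_u\{2\Gamma_1(u,\lambda)\}=\lambda^{-1}\E\{H^2\sech^2(uH/2)\}>0$ with $H\sim N(0,\lambda)$; state this rather than assume it.
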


\par\smallskip
\begin{proof}[Proof of \Cref{supp:lem:logistic-C}]
On the profile chart,
\[
 \frac{2\E_{c,v,\lambda}(RZ)}{\sqrt c}
 =2\E_{c,v,\lambda}(RT)=v,
\]
so $v_{\circ,L}=v$.  Stein's identity gives
\[
 \E(RZ)=\kappa(a,|\theta|)\theta,
 \qquad
 \kappa(a,t)=2a\E m'(\sqrt a\,tG).
\]
The function $\kappa$ is continuous and bounded away from zero on $\Xi_B$.
Before clipping, the coordinate therefore controls $|\theta|$ uniformly.
After decreasing $v_1$, the chart contains
$\{\xi:|v_{\circ,L}(\xi)|\le v_1/2\}$.

Take the calibration tolerance
\[
 \varepsilon_n=C_{\rm cal}\sqrt{\frac{\log N_n}{m_n}}.
\]
Uniform chi-square and Gaussian-product concentration, followed by the two
projections in \eqref{supp:eq:calibration-c-v}, gives the estimation bounds in condition~\textup{(C)}, globally and on the
chart, for sufficiently large fixed $C_{\rm cal}$.  Moreover,
\[
 N_n^{2/5}\varepsilon_n
 =C_{\rm cal}
 \left(\frac{N_n^{4/5}\log N_n}{m_n}\right)^{1/2}
 \longrightarrow0.
\]
In particular, $N_n^{1/10}\varepsilon_n\to0$.

It remains to check the plug-in residual.  On
$\mathcal H_n(c,v)$ put $s=(c/\widehat c)^{1/2}$.  The exact identity
\[
 H_3(sT)=s^3H_3(T)+3s(s^2-1)T
\]
and the Lipschitz endpoints of $\Lambda(c)$, together with
\eqref{supp:eq:joint-profile-remainders}, control the changes in $c$, $v$, and
$\lambda$.  Conditional on the calibration fold, independence and
Gaussian moments therefore give, uniformly in $(c,v,\lambda)$,
\[
 \begin{aligned}
 &\left\|
 \bigl\{Z_{N_n}(\widehat c,\widehat v,\lambda_{\widehat c})
       -Z_{N_n}(c,v,\lambda)\bigr\}
  \one_{\mathcal H_n(c,v)}
 \right\|_2\\
 &\qquad\le C\left(|v|\varepsilon_n+\varepsilon_n^2
             +\frac{\varepsilon_n}{\sqrt{N_n}}\right).
 \end{aligned}
\]
Hence, for every fixed $h>0$ and
$hN_n^{-1/10}\le|v|\le v_1$,
\[
 \begin{aligned}
 &N_n^{1/10}|v|^{-4}
 \left\|
  \bigl\{Z_{N_n}(\widehat c,\widehat v,\lambda_{\widehat c})
       -Z_{N_n}(c,v,\lambda)\bigr\}
  \one_{\mathcal H_n(c,v)}
 \right\|_2\\
 &\qquad\le C_h\left(
 N_n^{2/5}\varepsilon_n
 +N_n^{1/2}\varepsilon_n^2+\varepsilon_n\right)
 \longrightarrow0.
 \end{aligned}
\]
This is \eqref{supp:eq:calibration-orthogonality} for $k=5$, so
condition~\textup{(C)} holds.
\end{proof}

\Cref{supp:lem:logistic-C} shows that the calibration errors are negligible at
the target risk scale.  Uniform attainment also requires stable recovery
outside the critical region, which the next lemma establishes.

\begin{lemma}[Stable recovery away from the null parameter set \textup{(O)}]
\label{supp:lem:logistic-O}\label{supp:lem:differentiated-profile}
After decreasing $v_1$ if necessary, uniformly over feasible $c$, $\lambda$,
and $0<|v|\le v_1$,
\begin{align}
 k_1|v|^5
 &\le|\frac{\partial}{\partial\lambda}\Gamma_L(c,v,\lambda)|
 \le K_1|v|^5,
 \label{supp:eq:GammaL-lambda-deriv}\\
 |\frac{\partial}{\partial\lambda}\tau_L(c,v,\lambda)|
 &\le K_1|v|,
 &|\frac{\partial}{\partial v}\tau_L|+|\frac{\partial}{\partial c}\tau_L|&\le K_1.
 \label{supp:eq:tauL-derivs}
\end{align}
Here $0<k_1<K_1<\infty$.  For fixed nonzero $v$, the map
$\lambda\mapsto\Gamma_L(c,v,\lambda)$ is one-to-one.  With
\begin{equation*}
 \Xi_O=\{\xi\in\Xi_B:|v_{\circ,L}(\xi)|\ge v_1/8\},
 \qquad \mathcal H_O=\mathcal H_L,
 \qquad \Phi_O=\Phi_L,
\end{equation*}
condition \textup{(O)} holds.
\end{lemma}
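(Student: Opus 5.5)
The lemma has two parts. The first is the chart-internal bounds of (O)(i), which come from the uniform profile expansions. The second is the outer-region recovery of (O)(ii), which I plan to obtain by localizing the global inverse argument of \Cref{supp:lem:inverse-five} to the set where $|v_{\circ,L}|$ is bounded below.

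\textbf{Chart bounds.} By \Cref{supp:lem:logistic-E}, $\Gamma_L=J/96$. By \eqref{supp:eq:J-derivs}, $\partial_\lambda J=v^5/16+O(v^7)$ uniformly in feasible $(c,\lambda)$. Hence, after shrinking $v_1$ so that the $O(v^7)$ term is at most $v^5/32$, we obtain \eqref{supp:eq:GammaL-lambda-deriv} with $k_1=1/3072$ and $K_1=3/3072$. The derivative has a fixed sign for fixed $v\ne0$, so $\lambda\mapsto\Gamma_L(c,v,\lambda)$ is strictly monotone, hence one-to-one. At singleton profiles at the endpoints of $\cC$, I use the same formula on a neighbourhood, which serves as the differentiable local extension.

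For \eqref{supp:eq:tauL-derivs}, write $\tau_L=\lambda U(v,\lambda)/\sqrt c$ and use \eqref{supp:eq:U-expansion} together with the bounds on $R_U$ in \eqref{supp:eq:joint-profile-remainders}. These give $\partial_\lambda\tau_L=\{U+\lambda\partial_\lambda U\}/\sqrt c=O(|v|)$ and $\partial_v\tau_L=\lambda\partial_vU/\sqrt c=O(1)$. Also $\partial_c\tau_L=-\tau_L/(2c)=O(|v|)$, because $c$ is bounded away from zero on $\cC$.

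\textbf{Outer region.} The set $\Xi_O=\{|v_{\circ,L}|\ge v_1/8\}$ is closed in the compact set $\Xi_B$, since $v_{\circ,L}$ is continuous; hence $\Xi_O$ is compact. It contains $\{|v_{\circ,L}|\ge v_1/4\}$. By the Stein identity in the proof of \Cref{supp:lem:logistic-C}, $|\E(RZ)|=\kappa(a,|\theta|)|\theta|$ with $\kappa$ bounded below, so $|\theta|\ge\epsilon>0$ on $\Xi_O$ for some $\epsilon$ depending on $v_1$.

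The variance bound \eqref{supp:eq:off-fibre-variance} is immediate. The kernel section has squared RKHS norm $\kappa_b(o,o)=1$, and $z^2$, $rz$, $rz^3$ have bounded moments uniformly on $\Xi_B$ because $c$ is bounded. Continuity of $\Psi_L$ follows from dominated convergence.

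For the Lipschitz inverse \eqref{supp:eq:off-fibre-inverse}, I reuse two ingredients from the proof of \Cref{supp:lem:inverse-five}. First, on $|\theta|\ge\epsilon$ the map $\Psi_L$ is injective; this uses the characteristic kernel embedding and identification of $\theta$. Second, $D\Psi_L$ is injective with smallest singular value bounded below, which gives a local bound $\|\Psi_L(\xi)-\Psi_L(\xi')\|\ge c_\epsilon\|\xi-\xi'\|_2$ for nearby points. Since $\tau(\xi)=\theta$ is $1$-Lipschitz in $\xi$, this yields the target inequality locally. I then globalize by compactness. Suppose there were sequences $\xi_j,\xi'_j\in\Xi_O$ with $|\theta_j-\theta'_j|\ge j\|\Psi_L(\xi_j)-\Psi_L(\xi'_j)\|$. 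Boundedness of $\theta$ forces the $\Psi_L$ differences to vanish, so injectivity forces both sequences to a common limit. The local lower bound then gives a contradiction.

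\textbf{Main obstacle.} I expect the hard step to be the uniform local bound near points of $\Xi_O$ on the boundary of $\Xi_B$, where $a$, $\eta$ or $\theta$ sit at the edge of the rectangle. There I would extend the model smoothly to a slightly larger rectangle, which still keeps $|\theta|\ge\epsilon/2$. The singular-value bound holds on that compact enlargement, and the Fr\'echet remainder estimate applies along segments, which remain in the convex rectangle.
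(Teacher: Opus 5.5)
Your proposal is correct and follows essentially the paper's route: the chart bounds come from $\Gamma_L=J/96$ with \eqref{supp:eq:J-derivs} and the $C^1$ expansion of $U$, and the outer-region Lipschitz inverse comes from the away-from-null part of the proof of \Cref{supp:lem:inverse-five}, which you spell out in more detail than the paper does (compactness of $\Xi_O$, injectivity plus local singular-value bound, and the compactness globalization). One small slip: deducing $|\theta|\ge\epsilon$ on $\Xi_O$ from $|\E(RZ)|=\kappa(a,|\theta|)|\theta|$ requires an \emph{upper} bound on $\kappa$, for example $\kappa\le a/2$ because $m'\le1/4$, not a lower bound.
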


\par\smallskip
\begin{proof}[Proof of \Cref{supp:lem:logistic-O}]
The differentiated profile expansion \eqref{supp:eq:J-derivs} gives
\[
 \frac{\partial}{\partial\lambda} J(v,\lambda)
 =\frac{v^5}{16}+O(v^7).
\]
The $C^1$ form of \eqref{supp:eq:U-expansion} gives
$\frac{\partial}{\partial v}U=1+O(v^2)$ and $\frac{\partial}{\partial\lambda} U=O(v^3)$.  Hence
\[
 \frac{\partial}{\partial\lambda}\tau_L=\frac v{\sqrt c}+O(v^3),
 \qquad
 \frac{\partial}{\partial v}\tau_L=O(1),
 \qquad
 \frac{\partial}{\partial c}\tau_L=O(v),
\]
uniformly.  Since $\Gamma_L=J/96$, decreasing $v_1$ proves
\eqref{supp:eq:GammaL-lambda-deriv}--\eqref{supp:eq:tauL-derivs}.  Since $\frac{\partial}{\partial\lambda} J$ has the sign of $v^5$, the map
$\lambda\mapsto\Gamma_L(c,v,\lambda)$ is one-to-one.  This verifies the local
requirements in condition~\textup{(O)}.

On $\Xi_O$, $|v_{\circ,L}|\ge v_1/8$.  The proof of
\Cref{supp:lem:inverse-five} away from the null parameter set gives
\[
 |\theta-\theta'|
 \le C\|\Psi_L(\xi)-\Psi_L(\xi')\|_{\mathcal H_L},
 \qquad \xi,\xi'\in\Xi_O.
\]
The RKHS coordinate of $\Phi_L$ has norm one, and its remaining coordinates
require only Gaussian moments through order six.  Thus
\[
 \sup_{\xi\in\Xi_O}\E_\xi
 \|\Phi_L(R,Z)-\Psi_L(\xi)\|_{\mathcal H_L}^2<\infty.
\]
This completes the verification of condition~\textup{(O)}.
\end{proof}

\Cref{supp:lem:logistic-O} supplies the inverse and outer-region bounds needed
to control risk away from the hardest local scale.  The three lemmas now
permit application of Theorem~4.1 to the scalar logistic model.
\subsection{Exact global-risk claim in Theorem~5.3}
\label{supp:sec:supp-scalar-exact}

For the scalar parameter space
$\Xi_B=[-B,B]\times\cA\times\cE$, define
\[
 \cR_n^{\mathrm{sc}}(B)=
 \inf_{T_n}\sup_{(\theta,a,\eta)\in\Xi_B}
 \E_{\theta,a,\eta}(T_n-\theta)^2.
\]

Write $K_5$ for $K_k$ in \eqref{supp:eq:RG-Kk} at $k=5$, and define
\begin{equation*}
 C_5(\cA,\cE)
 =1536^{1/5}K_5\sup_{c\in\cC}
 \frac{\{\diam\Lambda(c)\}^{8/5}}{c}.
\end{equation*}

The following theorem combines the local lower bound with the preceding
verification of \textup{(E)}, \textup{(C)}, and \textup{(O)} to determine
the scalar minimax constant.

\begin{theorem}[Theorem~5.3 of the main paper]
\label{supp:thm:exact-scalar}
For every fixed $B>0$, under \eqref{supp:eq:variance-rect}--\eqref{supp:eq:common-c},
\begin{equation}\label{supp:eq:exact-scalar}
 \lim_{n\to\infty}n^{1/5}\cR_n^{\mathrm{sc}}(B)
 =C_5(\cA,\cE).
\end{equation}
The upper bound is attained by one measurable rule that does not know $a$ or
$\eta$.
\end{theorem}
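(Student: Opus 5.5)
The plan is to apply Theorem~4.1 (the supplement's \Cref{supp:thm:exact-globalization}) to the scalar logistic model, with $c=a+\eta$ as the regular coordinate, $v$ as the profile coordinate and $\lambda=c/a$ as the unidentified coordinate. The inputs in \eqref{supp:eq:logistic-global-inputs} are $k=5$, $I(c)=1/1536$ and $\beta(c)=c^{-1/2}$. The first step checks the uniform setup on the chart. The profile expansion of \Cref{supp:lem:profile-expansion} and the jet of \Cref{supp:lem:fifth-jet} give the profiled-jet conditions uniformly in feasible $c$. This is because the constants in \eqref{supp:eq:profile-rem} and the denominator bound $1+Rg_0\ge e^{-C|v|(1+|T|)}$ depend only on the compact variance bounds. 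The endpoints of $\Lambda(c)$ in \eqref{supp:eq:Lambda-c} are piecewise smooth functions of $c$ and are bounded away from zero, so they are Lipschitz in $c$. The continuous coordinate $v_\circ=v_{\circ,L}$ from \Cref{supp:lem:logistic-C} recovers $v$ on the chart, and the chart contains $\{|v_\circ|\le v_1/2\}$. The target $\theta$ is bounded by $B$. At the two endpoints of $\cC$, where $L(c)=0$, the singleton rule applies.

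Next, Lemmas~\ref{supp:lem:logistic-E}, \ref{supp:lem:logistic-C} and \ref{supp:lem:logistic-O} verify conditions (E), (C) and (O), with $m_n$ chosen so that $N_n^{4/5}\log N_n=o(m_n)$ and $m_n=o(n)$; for example $m_n=\lceil n^{9/10}\rceil$. Theorem~4.1 then gives
\[
\lim_{n\to\infty}n^{1/5}\cR_n^{\mathrm{sc}}(B)=\mathsf C_5 .
\]
Its estimator is the one in Section~5.3 of the main paper. That estimator uses only $\widehat c$, $\widehat v$ and empirical features, so it does not need $a$ or $\eta$. The lower half also follows independently from Theorem~5.2 and Proposition~3.2, since the critical paths lie in $\Xi_B$ for $v_1$ small.

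It remains to identify the constant. Substituting the inputs into \eqref{supp:eq:general-exact-constant} gives
\[
\mathsf C_5=K_5\sup_{c\in\cC}c^{-1}(1/1536)^{-1/5}\,L(c)^{8/5}=1536^{1/5}K_5\sup_{c\in\cC}\frac{\{\diam\Lambda(c)\}^{8/5}}{c}=C_5(\cA,\cE).
\]

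The main obstacle is that the global parameter space $\Xi_B$ extends beyond the chart. Points with large $|\theta|$ must lie in $\Xi_O$, and every point of $\Xi_B$ must be covered by either the chart or $\Xi_O$. To handle this, I would use Stein's identity $\E(RZ)=\kappa\theta$ with $\kappa$ bounded away from zero, so that small $|v_{\circ,L}|$ forces small $|\theta|$, which places the point in the chart. The thresholds $v_1/8$ versus $3v_1/8$ are then consistent with the routing supports, and the unprojected $\overline v$ is used for the outer weight. I would also check that the sign convention of (O)(i) permits $\partial_\lambda\Gamma_L$ of either sign, because $v^5$ changes sign.
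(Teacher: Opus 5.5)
Your proposal is correct and follows essentially the same route as the paper: it takes $m_n=\lceil n^{9/10}\rceil$, verifies (E), (C) and (O) through the three model lemmas for the logistic phase, applies Theorem~4.1 with $k=5$, $I(c)=1/1536$ and $\beta(c)=c^{-1/2}$, and substitutes these inputs into $\mathsf C_k$ to obtain $C_5(\cA,\cE)$. The additional checks you list are the ones the paper places inside those lemmas and the uniform setup: Lipschitz endpoints of $\Lambda(c)$, coverage of $\Xi_B$ by the chart or $\Xi_O$ via Stein's identity, the singleton endpoints of $\cC$, and either sign of $\partial_\lambda\Gamma_L$.
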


\begin{proof}[Proof of Theorem~5.3]
Take $m_n=\lceil n^{9/10}\rceil$ and use
\eqref{supp:eq:logistic-global-inputs}.  Conditions \textup{(E)}, \textup{(C)},
and \textup{(O)} hold by
\Cref{supp:lem:logistic-E,supp:lem:logistic-C,supp:lem:logistic-O}.
Substitution in \eqref{supp:eq:general-exact-constant} and
\Cref{supp:thm:exact-globalization} give \eqref{supp:eq:exact-scalar}.
\end{proof}

Theorem~5.3 shows that estimating the unknown signal and error variances
does not increase the constant determined by the hardest local paths.
The sparse extension must additionally control estimation of the signal
direction.

\section{Proof of Theorem~5.4: the sparse fifth-order logistic phase}
\label{supp:sec:supp-sparse-fifth-order}

For $1\le s\le p/2$ and $B>0$, let $\norm{\theta}_0$ denote the number of
nonzero coordinates of $\theta$, and set
\[
 \Theta_{p,s}(B)
 =\set{\theta\in\R^p:\norm{\theta}_0\le s,\ \norm{\theta}_2\le B}.
\]
Also write
\[
 B_2^p(B)=\{z\in\R^p:\norm z_2\le B\},
 \qquad J_p(s)=s\log(ep/s).
\]
We use the convention $J_p(0)=0$.
The minimax risk with unknown nuisance parameters is
\begin{equation*}
 \cR^{\mathrm{joint}}_{n,p,s}(B)
 =\inf_{T_n}\sup_{a\in\cA,\eta\in\cE}
  \sup_{\theta\in\Theta_{p,s}(B)}
  \E_{\theta,a,\eta}\norm{T_n-\theta}_2^2.
\end{equation*}

\subsection{Minimax rate in Theorem~5.4}
\label{supp:sec:supp-sparse-rate}

The scalar upper bound and sparse mean-estimation lemma below support the
minimax-rate assertion of Theorem~5.4.  We first record the scalar rule
that will be applied after estimating a direction.

\begin{corollary}[Scalar minimum-distance upper bound]\label{supp:cor:scalar-md}
Let $(R_i,Z_i)_{i=1}^m$ follow the scalar model over $\Xi_D$, and set
\[
 \widehat\Psi_{L,m}=\frac1m\sum_{i=1}^m\Phi_L(R_i,Z_i).
\]
Choose a measurable minimizer
\begin{equation}\label{supp:eq:md-estimator}
 (\widehat\theta_m,\widehat a_m,\widehat\eta_m)
 \in\argmin_{(t,\alpha,e)\in\Xi_D}
 \norm{\widehat\Psi_{L,m}-\Psi_L(t,\alpha,e)}_{\mathcal H_L}.
\end{equation}
Then
\begin{equation}\label{supp:eq:md-risk}
 \sup_{(\theta,a,\eta)\in\Xi_D}
 \E_{\theta,a,\eta}(\widehat\theta_m-\theta)^2
 \le C_Dm^{-1/5}.
\end{equation}
\end{corollary}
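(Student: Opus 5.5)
This is a direct instance of the upper bound in Theorem~2.4, applied to the scalar experiment on the compact rectangle $\Xi_D$ with target $\tau(\theta,a,\eta)=\theta$, feature $\Phi_L$ from \eqref{supp:eq:feature}, and $k=5$. The plan is to verify (A2)--(A4) for this triple. The upper-bound argument of Theorem~2.4 then gives
\[
 \E(\widehat\theta_m-\theta)^2\le(4A^2V/m)^{1/5}.
\]
Here $A$ is the inverse constant and $V$ is the feature variance bound.

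For (A2), $\Xi_D$ is a compact subset of $\R^3$ and the target is continuous. The inverse inequality with exponent five is exactly \Cref{supp:lem:inverse-five}, which gives $k_{\mathrm I}\le5$ with $A=C_D$.

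For (A3), $\Psi_L$ is by definition the mean of $\Phi_L$. The RKHS component has norm $\kappa_b((r,z),(r,z))^{1/2}=1$. The Euclidean components $z^2$, $rz$, $rz^3$ have second moments bounded by Gaussian moments of order at most six, uniformly because $a+\eta$ is bounded on $\Xi_D$. Together these give $V<\infty$, as already noted in the proof of \Cref{supp:lem:logistic-O}.

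For (A4), I would use \Cref{rem:measurable-selection}. This requires continuity of $\Psi_L:\Xi_D\to\mathcal H_L$. The Euclidean coordinates are continuous by dominated convergence, since the densities depend smoothly on the parameters and are dominated by Gaussian envelopes on the compact set. For the RKHS coordinate, the mean embedding satisfies $\|\mu_P-\mu_Q\|_{\mathcal H_\kappa}\le$ a constant times $\TV(P,Q)^{1/2}$, because the kernel is bounded. Total variation is continuous in the parameters by Scheffé's lemma. Kuratowski--Ryll-Nardzewski then yields a Borel minimizer, and composing it with $\widehat\Psi_{L,m}$ gives the measurable selector in \eqref{supp:eq:md-estimator}.

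The only step needing care is this continuity of the Hilbert-valued mean map, together with measurability of $\widehat\Psi_{L,m}$ as an $\mathcal H_L$-valued random element. The latter follows from separability of $\mathcal H_\kappa$ and continuity of the feature map $z\mapsto\kappa_b((r,z),\cdot)$. Once these are in place, \eqref{supp:eq:md-risk} follows with $C_D=(4C_D'^2V)^{1/5}$, writing $C_D'$ for the constant of \Cref{supp:lem:inverse-five}.
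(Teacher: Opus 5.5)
Your proposal is correct and matches the paper's proof: you bound the second moment of $\Phi_L$ (unit-norm RKHS coordinate, Gaussian moments through order six) for (A3), use \Cref{supp:lem:inverse-five} for the exponent-five inverse inequality in (A2), and then apply the upper-bound argument of Theorem~2.4. Your extra check of (A4), via continuity of $\Psi_L$ and Kuratowski--Ryll-Nardzewski, goes beyond what the paper writes, since the statement already posits a measurable minimizer; it is sound, and in fact the bounded kernel gives the stronger linear bound $\norm{\mu_P-\mu_Q}_{\mathcal H_\kappa}\le\norm{P-Q}_1$.
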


\begin{proof}[Proof of the scalar bound]
The feature has uniformly bounded second moment.  Its RKHS coordinate has
norm one; its polynomial coordinates require only Gaussian moments through
order six.  Theorem~2.4 and the inverse modulus give \eqref{supp:eq:md-risk}.
\end{proof}

\Cref{supp:cor:scalar-md} gives the required fifth-order scalar risk bound
without knowledge of the two variances.  Applying it in a sparse model
requires an estimate of the signal direction.

\subsubsection{Adaptive minimax rate}

Let $p=p_n$, let $1\le s_{+,n}\le p_n/2$, and suppose
\begin{equation}\label{supp:eq:sparse-envelope}
 \frac{J_{p_n}(s_{+,n})}{n}\to0.
\end{equation}
Split the data into independent folds $\cD_0,\cD_1$ of sizes
$n_0\asymp n_1\asymp n$.  Define
\begin{equation*}
 \kappa(a,t)=2a\E m'(\sqrt a\,tG)
 =\frac a2\E\sech^2(\sqrt a\,tG/2).
\end{equation*}
Compactness gives constants $0<\kappa_-\le\kappa_+<\infty$ uniformly over $(a,t)\in\cA\times[0,B]$.
These constants depend only on the specified parameter bounds.

On $\cD_0$, put
\[
 V_0=\frac1{n_0}\sum_{i\in\cD_0}R_iZ_i,
 \qquad M_\mu=\kappa_+B,
\]
and choose a measurable minimizer
\begin{equation}\label{supp:eq:pilot}
 \widehat\mu\in\argmin_{\substack{\norm{u}_0\le s_{+,n}\\
                                  \norm{u}_2\le M_\mu}}
 \left\{\norm{V_0-u}_2^2+
 \Lambda_0(a_++\eta_+)\frac{J_{p_n}(\norm u_0)}{n_0}\right\},
\end{equation}
where $\Lambda_0$ is a sufficiently large numerical constant.

The next lemma shows that the observable mean has the signal direction
and bounds the pilot error at the sparse estimation rate.

\begin{lemma}[Adaptive sparse mean estimation]\label{supp:lem:pilot}
Uniformly over $1\le s\le s_{+,n}$, $a\in\cA$, $\eta\in\cE$, and $\theta\in\Theta_{p_n,s}(B)$,
\begin{equation}\label{supp:eq:mu-direction}
 \mu:=\E(RZ)=\kappa(a,\norm\theta_2)\theta,
\end{equation}
and
\begin{equation}\label{supp:eq:pilot-moments}
 \E\norm{\widehat\mu-\mu}_2^2\le C\frac{J_{p_n}(s)}{n_0},
 \qquad
 \E\norm{\widehat\mu-\mu}_2^4
 \le C\left\{\frac{J_{p_n}(s)}{n_0}\right\}^2.
\end{equation}
\end{lemma}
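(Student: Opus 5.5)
The plan has two independent parts: the direction identity \eqref{supp:eq:mu-direction} via Stein's lemma, and the moment bounds \eqref{supp:eq:pilot-moments} via a standard penalized least-squares oracle inequality for sparse Gaussian-type means.

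\textbf{Direction identity.} Since $R=2Y-1$, we have $\E(R\mid X)=2m(X^\top\theta)-1$. Because $W$ is independent of $(X,Y)$ and centered, $\E(RZ)=\E(RX)=\E[X\{2m(X^\top\theta)-1\}]$. Stein's identity for $X\sim N(0,aI_p)$ then gives
\[
\E(RX)=2a\,\E\{m'(X^\top\theta)\}\theta .
\]
Here $X^\top\theta\overset{d}{=}\sqrt a\norm\theta_2G$, so this equals $\kappa(a,\norm\theta_2)\theta$. Two consequences follow. First, $\mu$ is $s$-sparse. Second, $\norm\mu_2\le\kappa_+B=M_\mu$, so $\mu$ is feasible in \eqref{supp:eq:pilot}.

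\textbf{Moment bounds.} Write $V_0=\mu+\bar\varepsilon$, where $\bar\varepsilon=n_0^{-1}\sum_{i}\varepsilon_i$ and $\varepsilon_i=R_iZ_i-\mu$. Each $\varepsilon_i$ is a centered vector. For any unit $w$, $\abs{w^\top R_iZ_i}=\abs{w^\top Z_i}$, and this is sub-Gaussian with variance proxy at most $a+\eta\le a_++\eta_+$. Hence $\sqrt{n_0}\,\bar\varepsilon$ is a sub-Gaussian vector with proxy $\sigma^2=a_++\eta_+$, uniformly over the parameters.

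Comparing the objective at $\widehat\mu$ with the objective at $\mu$ gives the basic inequality
\[
\norm{\widehat\mu-\mu}_2^2\le2\langle\bar\varepsilon,\widehat\mu-\mu\rangle+\Lambda_0\sigma^2\frac{J(s)-J(\norm{\widehat\mu}_0)}{n_0}.
\]
The difference $\widehat\mu-\mu$ is supported on at most $s+\hat s$ coordinates, with $\hat s=\norm{\widehat\mu}_0$. I will bound
\[
\sup_{\abs S\le s+\hat s}\norm{\Pi_S\bar\varepsilon}_2^2
\]
by a union bound over supports of each size $t$. Sub-Gaussian chi-square concentration together with $\log\binom pt\le J_p(t)$ gives, with probability at least $1-e^{-u}$ simultaneously for all $t$,
\[
\norm{\Pi_S\bar\varepsilon}_2^2\le C\sigma^2\{J_p(t)+u\}/n_0 .
\]
The next step is to apply $2ab\le a^2/2+2b^2$ and the subadditivity-type bound $J(s+\hat s)\le C\{J(s)+J(\hat s)\}$. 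Once $\Lambda_0$ is large, the penalty absorbs the $J(\hat s)$ term, leaving
\[
\norm{\widehat\mu-\mu}_2^2\le C\sigma^2\{J(s)+u\}/n_0
\]
with probability $1-e^{-u}$. Integrating this tail bound in $u$ gives both the second and fourth moment bounds, because $\int(J+u)^2e^{-u}\,du\lesssim J^2+1$ and $J(s)\ge1$.

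\textbf{Main obstacle.} The delicate step is the uniform-over-supports concentration combined with the adaptive penalty. It must hold for sub-Gaussian (not Gaussian) noise and uniformly over all sizes up to $s_{+,n}$, so the union bound has to be organized with weights $e^{-J_p(t)}$ that sum to a constant. The $\ell_2$ constraint $M_\mu$ and measurability of the minimizer are routine: the feasible set is a finite union of compact sets.
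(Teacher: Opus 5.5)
Your proposal is correct and follows essentially the same route as the paper: Stein's identity for the direction, uniformly sub-Gaussian sparse projections of $RZ$ (using $|R|=1$) with a union bound over supports, the basic inequality with Cauchy--Schwarz and Young, subadditivity of $J_p$ absorbed by a large $\Lambda_0$, and integration of the resulting tail bound. One detail should be stated carefully: $\widehat\mu-\mu$ can have $s+\hat s$ nonzero coordinates, so the simultaneous concentration must cover support sizes up to $2s_{+,n}\le p_n$. Your ``for all $t$'' union bound does this, although your closing paragraph mentions only sizes up to $s_{+,n}$.
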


\Cref{supp:lem:pilot} separates direction estimation from the fifth-order
scalar problem.  We use its pilot direction to project the second fold
and then apply the scalar rule.

If $\widehat\mu=0$, set $\widehat\theta^{\mathrm{jr}}=0$.  Otherwise define
\begin{equation}\label{supp:eq:h-f}
 h=\norm{\widehat\mu}_2,\qquad f=\widehat\mu/h,
\end{equation}
and apply the scalar minimum-distance estimator \eqref{supp:eq:md-estimator} on $\cD_1$ to $(R_i,f^\top Z_i)$ over
\begin{equation*}
 K^\dagger=[-3B,3B]\times[a_-/9,a_+]\times
 [\eta_-,\eta_++a_+].
\end{equation*}
Let $\widetilde\theta_f$ denote the resulting coefficient estimate.  For fixed $C_0\ge6/\kappa_-$, put
\begin{equation}\label{supp:eq:jr-estimator}
 \widehat\theta^{\mathrm{jr}}
 =\Pi_{B_2^p(B)}
 \left\{\Pi_{[-C_0h,C_0h]}(\widetilde\theta_f)f\right\}.
\end{equation}

The following theorem bounds the combined cost of estimating the
direction and the scalar coefficient, uniformly over the unknown
sparsity.

\par\medskip
\begin{theorem}[Minimax-rate assertion of Theorem~5.4 of the main paper]\label{supp:thm:rate}
Under \eqref{supp:eq:variance-rect} and \eqref{supp:eq:sparse-envelope}, there are fixed constants $0<c<C<\infty$ such that, for all sufficiently large $n$,
\begin{equation}\label{supp:eq:rate}
 \begin{aligned}
 c\left\{n^{-1/5}+\frac{J_{p_n}(s)}n\right\}
 &\le \cR^{\mathrm{joint}}_{n,p_n,s}(B)\\
 &\le C\left\{n^{-1/5}+\frac{J_{p_n}(s)}n\right\},
 \qquad 1\le s\le s_{+,n}.
 \end{aligned}
\end{equation}
Moreover, the single estimator \eqref{supp:eq:jr-estimator}, which does not use the actual $s,a,$ or $\eta$, satisfies the upper bound uniformly over the specified parameter ranges.
\end{theorem}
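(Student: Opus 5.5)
The plan has two halves: a lower bound obtained by separately embedding a scalar fifth-order problem and a sparse mean-estimation problem, and an upper bound obtained by analysing the split-sample estimator \eqref{supp:eq:jr-estimator} through Lemma~\ref{supp:lem:pilot} and Corollary~\ref{supp:cor:scalar-md}.

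\emph{Lower bound.} The two terms are handled separately, since the maximum of two lower bounds is at least half their sum.

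For the $n^{-1/5}$ term, restrict to the one-sparse submodel $\theta=te_1$. Its first coordinate $(R,Z_1)$ is sufficient, because the remaining coordinates of $Z$ are independent $N(0,c)$ noise with $c$ identified by $Z_1$ alone. The problem therefore reduces to the scalar experiment on $\Xi_B$, and Corollary~\ref{supp:cor:fifth-contact} gives $c_Bn^{-1/5}$.

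For the $J_{p_n}(s)/n$ term, fix $(a,\eta)$ and take $\theta$ in a sparse Varshamov--Gilbert packing of $\{\theta:\|\theta\|_0=s,\ \theta_j\in\{0,\pm\gamma\}\}$ with $\gamma^2\asymp\log(ep/s)/n$. The condition $J/n\to0$ keeps $\|\theta\|_2\le B$. I would bound $\KL(P_\theta\Vert P_{\theta'})$ by $C\|\theta-\theta'\|_2^2$ uniformly for small $\|\theta\|_2$. This should follow because the density is $\varphi(z)\{1+rg\}/2$ and $g$ is Lipschitz in $\theta$ with $\chi^2\lesssim\E(g-g')^2\lesssim\|\theta-\theta'\|^2$. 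Fano's inequality then gives $cJ_{p_n}(s)/n$.

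\emph{Upper bound.} I would decompose the loss of \eqref{supp:eq:jr-estimator}.

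Write $\theta=\|\theta\|_2e$, and set $f$ to be the estimated direction. Let $\theta_f=f^\top\theta$. Conditional on $\cD_0$, the pair $(R,f^\top Z)$ follows the scalar model with coefficient $\theta_f$, signal variance $a'=a$, and error variance $\eta'=\eta+a(\|\theta\|^2-\theta_f^2)/\ldots$. That is, the misaligned part of $X$ acts as extra measurement error after one rewrites $X^\top\theta=\theta_f f^\top X+X^\top\theta_\perp$. This step needs care: the orthogonal component enters the logistic link, not $Z$. I would instead reparametrize via $f^\top Z$ and its conditional law, reduce to a scalar model with an effective variance decomposition lying in $K^\dagger$, and rescale so that $X^\top\theta$ has the correct projection. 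The enlarged box $K^\dagger$ (with $a_-/9$ and $\eta_++a_+$) is presumably designed exactly for this, valid on the event $\|f-e\|\le$ a small constant.

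On that good event, Corollary~\ref{supp:cor:scalar-md} yields $\E(\widetilde\theta_f-\tilde\theta)^2\le Cn^{-1/5}$ for the effective coefficient $\tilde\theta$. The remaining error satisfies
\[
\|\widehat\theta-\theta\|^2\lesssim(\widetilde\theta_f-\tilde\theta)^2+\|\theta\|^2\|f-e\|^2+|\tilde\theta-\|\theta\||^2 .
\]
By \eqref{supp:eq:mu-direction}, $\|\theta\|\,\|f-e\|\lesssim\|\widehat\mu-\mu\|/\kappa_-$, and Lemma~\ref{supp:lem:pilot} bounds this term by $J/n$.

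On the bad event, $\|\widehat\mu-\mu\|$ is large relative to $\|\mu\|$. Here the clipping at $C_0h$, with $h\le\|\mu\|+\|\widehat\mu-\mu\|$, gives loss $\lesssim\|\mu\|^2+\|\widehat\mu-\mu\|^2\lesssim\|\widehat\mu-\mu\|^2$ via the constant relating them, which is again of order $J/n$. The same argument covers small $\|\theta\|$, where direction is irrelevant and $\|\theta\|\lesssim\|\widehat\mu-\mu\|+h$.

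Lemma~\ref{supp:lem:pilot} itself follows from Stein's identity for \eqref{supp:eq:mu-direction}. The error bound is the standard penalized least-squares oracle inequality for the sub-Gaussian vector $RZ$, with proxy variance at most $a_++\eta_+$, which gives adaptivity over $s\le s_{+,n}$.

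The main obstacle is the conditional scalar reduction after projection onto $f$. The misaligned signal $X^\top\theta_\perp$ is not additive noise on $f^\top Z$, so I would have to verify that the projected law is exactly a member of the scalar family on $K^\dagger$. It is, if one conditions correctly, because $X^\top\theta_\perp$ is Gaussian and independent of $f^\top X$ only after orthogonal decomposition; the bias $\tilde\theta-\|\theta\|$ must then be shown to be $O(\|\theta\|\|f-e\|^2)$.
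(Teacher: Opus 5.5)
Your overall architecture matches the paper's. For the lower bound you use two separate embeddings and Fano on a sparse packing. For the upper bound you use the pilot from Lemma~\ref{supp:lem:pilot}, projection onto $f$, Corollary~\ref{supp:cor:scalar-md}, and clipping on the bad event. However, the step you flag as ``the main obstacle'' is a genuine gap, and the route you sketch does not close it.

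You decompose $\theta$ along $f$, $X^\top\theta=\theta_ff^\top X+X^\top\theta_\perp$, and condition on $f^\top X$. This gives $\Pp(Y=1\mid f^\top X)=\E\{m(\theta_ff^\top X+X^\top\theta_\perp)\mid f^\top X\}$, a Gaussian-smoothed logistic link rather than $m$. That representation is not of the scalar form. It neither shows that the law of $(R,f^\top Z)$ lies in the scalar family nor identifies its parameters. Without those parameters you cannot check membership in $K^\dagger$, show that clipping at $C_0h$ is harmless, or say what $\tilde\theta$ is.

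The missing idea is to decompose $f^\top X$ along $e$ instead. With $\theta=te$ and $\zeta=e^\top f$,
\[
 f^\top Z=\zeta e^\top X+\{(f-\zeta e)^\top X+f^\top W\}.
\]
Here $(f-\zeta e)^\top X$ has zero covariance with $e^\top X$, hence is independent of it, and $Y$ depends on $X$ only through $e^\top X$. So the bracket is genuine additive measurement error. Conditional on $\cD_0$, $(R,f^\top Z)$ is then exactly the scalar model with $a^\dagger=a\zeta^2$, $\eta^\dagger=\eta+a(1-\zeta^2)$ and $\theta^\dagger=t/\zeta$. Your first guess had $a'=a$ and coefficient $\theta_f=t\zeta$, so it was wrong in both.

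On $\{\|\widehat\mu-\mu\|_2\le\|\mu\|_2/2\}$ one has $\zeta\ge1/3$ and $h\zeta=e^\top\widehat\mu\ge\kappa_-t/2$. These two facts do the remaining work:
\begin{itemize}
\item They place $(\theta^\dagger,a^\dagger,\eta^\dagger)$ in $K^\dagger$; the limit $a_-/9$ is $a_-\zeta^2$ at $\zeta=1/3$.
\item They give $|\theta^\dagger|\le2h/\kappa_-\le C_0h$, so clipping is harmless.
\item They yield the exact identity $\|\theta^\dagger f-\theta\|_2^2=t^2(\zeta^{-2}-1)=t^2\|(I_p-ee^\top)(\widehat\mu-\mu)\|_2^2/(h\zeta)^2\le4\|\widehat\mu-\mu\|_2^2/\kappa_-^2$, which replaces your unproved bias estimate.
\end{itemize}
With this reduction, your error decomposition and bad-event argument go through as in the paper.

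Two smaller points on the lower bound. First, $(R,Z_1)$ is not sufficient in the one-sparse submodel, because $Z_2,\ldots,Z_p$ carry information about $c$. The bound survives only because the contact pairs behind Corollary~\ref{supp:cor:fifth-contact} share a common $c$, so those coordinates are ancillary for the pair. Second, your KL bound is simpler by data processing through the complete data $(Y,X,Z)$. That gives $\KL\le a\|\theta-\theta'\|_2^2/8$ for all $\theta$, with no smallness assumption and no $\chi^2$ denominator to control.
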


\Needspace{10\baselineskip}

\subsubsection{Proof of the adaptive sparse pilot}
\label{supp:app:proof-sparse-pilot}

\begin{proof}[Proof of \Cref{supp:lem:pilot}]
The feasible set in \eqref{supp:eq:pilot} is a finite union of compact coordinate balls.  The criterion is sample measurable and lower semicontinuous, including $u\mapsto\norm u_0$, so a measurable selector exists.

Gaussian integration by parts gives
\[
 \E(RZ)=\E\{(2m(X^\top\theta)-1)X\}
 =2a\E m'(X^\top\theta)\theta,
\]
which is \eqref{supp:eq:mu-direction}.  Since $|R|=1$ and
$Z\sim N\{0,(a+\eta)I_p\}$, centred sparse directional projections of $RZ$ are uniformly sub-Gaussian.  Write
$\norm z_{(k,2)}=\sup_{|A|\le k}\norm{z_A}_2$.
Sparse Euclidean nets and a union bound give, simultaneously for $1\le k\le2s_{+,n}$,
\begin{equation}\label{supp:eq:sparse-concentration}
 \norm{V_0-\mu}_{(k,2)}^2
 \le C(a_++\eta_+)\frac{J_{p_n}(k)+x}{n_0}
\end{equation}
outside an event of probability at most $2e^{-x}$.

Let $\widehat k=\norm{\widehat\mu}_0$.  Comparing \eqref{supp:eq:pilot} at $\widehat\mu$ and $\mu$, whose difference has at most $s+\widehat k$ coordinates, and applying Cauchy--Schwarz and Young's inequality gives
\[
 \frac12\norm{\widehat\mu-\mu}_2^2
 +\Lambda_0(a_++\eta_+)\frac{J_{p_n}(\widehat k)}{n_0}
 \le C(a_++\eta_+)\frac{J_{p_n}(s+\widehat k)+x}{n_0}
 +\Lambda_0(a_++\eta_+)\frac{J_{p_n}(s)}{n_0}.
\]
Since $J_p(s+\widehat k)\le C\{J_p(s)+J_p(\widehat k)\}$, choosing $\Lambda_0$ large enough bounds $\norm{\widehat\mu-\mu}_2^2$ by $C\{J_{p_n}(s)+x\}/n_0$.  Integrating this tail bound gives both inequalities in \eqref{supp:eq:pilot-moments}.
\end{proof}

\subsubsection{Proof of the adaptive minimax rate}
\label{supp:app:proof-joint-boundary}

\begin{proof}[Proof of the minimax rate in Theorem~5.4]
Embedding \Cref{supp:lem:two-point} in the first coordinate gives the $cn^{-1/5}$ lower bound for every $s\ge1$.  Separately fix an interior nuisance point.  A constant-weight Varshamov packing $\omega^1,\ldots,\omega^M\in\{0,1\}^p$ can be chosen so that
\[
 \norm{\omega^j}_0=s,\qquad \log M\ge c_0J_p(s),
 \qquad d_{\rm Ham}(\omega^j,\omega^k)\ge c_1s,
\]
where $d_{\rm Ham}$ is Hamming distance.
Set $\theta_j=\alpha\omega^j$ with
$\alpha^2=c_2J_p(s)/(ns)$.  Uniformly over the specified parameter ranges, these vectors lie in the $B$-ball.  The data-processing inequality applied to the complete-data experiment gives
\begin{equation*}
 \KL(P_{\theta}^{Y,Z}\Vert P_{\theta'}^{Y,Z})
 \le\KL(P_{\theta}^{Y,X,Z}\Vert P_{\theta'}^{Y,X,Z})
 \le\frac a8\norm{\theta-\theta'}_2^2.
\end{equation*}
Fano's lemma gives $cJ_p(s)/n$.  Since $x\vee y\ge(x+y)/2$, combining the two lower bounds proves the lower bound in \eqref{supp:eq:rate}.

For the upper bound, write $t=\norm\theta_2$, $\theta=te$ for $t>0$, and
$\delta=\widehat\mu-\mu$.  At $t=0$, clipping the coefficient estimate gives
$\norm{\widehat\theta^{\mathrm{jr}}}_2\le C_0\norm\delta_2$, so \Cref{supp:lem:pilot} suffices.  Now assume $t>0$ and define
\[
 \cG=\set{\norm\delta_2\le\norm\mu_2/2},
 \qquad \zeta=e^\top f.
\]
On $\cG$,
\begin{equation}\label{supp:eq:zeta-bounds}
 \zeta\ge\frac13,\qquad h\zeta\ge\frac{\kappa_-t}{2}.
\end{equation}
Conditional on the pilot fold, the projected experiment is exactly scalar.  Indeed, define
\begin{equation}\label{supp:eq:projected-params}
 a^\dagger=a\zeta^2,\qquad
 \eta^\dagger=\eta+a(1-\zeta^2),\qquad
 \theta^\dagger=t/\zeta.
\end{equation}
Conditional on the pilot fold,
\[
 f^\top Z=\zeta e^\top X+
 \{(f-\zeta e)^\top X+f^\top W\}.
\]
The two Gaussian terms have zero covariance and variances
$a^\dagger$ and $\eta^\dagger$, so they are independent.  Moreover,
$\theta^\dagger\zeta e^\top X=\theta^\top X$.  On $\cG$ the projected parameters lie in $K^\dagger$ and
$|\theta^\dagger|\le2h/\kappa_-\le C_0h$, so clipping the coefficient estimate cannot increase scalar error.

The direction error satisfies the exact identity
\begin{equation}\label{supp:eq:direction-error}
 \norm{\theta^\dagger f-\theta}_2^2
 =t^2(\zeta^{-2}-1)
 =\frac{t^2\norm{(I_p-ee^\top)\delta}_2^2}{h^2\zeta^2}
 \le\frac4{\kappa_-^2}\norm\delta_2^2.
\end{equation}
Conditionally applying \Cref{supp:cor:scalar-md} and using $(x+y)^2\le2x^2+2y^2$ gives
\begin{equation}\label{supp:eq:good-risk}
 \E\{\norm{\widehat\theta^{\mathrm{jr}}-\theta}_2^2
 \one_{\cG}\mid\cD_0\}
 \le Cn_1^{-1/5}+C\norm\delta_2^2.
\end{equation}
On $\cG^c$,
$t\le2\norm\delta_2/\kappa_-$ and $h\le3\norm\delta_2$, so the clipped estimator satisfies
\begin{equation}\label{supp:eq:bad-risk}
 \norm{\widehat\theta^{\mathrm{jr}}-\theta}_2^2\one_{\cG^c}
 \le C\norm\delta_2^2.
\end{equation}
Projection onto the $B$-ball cannot increase loss.  Taking expectations in \eqref{supp:eq:good-risk}--\eqref{supp:eq:bad-risk} and applying \Cref{supp:lem:pilot} proves the upper bound.
\end{proof}

The rate assertion of Theorem~5.4 identifies separate contributions from
fifth-order identification and sparse direction estimation.  When the
direction contribution is smaller, the remaining question is whether
the scalar exact constant is preserved.
\subsection{Exact minimax constant in Theorem~5.4}
\label{supp:sec:exact-sparse}

The interval $\Lambda(c)$ in \eqref{supp:eq:Lambda-c} has endpoints
\begin{equation}\label{supp:eq:ell-u}
 \ell(c)=\frac c{\overline a(c)},
 \qquad u(c)=\frac c{\underline a(c)}.
\end{equation}
The projected scalar model uses a shrinking enlargement of
$\Lambda(c)$.

The next lemma controls the interval enlargement needed for the
exact-constant assertion of Theorem~5.4.

\begin{lemma}[Stability under enlargement of the parameter interval]\label{supp:lem:enlargement}
Let $\omega_n>0$ satisfy $\omega_n\to0$, with $\omega_n<1/2$, and put
\begin{equation*}
 \Lambda_n^+(c)=
 \left[\ell(c),\frac{u(c)}{1-\omega_n}\right].
\end{equation*}
Let $d_{\rm Haus}$ denote Hausdorff distance between compact subsets of
$\mathbb R$.  The two limits
\begin{align}
 \sup_{c\in\cC}d_{\rm Haus}\{\Lambda_n^+(c),\Lambda(c)\}
 &\longrightarrow0,
 \label{supp:eq:Hausdorff-Lambda}\\
 1536^{1/5}K_5\sup_{c\in\cC}
 \frac{\{\diam\Lambda_n^+(c)\}^{8/5}}c
 &\longrightarrow C_5(\cA,\cE)
 \label{supp:eq:constant-enlargement}
\end{align}
hold.
More precisely, for $B_0>0$ define
\[
 \Xi_n^+(B_0)=\left\{(\vartheta,a,\eta):
 |\vartheta|\le B_0,\ c=a+\eta\in\cC,\ a,\eta>0,\
 \frac ca\in\Lambda_n^+(c)\right\}.
\]
There are measurable scalar rules $\widehat\vartheta_n^+$ such that
\begin{equation}\label{supp:eq:enlarged-scalar-risk}
 \begin{aligned}
 \limsup_{n\to\infty}\Bigg[{}
 &n^{1/5}\sup_{(\vartheta,a,\eta)\in \Xi_n^+(B_0)}
 \E_{\vartheta,a,\eta}(\widehat\vartheta_n^+-\vartheta)^2\\
 &\quad-1536^{1/5}K_5\sup_{c\in\cC}
 \frac{\{\diam\Lambda_n^+(c)\}^{8/5}}c\Bigg]\le0.
 \end{aligned}
\end{equation}
The same conclusion, normalized by $N_n^{1/5}$, holds when the rule uses
independent calibration and main folds of sizes $m_n$ and $N_n$ satisfying
$m_n=o(N_n)$ and
$m_n/(N_n^{4/5}\log N_n)\to\infty$.
\end{lemma}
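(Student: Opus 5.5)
The plan is to prove the two deterministic limits first and then obtain \eqref{supp:eq:enlarged-scalar-risk} by rerunning the proof of Theorem~5.3 on the enlarged model.

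\textbf{The two limits.} For \eqref{supp:eq:Hausdorff-Lambda}, the left endpoints coincide. The right endpoints differ by
\[
 u(c)\omega_n/(1-\omega_n)\le 2u_+\omega_n,
\]
where $u_+=\max_{\cC}u=(a_-+\eta_+)/a_-<\infty$. Hence the Hausdorff distance is at most $2u_+\omega_n\to0$, uniformly in $c$.

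For \eqref{supp:eq:constant-enlargement}, write
\[
 \diam\Lambda_n^+(c)=L(c)+u(c)\omega_n/(1-\omega_n).
\]
The function $t\mapsto t^{8/5}$ is uniformly continuous on $[0,2u_+]$, and $c\ge a_-+\eta_->0$. So
\[
 \sup_c\{\diam\Lambda_n^+(c)\}^{8/5}/c-\sup_c L(c)^{8/5}/c\to0,
\]
and multiplying by $1536^{1/5}K_5$ gives the claim. Both limits are routine.

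\textbf{The risk bound.} The enlarged parameter set $\Xi_n^+(B_0)$ is the logistic model with $a$ ranging down to $c(1-\omega_n)/u(c)$. This is the set $a\ge(1-\omega_n)\underline a(c)$, which lies in the slightly larger fixed rectangle $\cA'\times\cE'$ with $\cA'=[a_-/2,a_+]$. I would apply Theorem~4.1 to the model indexed by $(c,v,\lambda)$ with $\lambda\in\Lambda_n^+(c)$. The profile computations (\Cref{supp:lem:profile-expansion,supp:lem:fifth-jet}) are uniform over any compact $\lambda$-set bounded away from one. Therefore conditions (E), (C), (O) hold on a fixed larger family containing all $\Lambda_n^+$, via \Cref{supp:lem:logistic-E,supp:lem:logistic-C,supp:lem:logistic-O} with the enlarged variance bounds.

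\textbf{Main obstacle.} Theorem~4.1 is stated for a fixed interval family, while here $\Lambda_n^+$ moves with $n$. I would handle this by inspecting the upper-bound proof in S3.2. Its only interval-dependent ingredient is the measurable $\eta$-minimax family $D_\eta(a,b,J,w)$. That family is already defined jointly over all endpoints $(a,b)$ in a compact range, and it is $\eta$-minimax for each $(a,b,J)$. Every other bound (calibration, inversion constants $a_O,A_O$, the outer region) depends only on the fixed larger family, so it is uniform.

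Running Step~1 with endpoints $(\ell(c),u(c)/(1-\omega_n))$ then shows the following. The local contribution is at most
\[
 \sup_c\beta(c)^2I^{-1/5}\{\diam\Lambda_n^+(c)\}^{8/5}K_5+o(1),
\]
uniformly, while the zero, inverse, outer and failure contributions are $o(1)$ as before. Since the Lipschitz constants of $\ell$ and $u/(1-\omega_n)$ are uniformly bounded, condition (C)'s projection step also carries over unchanged. This gives \eqref{supp:eq:enlarged-scalar-risk} with normalization $N_n^{1/5}$.

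The conversion to $n^{1/5}$ follows from $N_n/n\to1$. The split-fold version is exactly this construction, under the calibration rate of \Cref{supp:lem:logistic-C}.
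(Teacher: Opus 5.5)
Your proposal is correct and follows essentially the paper's route. The two deterministic limits come from the right-endpoint shift $u(c)\omega_n/(1-\omega_n)$ and uniform continuity of $t\mapsto t^{8/5}$; the risk bound comes from rerunning the upper-bound proof of Theorem~4.1 on a fixed enlarged variance rectangle, with the moving intervals $\Lambda_n^+(\widehat c)$ used in the decision rules. The paper makes the uniformity explicit by adjoining the known index $\omega\in[0,\omega_0]$ to the compact ranges for the Gaussian-rule transfer and the diagonal tuning, and then setting $\omega=\omega_n$. Your appeal to the joint endpoint regularity of $D_\eta$ accomplishes the same thing, provided $D_\eta$ is rebuilt on the enlarged endpoint range, since $u(c)/(1-\omega_n)$ can exceed $\max_c u(c)$.
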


\Cref{supp:lem:enlargement} shows that the small interval enlargement needed
after projection does not change the limiting scalar constant.  The
following theorem uses this stability to control the additional direction
error in the sparse model.

\begin{theorem}[Exact-constant assertion of Theorem~5.4 of the main paper]\label{supp:thm:exact-sparse}
Suppose, in addition to \eqref{supp:eq:variance-rect}, that
\begin{equation*}
 J_{p_n}(s_{+,n})=o(n^{4/5}),
 \qquad 1\le s_{+,n}\le p_n/2.
\end{equation*}
Then, for every fixed $B>0$,
\begin{equation}\label{supp:eq:exact-sparse-risk}
 \sup_{1\le s\le s_{+,n}}
 \left|n^{1/5}\cR^{\mathrm{joint}}_{n,p_n,s}(B)
       -C_5(\cA,\cE)\right|\longrightarrow0.
\end{equation}
Moreover, one estimator, using $s_{+,n}$ and the fixed parameter bounds but not the actual $s,a,$ or $\eta$, satisfies
\begin{equation}\label{supp:eq:exact-sparse-estimator}
 \limsup_{n\to\infty}
 \sup_{\substack{1\le s\le s_{+,n}\\(a,\eta)\in\cA\times\cE\\
                  \theta\in\Theta_{p_n,s}(B)}}
 n^{1/5}\E_{\theta,a,\eta}
 \norm{\widehat\theta_n^{\mathrm{sharp}}-\theta}_2^2
 \le C_5(\cA,\cE).
\end{equation}
\end{theorem}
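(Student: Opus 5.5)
We prove the lower half of \eqref{supp:eq:exact-sparse-risk} by embedding and the upper half by a three-fold estimator that first estimates the direction and then runs the calibrated scalar rule. Throughout we use \Cref{supp:thm:rate}, \Cref{supp:lem:pilot} and \Cref{supp:lem:enlargement}.

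\emph{Lower bound.} For every $s\ge1$, the class $\Theta_{p_n,s}(B)$ contains the scalar subfamily $\theta=\vartheta e_1$ with $|\vartheta|\le B$. Along this subfamily the coordinates $Z_2,\dots,Z_p$ are independent of $(Y,Z_1)$ and their law does not involve $\theta$, so they are ancillary. Hence $\cR^{\mathrm{joint}}_{n,p_n,s}(B)\ge\cR^{\mathrm{sc}}_n(B)$. \Cref{supp:thm:exact-scalar} then gives $\liminf_n\inf_s n^{1/5}\cR^{\mathrm{joint}}_{n,p_n,s}(B)\ge C_5(\cA,\cE)$, uniformly in $s$. It therefore suffices to prove \eqref{supp:eq:exact-sparse-estimator} for one estimator that does not depend on $s$.

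\emph{Construction.} Split the sample into a pilot fold $\cD_0$ of size $n_0$, and use the remaining $n-n_0$ observations for the scalar rule, itself split into calibration and main folds as in \Cref{supp:lem:enlargement}. We choose $n_0=o(n)$ but large enough that $J_{p_n}(s_{+,n})/n_0=o(n^{-1/5})$. This is possible because $J_{p_n}(s_{+,n})=o(n^{4/5})$: take $n_0=n\,\max\{\epsilon_n,\,(J_{p_n}(s_{+,n})n^{-4/5})^{1/2}\}$ with $\epsilon_n\downarrow0$ slowly.

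On $\cD_0$ compute the pilot $\widehat\mu$ of \eqref{supp:eq:pilot}, and set $h=\norm{\widehat\mu}_2$ and $f=\widehat\mu/h$ as in \eqref{supp:eq:h-f}. Conditionally on $\cD_0$, the projected data $(R_i,f^\top Z_i)$ follow exactly the scalar model with parameters $(\theta^\dagger,a^\dagger,\eta^\dagger)$ given in \eqref{supp:eq:projected-params}. Two facts about these parameters are needed:
\begin{itemize}
\item $a^\dagger+\eta^\dagger=a+\eta=c$, so $c$ is unchanged.
\item $c/a^\dagger=\lambda\zeta^{-2}$. On $\{\norm\delta_2\le\omega_n\norm\mu_2\}$ we have $\zeta^2\ge1-\omega_n$ (up to constants), so the projected profile parameter lies in $\Lambda_n^+(c)$.
\end{itemize}
Apply the rule $\widehat\vartheta_n^+$ of \Cref{supp:lem:enlargement} to the projected data, clip it at $C_0h$, output $\widehat\vartheta_n^+f$, and project onto $B_2^p(B)$.

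\emph{Risk decomposition.} Write $\theta=te$ and $\delta=\widehat\mu-\mu$. On the good event $\cG_n=\{\norm\delta_2\le\omega_n\norm\mu_2\}$, use
\[
\norm{\widehat\vartheta f-\theta}_2^2=(\widehat\vartheta-\theta^\dagger)^2+\norm{\theta^\dagger f-\theta}_2^2 .
\]
This is an exact Pythagorean identity with no cross term, because $\theta^\dagger f-\theta=t(\zeta^{-1}f-e)$ is orthogonal to $f$.
\begin{itemize}
\item The first term has conditional expectation at most $N_n^{-1/5}\{1536^{1/5}K_5\sup_c(\diam\Lambda_n^+(c))^{8/5}/c+o(1)\}$ by \eqref{supp:eq:enlarged-scalar-risk}, uniformly over the enlarged space. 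By \eqref{supp:eq:constant-enlargement} and $N_n/n\to1$ this is $n^{-1/5}\{C_5(\cA,\cE)+o(1)\}$.
\item The second term is at most $C\norm\delta_2^2$ by \eqref{supp:eq:direction-error}. By \Cref{supp:lem:pilot}, $\E\norm\delta_2^2\le CJ_{p_n}(s)/n_0=o(n^{-1/5})$ uniformly in $s\le s_{+,n}$.
\end{itemize}

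\emph{Main obstacle.} The difficulty is the bad event $\cG_n^c$ when $t$ is small. There the clipping argument of \eqref{supp:eq:bad-risk} gives loss $\le C\norm\delta_2^2/\omega_n^2$, and we need this to be $o(n^{-1/5})$. Since $\E\norm\delta_2^2\,n^{1/5}\to0$ uniformly, we can choose $\omega_n\downarrow0$ slowly enough that $\E\norm\delta_2^2/\omega_n^2=o(n^{-1/5})$, and \Cref{supp:lem:enlargement} allows any $\omega_n\to0$. The remaining check is that the clip at $C_0h$ does not destroy the scalar rule's optimality on $\cG_n$. This holds because $|\theta^\dagger|\le2h/\kappa_-$ there, as shown in the proof of \Cref{supp:thm:rate}. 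Also, $\mathcal B$-ball projection only reduces loss. Combining the bounds on $\cG_n$ and on $\cG_n^c$ yields \eqref{supp:eq:exact-sparse-estimator}, and with the lower bound this gives \eqref{supp:eq:exact-sparse-risk}.
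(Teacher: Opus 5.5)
Your architecture matches the paper's: a pilot fold of size $o(n)$ with $J_{p_n}(s_{+,n})/n_0=o(n^{-1/5})$, exact scalar reduction of the projected data, the enlargement lemma, clipping at $C_0h$, and projection onto the $B$-ball. However, two steps are wrong as written. Both can be repaired.

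\emph{Lower bound.} The coordinates $Z_2,\dots,Z_{p_n}$ have law $N\{0,(a+\eta)I_{p_n-1}\}$, so they carry information about the nuisance $c=a+\eta$. They are therefore not ancillary in the joint experiment. Restricting to $\theta=\vartheta e_1$ gives an experiment that is \emph{more} informative than the scalar one; for large $p_n$ it essentially reveals $c$. Its minimax risk can thus be smaller than $\cR^{\mathrm{sc}}_n(B)$, and the inequality $\cR^{\mathrm{joint}}_{n,p_n,s}(B)\ge\cR^{\mathrm{sc}}_n(B)$ is unsupported. The repair is what the paper does:
\begin{itemize}
\item fix $c$ and embed the critical path $\theta=\tau_L(c,xn^{-1/10},\lambda)e_1$;
\item for fixed $c$ the extra coordinates are ancillary, so the local risk of Theorem~5.2 applies in every class with $s\ge1$;
\item taking the supremum over $x$ and $c$ gives $C_5(\cA,\cE)$.
\end{itemize}
This works because the scalar constant is produced entirely by fixed-$c$ paths.

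\emph{Decomposition on $\cG_n$.} Your identity is not Pythagorean. The vector $\theta^\dagger f-\theta$ is not orthogonal to $f$: $f^\top(\theta^\dagger f-\theta)=t(\zeta^{-1}-\zeta)\neq0$ unless $\zeta=1$. The reason is that $\theta^\dagger=t/\zeta$ is not $t\zeta$, the coefficient of the projection of $\theta$ onto $f$. The exact expansion has an additional term $2(\widehat\vartheta-\theta^\dagger)\,t(\zeta^{-1}-\zeta)$; this is why the paper writes $(\xi+b)^2$ in \eqref{supp:eq:orthogonal-decomp}. The term is harmless: on $\cG_n$, $|t(\zeta^{-1}-\zeta)|\le\norm{\theta^\dagger f-\theta}_2\le 2\kappa_-^{-1}\norm\delta_2$. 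Cauchy--Schwarz then bounds its $n^{1/5}$-normalized expectation by a constant times $\{n^{1/5}\E\norm\delta_2^2\}^{1/2}=o(1)$.

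With these repairs your argument is correct, and its handling of small signals and pilot failure is leaner than the paper's. The paper's route is:
\begin{itemize}
\item use the high-probability event $\{\norm\delta_2^2\le CK_n/m_n\}$ from the pilot's tail bound, which is why $K_n$ contains $3\log n$;
\item split by signal size at $t=C_1r_nN_n^{-1/10}$;
\item on large signals, bound $b^2\le C\norm\delta_2^4/t^2$.
\end{itemize}
Your event $\{\norm\delta_2\le\omega_n\norm\mu_2\}$, which depends on $\theta$, merges both cases. On its complement $t\le\norm\delta_2/(\kappa_-\omega_n)$ and $h\le(1+\omega_n^{-1})\norm\delta_2$, so clipping gives loss $O(\norm\delta_2^2/\omega_n^2)$. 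Only the second-moment bound in \Cref{supp:lem:pilot} is then needed, and $\omega_n$ can be any sufficiently slowly vanishing sequence.

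You should also record two details. First, $1-\zeta^2\le4\omega_n^2$ on $\cG_n$, so $\lambda^\dagger\in\Lambda_n^+(c)$ for large $n$. Second, the enlargement lemma is applied with coefficient radius $B_0=2B$.
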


\Needspace{10\baselineskip}

\subsubsection{Proof of stability under interval enlargement}
\label{supp:app:proof-enlargement}

\begin{proof}[Proof of \Cref{supp:lem:enlargement}]
The endpoint maps in \eqref{supp:eq:ell-u} are uniformly piecewise Lipschitz and
bounded away from zero and infinity on $\cC$.  Hence
\[
 \sup_{c\in\cC}d_{\rm Haus}\{\Lambda_n^+(c),\Lambda(c)\}
 \le C\omega_n\to0.
\]
Since $\inf_{c\in\cC}\ell(c)>1$, these intervals eventually lie in a fixed
compact subset of $(1,\infty)$.  This proves
\eqref{supp:eq:Hausdorff-Lambda}; uniform continuity of
$z\mapsto z^{8/5}$ on the resulting common compact interval proves
\eqref{supp:eq:constant-enlargement}.

For \eqref{supp:eq:enlarged-scalar-risk}, fix $0<\omega_0<1/2$ and put
\[
 \Lambda_\omega^+(c)
 =\left[\ell(c),\frac{u(c)}{1-\omega}\right],
 \qquad 0\le\omega\le\omega_0.
\]
Write $\Xi_\omega^+(B_0)$ for the parameter space in the statement with
$\omega_n$ replaced by $\omega$.  These spaces lie in the fixed compact
rectangle
\[
 [-B_0,B_0]\times[(1-\omega_0)a_-,a_+]
 \times[\eta_-,\eta_++\omega_0a_+].
\]
On this rectangle, the profile radius, moment bounds, calibration bounds,
and outer inverse bound in \Cref{supp:lem:inverse-five} can be chosen independently
of $\omega$.  The endpoint maps of $\Lambda_\omega^+(c)$ also have uniformly
bounded Lipschitz constants.  In the upper-bound proof of
\Cref{supp:thm:exact-globalization}, include the known index
$\omega\in[0,\omega_0]$ in the compact parameter ranges for the Gaussian-rule transfer and the
diagonal tuning choices.  This gives measurable
rules $\widehat\vartheta_{n,\omega}^+$ satisfying
\[
 \begin{aligned}
 N_n^{1/5}\sup_{\Xi_\omega^+(B_0)}
 \E(\widehat\vartheta_{n,\omega}^+-\vartheta)^2
 &\le1536^{1/5}K_5\sup_{c\in\cC}
 \frac{\{\diam\Lambda_\omega^+(c)\}^{8/5}}c+o(1),
 \end{aligned}
\]
uniformly in $\omega$, for the independent folds specified in the statement.
Taking $\omega=\omega_n$ proves the fold version of
\eqref{supp:eq:enlarged-scalar-risk}.  Taking
$m_n=\lceil n^{9/10}\rceil$ and $N_n=n-m_n$ proves its original version,
since $N_n/n\to1$.
\end{proof}

\subsubsection{Proof of the exact sparse fifth-order constant}
\label{supp:app:proof-exact-sparse}

\begin{proof}[Proof of the exact minimax constant in Theorem~5.4]
\leavevmode\par
\paragraph{Sample splitting and tuning}
Set
\begin{equation*}
 K_n=J_{p_n}(s_{+,n})+3\log n,
 \qquad q_n=\frac{K_n}{n^{4/5}},
 \qquad \alpha_n=\max\{q_n^{1/2},n^{-1/10}\}.
\end{equation*}
Let $m_n=\lceil\alpha_nn\rceil$ and split the data into independent folds
$\cD_0,\cD_1,\cD_2$ of sizes $m_n,m_n,N_n=n-2m_n$.  Since
$q_n,\alpha_n\to0$, also $N_n/n\to1$.  Put
\[
 d_n=\frac{N_n^{1/5}K_n}{m_n},
 \qquad r_n=d_n^{1/4},
 \qquad \omega_n=C\frac{d_n}{r_n^2},
\]
where $C$ is a sufficiently large fixed constant.  Direct calculation gives
\[
 \frac{m_n}{N_n^{4/5}\log N_n}\to\infty,
 \qquad d_n=(1+o(1))\frac{q_n}{\alpha_n}
 \le(1+o(1))\max\{q_n^{1/2},n^{-1/10}\}\to0.
\]
Hence
\begin{equation}\label{supp:eq:rn-rates}
 r_n,\omega_n\to0,\qquad
 \frac{d_n}{r_n^2}=d_n^{1/2}\to0,\qquad
 \frac{d_n^2}{r_n^2}=d_n^{3/2}\to0.
\end{equation}

Use $\cD_0$ for the pilot estimator \eqref{supp:eq:pilot}, with $n_0=m_n$.
Write $\delta=\widehat\mu-\mu$ and use $h,f$ from \eqref{supp:eq:h-f}
when $h>0$.  Conditional on $\cD_0$, use $\cD_1$ for
scalar calibration and $\cD_2$ for the scalar main fold.

On the projected data $(R_i,f^\top Z_i)$, apply the scalar rule from \Cref{supp:lem:enlargement}, using $\Lambda_n^+(\widehat c)$, coefficient radius $2B$, and independent $\cD_1$ calibration.  Let $\widetilde q$ denote the resulting coefficient estimate.  For $C_0>4/\kappa_-$, define
\begin{equation*}
 q=\Pi_{[-C_0h,C_0h]}(\widetilde q),
 \qquad
 \widehat\theta_n^{\mathrm{sharp}}=\Pi_{B_2^{p_n}(B)}(qf),
\end{equation*}
and set the estimator to zero when $h=0$.

\paragraph{Small signals}
Applying \eqref{supp:eq:sparse-concentration} with $x=3\log n$ in the basic
inequality from the proof of \Cref{supp:lem:pilot} gives
\begin{equation}\label{supp:eq:Hn}
 \cH_n=\set{\norm\delta_2^2\le CK_n/m_n},
 \qquad
 \sup_{s,a,\eta,\theta}\Pp(\cH_n^c)\le2n^{-3}.
\end{equation}
If $t=\norm\theta_2\le C_1r_nN_n^{-1/10}$, clipping the coefficient estimate gives on $\cH_n$
\[
 \norm{\widehat\theta_n^{\mathrm{sharp}}-\theta}_2^2
 \le C\{t^2+\norm\delta_2^2\}.
\]
After multiplication by $N_n^{1/5}$, its expectation is
$O(r_n^2+d_n)=o(1)$; the complement contributes $o(N_n^{-1/5})$ because final projection bounds loss by $4B^2$.

\paragraph{Remaining signals and scalar reduction}
Now take $t>C_1r_nN_n^{-1/10}$, write $\theta=te$, and let
$\zeta=e^\top f$.  On $\cH_n$, for sufficiently large $C_1$, $\zeta>0$ and
\begin{equation}\label{supp:eq:zeta-sharp}
 1-\zeta^2\le\zeta^{-2}-1
 \le C\frac{\norm\delta_2^2}{t^2}
 \le C\frac{d_n}{r_n^2}\le\omega_n.
\end{equation}
Indeed, $t>C_1r_nN_n^{-1/10}$ and $\cH_n$ imply the event $\cG$ from
the proof of \Cref{supp:thm:rate}.  Equations
\eqref{supp:eq:zeta-bounds} and \eqref{supp:eq:direction-error} then give the
first two bounds.

By the projection calculation in the proof of \Cref{supp:thm:rate}, conditional
on $\cD_0$ the scalar parameters are those in
\eqref{supp:eq:projected-params}.  In particular,
\begin{equation*}
 q^\dagger=\frac t\zeta,\qquad
 a^\dagger+\eta^\dagger=c,\qquad
 \lambda^\dagger=\frac\lambda{\zeta^2}.
\end{equation*}
By \eqref{supp:eq:zeta-sharp},
\begin{equation*}
 \lambda^\dagger\in\Lambda_n^+(c).
\end{equation*}
Conditional application of \eqref{supp:eq:enlarged-scalar-risk} gives
\begin{equation}\label{supp:eq:qtilde-risk}
 \begin{aligned}
 \limsup_{n\to\infty}
 \sup_{\substack{s\le s_{+,n},\ a,\eta,\theta\\
                  t>C_1r_nN_n^{-1/10}}}
 &N_n^{1/5}\E\{(\widetilde q-q^\dagger)^2
 \one_{\cH_n}\}\\
 &\le C_5(\cA,\cE).
 \end{aligned}
\end{equation}
The projected coefficient obeys
$|q^\dagger|\le B/\sqrt{1-\omega_n}\le2B$ eventually.  Also
$h\zeta\ge\kappa_-t/2$, so $|q^\dagger|\le C_0h$; clipping the coefficient estimate cannot increase scalar error.

\paragraph{Risk of the vector estimator}
For $t>C_1r_nN_n^{-1/10}$, on $\cH_n$ put
$\xi=q-q^\dagger$ and $b=t(\zeta^{-1}-\zeta)$.  Orthogonal decomposition relative to $f$ gives exactly
\begin{equation}\label{supp:eq:orthogonal-decomp}
 \norm{qf-te}_2^2=(\xi+b)^2+t^2(1-\zeta^2).
\end{equation}
Moreover,
\begin{equation*}
 \begin{aligned}
 t^2(1-\zeta^2)&\le C\norm\delta_2^2,\\
 b^2=t^2\frac{(1-\zeta^2)^2}{\zeta^2}
 &\le C\frac{\norm\delta_2^4}{t^2}.
 \end{aligned}
\end{equation*}
By \Cref{supp:lem:pilot}, \eqref{supp:eq:rn-rates}, and $t^{-2}\le CN_n^{1/5}/r_n^2$,
\begin{equation*}
 \begin{aligned}
 N_n^{1/5}\E\{t^2(1-\zeta^2)\one_{\cH_n}\}
 &\le Cd_n=o(1),\\
 N_n^{1/5}\E(b^2\one_{\cH_n})
 &\le C\frac{d_n^2}{r_n^2}=o(1).
 \end{aligned}
\end{equation*}
Cauchy--Schwarz with \eqref{supp:eq:qtilde-risk} makes the cross term
$N_n^{1/5}|\E(\xi b\one_{\cH_n})|=o(1)$.  Thus the vector risk in \eqref{supp:eq:orthogonal-decomp} has the same
asymptotic upper bound as the scalar risk.  Because
$\theta\in B_2^{p_n}(B)$, the final projection cannot increase loss and bounds it
by $4B^2$; hence $\cH_n^c$ contributes $o(N_n^{-1/5})$ by
\eqref{supp:eq:Hn}.  Combining the two signal ranges proves
\eqref{supp:eq:exact-sparse-estimator}.

\paragraph{Lower bound}
For fixed $c$ and $x$, embed the profile path used in the lower-bound part of
\Cref{supp:thm:exact-scalar} as
$\theta=\tau_L(c,xn^{-1/10},\lambda)e_1$.  This path is eventually in every
specified class with $s\ge1$.  The remaining coordinates have
law $N\{0,cI_{p_n-1}\}$, independent of $\lambda$, so they are ancillary for
this fixed-$c$ experiment.  Optimizing over $x$ and $c$ gives the $C_5(\cA,\cE)$ lower bound
uniformly for $s\ge1$.  Together with the upper bound and
$N_n/n\to1$, this proves \eqref{supp:eq:exact-sparse-risk}.
\end{proof}

The exact-constant assertion of Theorem~5.4 shows that, under the stronger
sparsity condition, uncertainty about the signal direction has no effect
on the leading minimax constant.  One estimator attains this constant
without knowing the actual sparsity or the two variances.

\Needspace{10\baselineskip}
\section{Proofs of Theorems~6.1--6.3: the sixth-order network phase}
\label{supp:sec:network}

\subsection{Bowtie expansion and jet claim in Theorem~6.1}\label{supp:sec:net-order}

Let \(Z_1,\ldots,Z_M\) be independent latent block labels with
\begin{equation*}
 \Pp(Z_i=1)=\Pp(Z_i=-1)=\frac q2,
 \qquad \Pp(Z_i=0)=1-q.
\end{equation*}
Given \(Z=(Z_1,\ldots,Z_M)\), the edges are conditionally independent and
\begin{equation}\label{supp:eq:net-edge-model}
 A_{ij}\mid Z\sim\operatorname{Bernoulli}(p+\theta Z_iZ_j),
 \qquad 1\le i<j\le M.
\end{equation}
In the block order \((1,0,-1)\), the probability matrix is
\begin{equation*}
 \begin{pmatrix}
  p+\theta&p&p-\theta\\
  p&p&p\\
  p-\theta&p&p+\theta
 \end{pmatrix}.
\end{equation*}
\begin{equation*}
 \tau_{\rm net}=B_{11}-B_{1,-1}=2\theta
\end{equation*}

Fix
\begin{equation*}
 \mathcal P=[p_-,p_+]\subset(0,1),\qquad
 0<q_-<q_+<1,\qquad
 \Lambda=[\lambda_-,\lambda_+]=[q_+^{-1},q_-^{-1}].
\end{equation*}
Use the profile coordinates
\begin{equation*}
 v=q\theta,\qquad \lambda=q^{-1},\qquad
 q=\lambda^{-1},\qquad \theta=\lambda v,
 \qquad \tau_{\rm net}(p,v,\lambda)=2v\lambda.
\end{equation*}
Choose \(v_{\max}>0\) so that

\[
 \lambda_+v_{\max}<\min(p_-,1-p_+),
\]

and put

\[
 \Xi_{\rm net}=\mathcal P\times[-v_{\max},v_{\max}]\times\Lambda.
\]

Write \(P^M_{p,v,\lambda}\) for the marginal adjacency law.  At \(v=0\),
it is \(P^M_{p,0}\), independently of \(\lambda\).

Let \(\mathcal E_M\) be the edge set of the complete graph.  For
\(F\subseteq\mathcal E_M\), let \(V(F)\) be its nonisolated vertices and
define
\begin{equation*}
 H_{e,p}(A)=\frac{A_e-p}{p(1-p)},\qquad
 H_{F,p}(A)=\prod_{e\in F}H_{e,p}(A).
\end{equation*}
Let \(\mathcal B_M\) be the collection of bowties: two triangles sharing
exactly one vertex.  Then
\begin{equation*}
 |\mathcal B_M|=15\binom M5,\qquad
 B_{M,p}(A)=\sum_{F\in\mathcal B_M}H_{F,p}(A),
\end{equation*}
and set
\begin{equation*}
 I_M(p)=15\binom M5\{p(1-p)\}^{-6}.
\end{equation*}

The bowtie term distinguishes different values of \(\lambda\) at order
six.  The following lemma verifies the likelihood conditions needed for
the jet claim in Theorem~6.1.

\begin{lemma}[Sixth-order motif jet]\label{supp:lem:net-jet}
There exists \(v_0\in(0,v_{\max})\) such that, uniformly over
\(p\in\mathcal P\), \Cref{supp:ass:profile-jet} holds on \(|v|\le v_0\) for
\(\{P^M_{p,v,\lambda}:\lambda\in\Lambda\}\), with
\begin{equation}\label{supp:eq:net-jet-parameters}
 k=6,\qquad I(p)=I_M(p),\qquad \beta(p)=2.
\end{equation}
Moreover, \(k_{\mathrm J}=6\).
\end{lemma}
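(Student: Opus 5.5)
The plan is to compute the exact marginal likelihood ratio $L_{p,v,\lambda}=dP^M_{p,v,\lambda}/dP^M_{p,0}$ through its Fourier--Walsh expansion in the orthogonal basis $\{H_{F,p}\}_{F\subseteq\mathcal E_M}$ of $L^2(P^M_{p,0})$, and then read off the jet. Conditional on $Z$, each edge density ratio is $1+\theta Z_iZ_j(A_e-p)/\{p(1-p)\}=1+\theta Z_iZ_jH_{e,p}(A)$, so
\[
 L_{p,v,\lambda}(A)=\sum_{F\subseteq\mathcal E_M}\theta^{|F|}\,\E\Bigl[\prod_{ij\in F}Z_iZ_j\Bigr]H_{F,p}(A).
\]
Since $\prod_{ij\in F}Z_iZ_j=\prod_{i\in V(F)}Z_i^{\deg_F(i)}$ and the $Z_i$ are symmetric and independent, the expectation vanishes unless every degree is even (so $F$ is Eulerian), and then it equals $q^{|V(F)|}$. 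Substituting $\theta=\lambda v$ and $q=\lambda^{-1}$ gives coefficients $v^{|F|}\lambda^{|F|-|V(F)|}$. This representation is finite and exact.

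Next comes the combinatorial classification. For a nonempty Eulerian $F$ one has $|F|\ge|V(F)|$, with equality exactly when $F$ is a disjoint union of cycles. With $|F|\le5$, the only Eulerian graphs are single cycles of length $3,4,5$; their coefficients are $v^{|F|}$, which do not depend on $\lambda$. At $|F|=6$, the Eulerian graphs are the 6-cycle, pairs of disjoint triangles (which need $M\ge6$; both cases have $|F|=|V(F)|$), and the bowtie with $|V|=5$, whose coefficient is $\lambda v^6$. I would count the bowties as follows: choose 5 vertices, then the center (5 ways), then split the remaining four into two pairs (3 ways), giving $15\binom M5$. For $|F|\ge7$ the exponent $|F|-|V(F)|$ is at most $\binom M2$, so those coefficients are $O(|v|^7)$ uniformly in $\lambda\in\Lambda$, together with their $\lambda$-derivatives. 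It follows that
\[
 L_{p,v,\lambda}-L_{p,v,\lambda_0}=(\lambda-\lambda_0)v^6B_{M,p}+\rho_{v,\lambda},
\]
where $\rho$ collects the terms with $|F|\ge7$. Dividing by $L_{p,v,\lambda_0}$ gives (P1) with $S=B_{M,p}/L_{p,v,\lambda_0}$ and $r=\rho/L_{p,v,\lambda_0}$.

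Then I verify the moment conditions. The graph space is finite and $|H_{e,p}|\le 1/\min(p_-,1-p_+)$. Moreover, $L_{p,v,\lambda_0}\ge\prod_e(1-\lambda_+|v|\,|H_e|)$ is bounded below by $1/2$ for $|v|\le v_0$ small, uniformly in $p$. Hence $S$ and $r$ are uniformly bounded, $\E_{v,\lambda_0}S=\E_{p,0}B_{M,p}=0$, the fourth moments are bounded, and $\sup_\lambda(|r|+|\partial_\lambda r|)=O(|v|^7)$ pointwise, so (P3) holds with $O(|v|^{14})=o(|v|^{12})$. For the information,
\[
 \E_{v,\lambda_0}S^2=\E_{p,0}\bigl[B_{M,p}^2/L_{p,v,\lambda_0}\bigr]\to\E_{p,0}B_{M,p}^2=|\mathcal B_M|\,\{p(1-p)\}^{-6},
\]
using $L\to1$ uniformly and orthogonality with $\E_{p,0}H_{e,p}^2=\{p(1-p)\}^{-1}$. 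This is $I_M(p)$, and it is continuous and positive. (P4) is exact: $\tau_{\rm net}=2v\lambda$, so $\beta=2$.

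Finally, for $k_{\mathrm J}=6$ I will argue as in the end of the proof of \Cref{supp:lem:fifth-jet}. The $L^2(P_{v,\lambda_0})$ norm of the likelihood contrast between two fixed distinct values of $\lambda$ is exactly of order $|v|^6$, because the bowtie component is orthogonal to the remainder up to $O(|v|^7)$. So for $j<6$, any admissible score would have to satisfy $\E S_j^2\to0$, which contradicts $I>0$. The main obstacle is making the uniform-in-$p$ and uniform-in-$\lambda$ remainder and lower-bound estimates clean, in particular the denominator bound and the handling of the two-triangle terms when $M\ge6$. Since everything is a finite polynomial in bounded quantities, I expect this to be routine once the Eulerian classification is in place.
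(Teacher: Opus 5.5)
Your proposal is correct and follows essentially the same route as the paper: the exact Eulerian expansion $L_{p,v,\lambda}=\sum_{F}v^{|F|}\lambda^{|F|-|V(F)|}H_{F,p}$, the bowtie as the first $\lambda$-dependent coefficient, division by $L_{p,v,\lambda_0}\ge 1/2$, orthogonality of the $H_{F,p}$ to obtain $I_M(p)$, and the $O(|v|^6)$ bound on the likelihood-contrast norm to exclude every order $j<6$. Your explicit lower bound $L_{p,v,\lambda_0}\ge\prod_e(1-\lambda_+|v|\,|H_{e,p}|)$ and the degree-sum classification ($|F|\ge|V(F)|$, with equality only for unions of cycles) are slightly more explicit versions of the paper's steps. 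The two-triangle terms you flag cause no difficulty, because their coefficient $v^6$ is independent of $\lambda$ and cancels in the difference.
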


The proof fixes a reference $\lambda_0$ and abbreviates the score and
likelihood remainder from Section~6.1 of the main paper as
\[
 S_{p,v}(A)=S_{p,v,\lambda_0}(A),\qquad
 r_{p,v,\lambda}(A)=r(A\mid p,v,\lambda;\lambda_0).
\]

\begin{proof}[Proof of the sixth-order jet claim in Theorem~6.1]
Put \(X_i=Z_i/\sqrt q\).  Then
\begin{equation}\label{supp:eq:net-X-moments}
 \E X_i^d=
 \begin{cases}
  0,&d\text{ odd},\\
  \lambda^{d/2-1},&d\ge2\text{ even},
 \end{cases}
 \qquad \E X_i^0=1.
\end{equation}
For one edge, \(\theta Z_iZ_j=vX_iX_j\).  Its exact likelihood ratio
relative to \(\operatorname{Bernoulli}(p)\) is
\[
 1+vX_iX_jH_{ij,p}(A).
\]
Expanding the product over \(\mathcal E_M\) and then integrating the
independent \(X_i\)'s gives
\begin{equation}\label{supp:eq:net-exact-eulerian-expansion}
 L_{p,v,\lambda}(A)
 :=\frac{dP^M_{p,v,\lambda}}{dP^M_{p,0}}(A)
 =\sum_{F\in\mathfrak E_M}
 v^{|F|}\lambda^{|F|-|V(F)|}H_{F,p}(A),
\end{equation}
where \(\mathfrak E_M\) is the collection of edge sets whose nonisolated
vertices all have positive even degree; the empty set contributes one.
Indeed, odd degrees vanish by \eqref{supp:eq:net-X-moments}, and for an Eulerian
edge set
\[
 \prod_{i\in V(F)}\E X_i^{\deg_F(i)}
 =\lambda^{\sum_i\deg_F(i)/2-|V(F)|}
 =\lambda^{|F|-|V(F)|}.
\]

Every nonempty simple Eulerian graph with at most five edges is a cycle, so
\(|F|=|V(F)|\) and its coefficient is independent of \(\lambda\).  At six
edges, the only Eulerian graph with \(|F|-|V(F)|=1\) has degree sequence
\((4,2,2,2,2)\): two triangles sharing their degree-four vertex.  Choosing
the five vertices, the center, and an unordered pairing of the remaining
four gives
\[
 |\mathcal B_M|
 =\binom M5\cdot5\cdot\frac12\binom42
 =15\binom M5.
\]
The other six-edge Eulerian graphs have \(|F|=|V(F)|\), so their
coefficients do not depend on \(\lambda\).  Since \eqref{supp:eq:net-exact-eulerian-expansion} is a finite
polynomial, uniformly over \(p\in\mathcal P\) and
\(\lambda,\lambda_0\in\Lambda\),
\begin{equation}\label{supp:eq:net-density-difference-six}
 L_{p,v,\lambda}-L_{p,v,\lambda_0}
 =(\lambda-\lambda_0)v^6B_{M,p}+R_{p,v,\lambda},
 \qquad
 \|R_{p,v,\cdot}\|_{C^1(\Lambda)}\le C|v|^7
\end{equation}
pointwise on the finite graph space.

Fix the interior reference point
\(\lambda_0=(\lambda_-+\lambda_+)/2\).  Uniformly over \(p,A,\lambda\),
\(L_{p,v,\lambda}(A)\to1\).  Choose \(v_0\in(0,v_{\max})\) sufficiently
small that
\[
 \inf_{\substack{p\in\mathcal P,\ \lambda\in\Lambda,\ A\\
                  |v|\le v_0}}
 L_{p,v,\lambda}(A)\ge\frac12.
\]
Define
\begin{equation*}
 S_{p,v}(A)=\frac{B_{M,p}(A)}{L_{p,v,\lambda_0}(A)},
 \qquad
 r_{p,v,\lambda}(A)=
 \frac{R_{p,v,\lambda}(A)}{L_{p,v,\lambda_0}(A)}.
\end{equation*}
Dividing \eqref{supp:eq:net-density-difference-six} by the reference density
proves \eqref{supp:eq:ratio-jet} with \(k=6\).  Moreover,
\[
 \E_{p,v,\lambda_0}S_{p,v}
 =\E_{p,0}B_{M,p}=0,
\]
and \eqref{supp:eq:net-density-difference-six} gives
\(\E\|r_{p,v}\|_{C^1(\Lambda)}^2=O(v^{14})=o(v^{12})\).

Under \(P^M_{p,0}\), the centered edge products \(H_{F,p}\) are
orthogonal: if \(F\ne F'\), an edge in their symmetric difference
occurs to the first power and has mean zero.  Also
\(\E_{p,0}H_{e,p}^2=\{p(1-p)\}^{-1}\).  Therefore
\begin{equation}\label{supp:eq:net-bowtie-orthogonality}
 \E_{p,0}B_{M,p}^2
 =|\mathcal B_M|\{p(1-p)\}^{-6}=I_M(p).
\end{equation}
Since \(L_{p,v,\lambda_0}\to1\) uniformly,
\[
 \E_{p,v,\lambda_0}S_{p,v}^2
 =\E_{p,0}\frac{B_{M,p}^2}{L_{p,v,\lambda_0}}
 \longrightarrow I_M(p).
\]
The fourth moments are uniformly bounded because the observation space is
finite and \(p\) stays away from zero and one.  Finally,
\(\tau_{\rm net}=2v\lambda\) makes \eqref{supp:eq:target-jet} exact with
\(\beta(p)=2\).

It remains to show minimality.  For fixed distinct
\(\lambda,\lambda_0\), \eqref{supp:eq:net-density-difference-six} gives
\[
 \left\|\frac{dP^M_{p,v,\lambda}}
 {dP^M_{p,v,\lambda_0}}-1\right\|_2=O(|v|^6).
\]
If an order \(j<6\) satisfied \Cref{supp:ass:profile-jet}, division by
\(|v|^j\) would make the left side tend to zero in \(L^2\), whereas the
jet representation and its \(o(v^j)\) remainder would make its norm tend to
\(|\lambda-\lambda_0|\sqrt I>0\).  This contradiction proves
\(k_{\mathrm J}=6\).
\end{proof}

\Cref{supp:lem:net-jet} identifies the first informative likelihood term.
To obtain a matching rate over the full parameter space, we also need an
inverse bound controlling target separation by observable moments.

\subsection{Inverse-order and rate claims in Theorem~6.1}
\label{supp:sec:supp-net-orders}

Fix \(v_0\) as in \Cref{supp:lem:net-jet}.

Set
\begin{equation*}
 \begin{aligned}
  \Phi_{\rm net}(A)
  &=\left(\prod_{e\in F}A_e\right)_{F\subseteq\mathcal E_M}
    \in\mathbb R^{2^{\binom M2}},\\
  \Psi_{\rm net}(p,v,\lambda)
  &=\E_{p,v,\lambda}\Phi_{\rm net}(A).
 \end{aligned}
\end{equation*}
Let
\begin{equation*}
 \mathcal R_n^{\rm net}
 =\inf_{T_n}\sup_{(p,v,\lambda)\in\Xi_{\rm net}}
   \E_{p,v,\lambda}(T_n-2v\lambda)^2.
\end{equation*}

The next theorem supplies this inverse bound and establishes the order
and rate assertions of Theorem~6.1.

\begin{theorem}[Order and rate assertions of Theorem~6.1 of the main paper]\label{supp:thm:net-order}
There is \(C<\infty\) such that, for every
\(\xi,\xi'\in\Xi_{\rm net}\),
\begin{equation*}
 |\tau_{\rm net}(\xi)-\tau_{\rm net}(\xi')|^6
 \le C\|\Psi_{\rm net}(\xi)-\Psi_{\rm net}(\xi')\|_2.
\end{equation*}
Consequently,
\begin{equation*}
 k_{\mathrm J}=k_{\mathrm C}=k_{\mathrm I}=6,
 \qquad \mathcal R_n^{\rm net}\asymp n^{-1/6}.
\end{equation*}
\end{theorem}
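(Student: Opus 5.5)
The plan is to prove the inverse inequality first, then read off the orders and the rate from the general results. Since $\Psi_{\rm net}$ consists of all edge-monomial moments, it is an invertible linear image of the probability vector of $A$ on the finite graph space. It therefore suffices to bound $|\theta-\theta'|^6$, equivalently $|v\lambda-v'\lambda'|^6$, by $\|P^M_{p,v,\lambda}-P^M_{p',v',\lambda'}\|$ in any fixed norm. In particular, $\Psi_{\rm net}$ determines the centered motif moments
\[
 m_1=\E A_e,\qquad m_3=\E\prod_{e\in\triangle}(A_e-p),\qquad m_6=\E\,B_{M,p}(A)\,\{p(1-p)\}^6,
\]
as smooth functions of the moment vector, with $p$ itself read off from $m_1$.

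The expansion \eqref{supp:eq:net-exact-eulerian-expansion} and orthogonality of the $H_{F,p}$ give exact moment identities, because the likelihood ratio is a finite polynomial in $(v,\lambda)$. First, $m_1=p$. Next, the triangle coefficient gives $m_3=v^3$; this is the quantity the calibration step inverts. Finally, the bowtie coefficient gives
\[
 \E_{p,v,\lambda}B_{M,p}=I_M(p)\,\lambda v^6 .
\]
Set $\Delta=\|\Psi_{\rm net}(\xi)-\Psi_{\rm net}(\xi')\|_2$ and $\delta=2|v\lambda-v'\lambda'|$. The triangle moment yields $|p-p'|\le C\Delta$ and $|v^3-v'^3|\le C\Delta$. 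The bowtie moment, together with the Lipschitz dependence of $B_{M,p}$ on $p$, yields $|\lambda v^6-\lambda'v'^6|\le C\Delta$.

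The argument then splits into cases according to $t=\max(|v|,|v'|)$. If $v$ and $v'$ have opposite signs, or if $|v-v'|\ge\kappa\delta^5$, then
\[
 \Delta\gtrsim|v^3-v'^3|\gtrsim t^2|v-v'|,
\]
which combined with $\delta\le Ct$ is enough in most subcases. The remaining subcase is $|v-v'|$ small while $t\lesssim\delta$. There I use instead $|v^3-v'^3|\ge c|v-v'|^3$ and $\delta\le C(|v-v'|+t|\lambda-\lambda'|)$. In the main case, $|v-v'|<\kappa\delta^5$, which forces $t|\lambda-\lambda'|\ge c\delta$. Mirroring \eqref{supp:eq:J-separation}, I then apply the mean value theorem to $(v,\lambda)\mapsto\lambda v^6$:
\[
 |\lambda v^6-\lambda'v'^6|\ge t^6|\lambda-\lambda'|-Ct^5|v-v'|\ge ct^5\delta-C\kappa t^5\delta^5\ge ct^5\delta\ge c\delta^6,
\]
where the last step uses $t\ge\delta/C$. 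This gives $\delta^6\le C\Delta$.

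Away from $v=0$, that is, for $|v|\ge v_0/4$, the same moments recover $(p,v,\lambda)$ smoothly: $v=m_3^{1/3}$ and $\lambda=m_6/(I_M(p)v^6)$. This gives a first-order bound, which is stronger than needed. The piecewise bounds are joined by the compactness and contradiction argument used at the end of the proof of \Cref{supp:lem:inverse-five}. I expect the main obstacle to be the case bookkeeping: choosing $\kappa$ and covering every regime uniformly in $p$. Because the moment identities are exact polynomials here, however, this should be cleaner than in the logistic case.

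For the remaining assertions, condition (A3) holds because $\Phi_{\rm net}$ is bounded on a finite space, so \Cref{thm:order-hierarchy} gives $k_{\mathrm C}\le k_{\mathrm I}\le6$. \Cref{supp:lem:net-jet} gives $k_{\mathrm J}=6$, and the profiled paths lie in $\Xi_{\rm net}$, so $6\le k_{\mathrm C}$ and all three orders equal $6$. Finally, $\Xi_{\rm net}$ is compact and $\Psi_{\rm net}$ is continuous, so (A4) holds by \Cref{rem:measurable-selection}. \Cref{thm:generic-rate} then yields $\mathcal R_n^{\rm net}\asymp n^{-1/6}$.
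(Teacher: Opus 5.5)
Your overall route is the paper's. You read $(p,v^3,\lambda v^6)$ off $\Psi_{\rm net}$ as a Lipschitz function through the exact edge, triangle and bowtie moments, and convert this into the sixth-order inverse inequality. You then get $k_{\mathrm J}=k_{\mathrm C}=k_{\mathrm I}=6$ from the bowtie jet lemma and the hierarchy, and the rate from Theorem~2.4 with (A3)--(A4); that final order-and-rate bookkeeping is fine.

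\textbf{The gap.} It lies in the case analysis for the inverse inequality.
\begin{itemize}
\item The threshold $|v-v'|\ge\kappa\delta^5$ is copied from the logistic proof. There it works because the first Hermite moment sees $v$ linearly, so $|v-v'|\le C\Delta$.
\item In the network model $v$ is observed only through $v^3$. For same-sign $v,v'$ with $|v-v'|\ge\kappa\delta^5$ you only get $\Delta\gtrsim t^2|v-v'|\ge\kappa t^2\delta^5$. This yields $\delta^6\lesssim\Delta$ only when $t\gtrsim\delta^{1/2}$.
\item The uncovered range is therefore $\delta\lesssim t\ll\delta^{1/2}$, not $t\lesssim\delta$ as you state. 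For instance, take $t=\delta^{3/4}$, $|v-v'|=\kappa\delta^5$ and $|\lambda-\lambda'|\asymp\delta^{1/4}$; this configuration is admissible.
\item Your remedy for the leftover subcase does not close on its own. When $t|\lambda-\lambda'|$ is the dominant term in $\delta\le C(|v-v'|+t|\lambda-\lambda'|)$, nothing yet makes $|v-v'|$ small enough relative to $\delta$ to absorb the $Ct^5|v-v'|$ term in your bowtie estimate.
\end{itemize}

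\textbf{The repair.} Split linearly in $\delta$ rather than at $\kappa\delta^5$.
\begin{itemize}
\item If $|v-v'|\ge\epsilon\delta$, then $\Delta\gtrsim|v^3-v'^3|\ge\frac14|v-v'|^3\gtrsim\delta^3\gtrsim\delta^6$, since $\delta$ is bounded.
\item If $|v-v'|<\epsilon\delta$ with $\epsilon\le 1/(48\lambda_+)$, take $|v|=t$. Writing $v\lambda-v'\lambda'=v(\lambda-\lambda')+\lambda'(v-v')$ gives $t|\lambda-\lambda'|\ge\delta/4$. Then $|\lambda v^6-\lambda'v'^6|\ge t^6|\lambda-\lambda'|-6\lambda_+t^5|v-v'|\ge t^5\delta/8\gtrsim\delta^6$, using $\delta\le4\lambda_+t$.
\end{itemize}
This argument holds on all of $\Xi_{\rm net}$. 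The sign split, the separate first-order bound for $|v|\ge v_0/4$, and the compactness join therefore become unnecessary.

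\textbf{The paper's shorter route.} The paper avoids cases altogether. It uses $|v'|^6|\lambda-\lambda'|\le|\lambda v^6-\lambda'v'^6|+\lambda_+|v^6-v'^6|$ with $|v^6-v'^6|\le C|v^3-v'^3|$. It combines this with $|v-v'|^3\le4|v^3-v'^3|$ and $|v\lambda-v'\lambda'|\le\lambda_+|v-v'|+|v'||\lambda-\lambda'|$. Since $|v-v'|$ and $|\lambda-\lambda'|$ are bounded, this gives directly $\delta^6\le C\{|v^3-v'^3|+|\lambda v^6-\lambda'v'^6|\}\le C\Delta$.
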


\begin{proof}[Proof of the order and rate claims in Theorem~6.1]
\leavevmode\par
Conditional independence and \eqref{supp:eq:net-edge-model} give
\begin{align}
 \E A_{ij}&=p,\label{supp:eq:net-moment-p-full}\\
 \E\prod_{e\in F_{\triangle}}(A_e-p)&=v^3,\notag\\
 \E\prod_{e\in F_{\bowtie}}(A_e-p)&=\lambda v^6.
 \label{supp:eq:net-moment-bowtie-full}
\end{align}
For example, the triangle product is
\(\theta^3\E(Z_1^2Z_2^2Z_3^2)=\theta^3q^3=v^3\); for a bowtie, the shared
vertex occurs four times, so the product is
\(\theta^6q^5=\lambda v^6\).

The moments in \eqref{supp:eq:net-moment-p-full}--
\eqref{supp:eq:net-moment-bowtie-full} are polynomials in \(p\) and the
components of \(\Psi_{\rm net}\).  Thus
\[
 (p,t,b):=(p,v^3,\lambda v^6)
\]
is a Lipschitz function of \(\Psi_{\rm net}\) on the compact parameter
space.

Use primes for the second parameter point.  For real \(v,v'\),
\begin{equation}\label{supp:eq:net-cube-holder-full}
 |v-v'|^3\le4|v^3-v'^3|.
\end{equation}
Since all variables range over fixed compact intervals,
\begin{align*}
 |v'(\lambda-\lambda')|^6
 &\le C|v'|^6|\lambda-\lambda'|\\
 &\le C\{|\lambda v^6-\lambda'v'^6|
           +\lambda_+|v^6-v'^6|\}\\
 &\le C\{|b-b'|+|t-t'|\}.
\end{align*}
Together with \eqref{supp:eq:net-cube-holder-full} and
\(|v\lambda-v'\lambda'|\le\lambda_+|v-v'|
 +|v'||\lambda-\lambda'|\), this proves
\[
 |2v\lambda-2v'\lambda'|^6
 \le C\{|t-t'|+|b-b'|\}
 \le C\|\Psi_{\rm net}(\xi)-\Psi_{\rm net}(\xi')\|_2.
\]
Thus \(k_{\mathrm I}\le6\).  By \Cref{supp:lem:net-jet},
\(k_{\mathrm J}=6\), while \Cref{supp:cor:order-hierarchy} gives
\(6=k_{\mathrm J}\le k_{\mathrm C}\le k_{\mathrm I}\le6\).
The feature is bounded and finite dimensional, so the lower and upper parts
of \Cref{supp:thm:generic-rate} give
\(\mathcal R_n^{\rm net}\asymp n^{-1/6}\).
\end{proof}

\Cref{supp:thm:net-order} shows that the sixth-order likelihood term also
determines the global minimax rate.  The rate does not specify its
constant; the local Gaussian limit gives the exact risk along the most
difficult paths.

\subsection{Limit experiment and local risk in Theorem~6.2}
\label{supp:sec:supp-net-limit}

For fixed \(p\in\mathcal P\) and \(x\ne0\), put
\(v_n=xn^{-1/12}\).  At this scale, \(nv_n^{12}\) stays constant,
and the following theorem determines the limiting information about
\(\lambda\) and the corresponding risk.

\begin{theorem}[Theorem~6.2 of the main paper]\label{supp:thm:net-limit}
The local experiment converges to
\begin{equation*}
 W_x\mid\lambda\sim
 N\left(\lambda,\frac1{I_M(p)x^{12}}\right).
\end{equation*}
Its exact local risk is
\begin{equation*}
 \lim_{n\to\infty}\inf_{T_n}\sup_{\lambda\in\Lambda}
 n^{1/6}\E_{(P^M_{p,v_n,\lambda})^{\otimes n}}
       (T_n-2v_n\lambda)^2
 =4x^2M\left(\frac1{I_M(p)x^{12}},\Lambda\right).
\end{equation*}
\end{theorem}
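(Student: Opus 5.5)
This is a direct specialization of the general local theory, mirroring the proof of Theorem~5.2. Fix $p\in\mathcal P$ and $x\ne0$. \Cref{supp:lem:net-jet} shows that the fixed-$p$ profile $\{P^M_{p,v,\lambda}:\lambda\in\Lambda\}$ satisfies \Cref{supp:ass:profile-jet} at order $k=6$ on $|v|\le v_0$. The associated constants are $I=I_M(p)=15\binom M5\{p(1-p)\}^{-6}$, $\beta=2$, and reference point $\lambda_0=(\lambda_-+\lambda_+)/2$. Since $v_n=xn^{-1/12}$ eventually lies in $[-v_0,v_0]$, the profiled paths are admissible for all large $n$.

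\emph{Limit experiment.} Apply \Cref{supp:thm:profile-lan} with $k=6$. The log-likelihood ratios then expand uniformly in $\lambda$ as $(\lambda-\lambda_0)\Delta_n-\tfrac12(\lambda-\lambda_0)^2x^{12}I_M(p)$, where
\[
\Delta_n=v_n^6\sum_{i=1}^nS_{p,v_n}(A^{(i)})\Rightarrow N\{0,x^{12}I_M(p)\}.
\]
The experiments therefore converge, weakly in Le Cam's sense, to $W_x\mid\lambda\sim N(\lambda,\{I_M(p)x^{12}\}^{-1})$. Lemma~\ref{supp:lem:net-jet} already supplies the required inputs: a finite graph space with bounded fourth moments, and the $O(|v|^7)$ bound on the $C^1(\Lambda)$ remainder.

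\emph{Local risk.} Apply \Cref{supp:prop:normal-envelope} with $d=1$ and $\beta=2$. The target here is exact: $n^{1/12}\tau_{\rm net}(p,v_n,\lambda)=2x\lambda$, so condition \eqref{supp:eq:target-jet} holds with zero error. The proposition gives existence of the limit and its value
\[
\norm{\beta}_2^2x^2M\bigl(\{I_M(p)x^{12}\}^{-1},\Lambda\bigr)=4x^2M\bigl(\{I_M(p)x^{12}\}^{-1},\Lambda\bigr).
\]

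The only step requiring care is checking that the hypotheses of \Cref{supp:prop:normal-envelope} are met by the chosen profile. In particular, its proof uses the uniform second-moment bound \eqref{supp:eq:generic-lr-ui} for the likelihood ratio. That bound follows here because $L_{p,v,\lambda_0}\ge1/2$ on $|v|\le v_0$ and the graph space is finite. Beyond this verification, no new argument is needed.
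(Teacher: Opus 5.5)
Your proposal is correct and matches the paper's proof, which simply applies \Cref{supp:thm:profile-lan} and \Cref{supp:prop:normal-envelope} with the parameters $k=6$, $I(p)=I_M(p)$, $\beta(p)=2$ supplied by \Cref{supp:lem:net-jet}. Your extra check of the uniform likelihood-ratio second-moment bound is harmless but unnecessary, since that bound is derived from the jet assumption itself inside the proof of \Cref{supp:thm:profile-lan}.
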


\begin{proof}[Proof of Theorem~6.2]
Apply \Cref{supp:thm:profile-lan,supp:prop:normal-envelope} with
\eqref{supp:eq:net-jet-parameters}.
\end{proof}

\subsection{Exact global risk in Theorem~6.3}
\label{supp:sec:net-exact}

\Cref{supp:thm:net-limit} determines the exact risk for fixed \(p\) and
\(x\).  Uniform attainment requires estimating \(p\) and \(v\) and
controlling risk outside these paths.  We verify conditions
\textup{(E)} and \textup{(C)} below; condition \textup{(O)} is checked
in the final proof of Theorem~6.3.

Set
\begin{equation}\label{supp:eq:net-efficient-feature}
 \phi_{p,v}(A)=B_{M,p}(A),\qquad
 \Gamma_{\rm net}(p,v,\lambda)
 =\E_{p,v,\lambda}B_{M,p}(A)=I_M(p)\lambda v^6.
\end{equation}
Uniformly,
\begin{equation}\label{supp:eq:net-E-verification}
 \frac{\partial}{\partial\lambda}\Gamma_{\rm net}=I_M(p)v^6,
 \qquad
 \operatorname{Var}_{p,v,\lambda}\{B_{M,p}(A)\}=I_M(p)+O(|v|^3),
\end{equation}
and all moments are bounded, so \textup{(E)} holds.

For calibration, set
\begin{align}
 \overline A(A)
 &=\binom M2^{-1}\sum_{e\in\mathcal E_M}A_e,
 \label{supp:eq:net-edge-density}\\
 T_p(A)
 &=\binom M3^{-1}\sum_{\{i,j,k\}}
 (A_{ij}-p)(A_{jk}-p)(A_{ki}-p).
 \label{supp:eq:net-triangle-feature}
\end{align}
Here \(\E\overline A=p\) and \(\E T_p=v^3\).  On a fold of size \(m_n\), set
\begin{align}
 \widehat p
 &=\Pi_{\mathcal P}\left\{\frac1{m_n}\sum_{r=1}^{m_n}
      \overline A(A^{(r)})\right\},\notag\\
 \widehat t
 &=\Pi_{[-v_0^3,v_0^3]}\left\{\frac1{m_n}\sum_{r=1}^{m_n}
      T_{\widehat p}(A^{(r)})\right\},
 &\widehat v&=\operatorname{sgn}(\widehat t)|\widehat t|^{1/3}.
 \label{supp:eq:net-calibration}
\end{align}

The next lemma bounds the calibration errors and verifies that replacing
\(p,v\) by their estimates does not affect the leading risk.

\begin{lemma}[Network calibration \textup{(C)}]\label{supp:lem:net-C}
Let \(m_n=o(n)\), \(N_n=n-m_n\), and
\begin{equation}\label{supp:eq:net-fold-rate}
 \frac{m_n}{N_n^{1/2}\log N_n}\longrightarrow\infty.
\end{equation}
For any prescribed polynomial tail order, the calibration rule
\eqref{supp:eq:net-calibration} satisfies condition \textup{(C)} of
\Cref{supp:thm:exact-globalization}, with regular coordinate \(c=p\), constant
interval \(\Lambda(p)=\Lambda\) for \(\lambda\), and coordinate used to determine the estimator weights
\(v_\circ(p,v,\lambda)=\Pi_{[-v_0,v_0]}(v)\).
\end{lemma}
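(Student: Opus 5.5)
The plan is to follow the pattern of \Cref{supp:lem:logistic-C}, exploiting the finite graph space: every feature is bounded, and all moments are polynomials in $(p,v,\lambda)$. The tolerance is $\varepsilon_n=C_{\rm cal}\sqrt{\log N_n/m_n}$, so $N_n^{1/12}\varepsilon_n\to0$. In fact the hypothesis \eqref{supp:eq:net-fold-rate} gives the stronger statement $N_n^{1/4}\varepsilon_n\to0$, which will be needed below.

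\textbf{Step 1: calibration bounds.} Hoeffding's inequality for the bounded edge-density average gives $|\widehat p-p|\le\varepsilon_n/2$ outside probability $O(N_n^{-K})$ for any fixed $K$, once $C_{\rm cal}$ is large. Since $T_p$ is a polynomial in $p$ with bounded coefficients, $T_{\widehat p}-T_p=O(|\widehat p-p|)$. Using $\E T_p=v^3$ together with Hoeffding again, $|\widehat t-v^3|\le C\varepsilon_n$ on the same kind of event.

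The delicate point is that the cube root magnifies errors near zero. The cube-root inequality \eqref{supp:eq:net-cube-holder-full} only gives $|\widehat v-v|\lesssim\varepsilon_n^{1/3}$. I therefore plan to define the events as
\[
 \mathcal H_n(p,v)=\{|\widehat p-p|\le\varepsilon_n,\ |\widehat t-v^3|\le C\varepsilon_n\}.
\]
On this event two regimes give $|\widehat v-v|\le\bar\varepsilon_n$:
\begin{itemize}
\item when $|v|\ge\varepsilon_n^{1/6}$, the mean value theorem gives $|\widehat v-v|\le C\varepsilon_n/v^2\le C\varepsilon_n^{2/3}$;
\item in all cases $|\widehat v-v|\le C\varepsilon_n^{1/3}$.
\end{itemize}
Take the effective tolerance in (C) to be $\bar\varepsilon_n=C\varepsilon_n^{1/3}$. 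The requirement $N_n^{1/12}\bar\varepsilon_n\to0$ then reduces to $N_n^{1/4}\varepsilon_n\to0$, which is exactly \eqref{supp:eq:net-fold-rate}. The global bound on $|\widehat v-v_\circ(\xi)|$ follows identically, because the projections onto $\mathcal P$ and $[-v_0^3,v_0^3]$ are $1$-Lipschitz.

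\textbf{Step 2: plug-in residual \eqref{supp:eq:calibration-orthogonality}.} Since $\Lambda(p)=\Lambda$, we have $\lambda_{\widehat c}=\lambda$, and $\phi_{p,v}=B_{M,p}$ does not depend on $v$. Hence
\[
 Z_{N_n}(\widehat p,\widehat v,\lambda)-Z_{N_n}(p,v,\lambda)
 =\bigl[\widehat\Gamma(\widehat p)-\widehat\Gamma(p)\bigr]
 -\bigl[\Gamma_{\rm net}(\widehat p,\widehat v,\lambda)-\Gamma_{\rm net}(p,v,\lambda)\bigr].
\]
Expand $B_{M,\widehat p}-B_{M,p}$ in $\widehat p-p$. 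The first-order term has coefficient $\partial_pB_{M,p}$, a sum of products of five centered edge factors times a constant. Its mean under $P_{p,v,\lambda}$ is a five-edge sub-bowtie moment; in the Eulerian expansion a non-Eulerian five-edge set has expectation $O(v^6)$ relative to the $p$-centering. I will verify this by the same $X_i$ moment computation as in \Cref{supp:lem:net-jet}: odd degrees force the expectation to vanish except through higher-order terms.

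Conditionally on the calibration fold, the centered fluctuation of this term is $O(\varepsilon_n N_n^{-1/2})$. Its mean part should also be killed by $\partial_p\Gamma_{\rm net}$, which holds exactly if $\E_{p,v,\lambda}\partial_pB_{M,p}=\partial_p(I_M(p)\lambda v^6)$ minus the derivative of the normalizing constant; I expect to need a short direct check here. The result is a residual of order
\[
 |v|^5\bar\varepsilon_n+\varepsilon_n^2+\varepsilon_nN_n^{-1/2}.
\]
The $v$-increment of $\Gamma_{\rm net}$ is of order $|v|^5|\widehat v-v|$. Multiplying by $N_n^{1/12}|v|^{-5}$ bounds the whole weighted residual by
\[
 C\bigl(N_n^{1/12}\bar\varepsilon_n+N_n^{1/12}|v|^{-5}\varepsilon_n^2\bigr).
\]
Since $|v|\ge hN_n^{-1/12}$, the second term is at most $C_hN_n^{1/2}\varepsilon_n^2$, and this tends to zero precisely under \eqref{supp:eq:net-fold-rate}.

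\textbf{Main obstacle.} I expect the hardest part to be the mean term of $B_{M,\widehat p}-B_{M,p}$, which must be shown to be $O(|v|^5\varepsilon_n+\varepsilon_n^2)$ rather than $O(\varepsilon_n)$. I would attack it by writing $A_e-\widehat p=(A_e-p)-(\widehat p-p)$ and expanding each bowtie product. Every term that drops one edge leaves a five-edge subgraph with an odd-degree vertex, so its expectation carries at least $v^5$ factors. Terms that drop two or more edges are $O(\varepsilon_n^2)$.
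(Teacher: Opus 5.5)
Your proposal is correct and follows the paper's proof. The common steps are Hoeffding at scale $\varepsilon_n=\sqrt{\log N_n/m_n}$, the cube-root tolerance $\bar\varepsilon_n\asymp\varepsilon_n^{1/3}$, an empirical plug-in term of order $\varepsilon_nN_n^{-1/2}$, and a population term weighted by $N_n^{1/12}|v|^{-5}$, all controlled by exactly $N_n^{1/4}\varepsilon_n\to0$.

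The differences are minor. Your anticipated obstacle disappears in the paper because non-Eulerian sub-bowtie products have mean exactly zero by symmetry of the labels, and the full-bowtie term with the shifted normalizer reproduces $I_M(\widehat p)\lambda v^6$ exactly (the paper's shifted-bowtie mean identity). Your description of $\partial_pB_{M,p}$ omits the six-edge term from differentiating $\{p(1-p)\}^{-6}$; its mean is $O(v^6)$ and in fact cancels exactly against $\partial_p\Gamma_{\rm net}$, so your check succeeds. Finally, the paper bounds $|\widehat v^6-v^6|=|\widehat t^2-v^6|\le C(|v|^3\varepsilon_n+\varepsilon_n^2)$, whereas your bound $C|v|^5|\widehat v-v|$ also works. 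For that you should state that $\bar\varepsilon_n=o(N_n^{-1/12})$ forces $|\widehat v|\le2|v|$ throughout $|v|\ge hN_n^{-1/12}$.
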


\begin{proof}[Proof of \Cref{supp:lem:net-C}]
Write
\[
 d_n=\sqrt{\frac{\log N_n}{m_n}}.
\]
All graph statistics in \eqref{supp:eq:net-edge-density}--
\eqref{supp:eq:net-triangle-feature} are uniformly bounded, and
\(p\mapsto T_p(A)\) is uniformly Lipschitz on \(\mathcal P\).  Hoeffding's
inequality, first at the true \(p\) and then with the plug-in
\(\widehat p\), shows that for any fixed \(A_0>0\), after enlarging a fixed
constant \(C_0\),
\begin{equation}\label{supp:eq:net-calibration-good-full}
 \sup_{(p,v,\lambda)\in\Xi_{\rm net}}
 \Pp_{p,v,\lambda}\left(
 |\widehat p-p|
 +\left|\widehat t-\Pi_{[-v_0^3,v_0^3]}(v^3)\right|>C_0d_n
 \right)\le C_0N_n^{-A_0}.
\end{equation}
Let \(\mathcal H_n(p,v)\) be the event inside the complement in
\eqref{supp:eq:net-calibration-good-full}.  Projection is nonexpansive, and for
real \(s,t\),
\begin{equation*}
 |\operatorname{sgn}(s)|s|^{1/3}
  -\operatorname{sgn}(t)|t|^{1/3}|^3\le4|s-t|.
\end{equation*}
Hence, on \(\mathcal H_n(p,v)\),
\[
 |\widehat p-p|
 +\left|\widehat v-\Pi_{[-v_0,v_0]}(v)\right|
 \le C d_n^{1/3}
 =C\left(\frac{\log N_n}{m_n}\right)^{1/6}
 =:\varepsilon_n.
\]
Choosing \(A_0>1/6\) gives both bad-event probabilities in condition
\textup{(C)}.  Moreover,
\begin{equation*}
 N_n^{1/12}\varepsilon_n
 \le C\left\{N_n^{1/4}
       \sqrt{\frac{\log N_n}{m_n}}\right\}^{1/3}\longrightarrow0
\end{equation*}
by \eqref{supp:eq:net-fold-rate}.

For the bound in \eqref{supp:eq:calibration-orthogonality}, independence of the
main and calibration samples and uniform Lipschitz continuity of
\(p\mapsto B_{M,p}(A)\) give
\begin{equation}\label{supp:eq:net-empirical-plugin-full}
 \left\|
 (\mathbb P_{N_n}-P_{p,v,\lambda})
 \{B_{M,\widehat p}-B_{M,p}\}\one_{\mathcal H_n(p,v)}
 \right\|_2
 \le C\frac{d_n}{\sqrt{N_n}}.
\end{equation}
For the population part, put \(d=p(1-p)\),
\(d'=p'(1-p')\), \(a=d/d'\), and \(b=(p-p')/d'\).  Then
\(H_{e,p'}=aH_{e,p}+b\).  The only Eulerian subsets of a bowtie are the
empty set, its two triangles, and the full bowtie.  Consequently
\begin{equation}\label{supp:eq:net-shifted-bowtie-mean-full}
 \E_{p,v,\lambda}B_{M,p'}
 =|\mathcal B_M|
 \left\{a^6d^{-6}\lambda v^6
       +2a^3b^3d^{-3}v^3+b^6\right\}.
\end{equation}
The first coefficient equals
\(a^6d^{-6}=(d')^{-6}\).  For \(|v|\le v_0\), on \(\mathcal H_n(p,v)\), take
\(p'=\widehat p\) and use \(\widehat v^3=\widehat t\).  Equations
\eqref{supp:eq:net-efficient-feature} and
\eqref{supp:eq:net-shifted-bowtie-mean-full} yield
\begin{equation}\label{supp:eq:net-population-plugin-full}
 \left|
 \E_{p,v,\lambda}B_{M,\widehat p}
 -\Gamma_{\rm net}(\widehat p,\widehat v,\lambda)
 \right|
 \le C\{ |v|^3d_n+d_n^2\}.
\end{equation}
Indeed,
\(|\widehat v^6-v^6|=|\widehat t^2-v^6|
 \le C(|v|^3d_n+d_n^2)\), while the last two terms in
\eqref{supp:eq:net-shifted-bowtie-mean-full} are smaller than the displayed
bound.

Since \(\Lambda\) is independent of \(p\), projection onto
\(\Lambda(\widehat p)\) leaves \(\lambda\) unchanged:
\(\lambda_{\widehat p}=\lambda\).
Combining \eqref{supp:eq:net-empirical-plugin-full} and
\eqref{supp:eq:net-population-plugin-full} gives
\begin{equation*}
 \left\|
 \left[Z_{N_n}(\widehat p,\widehat v,\lambda)
       -Z_{N_n}(p,v,\lambda)\right]
 \one_{\mathcal H_n(p,v)}
 \right\|_2
 \le C\left\{\frac{d_n}{\sqrt{N_n}}+|v|^3d_n+d_n^2\right\}.
\end{equation*}
For every fixed \(h>0\), uniformly over
\(hN_n^{-1/12}\le|v|\le v_0\), \eqref{supp:eq:calibration-orthogonality}
follows from
\begin{align*}
 &N_n^{1/12}|v|^{-5}
 \left\|
 \left[Z_{N_n}(\widehat p,\widehat v,\lambda)
       -Z_{N_n}(p,v,\lambda)\right]
 \one_{\mathcal H_n(p,v)}
 \right\|_2\\
 &\qquad\le C_h\{d_n+N_n^{1/4}d_n+N_n^{1/2}d_n^2\}
 \longrightarrow0.
\end{align*}
Here \eqref{supp:eq:net-fold-rate} gives
\(N_n^{1/2}d_n^2\to0\), and hence also
\(N_n^{1/4}d_n\to0\).  This verifies condition \textup{(C)}.
\end{proof}

\Cref{supp:lem:net-C} permits calibration with a vanishing fraction of the
sample.  The remaining observations supply the bowtie statistic used
for local estimation and inversion.

Write \(L=\diam\Lambda=q_-^{-1}-q_+^{-1}\) and define
\begin{equation}\label{supp:eq:net-C6}
 C_6^{\rm net}
 =4K_6\left\{15\binom M5\right\}^{-1/6}
 L^{5/3}\sup_{p\in\mathcal P}p(1-p).
\end{equation}

The following theorem combines the local lower bound with uniform
attainment and gives the global minimax constant.

\begin{theorem}[Theorem~6.3 of the main paper]\label{supp:thm:net-exact}
For every fixed \(M\ge5\),
\begin{equation}\label{supp:eq:net-exact-limit}
 \lim_{n\to\infty}n^{1/6}\mathcal R_n^{\rm net}=C_6^{\rm net}.
\end{equation}
One measurable estimator attains the upper bound without knowing
\(p,q\), or \(\theta\).
\end{theorem}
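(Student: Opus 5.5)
The plan is to apply \Cref{supp:thm:exact-globalization} with regular coordinate $c=p$, $\mathcal C=\mathcal P$, the constant interval $\Lambda(p)=\Lambda$ (so $C_\Lambda=0$ and $L(p)=L>0$), $k=6$, $I(p)=I_M(p)$ and $\beta(p)=2$. The lower bound $\liminf n^{1/6}\mathcal R_n^{\rm net}\ge\mathsf C_6$ then follows from the uniform setup. That setup is supplied by \Cref{supp:lem:net-jet}, whose bounds are uniform in $p\in\mathcal P$ because the graph space is finite and $L_{p,v,\lambda}\ge1/2$. The target is the exact identity $\tau_{\rm net}=2v\lambda$, and $v_\circ(p,v,\lambda)=v$ is continuous on the compact space $\Xi_{\rm net}$. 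Substituting into \eqref{supp:eq:general-exact-constant} gives
\[
\mathsf C_6=K_6\sup_{p}4\{15\tbinom M5\}^{-1/6}\{p(1-p)\}^{6/6}L^{5/3},
\]
which is $C_6^{\rm net}$ in \eqref{supp:eq:net-C6}.

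For the upper bound, condition (E) holds by \eqref{supp:eq:net-E-verification}, with $\Gamma_{\rm net}=I_M(p)\lambda v^6$ exactly, so $\zeta=3$ suffices and the feature is bounded. Condition (C) holds by \Cref{supp:lem:net-C} once $m_n$ is chosen, say $m_n=\lceil n^{3/4}\rceil$; then $m_n=o(n)$ and \eqref{supp:eq:net-fold-rate} holds. Condition (O)(i) is immediate from the exact formulas:
\[
\partial_\lambda\Gamma_{\rm net}=I_M(p)v^6,\qquad |\partial_\lambda\tau|=2|v|,\qquad |\partial_v\tau|+|\partial_p\tau|=2\lambda\le2\lambda_+ .
\]
Because $I_M(p)$ is bounded above and below on $\mathcal P$, we may take $a_O\asymp A_O$. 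The map $\lambda\mapsto\Gamma_{\rm net}$ is linear, hence injective for $v\neq0$.

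The main work is (O)(ii), or its replacement by the moment-based outer candidate $\widehat\lambda_n^O$ used in the network construction. For the general version, take
\[
\Xi_O=\{\xi:|v|\ge v_0/4\},\qquad \Phi_O=\Phi_{\rm net}.
\]
On $\Xi_O$ the Lipschitz map $\Psi_{\rm net}\mapsto(p,t,b)=(p,v^3,\lambda v^6)$ is available, and $(t,b)\mapsto(v,\lambda)=(\operatorname{sgn}(t)|t|^{1/3},\,b/t^2)$ is Lipschitz there since $|t|\ge(v_0/4)^3$. Hence
\[
|2v\lambda-2v'\lambda'|\le C\|\Psi_{\rm net}(\xi)-\Psi_{\rm net}(\xi')\|_2 ,
\]
which is \eqref{supp:eq:off-fibre-inverse}. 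The variance bound \eqref{supp:eq:off-fibre-variance} is trivial because $\Phi_{\rm net}$ is bounded, and $\Psi_{\rm net}$ is a polynomial, hence continuous. Theorem~\ref{supp:thm:exact-globalization} then gives $n^{1/6}\mathcal R_n^{\rm net}\to C_6^{\rm net}$ and attainment by the estimator of Section~4.2, which uses only $(\widehat p,\widehat v)$ and main-fold averages and so does not need $p,q,\theta$.

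If instead the stated moment-based candidate $2\widetilde v\widehat\lambda_n^O$ is to be justified directly, the step I expect to be the main obstacle is its $O(N_n^{-1})$-type risk bound on $|v|\ge3v_0/8$. There $\widetilde v$ carries calibration error of order $m_n^{-1/2}$, which is only $o(N_n^{-1/6})$ after squaring, and this is not automatically $O(N_n^{-1})$. It suffices, though, that the normalized contribution is $o(1)$. Squared error of $\widetilde v$ is $O(m_n^{-1})$ with $m_n^{-1}N_n^{1/6}\to0$. The ratio $\widehat\Gamma_{\rm net}/(I_M(\widehat p)\widetilde v^6)$ is Lipschitz once $|\widetilde v|$ is bounded below. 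The Hoeffding bad event has polynomially small probability and loss bounded by $B_{\rm net}^2$ after clipping. Together these give outer normalized risk $O(N_n^{1/6}(m_n^{-1}+N_n^{-1}))=o(1)$, which slots into Step 2 of the attainment proof in place of the minimum-distance bound.
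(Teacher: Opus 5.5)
Your proposal is correct and follows the paper's proof essentially step for step. You apply Theorem~4.1 with $c=p$, $k=6$, $I(p)=I_M(p)$ and $\beta(p)=2$. You verify (E) from the exact bowtie mean, (C) from the network calibration lemma with $m_n=\lceil n^{3/4}\rceil$, and (O) through the Lipschitz inversion $(p,v^3,\lambda v^6)\mapsto 2v\lambda$ on $|v|$ bounded below. You then show that the moment-based outer candidate has normalized risk of order $N_n^{1/6}(N_n^{-1}+m_n^{-1})$ up to a logarithm, hence $o(1)$, exactly as the paper does.

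The only slip is your choice $v_\circ(p,v,\lambda)=v$. Because $\widehat v$ is projected onto $[-v_0,v_0]$ with $v_0<v_{\max}$, the global requirement $\sup_{\xi}\Pp\{|\widehat v-v_\circ(\xi)|>\varepsilon_n\}=o(N_n^{-1/6})$ in (C) fails for $v_0<|v|\le v_{\max}$. You should instead take $v_\circ=\Pi_{[-v_0,v_0]}(v)$, which is the coordinate the calibration lemma actually certifies; this leaves the rest of your argument unchanged.
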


\begin{proof}[Proof of Theorem~6.3]
Apply \Cref{supp:thm:exact-globalization} with \(c=p\), parameter space
\(\Xi_{\rm net}\), the chart \(|v|\le v_0\), and
\[
 v_\circ(p,v,\lambda)=\Pi_{[-v_0,v_0]}(v).
\]
The interval \(\Lambda(p)=\Lambda\) is fixed, the target expansion is
exact with \(\beta(p)=2\), and \Cref{supp:lem:net-jet} verifies the uniform
likelihood conditions.

For \textup{(E)}, use \eqref{supp:eq:net-efficient-feature} and
\eqref{supp:eq:net-E-verification}.  The variance under \(P^M_{p,0}\) is
\(I_M(p)\) by \eqref{supp:eq:net-bowtie-orthogonality}, and
\eqref{supp:eq:net-exact-eulerian-expansion} starts at order \(v^3\), giving
the stated uniform variance expansion.  All moments are uniformly bounded
because the graph space is finite and \(\mathcal P\) is a compact subset
of \((0,1)\).  With \(m_n=\lceil n^{3/4}\rceil\),
\eqref{supp:eq:net-fold-rate} and \Cref{supp:lem:net-C} verify \textup{(C)}.

For \textup{(O)}, the exact mean curve satisfies, uniformly for nonzero
\(v\),
\[
 I_{\min}|v|^6
 \le|\frac{\partial}{\partial\lambda}\Gamma_{\rm net}(p,v,\lambda)|
 \le I_{\max}|v|^6,
\]
where \(0<I_{\min}\le I_M(p)\le I_{\max}<\infty\).  It is one-to-one in
\(\lambda\), and
\[
 |\frac{\partial}{\partial\lambda}\tau_{\rm net}|=2|v|,
 \qquad |\frac{\partial}{\partial v}\tau_{\rm net}|=2\lambda,
 \qquad \frac{\partial}{\partial p}\tau_{\rm net}=0.
\]
Let
\[
 \Xi_O=\{(p,v,\lambda)\in\Xi_{\rm net}:|v|\ge v_0/8\},
 \qquad \mathcal H_O=\mathbb R^{2^{\binom M2}},
 \qquad \Phi_O=\Phi_{\rm net}.
\]
This compact set contains \(\{\xi:|v_\circ(\xi)|\ge v_0/4\}\).
The expectation \(\Psi_O=\Psi_{\rm net}|_{\Xi_O}\) determines
\((p,t,b)=(p,v^3,\lambda v^6)\) through the polynomial centered moments
\eqref{supp:eq:net-moment-p-full}--\eqref{supp:eq:net-moment-bowtie-full}.  On
\(\Xi_O\), the inverse maps
\[
 v=\operatorname{sgn}(t)|t|^{1/3},\qquad
 \lambda=b/v^6,\qquad \tau_{\rm net}=2\lambda v
\]
have uniformly bounded derivatives.  Hence the target is Lipschitz in
\(\Psi_O\).  The feature is bounded, so its variance is uniformly bounded;
continuity, compactness, and the finite-dimensional minimum-distance
criterion ensure the existence of the required measurable selector.  This proves
\textup{(O)}.

Apply \Cref{supp:thm:exact-globalization}.  Since
\[
 I_M(p)^{-1/6}
 =\left\{15\binom M5\right\}^{-1/6}p(1-p),
 \qquad \beta(p)^2=4,
\]
and \(2-2/6=5/3\), its constant
\eqref{supp:eq:general-exact-constant} is precisely
\eqref{supp:eq:net-C6}. This proves \eqref{supp:eq:net-exact-limit} for the
minimum-distance outer construction.

We next justify the moment-based outer candidate displayed in Section~6.3
of the main paper. Let \(\widetilde v\) be its calibration coordinate,
\(\widehat\lambda_n^O=\Pi_\Lambda\{
\widehat\Gamma_{\rm net}/[I_M(\widehat p)\widetilde v^6]\}\), and
\(\widehat\tau_n^O=2\widetilde v\widehat\lambda_n^O\).
On \(|v|\ge v_0/4\), Hoeffding's inequality and the bounded derivative
of the real cube root away from zero give, for any fixed \(A>0\),
\[
 \begin{aligned}
 d_n&=\sqrt{\frac{\log N_n}{m_n}},\qquad
 \mathcal F_{O,n}
 =\{|\widehat p-p|+|\widetilde v-v|\le C_A d_n\},\\
 \sup_{\substack{(p,v,\lambda)\in\Xi_{\rm net}\\|v|\ge v_0/4}}
 P_{p,v,\lambda}^{\otimes m_n}(\mathcal F_{O,n}^c)&=O(N_n^{-A}).
 \end{aligned}
\]
For all large \(n\), \(|\widetilde v|\ge v_0/8\) on this event.
The exact mean formula \eqref{supp:eq:net-shifted-bowtie-mean-full} gives
\[
 P_{p,v,\lambda}B_{M,\widehat p}
 =I_M(\widehat p)\lambda v^6+O(|\widehat p-p|^3).
\]
Nonexpansiveness of \(\Pi_\Lambda\) therefore implies, on
\(\mathcal F_{O,n}\),
\[
 |\widehat\tau_n^O-2v\lambda|
 \le C\left\{|\widetilde v-v|+|\widehat p-p|^3
 +|\widehat\Gamma_{\rm net}-P_{p,v,\lambda}B_{M,\widehat p}|\right\}.
\]
Conditional on calibration, the last centered term has variance at most
\(C/N_n\), since \(B_{M,p'}\) is uniformly bounded for
\(p'\in\mathcal P\). Hence
\begin{equation}\label{supp:eq:net-explicit-outer-risk}
 \sup_{\substack{(p,v,\lambda)\in\Xi_{\rm net}\\|v|\ge v_0/4}}
 \E_{p,v,\lambda}
 \{(\widehat\tau_n^O-2v\lambda)^2\one_{\mathcal F_{O,n}}\}
 \le C\left\{N_n^{-1}+\frac{\log N_n}{m_n}\right\}.
\end{equation}
On the calibration events from \Cref{supp:lem:net-C}, a positive outer weight
implies \(|v|\ge v_0/4\) for all large \(n\). With
\(m_n=\lceil n^{3/4}\rceil\), \(A>1/6\), and the final projection onto
\([-B_{\rm net},B_{\rm net}]\), where
\(B_{\rm net}=2q_-^{-1}v_{\max}\), both the bound above and the
exceptional-event contribution are \(o(n^{-1/6})\). Thus this candidate
can replace the minimum-distance outer candidate without changing the
attaining constant. The displayed implementation retains
\(\widetilde v\) in addition to the summaries used by the generic
reduced experiment in Section~4.2 of the main paper.
\end{proof}

\Cref{supp:thm:net-exact} shows that the largest critical-path risk determines
the global constant even when all three original parameters are unknown.

\section{Computing the estimator across four regions}\label{supp:sec:supp-computable}

The calculations below implement the estimator in Section~4.2 and its
model-specific forms in Sections~5--6 of the main paper.

\subsection{Bounds for the bounded-normal minimax risk}
\label{supp:sec:supp-normal-certificate}

For $z>0$, write
\[
 R(\mu,d)=\E_\mu\{d(Y)-\mu\}^2,\qquad
 Y\sim N(\mu,1),\qquad 0\le\mu\le z.
\]
By projection, it suffices to consider $d:\mathbb R\to[0,z]$.  For a finite prior
$p=\{(\mu_j,p_j):1\le j\le s\}$, let $d_p$ be its posterior mean and define
\[
 B_z(p)=\inf_d\sum_{j=1}^s p_jR(\mu_j,d),
 \qquad U_z(p)=\sup_{0\le\mu\le z}R(\mu,d_p).
\]
Then
\begin{equation}\label{supp:eq:supp-exact-normal-bracket}
 B_z(p)\le R_G(z)\le U_z(p).
\end{equation}
For every $d:\mathbb R\to[0,z]$,
\begin{equation}\label{supp:eq:supp-normal-risk-lipschitz}
 |R(\mu,d)-R(\mu',d)|
 \le L_R(z)|\mu-\mu'|,
 \qquad L_R(z)=2z+z^2\sqrt{2/\pi}.
\end{equation}
Indeed, differentiation and $|d(y)-\mu|\le z$ give
\[
 \left|\frac{\partial}{\partial\mu}R(\mu,d)\right|
 \le\int_{\mathbb R}
 \{2z+z^2|y-\mu|\}\varphi(y-\mu)\,dy=L_R(z),
\]
and the mean-value theorem gives \eqref{supp:eq:supp-normal-risk-lipschitz}.
Let $0=\mu_0<\cdots<\mu_G=z$ have maximum spacing $h$, with numerical
evaluations satisfying the rigorous error bounds
\begin{equation}\label{supp:eq:supp-normal-numerical-errors}
 |\widehat B-B_z(p)|\le\varepsilon_B,\qquad
 |\widehat R_g-R(\mu_g,d_p)|\le\varepsilon_U
 \quad(0\le g\le G).
\end{equation}
Define
\begin{equation}\label{supp:eq:supp-certified-normal-bounds}
 B^-=(\widehat B-\varepsilon_B)\vee0,\qquad
 U^+=\max_{0\le g\le G}\widehat R_g
       +\frac12L_R(z)h+\varepsilon_U.
\end{equation}
Equations \eqref{supp:eq:supp-exact-normal-bracket}--\eqref{supp:eq:supp-normal-numerical-errors}
give
\begin{equation*}
 B^-\le R_G(z)\le U^+.
\end{equation*}
Thus $U^+-B^-\le\varepsilon$ establishes $\varepsilon$-minimaxity, provided
the error bounds $\varepsilon_B,\varepsilon_U$ are rigorous.

\subsection{Exchange computation}

For $S\subseteq\{\mu_0,\ldots,\mu_G\}$ and $p\in\Delta(S)$,
\begin{equation*}
 d_p(y)=
 \frac{\sum_{\mu\in S}p_\mu\mu\varphi(y-\mu)}
      {\sum_{\mu\in S}p_\mu\varphi(y-\mu)},\qquad
 \frac{\partial B_z(p)}{\partial p_\mu}=R(\mu,d_p).
\end{equation*}
For $\eta>0$, take an even
$G\ge2\max\{1,\lceil L_R(z)z/(2\eta)\rceil\}$,
$\mu_g=gz/G$, and $S_0=\{0,z/2,z\}$.  At step $t$, let $p_t$ be the
minimum-$\ell_2$ maximizer of $B_z$ on $\Delta(S_t)$, compute
\eqref{supp:eq:supp-certified-normal-bounds}, and stop when
$U^+-B^-\le\eta$.  Otherwise let
\begin{equation*}
 \mu_t=\min\operatorname*{arg\,max}_{\mu\in\{\mu_0,\ldots,\mu_G\}\setminus S_t}
             \widehat R(\mu,d_{p_t}),\qquad
 S_{t+1}=S_t\cup\{\mu_t\}.
\end{equation*}
For exact inner maximization and exact grid risks, the KKT conditions give
\[
 \max_{\mu\in S_t}R(\mu,d_{p_t})=B_z(p_t).
\]
Hence each step before termination adds a new grid point, and the procedure
terminates after at most $G-2$ additions.  In the implementation, choose
the smallest unused grid point among those with the highest risk; if none
remains, tighten the inner tolerances.  Analytic gradients and
Gauss--Hermite integration are used.  Software first maximizes $B_z$, then
minimizes $\lVert p\rVert_2$ subject to
$B_z(p)\ge\widehat B_{\max}-\varepsilon_{\rm opt}$.  It differentiates the
quadrature approximation itself; this gradient converges to $R(\mu,d_p)$.
A candidate prior is retained only when it meets the feasibility and regret
tolerances; a solution returned with a nonzero solver status is retained only
when the KKT residual also meets its tolerance.
With rigorous error bounds $\varepsilon_B,\varepsilon_U$,
$U^+-B^-\le\eta$ bounds the excess risk.  If the grid excess risk
$\max_gR(\mu_g,d_{p_t})-B_z(p_t)$ also has a rigorous bound, the gap is bounded by
\[
 \max_gR(\mu_g,d_{p_t})-B_z(p_t)
 +L_R(z)h/2+2(\varepsilon_B+\varepsilon_U).
\]
Without these error bounds, the reported gap is a numerical diagnostic.

\subsection{Finite-grid approximation of the outer estimator}

The outer minimum-distance estimator is
\begin{equation*}
 \widehat\xi_{O,N_n}^{\mathrm{id}}
 \in\argmin_{\zeta\in\Xi_O}
 \|\widehat\Psi_{O,N_n}-\Psi_O(\zeta)\|_{\mathcal H_O},
 \qquad
 \widehat\Psi_{O,N_n}=\frac1{N_n}\sum_{i=1}^{N_n}\Phi_O(O_i).
\end{equation*}
For a finite $h$-net $\mathcal G_h\subset\Xi_O$, break ties by grid order and set
\[
 \widehat\xi_{O,N_n}^{h}
 \in\argmin_{\zeta\in\mathcal G_h}
 \|\widehat\Psi_{O,N_n}-\Psi_O(\zeta)\|_{\mathcal H_O}.
\]
Set
\[
 \omega_O(h):=\sup_{d(\xi,\xi')\le h}
 \|\Psi_O(\xi)-\Psi_O(\xi')\|_{\mathcal H_O}\longrightarrow0.
\]
The two minimum discrepancies satisfy
\begin{equation*}
 \begin{aligned}
 0\le{}&\|\widehat\Psi_{O,N_n}
            -\Psi_O(\widehat\xi_{O,N_n}^{h})\|_{\mathcal H_O}\\
       &-\|\widehat\Psi_{O,N_n}
            -\Psi_O(\widehat\xi_{O,N_n}^{\mathrm{id}})\|_{\mathcal H_O}
       \le\omega_O(h).
 \end{aligned}
\end{equation*}
Condition~\textup{(O)} gives
\begin{align*}
 |\tau(\widehat\xi_{O,N_n}^{h})-\tau(\xi)|
 &\le A_O\{2\|\widehat\Psi_{O,N_n}-\Psi_O(\xi)\|_{\mathcal H_O}
              +\omega_O(h)\},\\
 \sup_{\xi\in\Xi_O}
 \E_\xi\{\tau(\widehat\xi_{O,N_n}^{h})-\tau(\xi)\}^2
 &\le \frac{8A_O^3}{N_n}+2A_O^2\omega_O(h)^2.
\end{align*}
Thus choosing $h_n$ so that
$N_n^{1/(2k)}\omega_O(h_n)\to0$ makes the contribution of the grid approximation
to the risk bound
$o(N_n^{-1/k})$.

\subsection{Expanded logistic implementation}
\label{supp:sec:supp-logistic-expanded-rule}

\noindent The calibration estimates and feature are those in Section~5.3 of the main paper. Use the calibration and main-fold sizes specified in \Cref{supp:lem:logistic-C}. On the main fold, define the empirical moment, local information, critical-scale coordinate, and Gaussian statistic by
\[
 \begin{aligned}
 \widehat\Gamma_L&=\frac1{N_n}\sum_{i=1}^{N_n}
 \phi_{\widehat c,\widehat v}(R_i,Z_i),\\
 \widehat J_L&=\frac{N_n|\widehat v|^{10}}{1536},
 &\widehat s_L&=N_n^{1/10}|\widehat v|,\\
 \widehat W_L&=
 \begin{cases}
  1536\widehat\Gamma_L/\widehat v^5,&\widehat v\ne0,\\
  0,&\widehat v=0.
 \end{cases}
 \end{aligned}
\]
The following specifies the remaining recovery and combination steps.

On the outer region, fifth-order inversion is no longer needed. Use
\(\Phi_L\), \(\Psi_L\), and \(\mathcal H_L\) from
\Cref{supp:sec:supp-logistic-global-verification}, and set
\(\Xi_{O,L}=\{\xi\in\Xi_B:|v_{\circ,L}(\xi)|\ge v_1/8\}\).
Choose a measurable minimum-distance estimator
\[
 \widehat\xi_{O,L}\in\argmin_{\zeta\in\Xi_{O,L}}
 \left\|\frac1{N_n}\sum_{i=1}^{N_n}\Phi_L(R_i,Z_i)
              -\Psi_L(\zeta)\right\|_{\mathcal H_L}.
\]

It remains to combine the local, inversion, and outer estimators into a single procedure. Let the thresholds and bounded-normal rules be those specified in Section~4.2 of the main paper, with \(k=5\) and \(v_0=v_1\), and define
\[
 u_n=\rho_{r_n,2r_n}(\widehat s_L),\qquad
 g_n=\rho_{M_n,2M_n}(\widehat s_L),\qquad
 o_n=\rho_{3v_1/8,\,7v_1/16}(|\widehat v|).
\]
For \(L(\widehat c)>0\), define the local and inversion estimates of the profile coordinate by
\begin{align*}
 \widehat\lambda_n^L
  =\delta^{(\varepsilon_n^{\rm BN})}
    _{\Lambda(\widehat c),\widehat J_L}(\widehat W_L),
 \qquad\quad
 \widehat\lambda_n^I
  =\Gamma_L(\widehat c,\widehat v,\cdot)^{-1}
   \left[\Pi_{\Gamma_L\{\widehat c,\widehat v,\Lambda(\widehat c)\}}
     \widehat\Gamma_L\right].
\end{align*}
Each quantity is evaluated only when its associated weight is positive; a zero-weight contribution is defined to be zero. The resulting estimator is
\[
\begin{aligned}
 \widehat\theta_n
 =\Pi_{[-B,B]}\Bigg[
 \frac{(1-o_n)u_n(1-g_n)}{\sqrt{\widehat c}}
    \widehat\lambda_n^L U(\widehat v,\widehat\lambda_n^L) +\frac{(1-o_n)u_ng_n}{\sqrt{\widehat c}}
    \widehat\lambda_n^I U(\widehat v,\widehat\lambda_n^I)
  +o_n(\widehat\xi_{O,L})_1\Bigg].
\end{aligned}
\]
The weights provide a smooth transition across the three regimes: the bounded-normal rule is used near the critical scale, moment inversion takes over as identification strengthens, and the outer estimator is used away from the local chart.

When \(L(\widehat c)=0\), the profile coordinate is a singleton and no estimation of \(\lambda\) is required. In this case, we use
\[
 \widehat\theta_n
 =\Pi_{[-B,B]}\left[
   \frac{1-o_n}{\sqrt{\widehat c}}\,
   \ell(\widehat c)U\{\widehat v,\ell(\widehat c)\}
   +o_n(\widehat\xi_{O,L})_1\right].
\]
By Theorem~4.1 of the main paper, together with the model-specific verification of its conditions, this estimator attains the exact minimax constant \(C_5(\cA,\cE)\) established in Theorem~5.3 of the main paper.

\subsection{Computing the motif estimator}
\label{supp:sec:network-computation}

$A^{(1)},\ldots,A^{(n)}$ are independent labelled graphs on a fixed vertex
set.
Use the calibration and main-fold sizes specified in \Cref{supp:lem:net-C}.

On the calibration fold set
\begin{align*}
 \widehat t_{\rm raw}
 &=\frac1{m_n}\sum_{r=1}^{m_n}T_{\widehat p}(A^{(r)}),
 &\widetilde t
 &=\Pi_{[-v_{\max}^3,v_{\max}^3]}(\widehat t_{\rm raw}),\\
 \widetilde v
 &=\operatorname{sgn}(\widetilde t)|\widetilde t|^{1/3},
 &\widehat v
 &=\Pi_{[-v_0,v_0]}(\widetilde v).
\end{align*}
$\widehat v$ is projected onto the local parameter interval; retain
$\widetilde v$ for the moment-based outer estimator. This additional
calibration coordinate is not part of the generic reduced statistic
$T_n$ in Section~4.2 of the main paper. The outer risk bound
\eqref{supp:eq:net-explicit-outer-risk} verifies that this implementation
attains the same constant.
On the main fold set
\begin{equation*}
 \widehat\Gamma_{\rm net}
 =\frac1{N_n}\sum_{r=1}^{N_n}B_{M,\widehat p}(A^{(r)}),
 \qquad
 \widehat J_{\rm net}
 =N_n I_M(\widehat p)|\widehat v|^{12},
 \qquad
 \widehat s_{\rm net}=N_n^{1/12}|\widehat v|.
\end{equation*}
Set
\begin{equation*}
 \widehat\tau_n^0=0.
\end{equation*}
When the local weight is positive, hence $\widehat v\ne0$, set
\begin{equation*}
 \widehat W_{\rm net}
 =\frac{\widehat\Gamma_{\rm net}}
        {I_M(\widehat p)\widehat v^6},
 \qquad
 \widehat\lambda_n^L
 =\delta^{(\varepsilon_n^{\rm BN})}_{\Lambda,\widehat J_{\rm net}}(\widehat W_{\rm net}),
 \qquad
 \widehat\tau_n^L=2\widehat v\widehat\lambda_n^L.
\end{equation*}
When the inverse weight is positive, compute
\begin{equation*}
 \widehat\lambda_n^I
 =\Pi_\Lambda\left\{
   \frac{\widehat\Gamma_{\rm net}}
        {I_M(\widehat p)\widehat v^6}\right\},
 \qquad
 \widehat\tau_n^I=2\widehat v\widehat\lambda_n^I.
\end{equation*}
When the outer weight is positive, compute
\begin{equation*}
 \widehat\lambda_n^O
 =\Pi_\Lambda\left\{
   \frac{\widehat\Gamma_{\rm net}}
        {I_M(\widehat p)\widetilde v^6}\right\},
 \qquad
 \widehat\tau_n^O=2\widetilde v\widehat\lambda_n^O.
\end{equation*}
Let $\rho_{a,b}$ be the linear ramp from zero to one and set
\[
 u_n=\rho_{r_n,2r_n}(\widehat s_{\rm net}),\qquad
 g_n=\rho_{M_n,2M_n}(\widehat s_{\rm net}),\qquad
 o_n=\rho_{3v_0/8,\,7v_0/16}(|\widehat v|).
\]
The resulting estimator is
\begin{equation*}
 \widehat\tau_n^{\rm net}
 =\Pi_{[-B_{\rm net},B_{\rm net}]}
 \left[(1-o_n)\{(1-u_n)\widehat\tau_n^0
 +u_n((1-g_n)\widehat\tau_n^L+g_n\widehat\tau_n^I)\}
 +o_n\widehat\tau_n^O\right],
\end{equation*}
where \(B_{\rm net}=2q_-^{-1}v_{\max}\). The sequences \(r_n,M_n\)
and the ramps are those of Section~4.2 of the main paper.

Precompute the $15\binom M5$ bowties.  The total computational cost is
$O(N_nM^5+m_nM^3)$.

\section{Numerical details}
\label{supp:sec:numerics}

We give the protocols and additional results for Section~7 of the main paper.

\subsection{Finite-sample recovery at moderate sample sizes}
\label{supp:sec:supp-moderate-n}

The main-paper experiments use fixed profile coordinates, rather than the
least-favourable shrinking paths used in the large-sample checks below.
The total sample sizes are $n=10^3,10^4,10^5,10^6$. At each combination
of signal, sample size and endpoint $\lambda\in\{3/2,3\}$, we generate
200 independent split samples. There are 3200 replications per model.
For logistic regression, $c=3$, $\mathcal A=\mathcal E=[1,2]$,
$B=4$, $v\in\{0.25,0.65\}$, and $m=\lceil n^{0.9}\rceil$.
The exact target is $\lambda U(v,\lambda)/\sqrt3$.
For networks, $M=5$, $p=1/2$, $\mathcal P=[0.49,0.51]$,
$\Lambda=[3/2,3]$, $v\in\{0.10,0.14\}$,
$m=\lceil n^{0.75}\rceil$, and the target is $2v\lambda$.
In both cases the main fold has size $N=n-m$.
The sampling laws are specified below. Calibration, clipping, routing
and solver settings for this fixed-coordinate benchmark are given in
Table~\ref{supp:tab:supp-implementation-settings}. These are finite numerical implementations
of the constructions; the fixed tuning parameters and numerical search
do not themselves certify the asymptotic uniform risk bound.

\begin{table}[t]
\centering
\small
\renewcommand{\arraystretch}{1.12}
\begin{tabular}{@{}p{0.19\textwidth}p{0.75\textwidth}@{}}
\toprule
Component & Setting\\
\midrule
Tuning thresholds
& Logistic $(r,M,v_0,v_{\rm clip})=(0.5,5,0.5,0.45)$, with target
  projection $[-4,4]$; network
  $(r,M,v_0,v_{\max})=(0.08,0.40,0.155,0.16)$, with $\widehat v$
  restricted by projection to absolute values at most $v_0$.\\
Bounded-normal action
& Observed standardized interval width, without binning;
  standardized numerical gap tolerance $10^{-3}$;
  minimum parameter and validation grids $101$ and $2001$; Gauss--Hermite
  order $120$; optimization tolerance $10^{-12}$;
  solver parameter \texttt{max\_iterations}=500;
  minimum-$\ell_2$ tie break.\\
Profile and inverse
& Logistic bracketed solves use Gauss--Hermite order $256$; network uses
  exact finite graph sums and the clipped bowtie-moment ratio.\\
Outer candidate
& Logistic Gaussian-kernel bandwidth $1$; $16$ Fourier frequencies
  ($67$ features);
  $33$ values of $\theta$ from $-4$ to $4$ in steps of $0.25$;
  $c\in\{2,2.5,3,3.5,4\}$; $9$ values of $\lambda$ per $c$; Gauss--Hermite order
  $160$; $928$ admissible candidates.  Network uses the closed-form clipped
  bowtie-moment ratio with the unclipped calibration coordinate $\widetilde v$.\\
Monte Carlo uncertainty
& Logistic uses $5000$ within-cell bootstrap repetitions for endpoint maxima
  and the tail slope.  Network uses $4000$ for the tail slope; its pointwise
  MSE bars use $t_{0.975,199}=1.9719565$.\\
\bottomrule
\end{tabular}
\caption{Reference settings for the unchanged fixed-coordinate simulations
and archived candidate-weight checks.  The asymptotic sequences are specified separately
in the main paper.}
\label{supp:tab:supp-implementation-settings}
\end{table}

\paragraph{Logistic outer minimization}
The original recovery grid is retained, but its resolution is insufficient
for some of these finite-sample settings. We refine the same 67-dimensional
feature criterion using the identity
\[
 \E[R\exp(iwZ)]
 =\exp(-\eta w^2/2)\E[\tanh(\theta X/2)\exp(iwX)],
 \qquad X\sim N(0,a),
\]
where $R=2Y-1$ and $Z=X+\epsilon$ with independent
$\epsilon\sim N(0,\eta)$. The expectation and its analytic gradient are
evaluated by 256-node Gauss--Hermite quadrature. The three grid points
with smallest criterion values, in the original stable grid order, are
used as starting points for L-BFGS-B in $(\theta,a,\eta)$ over
$[-4,4]\times[1,2]^2$. Its settings are
\texttt{ftol=1e-17}, \texttt{gtol=1e-11},
\texttt{maxiter=300}, and \texttt{maxls=50}.
The selector is restricted to $|v|\ge v_0/8=0.0625$.
An optimizer output with smaller $|v|$ is replaced by the two boundary
candidates with the same $(a,\eta)$ and
$\theta=\lambda U(\pm0.0625,\lambda)/\sqrt c$,
where $c=a+\eta$ and $\lambda=c/a$.
All admissible refined candidates are compared with the best original
grid candidate; ties use lexicographic order in $(\theta,c,\lambda)$.
Thus the chosen criterion never exceeds the best original grid value.
This deterministic refinement is not a certificate of global minimization.
Aggregate optimizer termination counts are released with the output.
No unknown generating parameter is passed to this selector or the routing rule.

\paragraph{Random streams and risk summaries}
The master seed is $20261016$. In each list, signal, size, endpoint and
replication indices start at zero and follow the orders given above.
The logistic and network streams initialize NumPy \texttt{SeedSequence}
with the respective tuples
\[
\begin{aligned}
 &[20261016,11,100+10i_v+i_n,i_\lambda,i_b,f],\\
 &[20261016,2,i_v,i_n,i_\lambda,i_b].
\end{aligned}
\]
Here $f=0,1$ denotes the logistic calibration and main fold, respectively.
The reported risk is the larger of the two endpoint sample means of
squared error. For each model--signal--size combination, we independently
resample 200 losses within each endpoint cell 5000 times, recompute
both means and their maximum, and take its 2.5th and 97.5th percentiles.
The bootstrap stream is
\texttt{SeedSequence}$([20261016,71,i_{\rm model},i_v,i_n])$,
with logistic preceding network. No asymptotic slope is fitted to these
fixed-coordinate experiments.

\paragraph{Validation and scope}
For logistic regression, comparisons at quadrature orders 128, 256 and
512 give weighted conditional-mean discrepancies below
$8.4\times10^{-16}$ and maximum feature discrepancies below
$7.3\times10^{-14}$ at the four generating settings.
A separate check uses direct independent draws
$X\sim N(0,a)$, $\epsilon\sim N(0,\eta)$,
$Z=X+\epsilon$, and $Y\mid X\sim\operatorname{Bernoulli}
\{\operatorname{expit}(\theta X)\}$, with 50 replications per cell
at $n=10^3,10^4$. It uses the same estimator, with stream
\texttt{SeedSequence}$([20261016,31,i_v,i_n,i_\lambda,i_b])$.
An internal-interval check at $\lambda=2.25$ is also released.
In particular, its network MSE at $v=0.14$, $n=10^6$ is $0.0211$,
compared with endpoint maximum $0.0166$ (100 and 200 replications per
cell, respectively). Finite-sample endpoint risks are therefore not
interpreted as estimates of the full-interval supremum.
The programs, environment, complete losses, weights and summaries are in
\texttt{moderate\_sample/} of Supplement B. The accompanying README gives
commands for rerunning the simulations and regenerating the figure.

\subsection{Critical-path checks with moment calibration}
\label{supp:sec:supp-estimator-monte-carlo}

We keep the original models, parameter ranges, targets and critical-path
coefficients. The numerical changes reuse observations for calibration
and feature estimation, and correct the endpoint feature means of the
logistic local rule. No generating coordinate is supplied to the decision.
The fixed-coordinate experiments above retain their original split-sample
implementation.

\subsubsection{Models, paths and data generation}
For logistic regression, $c=3$, $\mathcal A=\mathcal E=[1,2]$, $B=4$,
$\Lambda=[3/2,3]$, $I=1/1536$ and
$\tau=\lambda U(v,\lambda)/\sqrt3$.
For networks, $M=5$, $p=1/2$, $\mathcal P=[.49,.51]$,
$\Lambda=[3/2,3]$, $v_{\max}=.16$, $I_M(1/2)=61440$ and $\tau=2v\lambda$.
The following approximations identify the constants; Supplement B records
the full-precision binary64 inputs:
\[
\begin{aligned}
x_{5,\star}&\approx1.907547,&C_5&\approx0.550691,\\
x_{6,\star}&\approx0.364588,&C_6^{\rm net}&\approx0.250545.
\end{aligned}
\]
The corresponding standardized interval widths are
$z_{5,\star}\approx0.966657$ and $z_{6,\star}\approx0.873236$.
To preserve the original paths define
\[
N_{\rm ref}=n-m_{\rm ref},\qquad
v_n=x_{k,\star}N_{\rm ref}^{-1/(2k)},\qquad
m_{\rm ref}=
\begin{cases}
\lceil\exp(0.9\log n)\rceil,& k=5,\\
\lceil n^{3/4}\rceil,& k=6.
\end{cases}
\]
$N_{\rm ref}$ indexes data generation only; the revised rule uses all $n$
observations. Both $n$ and $N_{\rm ref}$ normalizations are released.
Logistic uses $n=10^9,\ldots,10^{13}$ and seven equally spaced
$\lambda$ values from $3/2$ to $3$, with $5000$ replications per cell.
Additional $10^8$ and $10^{14}$ runs are sensitivity checks.
Network uses $n=21366,30000,10^5,10^6,10^7,10^8$ and thirteen equally
spaced $\lambda$ values. Each endpoint has $5000$ replications and each
interior point $2000$, except at $21366$, where the counts are $10000$
and $3000$. The smallest displayed network size is the smallest integer
for which this reference path satisfies $|v_n|\le.16$; it is not a
statistical sample-size lower bound.

The graph PMF is computed on all $1024$ labelled graphs and checked
against direct enumeration of the $243$ latent-label configurations.
For paired comparisons, independent histograms of sizes
$m_{\rm ref},N_{\rm ref}$ provide paired
comparisons with the original implementation; the revised rule uses their
sum. The primary logistic runs directly draw one all-$n$ histogram.
Logistic uses normalized $128$-node Gauss--Hermite quadrature for
$T=Z/\sqrt3$ and for the conditional response mean. The $256$ categories
are ordered by increasing node, with positive responses first, and their
probabilities are proportional to $w_j\{1\pm g(t_j\mid v,\lambda)\}/2$.
Counts are sampled from this discretized observed law, not from the
Gaussian limit experiment. Profile roots use $256$ nodes and Brent's
method with \texttt{xtol=1e-14}, \texttt{rtol=4*eps},
\texttt{maxiter=200}, initial upper endpoint
$\max(.02,1.05|v|)$ and successive expansion by $1.2$ up to $32$.
The original continuous-data and quadrature checks are retained.

\subsubsection{Numerical decisions and their theoretical relation}
Both models use their original calibration moments and local feature,
computed on all $n$ observations. Logistic projects $\widehat c$ onto
$[2,4]$ and $\widehat v$ onto $[-.5,.5]$; its outer transition is
$[.4,.4375]$, and its inverse and outer candidates are unchanged.
In particular, the outer search domain remains $|v|\ge.0625$.
The local rule uses the exact endpoint feature means and targets.
Networks project $\widehat p$ onto $[.49,.51]$ and $\widehat v$ onto
$[-.16,.16]$, and use only zero, local and inverse branches.
Their inverse uses the exact identity $\Gamma_{\rm net}=I_M(p)\lambda v^6$.
The numerical thresholds are
$r_n=.5\{\log(n+e)\}^{-1/4}$,
$M_n=5\{\log(n+e)\}^{1/8}$ for logistic, and
$r_n=.08\{\log(n+e)\}^{-1/4}$,
$M_n=\{\log(n+e)\}^{1/8}$ for networks.
The target projections remain $[-4,4]$ and $[-.96,.96]$.
All primary replications have local weight one.
For observed standardized widths below $2$ in logistic and at most $1.8$
in networks, the equal-endpoint normal rule is used; these widths lie in
the short-interval range of \cite{supp:casellaStrawderman1981}.
For larger widths the reference exchange solver is called, with the
settings in Table~\ref{supp:tab:supp-implementation-settings}.
No correction is chosen by fitting risks to $C_k$.

\paragraph{Joint calibration bounds.}
The general theorem retains its split-sample condition (C). For the
same-sample variation, apply the proof in
Section~\ref{supp:sec:supp-uniform-attainment} under $P_\xi^{\otimes n}$,
using $N_n=n$ and the following probability and $L^2$ bounds.
Write $\mathbb P_n$ for the empirical measure. In logistic regression put
$\epsilon_n=C\sqrt{\log n/n}$. Concentration gives events $H_n$ of
arbitrarily high polynomial probability on which
$|\widehat c-c|+|\widehat v-v|\le\epsilon_n$ on the chart;
globally use the clipped coordinate $v_{\circ,L}$.
For $T=Z/\sqrt c$ and $s=(c/\widehat c)^{1/2}$, the identity
$H_3(sT)=s^3H_3(T)+3s(s^2-1)T$ gives
\[
\begin{aligned}
&(\mathbb P_n-P)(\phi_{\widehat c,\widehat v}-\phi_{c,v})\one_{H_n}\\
&\quad=\frac{\one_{H_n}}{96}\bigl[(s^3-1)(\mathbb P_n-P)RH_3(T)
+3s(s^2-1)(\mathbb P_n-P)RT\bigr].
\end{aligned}
\]
Its $L^2$ norm is at most $C\epsilon_n/\sqrt n$, without independence,
because its coefficients are deterministically bounded on $H_n$.
The profile bounds give the population substitution error
$C(|v|\epsilon_n+\epsilon_n^2)$. Hence the weighted residual in the
attainment proof is bounded, for each $h>0$, by
\[
C_h\{\epsilon_n+n^{2/5}\epsilon_n+n^{1/2}\epsilon_n^2\}=o(1),
\qquad n^{1/10}\epsilon_n=o(1).
\]
For networks put $d_n=C\sqrt{\log n/n}$. Uniform concentration gives
$|\widehat p-p|+|\widehat t-v^3|\le d_n$ on $H_n$ and
$|\widehat v-v|\le Cd_n^{1/3}$ after projection onto the full domain.
The finite graph space implies
\[
\left\|\sup_{p'\in\mathcal P}
|(\mathbb P_n-P)\partial_{p'}B_{M,p'}|\right\|_2
\le Cn^{-1/2}.
\]
Indeed, the supremum is bounded by a fixed constant times the sum of
the absolute empirical graph-probability errors. The mean-value theorem
and shifted-bowtie identity bound the empirical and population
substitution errors by $Cd_n/\sqrt n$ and $C(|v|^3d_n+d_n^2)$.
Their weighted residual is at most
\[
C_h\{d_n+n^{1/4}d_n+n^{1/2}d_n^2\}=o(1),\qquad
n^{1/12}d_n^{1/3}=o(1).
\]
Choose $P(H_n^c)=O(n^{-A})$ with $A>1/k$; final target projection makes
its normalized risk negligible. The local and inverse parts of the
original risk proof therefore remain valid for joint calibration.

\paragraph{Finite-profile correction and routing.}
For $\ell=\ell(c)$, $u=u(c)$ and $L=u-\ell>0$, define
\[
\begin{aligned}
D(c,v)&=\frac{\Gamma(c,v,u)-\Gamma(c,v,\ell)}{L},\\
\widetilde W_n&=\ell(\widehat c)+
\frac{\widehat\Gamma_n-\Gamma\{\widehat c,\widehat v,\ell(\widehat c)\}}
{D(\widehat c,\widehat v)},\qquad
\widetilde J_n=\frac{nD(\widehat c,\widehat v)^2}{I(\widehat c)}.
\end{aligned}
\]
Use the singleton convention when $L=0$ and the derivative extension
for $D$. Retain its sign for negative $v$.
The local normal decision is applied to $(\widetilde W_n,\widetilde J_n)$.
On the reported logistic paths its target action is the interpolation
of the exact endpoint targets; the full implementation uses the
normal rule and profile inversion at larger widths.
Uniformly on every fixed critical annulus, the expansions
\[
D(c,v)=I(c)v^5\{1+O(v^2)\},\qquad
\Gamma(c,v,\ell)-\ell D(c,v)=O(|v|^7)
\]
give $\widetilde W_n-\widehat W_n=o_{L^2}(1)$ and
$\widetilde J_n/\widehat J_n=1+o(1)$ on $H_n$.
The interpolated and exact targets differ by $O(|v|^3)$, so their
$n^{1/10}$-scaled difference vanishes. The continuous normal rules and
the sufficiently slow diagonal selection in the original proof preserve
the leading constant for this variation.

The shifted logistic gate stays within its chosen profile chart. With
$H\sim N(0,\lambda)$,
\[
V_\lambda(u)=\frac2\lambda\E\{H\tanh(uH/2)\},\qquad
V_\lambda'(u)=\lambda^{-1}\E\{H^2\sech^2(uH/2)\}>0.
\]
Thus $U$ is the unique inverse on the attainable range and the chart
covers $|v_{\rm raw}|<v_0$. For an outer gate
$[4v_0/5,7v_0/8]$, positive inverse weight implies
$|v|\le15v_0/16<v_0$ on $H_n$ for large $n$; positive outer weight
implies $|v|\ge3v_0/4>v_0/4$. The same within-chart and outer bounds apply.
The explicit numerical radius and thresholds do not certify the
analytic constants in those bounds.
For networks, the exact moment and target formulas hold throughout the
legal parameter domain, so projected inversion replaces the outer branch.
Use weights $1-u_n,u_n(1-g_n),u_ng_n$ as in the general construction.
The same risk bounds for zero, local and inverse branches apply, with
the outer term omitted. The likelihood approximation is still needed
only near the null. These variations admit the sufficiently slow tuning
sequences of the proof; the specific finite numerical schedules are not
certificates of uniform sharp risk at a stated sample size.

\subsubsection{Risk, uncertainty and comparison with the original runs}
For the declared grid $G$ and replication count $B_{n,\lambda}$, report
\begin{equation}\label{supp:eq:supp-empirical-endpoint-risk}
\widehat R_{n,G}
=\max_{\lambda\in G}\frac1{B_{n,\lambda}}
\sum_{b=1}^{B_{n,\lambda}}
\{\widehat\tau_n^{(b)}-\tau(c,v_n,\lambda)\}^2.
\end{equation}
This is a finite-grid maximum at the stated regular coordinate, not the
supremum over the full parameter space.
Logistic intervals use $5000$ within-cell bootstrap resamples, recomputing
the maximum across all seven cells. Network intervals bound the maximum
of the thirteen cell means by approximate Bonferroni $t$ intervals at
each fixed $n$, without a simultaneous claim across sample sizes.
The slope fits use the last four displayed sample sizes; network slope
intervals use $2000$ within-cell bootstrap resamples.
These intervals describe Monte Carlo error and exclude quadrature error.

\begin{table}[t]
\centering\small
\begin{tabular}{@{}rrrr@{}}
\toprule
$n$ & $v_n$ & $n^{1/k}\widehat R_{n,G}/C_k$ & $95\%$ MC interval\\
\midrule
\multicolumn{4}{c}{Logistic, $k=5$}\\
$10^{9}$ & 0.243399 & 1.2262 & $[1.2103,1.2562]$\\
$10^{10}$ & 0.192775 & 1.1343 & $[1.1206,1.1632]$\\
$10^{11}$ & 0.152781 & 1.0838 & $[1.0647,1.1107]$\\
$10^{12}$ & 0.121145 & 1.0638 & $[1.0433,1.0892]$\\
$10^{13}$ & 0.096097 & 1.0412 & $[1.0275,1.0687]$\\
\addlinespace
\multicolumn{4}{c}{Network, $k=6$}\\
$21{,}366$ & 0.159999 & 1.0261 & $[1.0020,1.0502]$\\
$30{,}000$ & 0.155442 & 1.0433 & $[1.0075,1.0791]$\\
$10^{5}$ & 0.140356 & 1.0040 & $[0.9696,1.0385]$\\
$10^{6}$ & 0.115602 & 1.0019 & $[0.9685,1.0353]$\\
$10^{7}$ & 0.095306 & 0.9934 & $[0.9606,1.0262]$\\
$10^{8}$ & 0.078614 & 0.9993 & $[0.9664,1.0322]$\\
\bottomrule
\end{tabular}
\caption{Critical-path results after moment calibration. Logistic uses
seven $\lambda$ values and networks thirteen. The interval constructions
are specified in the text; neither is a confidence statement about the
full parameter-space minimax risk. Exact inputs and all cell risks are
included in Supplement B.}
\label{supp:tab:supp-primary-all-sizes}
\end{table}

\begin{figure}[t]
\centering
\includegraphics[page=1,width=\textwidth]{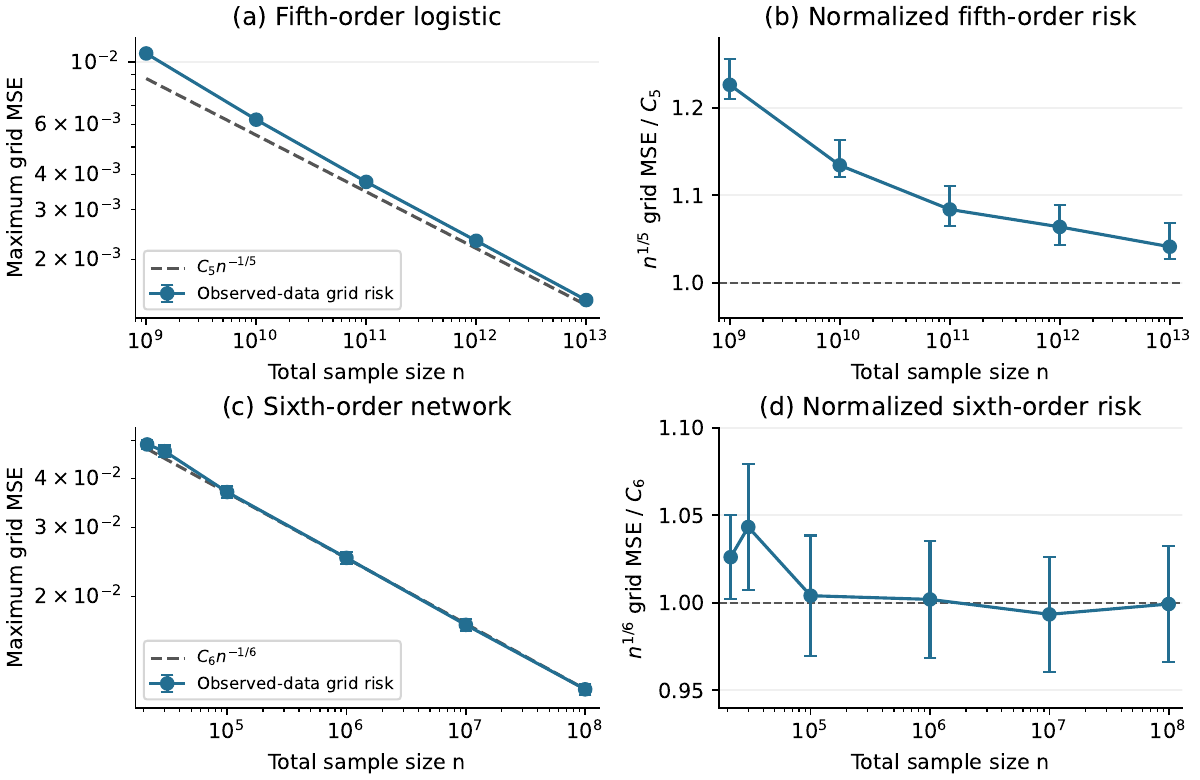}
\caption{Critical-path checks with joint moment calibration. Left panels
show grid MSE against $C_kn^{-1/k}$; right panels show normalized risk.
The original model, path coefficient and parameter ranges are retained.
Both axes and normalization use total $n$, although generation preserves
the original reference-path index $N_{\rm ref}$. Intervals reflect Monte
Carlo uncertainty, and finite-profile bias remains visible for logistic
regression. These curves are separate from the fixed-coordinate study
in the main paper.}
\label{supp:fig:supp-large-critical}
\end{figure}

The last-four-point log--log slopes against total $n$ are
$-0.211965$ $[-0.215714,-0.208244]$
for logistic regression and $-0.167656$
$[-0.171618,-0.163714]$ for networks,
compared with $-1/5$ and $-1/6$. The logistic interval excludes its
limiting exponent: the smaller-sample curve still has detectable
finite-profile bias. These are Monte Carlo intervals for the fitted
finite-sample curve, not bounds on its asymptotic approximation error.
Using the original reference index $N_{\rm ref}$ instead gives slopes
$-0.210328$ and $-0.166516$; both conventions
are retained in Supplement B, while the table and figure use total $n$.

The original checks used maximum sizes $10^{17}$ and $10^{15}$, with
$200$ replications at each endpoint. The revised maxima are $10^{13}$ and
$10^8$, with denser grids and more replications. Their final normalized
risk ratios are $1.0412$ and $0.9993$. These compare agreement with the
limit, not equality of unnormalized risk across sample sizes.
Supplement B also gives independent same-data comparisons with the
original implementation; no pointwise domination is claimed.

\subsubsection{Finite-sample likelihood comparison}
\label{supp:sec:supp-finite-likelihood}
For logistic regression we also compare the Hermite-feature rule above
with a decision based on the finite observed likelihood, on the same
reference paths. Compute $\widehat c,\widehat v$ from all $n$ observations
as in \eqref{supp:eq:calibration-c-v}, with projections onto $[2,4]$ and
$[-.7,.7]$, respectively. Let $\widehat\lambda_0,\widehat\lambda_1$ be
the endpoints of $\Lambda(\widehat c)$,
$\widehat\theta_j=\tau(\widehat c,\widehat v,\widehat\lambda_j)$ and
$\widehat T_i=Z_i/\sqrt{\widehat c}$. Using the exact conditional response
mean $g$, set
\[
\begin{aligned}
\ell_n&=\sum_{i=1}^n\log
\frac{1+R_i g(\widehat T_i\mid\widehat v,\widehat\lambda_1)}
     {1+R_i g(\widehat T_i\mid\widehat v,\widehat\lambda_0)},\\
\widehat\tau_n^{\rm lik}
&=\widehat\theta_0+(\widehat\theta_1-\widehat\theta_0)
\operatorname{expit}(\ell_n).
\end{aligned}
\]
The equal endpoint weights are fixed before evaluation. The decision
uses observed data only; likelihood increments are accumulated with
\texttt{log1p} and extended precision. It is a finite-path comparison,
not a replacement for the globally attaining rule proved above.

Independent checks use the same $128$-node observed-law discretization.
Seed $2026101748$ gives $3000$ replications at each of thirteen equally
spaced $\lambda$ values for $n=10^{11},10^{12},10^{13}$; seed
$2026101760$ gives the same grid at $n=3\times10^{12}$.
The eleven interior cell risks are below the endpoint risks in these
runs. This does not establish an endpoint maximum over the continuous
interval. Complete profiling over the regular coordinates and a
finite-sample variance correction gave no material further improvement.

To assess the remaining algorithmic loss, fix $c=3,v=v_n$ and the two
endpoints. Write $\theta_j=\tau(3,v_n,\lambda_j)$,
$\Delta=\theta_1-\theta_0$, and let $p_j$ be their one-observation
densities. The equal-prior Bayes risk and Cauchy--Schwarz give
\[
\cR_n^{\mathrm{sc}}(4)\ge B_n
=\frac{\Delta^2}{2}\int
\frac{p_0^{\otimes n}p_1^{\otimes n}}
     {p_0^{\otimes n}+p_1^{\otimes n}}
\ge\frac{\Delta^2}{4}
\left(\int\sqrt{p_0p_1}\right)^{2n}.
\]
Knowing $c,v_n$ can only reduce risk. We evaluate $B_n$ by Fourier
integration of the finite-product likelihood characteristic function;
$128$, $256$ and $512$ quadrature nodes give consistent values. These
are numerical approximations, not certified integration bounds.
The normalized values $n^{1/5}B_n/C_5$ are $1.343393$, $1.205865$ and
$1.077928$ at $n=10^8,10^9,10^{11}$, respectively.

For the equal-endpoint mixture, let $a_n^*$ be its known-coordinate
posterior mean. For any data-based rule $\delta$, the identity
$\frac12\sum_{j=0}^1\E_j(\delta-\theta_j)^2
=B_n+\E_{\rm mix}(\delta-a_n^*)^2$
provides a low-variance estimate of average endpoint risk.
Seed $2026101759$ supplies independent paired data, with
$10000$, $50000$, $10000$ replications per endpoint at the three sizes
in Table~\ref{supp:tab:supp-finite-likelihood}. True coordinates enter only
this diagnostic baseline, never the feasible decisions. Interior risks
remain direct Monte Carlo estimates.

\begin{table}[t]
\centering\small
\begin{tabular}{@{}rrr@{}}
\toprule
$n$ & Hermite feature & Finite likelihood\\
\midrule
$10^{12}$ & $1.051564$ & $1.048522$\\
$3\times10^{12}$ & $1.041216$ & $1.038763$\\
$10^{13}$ & $1.032225$ & $1.030332$\\
\bottomrule
\end{tabular}
\caption{Normalized average endpoint risks
$n^{1/5}\{\E_0(\delta-\theta_0)^2+
\E_1(\delta-\theta_1)^2\}/(2C_5)$, estimated using the mixture identity.
These averages are distinct from the grid maxima in
Table~\ref{supp:tab:supp-primary-all-sizes}.}
\label{supp:tab:supp-finite-likelihood}
\end{table}

At $3\times10^{12}$, the paired reduction in average MSE is about
$0.2355\%$. The likelihood decision is close to the two-point Bayes
benchmark, leaving little room for improvement on these paths.
Its risk at this size exceeds the more precise Hermite-feature average
at $10^{13}$; the older noisy grid maximum $1.0412$ is not a basis for
claiming improved accuracy after reducing sample size.
Supplement B records the paired uncertainties, separate endpoint risks
and all interior checks under \texttt{finite\_likelihood/}.
The commands \texttt{python reproduce.py likelihood-figures} and
\texttt{python reproduce.py likelihood-full} rebuild these summaries or
rerun the additional experiments.

\subsubsection{Random streams and exact reproduction}
All streams use NumPy PCG64. Logistic data and bootstrap keys are,
respectively,
\[
[2026101702,n,1000\lambda,b],\qquad
[2026101704,n,1000\lambda],\qquad b=0,\ldots,4999.
\]
The network size lists and data seeds are
\[
\begin{array}{ll}
(22000,30000,10^5,3\times10^5,10^6)&2026101706,\\
(21366)&2026101776,\\
(10^7)&2026101786,\\
(10^8)&2026101796.
\end{array}
\]
Within each list initialize \texttt{SeedSequence}$([s,i,j])$, where $s$
is its seed and $i,j$ are zero-based size and $\lambda$ indices. Draw
calibration histograms before main histograms in batches of the declared
replication count. The slope-bootstrap key is $[2026101796,71]$.
The extra $22000$ and $3\times10^5$ runs remain in the archive.
Edges, graph masks and latent labels retain their lexicographic reference
ordering. Logistic count means accumulate in \texttt{longdouble}.
The verified environment is CPython 3.12.14, NumPy 2.3.5, SciPy 1.17.0
and Matplotlib 3.10.8 on Linux x86\_64.

Supplement B contains all raw records, full-precision configurations,
solver tolerances, paired comparisons and replay checks under
\texttt{critical\_path/}. From its root the commands
\begin{quote}\small
\texttt{python reproduce.py verify}\\
\texttt{python reproduce.py figures}\\
\texttt{python reproduce.py critical-full}
\end{quote}
check integrity, rebuild summaries and figures, or regenerate the
critical-path observations.
The fixed-coordinate and auxiliary sparse, information, likelihood and
routing checks are retained. The historical primary records are archived
under \texttt{legacy\_checks/}; they are not inputs to the revised figure.

\subsection{Sixth-order information decay}

For $M=5$, $p=0.35$, and $\lambda=2$, exact summation over $2^{10}$ graphs
and $3^5$ label vectors gives slope $12.046$ on $0.016\le v\le0.030$; at
$v=0.016$, the ratio to $15\{p(1-p)\}^{-6}v^{12}$ is $1.01194$.

\begin{figure}[!htbp]
 \centering
 \includegraphics[page=5,width=.72\textwidth]{figures.pdf}
 \caption{\textbf{Sixth-order information.} Exact information and
 $I_M(p)v^{12}=15\binom M5\{p(1-p)\}^{-6}v^{12}$.}
 \label{supp:fig:network-information}
\end{figure}

\subsection{Logistic design}
\label{supp:sec:numerical-design}

For $\mathcal A=\mathcal E=[1,2]$, the maximizer of
$\{\operatorname{diam}\Lambda(c)\}^{8/5}/c$ is $c=3$, with
$\Lambda(3)=[3/2,3]$. We use the profile inversion and coordinate map in
Section~\ref{supp:sec:supp-logistic-expanded-rule}. The geometry and local
experiments below use Gauss--Hermite (GH) quadrature, exact Bernoulli
sampling, seed $20260723$ and pointwise 95\% intervals.

\subsection{Information decay and the contact path}
\label{supp:sec:numerical-geometry}

The one-observation information for \(\lambda\) at fixed \(v\) is
\begin{equation*}
 \mathcal I_\lambda(v)
 =
 \E\left[
 \frac{\{\frac{\partial}{\partial\lambda}
 g_{v,\lambda}(T)\}^2}
 {1-g_{v,\lambda}(T)^2}
 \right],
 \qquad T\sim N(0,1).
\end{equation*}
At the endpoints and midpoint of $\Lambda(3)$, the six-point slopes are
$9.9994$, $9.9989$, and $9.9985$; at $v=0.005$,
\[
 \frac{\mathcal I_\lambda(v)}{v^{10}/1536}
 =0.999931,\quad0.999875,\quad0.999819,
\]
respectively.

The contact path uses $(a_0,\eta_0)=(1.25,1.75)$ and
$(a_1,\eta_1)=(1.75,1.25)$. For $c=3$, put $\rho_j=a_j/c$,
$h_j=c(1-\rho_j)/\rho_j$ and $\Delta=h_1-h_0$, and take
\[
 \theta_0=\frac{t}{\rho_0},\qquad
 \theta_1=\frac1{\rho_1}\left\{
 t+\frac{\Delta}{4}t^3
 -\frac{\Delta}{16}(c+2h_0-h_1)t^5\right\}.
\]
The probabilities differ by $\Delta t^5H_{3,c}/192+O_{L^2}(t^7)$, where
$H_{3,c}(z)=z^3-3cz$.  The KL slope is $9.9977$; at the smallest $t$, exact KL
divided by
\[
 \frac{\Delta^2c^3}{3072}t^{10}
\]
equals $0.999719$. Quadrature comparisons are reported in
Table~\ref{supp:tab:quadrature-diagnostics}.

\begin{figure}[t]
 \centering
 \includegraphics[page=2,width=\textwidth]{figures.pdf}
 \caption{\textbf{Fifth-order geometry.} Information and contact-path KL,
 with their leading normalizations.}
 \label{supp:fig:numerical-geometry}
\end{figure}

\subsection{Sparse estimation: rate and exact-constant comparisons}
\label{supp:sec:numerical-sparse}

We compare a fixed-fraction $J_p(s)/n$ pilot with the sharp scale
$d_{n,s}=N_n^{1/5}J_p(s)/m_n$, using three independent folds and
$m_n=o(n)$ in the latter case. Throughout, $B=2$ and the positive
$x_\star$ and $C_5$ are as specified above.

\subsubsection{Data generation and estimators}
Write $\theta=te$ and take $e$ to have its first $s$ coordinates equal
to $s^{-1/2}$ and all others zero. Put $c=a+\eta$, $\lambda=c/a$,
$u=at/\sqrt c$ and $T=e^\top Z/\sqrt c$.
For independent $G_p\sim N(0,I_p)$, isotropy gives
\begin{equation*}
 \frac1m\sum_{i=1}^mR_iZ_i
 \ \overset d=\ 
 \frac{\sqrt c}{m}\left\{
 e\sum_{i=1}^mR_iT_i+
 \sqrt m\,(I-ee^\top)G_p\right\}.
\end{equation*}
Here $T_i,G_i$ are independent standard normals and
\[
 \Pp(R_i=1\mid T_i,G_i)
 =m\{u(T_i+\sqrt{\lambda-1}G_i)\}.
\]
This costs $O(m+p)$ per replication.

The adaptive rule is the penalized pilot \eqref{supp:eq:pilot}, with
$(\Lambda_0,c_+,M_\mu,C_0)=(2,4,2,20)$ and no knowledge of $s$ or its
support. For each $k=0,\ldots,s_+$, retain the $k$ largest coordinates of
$|V_0|$, project onto the $M_\mu$-ball and compare the criteria.
The theoretical bounds in \Cref{supp:lem:pilot} require a sufficiently large
penalty; these computations use $\Lambda_0=2$.

For $f=\widehat\mu/\norm{\widehat\mu}_2$ and $\zeta=e^\top f$, generate
\[
 T_f=\zeta T_e+\sqrt{1-\zeta^2}\,Q,
 \qquad
 \Pp(R=1\mid T_e,G)=m\{u(T_e+\sqrt{\lambda-1}G)\},
\]
with independent standard normals.  For $\zeta\ne0$, the exact coordinates are
\[
 v^\dagger=\zeta v,\qquad
\lambda^\dagger=\lambda/\zeta^2,\qquad
 q^\dagger=t/\zeta.
\]
The estimate is zero when $\widehat\mu=0$.  Otherwise, calibration estimates
$(c,v^\dagger)$, and the main fold computes the third Hermite statistic.  Before inversion,
the argument is projected onto the exact profile range.

The adaptive estimator uses only $s_+$, the parameter bounds, and the folds.
The oracle direction estimator sets $f=e$; the profile oracle also uses $(c,v)$.

\subsubsection{Direction estimation with a fixed sample split}
Take $a=\eta=1.5$ and $m\in\{1000,4000\}$. The dimensions are
$p\in\{500,2000\}$, the sparsities are $s\in\{1,8,16\}$ and $s_+=32$.
Set $t^2=\min\{B^2/4,16J_p(s)/m\}$. The two $m$ values use 120 and 60
replications per $(p,s)$ cell. Direction loss is $t^2\{1-(e^\top f)^2\}$,
with loss $t^2$ for a zero pilot.  With
$V_0=m^{-1}\sum_iR_iZ_i$, the dense benchmark is
\[
 \widehat\mu_{\rm dense}=\Pi_{B(0,M_\mu)}V_0,
 \qquad
 f_{\rm dense}=\widehat\mu_{\rm dense}/
                   \|\widehat\mu_{\rm dense}\|_2,
\]
with the zero convention.

For $J_p(s)/m\le0.04$, OLS slopes (SE) are $0.9964$ ($0.0199$) for pilot MSE
and $1.0596$ ($0.0243$) for direction loss.  Across all cells, normalized
ratios are $3.579$--$4.828$ and $7.200$--$12.870$; dense ratios are
$21.140$--$696.392$ and $10.151$--$15.817$.  A sensitivity analysis at
$(m,p,s)=(4000,2000,\{1,4,16,32\})$ uses $80$ pairs and
$\Lambda_0\in\{1,2,4\}$.  At $\Lambda_0=1$, mean support is $29.1$--$32.0$
and zero frequency is $0$; at $4$, zero frequency is $0.95$--$1.00$; at $2$,
they are $0.6$--$9.4$ and $0.05$--$0.40$.

\begin{figure}[t]
 \centering
 \includegraphics[page=8,width=\textwidth]{figures.pdf}
 \caption{\textbf{Fixed-fraction direction experiment.} Pilot MSE, direction loss,
 normalized losses, and run sizes.}
 \label{supp:fig:sparse-rate}
\end{figure}

\subsubsection{Exact-constant comparison with a vanishing pilot fraction}
Set $p=500$, $s_+=16$, $n\in\{10^5,3\times10^5\}$,
$s\in\{1,8,16\}$ and $\lambda\in\{3/2,3\}$. The
$(m_n,m_n,N_n)$ folds use 20 and 10 pairs per $(s,\lambda)$ at the two
sample sizes, respectively.  Set
$v=x_\star N_n^{-1/10}$, solve $u=U(v,\lambda)$, and use
\[
 \omega_n=\min\{0.49,4d_n/r_n^2\},
 \qquad
 \rho_n=\min\{0.25,\log(n)/\sqrt{m_n}\},
\]
with $d_n,r_n$ as in the exact sparse theorem.

Normalized adaptive risks are $3.988$--$10.632$.
Table~\ref{supp:tab:sparse-sharp} gives selected paired comparisons with the
oracle direction rule.

\begin{table}[b]
\centering
\small
\setlength{\tabcolsep}{4pt}
\begin{tabular}{@{}rrrrcc@{}}
\toprule
$n$ & $s$ & $\lambda_{\rm emp}$ & $d_{n,s}$
& $N^{1/5}\widehat R_{\rm AD}$ & $N^{1/5}
(\widehat R_{\rm AD}-\widehat R_{\rm OD})$\\
\midrule
$10^5$ & 1  & 1.5 & 0.00187 & 6.976 $[3.085,10.866]$
 & 0.002 $[-0.004,0.009]$\\
$10^5$ & 16 & 3.0 & 0.01840 & 5.543 $[2.997,8.089]$
 & 0.098 $[0.014,0.182]$\\
$3\times10^5$ & 1  & 1.5 & 0.00089 & 8.709 $[-0.362,17.779]$
 & 0.000 $[0.000,0.000]$\\
$3\times10^5$ & 16 & 1.5 & 0.00881 & 7.304 $[-1.109,15.718]$
 & 0.165 $[-0.492,0.821]$\\
\bottomrule
\end{tabular}
\caption{\textbf{Selected settings for sparse estimation.} Simultaneous $95\%$ Bonferroni
intervals; the larger adaptive endpoint is shown.}
\label{supp:tab:sparse-sharp}
\end{table}

These intervals do not establish a finite-$n$ trend.

\begin{figure}[!htbp]
 \centering
 \includegraphics[page=9,width=\textwidth]{figures.pdf}
 \caption{\textbf{Sparse estimation.} Adaptive and oracle risks, paired
 risk differences, and the sample-splitting benchmark.}
 \label{supp:fig:sparse-sharp}
\end{figure}

\subsubsection{Implementation across four regions}
The four candidates and ramp weights are those of Section~4.2 of the
main paper, with a finite-grid outer minimum-distance rule. The weighted
sum is projected successively onto $[-2B,2B]$,
$[-C_0\|\widehat\mu\|_2,C_0\|\widehat\mu\|_2]$ and the vector $B$-ball.

The $72$ settings include $18$ per intended region.  The numbers of settings
in which the zero, local, inverse, and outer estimators have the largest
weight are $(21,15,18,18)$, respectively.
For $51$ applicable settings, $\lambda^\dagger\in\Lambda_n^+(c)$ always and
$\lambda^\dagger\in\Lambda_n^+(\widehat c)$ in $49$.

Among settings with positive weight on the inverse estimator, the minimum
$257$-grid monotonicity margin is
$2.07\times10^{-8}$; maximum
inverse residual is $2.71\times10^{-19}$; maximum queried rule gap is
$1.83\times10^{-12}$.  The largest differences between the coarse and fine
outer grids are $1.5$ in
$\theta$ and $0.725$ in feature distance.  The chosen settings give
$\max N^{1/10}\rho_n/r=1.235$ and $\max MN^{-1/10}=0.175$.

\begin{figure}[!htbp]
 \centering
 \includegraphics[page=3,width=\textwidth]{figures.pdf}
 \caption{\textbf{Combination of four estimators.} Weights, estimator
 selection, and normalized losses.}
 \label{supp:fig:global-glue}
\end{figure}

\subsubsection{Numerical accuracy}
Over $6000$ direct $p=6$ runs, maximum standardized mean and variance
discrepancies are $1.49$ and $4.63\times10^{-4}$.  Maximum profile-root and
risk-decomposition residuals are $6.16\times10^{-14}$ and
$1.11\times10^{-15}$.  All $72$ settings satisfy the weight, support,
envelope, and projection constraints.

\subsection{The bounded normal experiment and its constant}
\label{supp:sec:numerical-normal}

The exchange computation uses a 2001-point grid and GH quadrature.
At the previously specified $z_\star$, it gives
$R_G(z_\star)\approx0.1885562107$ and the stated $K_5$ and $C_5$.
The mesh-corrected gaps are numerical.

\begin{table}[t]
\centering
\small
\begin{tabular}{@{}p{0.20\textwidth}p{0.35\textwidth}p{0.33\textwidth}@{}}
\toprule
Quantity & Comparison & Maximum change or gap\\
\midrule
Profiled information & GH 96 versus 128
 & $6.6\times10^{-15}$ relative\\
Contact-path KL & GH 96 versus 128
 & $2.7\times10^{-7}$ relative\\
Profile inversion & GH 96 with bracketed roots
 & $6.2\times10^{-14}$ maximum root residual\\
Likelihood surface & cubic versus direct evaluation; GH 128/192
 & $3.41\times10^{-12}$ for $g$; $6.27\times10^{-13}$ for the score derivative\\
$R_G(z_\star)$ & GH 80, 120, 160, and 240
 & $2.9\times10^{-15}$ absolute\\
$K_5$ computation & Exchange; 2001-point risk grid
 & endpoint support at $z_\star$; numerical gap $6.47\times10^{-4}$\\
\bottomrule
\end{tabular}
\caption{Quadrature and grid comparisons; gaps are numerical.}
\label{supp:tab:quadrature-diagnostics}
\end{table}

At $z_\star<2/\sqrt3$, the short-interval result gives the equal-endpoint
least-favourable prior, which the exchange calculation recovers.
Across GH orders $80,120,160,240$, the maximum change in $K_5$ is
$4.7\times10^{-16}$; the other checks are in
Table~\ref{supp:tab:quadrature-diagnostics}. The scan uses $z\in[0.08,8]$.  Outside the scan,
$R_G(z)\le\min(1,z^2/4)$ bounds $z^{-8/5}R_G(z)$ by $0.091$ on the left and
$0.036$ on the right, below the displayed maximum.

\begin{figure}[t]
 \centering
 \includegraphics[page=6,width=\textwidth]{figures.pdf}
 \caption{\textbf{Bounded-normal computation.} Computed Bayes and
 mesh-corrected rule risks, before and after scaling by $z^{-8/5}$.  The dot is
 the value $(z_\star,K_5)$ computed using the prior supported at the
 endpoints; gaps are numerical.}
 \label{supp:fig:normal-envelope}
\end{figure}

\subsection{Gaussian approximation and local minimax risk}
\label{supp:sec:numerical-lan-risk}

For $v_n=x_\star n^{-1/10}$, compute
\begin{equation*}
 \widehat W_{n,x_\star}
 =
 \frac{16}{v_n^5}\left\{
 \frac1n\sum_{i=1}^nR_iH_3(T_i)+\frac{v_n^3}{4}
 \right\}.
\end{equation*}
Center and scale by the exact $F(v_n,\lambda)$ moments.  On a $21$-point
$\lambda$ grid, compare the likelihood ratio at $\lambda_0=2.25$ with LAN.
The centered score is
\[
 S_{H,n}
 =\frac{v_n^5}{96}\sum_{i=1}^n
 \{R_iH_3(T_i)-F(v_n,\lambda_0)\},
 \qquad I_{H,n}=\frac{nv_n^{10}}{1536}.
\]
Use
\[
 n\in\{10^4,3\mathbin{\times}10^4,10^5,3\mathbin{\times}10^5,10^6\},
\]
with $150,90,45,30,20$ runs per $\lambda=1.5,2.25,3$.  Likelihood-ratio counts
are $30,18,10,5$ through $n=3\times10^5$.

At $n=10^6$, the standardized biases and variance ratios are, respectively,
\[
 (-0.190,-0.386,-0.626),\qquad
 (0.999891,0.999897,0.999902).
\]  From $n=10^4$ to
$3\mathbin{\times}10^5$, median grid-maximum LAN remainder decreases
$0.0431\to0.0184$ and its $90\%$ quantile $0.0798\to0.0384$.  Hermite-score
medians are $0.241,0.308,0.256,0.264$ and endpoint $90\%$ quantiles
$0.725,0.312$.

\begin{figure}[t]
 \centering
 \includegraphics[page=4,width=\textwidth]{figures.pdf}
 \caption{\textbf{Gaussian approximation.} Centering, variance, shape, bias,
 and the $21$-grid LAN remainder.}
 \label{supp:fig:numerical-lan}
\end{figure}

Let $\delta^\star_x$ be the computed rule for
$N(\lambda,1536/x^{10})$, $\lambda\in[3/2,3]$, and use
\begin{equation*}
 \widehat\theta_{n,x}
 =\frac{x n^{-1/10}}{\sqrt3}\,
   \delta^\star_x(\widehat W_{n,x}),
\end{equation*}
which uses the leading-order target map.  Its limit risk is
\[
 \frac{x^2}{3}
 M\left(\frac{1536}{x^{10}},[3/2,3]\right),
\]
and comparisons use projected Gaussian MLE and midpoint rules.

The finite-profile calibration defines
\[
 m_v(\lambda)
 =\frac{16}{v^5}\{F(v,\lambda)+v^3/4\}.
\]
Invert $m_v$ on its range, extend the inverse linearly, apply
$\delta^\star_x$, and use the same leading-order target map.  Uniformly on $[3/2,3]$,
\[
 m_v(\lambda)=\lambda+O(v^2),
 \qquad
 m_v'(\lambda)=1+O(v^2).
\]
At $n=10^6$, largest normalized risks are $2.241$ (leading-order) and $1.548$
(profile-calibrated), both at $\lambda=3$, versus limit $0.551$ ($20$ replications).

\begin{figure}[!htbp]
 \centering
 \includegraphics[page=7,width=.8\textwidth]{figures.pdf}
 \caption{\textbf{Local risk.} Normal approximation, bounded-normal rules, and
 $1005$ exact-model statistics; bars are pointwise $95\%$ intervals.}
 \label{supp:fig:numerical-risk}
\end{figure}


\begin{thebibliography}{34}

\bibitem{abbe2018sbm}
\begin{barticle}
\bauthor{\bsnm{Abbe}, \binits{E.}}
(\byear{2018}).
\batitle{Community Detection and Stochastic Block Models: Recent Developments}.
\bjtitle{Journal of Machine Learning Research}
\bvolume{18}
\bfpage{1}--\blpage{86}.
\end{barticle}
\endbibitem

\bibitem{allmanMatiasRhodes2011}
\begin{barticle}
\bauthor{\bsnm{Allman}, \binits{E. S.}},
\bauthor{\bsnm{Matias}, \binits{C.}} and
\bauthor{\bsnm{Rhodes}, \binits{J. A.}}
(\byear{2011}).
\batitle{Parameter Identifiability in a Class of Random Graph Mixture Models}.
\bjtitle{Journal of Statistical Planning and Inference}
\bvolume{141}
\bfpage{1719}--\blpage{1736}.
\end{barticle}
\endbibitem

\bibitem{andrewsCheng2012}
\begin{barticle}
\bauthor{\bsnm{Andrews}, \binits{D. W. K.}} and
\bauthor{\bsnm{Cheng}, \binits{X.}}
(\byear{2012}).
\batitle{Estimation and Inference with Weak, Semi-Strong, and Strong
  Identification}.
\bjtitle{Econometrica}
\bvolume{80}
\bfpage{2153}--\blpage{2211}.
\end{barticle}
\endbibitem

\bibitem{bickel1981}
\begin{barticle}
\bauthor{\bsnm{Bickel}, \binits{P. J.}}
(\byear{1981}).
\batitle{Minimax Estimation of the Mean of a Normal Distribution When the
  Parameter Space Is Restricted}.
\bjtitle{The Annals of Statistics}
\bvolume{9}
\bfpage{1301}--\blpage{1309}.
\end{barticle}
\endbibitem

\bibitem{carroll2006measurement}
\begin{bbook}
\bauthor{\bsnm{Carroll}, \binits{R. J.}},
\bauthor{\bsnm{Ruppert}, \binits{D.}},
\bauthor{\bsnm{Stefanski}, \binits{L. A.}} and
\bauthor{\bsnm{Crainiceanu}, \binits{C. M.}}
(\byear{2006}).
\bbtitle{Measurement Error in Nonlinear Models: A Modern Perspective},
\bedition{2} ed.
\bpublisher{Chapman and Hall/CRC}.
\end{bbook}
\endbibitem

\bibitem{carroll1984binary}
\begin{barticle}
\bauthor{\bsnm{Carroll}, \binits{R. J.}},
\bauthor{\bsnm{Spiegelman}, \binits{C. H.}},
\bauthor{\bsnm{Lan}, \binits{K. K. G.}},
\bauthor{\bsnm{Bailey}, \binits{K. T.}} and
\bauthor{\bsnm{Abbott}, \binits{R. D.}}
(\byear{1984}).
\batitle{On Errors-in-Variables for Binary Regression Models}.
\bjtitle{Biometrika}
\bvolume{71}
\bfpage{19}--\blpage{25}.
\end{barticle}
\endbibitem

\bibitem{carrollWand1991}
\begin{barticle}
\bauthor{\bsnm{Carroll}, \binits{R. J.}} and
\bauthor{\bsnm{Wand}, \binits{M. P.}}
(\byear{1991}).
\batitle{Semiparametric Estimation in Logistic Measurement Error Models}.
\bjtitle{Journal of the Royal Statistical Society: Series B (Methodological)}
\bvolume{53}
\bfpage{573}--\blpage{585}.
\end{barticle}
\endbibitem

\bibitem{casellaStrawderman1981}
\begin{barticle}
\bauthor{\bsnm{Casella}, \binits{G.}} and
\bauthor{\bsnm{Strawderman}, \binits{W. E.}}
(\byear{1981}).
\batitle{Estimating a Bounded Normal Mean}.
\bjtitle{The Annals of Statistics}
\bvolume{9}
\bfpage{870}--\blpage{878}.
\end{barticle}
\endbibitem

\bibitem{chen1995mixture}
\begin{barticle}
\bauthor{\bsnm{Chen}, \binits{J.}}
(\byear{1995}).
\batitle{Optimal Rate of Convergence for Finite Mixture Models}.
\bjtitle{The Annals of Statistics}
\bvolume{23}
\bfpage{221}--\blpage{233}.
\end{barticle}
\endbibitem

\bibitem{chernozhukovEtAl2018}
\begin{barticle}
\bauthor{\bsnm{Chernozhukov}, \binits{V.}},
\bauthor{\bsnm{Chetverikov}, \binits{D.}},
\bauthor{\bsnm{Demirer}, \binits{M.}},
\bauthor{\bsnm{Duflo}, \binits{E.}},
\bauthor{\bsnm{Hansen}, \binits{C.}},
\bauthor{\bsnm{Newey}, \binits{W. K.}} and
\bauthor{\bsnm{Robins}, \binits{J.}}
(\byear{2018}).
\batitle{Double/Debiased Machine Learning for Treatment and Structural
  Parameters}.
\bjtitle{The Econometrics Journal}
\bvolume{21}
\bfpage{1}--\blpage{68}.
\end{barticle}
\endbibitem

\bibitem{dacunhaGassiat1997}
\begin{barticle}
\bauthor{\bsnm{Dacunha-Castelle}, \binits{D.}} and
\bauthor{\bsnm{Gassiat}, \binits{E.}}
(\byear{1997}).
\batitle{Testing in Locally Conic Models, and Application to Mixture Models}.
\bjtitle{ESAIM: Probability and Statistics}
\bvolume{1}
\bfpage{285}--\blpage{317}.
\end{barticle}
\endbibitem

\bibitem{donohoLiu1991}
\begin{barticle}
\bauthor{\bsnm{Donoho}, \binits{D. L.}} and
\bauthor{\bsnm{Liu}, \binits{R. C.}}
(\byear{1991}).
\batitle{Geometrizing Rates of Convergence, II}.
\bjtitle{The Annals of Statistics}
\bvolume{19}
\bfpage{633}--\blpage{667}.
\end{barticle}
\endbibitem

\bibitem{donohoLiu1991iii}
\begin{barticle}
\bauthor{\bsnm{Donoho}, \binits{D. L.}} and
\bauthor{\bsnm{Liu}, \binits{R. C.}}
(\byear{1991}).
\batitle{Geometrizing Rates of Convergence, III}.
\bjtitle{The Annals of Statistics}
\bvolume{19}
\bfpage{668}--\blpage{701}.
\end{barticle}
\endbibitem

\bibitem{duranteDunsonVogelstein2017}
\begin{barticle}
\bauthor{\bsnm{Durante}, \binits{D.}},
\bauthor{\bsnm{Dunson}, \binits{D. B.}} and
\bauthor{\bsnm{Vogelstein}, \binits{J. T.}}
(\byear{2017}).
\batitle{Nonparametric Bayes Modeling of Populations of Networks}.
\bjtitle{Journal of the American Statistical Association}
\bvolume{112}
\bfpage{1516}--\blpage{1530}.
\end{barticle}
\endbibitem

\bibitem{gundersonBravoHermsdorffOrbanz2023}
\begin{bchapter}
\bauthor{\bsnm{Gunderson}, \binits{L. M.}},
\bauthor{\bsnm{Bravo-Hermsdorff}, \binits{G.}} and
\bauthor{\bsnm{Orbanz}, \binits{P.}}
(\byear{2023}).
\bctitle{The Graph Pencil Method: Mapping Subgraph Densities to Stochastic
  Block Models}.
In \bbtitle{Advances in Neural Information Processing Systems}
\bseriesno{36}
\bfpage{79583}--\blpage{79593}.
\end{bchapter}
\endbibitem

\bibitem{hajek1972}
\begin{bchapter}
\bauthor{\bsnm{H{\'a}jek}, \binits{J.}}
(\byear{1972}).
\bctitle{Local Asymptotic Minimax and Admissibility in Estimation}.
In \bbtitle{Proceedings of the Sixth Berkeley Symposium on Mathematical
  Statistics and Probability, Volume I: Theory of Statistics}
(\beditor{\binits{L. M.} \bsnm{Le~Cam}},
\beditor{\binits{J.} \bsnm{Neyman}} and
\beditor{\binits{E. L.} \bsnm{Scott}}, eds.)
\bfpage{175}--\blpage{194}.
\blocation{Berkeley}: \bpublisher{University of California Press}.
\end{bchapter}
\endbibitem

\bibitem{heinrichKahn2018}
\begin{barticle}
\bauthor{\bsnm{Heinrich}, \binits{P.}} and
\bauthor{\bsnm{Kahn}, \binits{J.}}
(\byear{2018}).
\batitle{Strong Identifiability and Optimal Minimax Rates for Finite Mixture
  Estimation}.
\bjtitle{The Annals of Statistics}
\bvolume{46}
\bfpage{2844}--\blpage{2870}.
\end{barticle}
\endbibitem

\bibitem{hoNguyen2016weak}
\begin{barticle}
\bauthor{\bsnm{Ho}, \binits{N.}} and
\bauthor{\bsnm{Nguyen}, \binits{X.}}
(\byear{2016}).
\batitle{Convergence Rates of Parameter Estimation for Some Weakly Identifiable
  Finite Mixtures}.
\bjtitle{The Annals of Statistics}
\bvolume{44}
\bfpage{2726}--\blpage{2755}.
\end{barticle}
\endbibitem

\bibitem{hoNguyen2016strong}
\begin{barticle}
\bauthor{\bsnm{Ho}, \binits{N.}} and
\bauthor{\bsnm{Nguyen}, \binits{X.}}
(\byear{2016}).
\batitle{On Strong Identifiability and Convergence Rates of Parameter
  Estimation in Finite Mixtures}.
\bjtitle{Electronic Journal of Statistics}
\bvolume{10}
\bfpage{271}--\blpage{307}.
\end{barticle}
\endbibitem

\bibitem{hoNguyen2019singularity}
\begin{barticle}
\bauthor{\bsnm{Ho}, \binits{N.}} and
\bauthor{\bsnm{Nguyen}, \binits{X.}}
(\byear{2019}).
\batitle{Singularity Structures and Impacts on Parameter Estimation in Finite
  Mixtures of Distributions}.
\bjtitle{SIAM Journal on Mathematics of Data Science}
\bvolume{1}
\bfpage{730}--\blpage{758}.
\end{barticle}
\endbibitem

\bibitem{jochmans2024}
\begin{barticle}
\bauthor{\bsnm{Jochmans}, \binits{K.}}
(\byear{2024}).
\batitle{Nonparametric Identification and Estimation of Stochastic Block Models
  from Many Small Networks}.
\bjtitle{Journal of Econometrics}
\bvolume{242}
\bfpage{105805}.
\end{barticle}
\endbibitem

\bibitem{kuchenhoff1995identification}
\begin{barticle}
\bauthor{\bsnm{K{\"u}chenhoff}, \binits{H.}}
(\byear{1995}).
\batitle{The Identification of Logistic Regression Models with Errors in the
  Variables}.
\bjtitle{Statistical Papers}
\bvolume{36}
\bfpage{41}--\blpage{47}.
\end{barticle}
\endbibitem

\bibitem{lecamYang2000}
\begin{bbook}
\bauthor{\bsnm{Le~Cam}, \binits{L.}} and
\bauthor{\bsnm{Yang}, \binits{G. L.}}
(\byear{2000}).
\bbtitle{Asymptotics in Statistics: Some Basic Concepts},
\bedition{2} ed.
\blocation{New York}: \bpublisher{Springer}.
\end{bbook}
\endbibitem

\bibitem{nakamura1990corrected}
\begin{barticle}
\bauthor{\bsnm{Nakamura}, \binits{T.}}
(\byear{1990}).
\batitle{Corrected Score Function for Errors-in-Variables Models: Methodology
  and Application to Generalized Linear Models}.
\bjtitle{Biometrika}
\bvolume{77}
\bfpage{127}--\blpage{137}.
\end{barticle}
\endbibitem

\bibitem{nussbaum1996equivalence}
\begin{barticle}
\bauthor{\bsnm{Nussbaum}, \binits{M.}}
(\byear{1996}).
\batitle{Asymptotic Equivalence of Density Estimation and Gaussian White
  Noise}.
\bjtitle{The Annals of Statistics}
\bvolume{24}
\bfpage{2399}--\blpage{2430}.
\end{barticle}
\endbibitem

\bibitem{rothenberg1971}
\begin{barticle}
\bauthor{\bsnm{Rothenberg}, \binits{T. J.}}
(\byear{1971}).
\batitle{Identification in Parametric Models}.
\bjtitle{Econometrica}
\bvolume{39}
\bfpage{577}--\blpage{591}.
\end{barticle}
\endbibitem

\bibitem{rotnitzkyEtAl2000}
\begin{barticle}
\bauthor{\bsnm{Rotnitzky}, \binits{A.}},
\bauthor{\bsnm{Cox}, \binits{D. R.}},
\bauthor{\bsnm{Bottai}, \binits{M.}} and
\bauthor{\bsnm{Robins}, \binits{J.}}
(\byear{2000}).
\batitle{Likelihood-Based Inference with Singular Information Matrix}.
\bjtitle{Bernoulli}
\bvolume{6}
\bfpage{243}--\blpage{284}.
\end{barticle}
\endbibitem

\bibitem{soganRebafka2026}
\begin{botherref}
\oauthor{\bsnm{Sogan}, \binits{R. B.}} and
\oauthor{\bsnm{Rebafka}, \binits{T.}}
(\byear{2026}).
Beyond Degree: Rooted Motif Signatures for Latent Position Identifiability in
  Graphon Models.
\textit{arXiv preprint arXiv:2607.01358}.
\end{botherref}
\endbibitem

\bibitem{sorensen2015measurement}
\begin{barticle}
\bauthor{\bsnm{S{\o}rensen}, \binits{{\O}.}},
\bauthor{\bsnm{Frigessi}, \binits{A.}} and
\bauthor{\bsnm{Thoresen}, \binits{M.}}
(\byear{2015}).
\batitle{Measurement Error in Lasso: Impact and Likelihood Bias Correction}.
\bjtitle{Statistica Sinica}
\bvolume{25}
\bfpage{809}--\blpage{829}.
\end{barticle}
\endbibitem

\bibitem{sorensenEtAl2018covariate}
\begin{barticle}
\bauthor{\bsnm{S{\o}rensen}, \binits{{\O}.}},
\bauthor{\bsnm{Hellton}, \binits{K. H.}},
\bauthor{\bsnm{Frigessi}, \binits{A.}} and
\bauthor{\bsnm{Thoresen}, \binits{M.}}
(\byear{2018}).
\batitle{Covariate Selection in High-Dimensional Generalized Linear Models with
  Measurement Error}.
\bjtitle{Journal of Computational and Graphical Statistics}
\bvolume{27}
\bfpage{739}--\blpage{749}.
\end{barticle}
\endbibitem

\bibitem{stefanski1985logistic}
\begin{barticle}
\bauthor{\bsnm{Stefanski}, \binits{L. A.}} and
\bauthor{\bsnm{Carroll}, \binits{R. J.}}
(\byear{1985}).
\batitle{Covariate Measurement Error in Logistic Regression}.
\bjtitle{The Annals of Statistics}
\bvolume{13}
\bfpage{1335}--\blpage{1351}.
\end{barticle}
\endbibitem

\bibitem{stefanskiCarroll1987}
\begin{barticle}
\bauthor{\bsnm{Stefanski}, \binits{L. A.}} and
\bauthor{\bsnm{Carroll}, \binits{R. J.}}
(\byear{1987}).
\batitle{Conditional Scores and Optimal Scores for Generalized Linear
  Measurement-Error Models}.
\bjtitle{Biometrika}
\bvolume{74}
\bfpage{703}--\blpage{716}.
\end{barticle}
\endbibitem

\bibitem{vanderVaart1998}
\begin{bbook}
\bauthor{\bparticle{van~der} \bsnm{Vaart}, \binits{A. W.}}
(\byear{1998}).
\bbtitle{Asymptotic Statistics}.
\blocation{Cambridge}: \bpublisher{Cambridge University Press}.
\end{bbook}
\endbibitem

\bibitem{weiMukherjeeNguyen2025}
\begin{botherref}
\oauthor{\bsnm{Wei}, \binits{Y.}},
\oauthor{\bsnm{Mukherjee}, \binits{S.}} and
\oauthor{\bsnm{Nguyen}, \binits{X.}}
(\byear{2025}).
Minimum {$\Phi$}-Distance Estimators for Finite Mixing Measures.
\textit{arXiv preprint arXiv:2304.10052v2}.
\end{botherref}
\endbibitem

\end{thebibliography}

\begin{thebibliography}{4}

\bibitem{supp:bickel1981}
\begin{barticle}
\bauthor{\bsnm{Bickel}, \binits{P. J.}}
(\byear{1981}).
\batitle{Minimax Estimation of the Mean of a Normal Distribution When the
  Parameter Space Is Restricted}.
\bjtitle{The Annals of Statistics}
\bvolume{9}
\bfpage{1301}--\blpage{1309}.
\end{barticle}
\endbibitem

\bibitem{supp:casellaStrawderman1981}
\begin{barticle}
\bauthor{\bsnm{Casella}, \binits{G.}} and
\bauthor{\bsnm{Strawderman}, \binits{W. E.}}
(\byear{1981}).
\batitle{Estimating a Bounded Normal Mean}.
\bjtitle{The Annals of Statistics}
\bvolume{9}
\bfpage{870}--\blpage{878}.
\end{barticle}
\endbibitem

\bibitem{supp:donohoLiu1991iii}
\begin{barticle}
\bauthor{\bsnm{Donoho}, \binits{D. L.}} and
\bauthor{\bsnm{Liu}, \binits{R. C.}}
(\byear{1991}).
\batitle{Geometrizing Rates of Convergence, III}.
\bjtitle{The Annals of Statistics}
\bvolume{19}
\bfpage{668}--\blpage{701}.
\end{barticle}
\endbibitem

\bibitem{supp:sion1958}
\begin{barticle}
\bauthor{\bsnm{Sion}, \binits{M.}}
(\byear{1958}).
\batitle{On General Minimax Theorems}.
\bjtitle{Pacific Journal of Mathematics}
\bvolume{8}
\bfpage{171}--\blpage{176}.
\end{barticle}
\endbibitem

\end{thebibliography}
\end{document}